\documentclass[12pt,a4paper,oneside]{report}

\usepackage[T1]{fontenc}
\usepackage{lmodern}
\usepackage{amsmath,amssymb,amsthm,mathtools,bm}
\usepackage{geometry}
\usepackage{setspace}
\usepackage{microtype}
\usepackage{booktabs,longtable,tabularx,array,multirow}
\usepackage{enumitem}
\usepackage{graphicx}
\usepackage{xcolor}
\usepackage{hyperref}
\usepackage[nameinlink,capitalise,noabbrev]{cleveref}
\usepackage[numbers,sort&compress]{natbib}
\usepackage{listings}
\usepackage{fancyhdr}
\usepackage{makeidx}
\usepackage{etoolbox}

\makeindex

\definecolor{deepblue}{RGB}{18,58,112}
\definecolor{softgray}{RGB}{245,245,245}
\hypersetup{
  colorlinks=true,
  linkcolor=deepblue,
  citecolor=deepblue,
  urlcolor=deepblue,
  pdftitle={Research Report on Noise-Shaped One-Bit Coefficients in Discrete Polynomial Fourier Extension},
  pdfauthor={Wang Shengquan}
}

\newtheorem{theorem}{Theorem}[chapter]
\newtheorem{lemma}[theorem]{Lemma}
\newtheorem{proposition}[theorem]{Proposition}
\newtheorem{corollary}[theorem]{Corollary}

\theoremstyle{definition}
\newtheorem{definition}[theorem]{Definition}

\theoremstyle{remark}
\newtheorem{remark}[theorem]{Remark}

\newcommand{\R}{\mathbb{R}}
\newcommand{\C}{\mathbb{C}}
\newcommand{\Z}{\mathbb{Z}}

\newcommand{\T}{\mathbb{T}}
\newcommand{\E}{\mathcal{E}}

\newcommand{\D}{\Delta}
\newcommand{\Dp}{\Delta_{+}}
\newcommand{\TV}{\operatorname{TV}}
\newcommand{\Var}{\operatorname{Var}}

\newcommand{\e}{\mathrm{e}}
\newcommand{\dd}{\,\mathrm{d}}
\newcommand{\ii}{\mathrm{i}}
\newcommand{\norm}[1]{\left\lVert #1\right\rVert}
\newcommand{\abs}[1]{\left\lvert #1\right\rvert}
\newcommand{\inner}[2]{\left\langle #1,#2\right\rangle}

\newcommand{\one}{\mathbf{1}}

\title{\textbf{Research Report on Noise-Shaped One-Bit Coefficients in Discrete Polynomial Fourier Extension}\\[0.8em]
\Large Detailed Derivations, Rigorous Proofs, Boundary-Corrected Reconstruction, and Higher-Order Extensions}
\author{Wang Shengquan\\[0.5em]}
\date{July 2026}

\begin{document}
\pagenumbering{gobble}
\maketitle
\clearpage
\pagenumbering{roman}
\chapter*{Prefatory Note}
\addcontentsline{toc}{chapter}{Prefatory Note}

This report grew from personal academic interest in the interaction between sigma-delta quantization, discrete finite differences, and polynomial Fourier extension. Its purpose is to develop the underlying identities and estimates in a detailed form that supports further mathematical study.

\chapter*{Abstract}
\addcontentsline{toc}{chapter}{Abstract}

This research report develops a rigorous analytical framework for one-bit noise-shaped coefficients in discrete polynomial Fourier extension. The central object is a finite coefficient record whose quantization error admits a discrete-difference representation. For first-order sigma-delta quantization,
\[
 u_k-q_k=v_k-v_{k-1},
 \qquad \|v\|_{\ell^\infty}\le V,
\]
and the bounded state converts the quantization error into a controlled functional on weight sequences of finite discrete variation. A complete finite-record summation-by-parts argument gives
\[
 \left|\frac1N\sum_{k=1}^{N}(u_k-q_k)w_k\right|
 \le \frac{V}{N}
 \left(|w_N|+\sum_{k=1}^{N-1}|w_{k+1}-w_k|\right)
\]
when the initial state is zero. This estimate is then applied to absolutely continuous and bounded-variation phases, with particular attention to the parabolic phase
\[
 \phi_{x,t}(\xi)=x\xi+t\xi^2.
\]
The resulting local estimate is
\[
 |\mathcal E_Nu(x,t)-\mathcal E_Nq(x,t)|
 \le \frac{V}{N}
 \left[1+2\pi\int_0^1|x+2t\xi|\,\mathrm d\xi\right].
\]
The phase-variation factor is evaluated explicitly, compact-set and local $L^p$ consequences are derived, and a zero-input trajectory proves that the uniform first-order rate cannot generally be improved beyond $N^{-1}$ without additional conditions.

The report also develops the exact finite-interval identity for errors of the form $e=\Delta^rv$. All initial and terminal boundary traces are retained. Under endpoint compatibility, repeated summation by parts transfers $r$ differences from the state to the sampled phase weight and yields an $O(N^{-r})$ compact-set estimate. A complementary boundary-corrected reconstruction formula removes all finite-record traces by using a fixed number of state-trace values, so the same interior rate is recovered without imposing terminal compatibility. Fractional smoothness is also treated: $C^{r-1,\alpha}$ sampled weights yield the intermediate rate $O(N^{-(r-1+\alpha)})$. The analysis is extended to polynomial phases, moment-curve extensions, multidimensional divergence-form errors, anisotropic growing regions, residual error models, blockwise reset, smooth termination, and oscillatory transfer estimates under explicit state hypotheses. Numerical calculations verify indexing, constants, and predicted rates.

The presentation follows the format of a detailed mathematical research report. External results are cited at their substantive use, and each transfer estimate states the additional state assumptions required by its proof.

\noindent\textbf{Keywords:} sigma-delta quantization; one-bit coefficients; noise shaping; discrete Fourier extension; polynomial phase; parabolic extension; summation by parts; total variation; endpoint compatibility; exponential sums; restriction theory.

\chapter*{Executive Summary}
\addcontentsline{toc}{chapter}{Executive Summary}

The report addresses a specific mathematical question. A stable one-bit sigma-delta quantizer produces coefficients in $\{-1,+1\}$, but the pointwise quantization error is not small. The useful structure is instead the identity $u-q=\Delta v$, where the state $v$ remains uniformly bounded. The main task is to determine how this discrete-difference structure affects polynomial Fourier extension sums.

The analysis produces six groups of rigorous results.

\begin{enumerate}[label=\textbf{Result Group \arabic*.},leftmargin=3.2cm]
\item \textbf{First-order stability and discrepancy.} The greedy sign quantizer is stable for inputs in $[-1,1]$. Every interval sum of the quantization error is controlled by the state bound, independently of the interval length.

\item \textbf{Weighted variation and parabolic extension.} Discrete summation by parts converts the shaped error into a boundary contribution and a weighted state sum. The resulting norm is the discrete total variation of the sampled phase. For the parabolic phase, the variation is governed by $J(x,t)=\int_0^1|x+2t\xi|\,\mathrm d\xi$, whose exact piecewise formula is derived.

\item \textbf{Sharpness and boundary structure.} The zero-input orbit gives an exact $1/N$ error at the origin for odd record lengths. This example identifies the terminal state as the leading first-order obstruction and proves the optimality of the uniform $N^{-1}$ scale under the stated assumptions.

\item \textbf{Higher-order finite-record theory.} For $e=\Delta^rv$, repeated summation by parts produces interior forward differences of the weight and a complete hierarchy of boundary traces. High-order decay follows when the relevant traces vanish or are sufficiently small. Under exact endpoint compatibility and smooth phase sampling, the normalised error is $O(N^{-r})$ on compact parameter sets.

\item \textbf{Boundary correction and fractional regularity.} The full boundary functional can be added to the quantized extension as finite-dimensional side information. The corrected reconstruction has an exact pure-interior error and therefore achieves the high-order rate without terminal reset. For weights in $C^{r-1,\alpha}$, the proved rate is $O(N^{-(r-1+\alpha)})$.

\item \textbf{Extensions and verification.} The same mechanism applies to polynomial phases, moment curves, multidimensional divergence-form shaping, growing observation regions, residual errors, leaky shaping, block processing, and smooth termination. Orthogonality, exponential-sum theory, restriction estimates, and decoupling are examined as tools for scales where total variation no longer provides decay. Numerical examples verify the exact first-order sharpness and the second-order endpoint-compatible scaling.
\end{enumerate}

The central contribution is a quantitative approximation theory for unquantized and one-bit polynomial Fourier extensions under explicit stability and boundary hypotheses. At critical scales, the framework connects the extension error to realisable state trajectories, oscillatory cancellation, and terminal control.

\tableofcontents
\clearpage
\listoffigures
\clearpage
\listoftables
\clearpage
\chapter*{List of Abbreviations}
\addcontentsline{toc}{chapter}{List of Abbreviations}

\begin{longtable}{>{\bfseries}p{2.8cm}p{10.2cm}}
ADC & analogue-to-digital converter\\
AC & absolutely continuous\\
BV & bounded variation\\
BCR & boundary-corrected reconstruction\\
CS & compressed sensing\\
DFR & discrete Fourier restriction\\
DFT & discrete Fourier transform\\
DSP & digital signal processing\\
ENOB & effective number of bits\\
FFT & fast Fourier transform\\
LHS & left-hand side\\
MSE & mean squared error\\
NTF & noise transfer function\\
OSR & oversampling ratio\\
PCM & pulse-code modulation\\
PDE & partial differential equation\\
RHS & right-hand side\\
RIP & restricted isometry property\\
SNR & signal-to-noise ratio\\
SNDR & signal-to-noise-and-distortion ratio\\
STF & signal transfer function\\
VMVT & Vinogradov mean value theorem\\
\end{longtable}

\chapter*{Notation and Reading Guide}
\addcontentsline{toc}{chapter}{Notation and Reading Guide}

\begin{longtable}{p{3.4cm}p{9.6cm}}
$N$ & number of samples or coefficients\\
$u=(u_k)$ & unquantized input sequence\\
$q=(q_k)$ & quantized output sequence, usually one-bit\\
$e=u-q$ & quantization error sequence\\
$v$ & internal state sequence\\
$\D v_k=v_k-v_{k-1}$ & backward finite difference\\
$\Dp w_k=w_{k+1}-w_k$ & forward finite difference\\
$\E_N$ & normalized discrete extension operator\\
$\TV_N(w)$ & discrete total variation of a weight sequence\\
$\Var_{[0,1]}(\phi)$ & total variation of a scalar phase\\
$A\lesssim B$ & $A\le CB$ for a constant independent of the main asymptotic parameter\\
$\T=\R/\Z$ & one-dimensional torus\\
$e(z)$ & shorthand for $\exp(2\pi\ii z)$ when used in number-theoretic sections\\
\end{longtable}

The core first-order argument is contained in Chapters 3--5. The exact higher-order finite-record theory appears in Chapter 6. Polynomial, multidimensional, and growing-region extensions are collected in Chapter 7. External harmonic-analysis tools and conditional transfer statements are separated in Chapters 2 and 8.

\clearpage
\pagenumbering{arabic}
\chapter{Introduction and Research Objectives}
\label{chap:introduction}

\section{Research problem and motivation}
\label{sec:01_introduction}

\subsection{The mathematical problem}

One-bit sigma-delta quantization converts an analogue or finely represented sequence into a binary sequence while feeding the accumulated error back into the quantizer. The output symbols are coarse, but the error is organised. This distinction is the reason sigma-delta methods can outperform memoryless scalar quantization under oversampling. The engineering origins of delta-sigma modulation go back to early feedback coding systems, while the spectral and probabilistic analysis of quantization noise developed through the work of Bennett, Gray, Candy, and many others \cite{Bennett1948,Inose1962,Candy1985,GrayChouWong1989,Gray1990}. A rigorous modern mathematical theory was established through stable high-order constructions, invariant-region arguments, exponential accuracy, and robustness analysis \cite{GunturkLagariasVaishampayan2001,DaubechiesDeVore2003,Gunturk2003CPAM,Gunturk2004JAMS,DaubechiesDeVoreGunturkVaishampayan2006,DeiftGunturkKrahmer2011,Gunturk2012}.

The present manuscript studies a specific interface between noise shaping and harmonic analysis. Let
\[
 \E_{N,d}a(x)
 =\frac1N\sum_{k=1}^N a_k
 e\!\left(\sum_{j=1}^d x_j(k/N)^j\right),
 \qquad e(z)=\exp(2\pi\ii z).
\]
For $d=2$, this is a discrete parabolic extension sum. Such sums are closely related to periodic Schr\"odinger evolution, discrete restriction estimates, Weyl sums, and moment-curve decoupling \cite{Bourgain1993I,HuLi2011,BourgainDemeter2015,BourgainDemeterGuth2016,Demeter2020}. The question is whether replacing $u_k$ by a one-bit sequence $q_k$ can preserve the extension sum in a quantitatively controlled manner.

A direct estimate gives
\[
 \abs{\E_{N,d}u(x)-\E_{N,d}q(x)}
 \le \frac1N\sum_{k=1}^N\abs{u_k-q_k}.
\]
This bound is usually of order one. It does not use noise shaping and therefore cannot reveal oversampling gain. The central observation is that a stable first-order sigma-delta scheme produces
\[
 e_k=u_k-q_k=\D v_k,
 \qquad \norm{v}_{\ell^\infty}\le V.
\]
Discrete summation by parts then transfers $\D$ from $v$ to the oscillatory weight. If the phase changes slowly across adjacent indices, the resulting weight difference is of order $N^{-1}$. This produces a normalized error of order $N^{-1}$ on fixed parameter sets.

\subsection{Relation to discrete restriction theory}

Three related analytical quantities organise the discussion. The approximation problem concerns
\[
 \E_Nu-\E_Nq.
\]
The extension-size problem concerns
\[
 \norm{\E_Nq}_{L^p(\Omega)}.
\]
Discrete restriction theory studies scale-sensitive bounds that are uniform over coefficient sequences in an appropriate norm. The first-order argument establishes a quantitative approximation theorem, while the later oscillatory sections describe the additional state information relevant to extension-size and restriction estimates.

The distinction is structural. The total-variation proof takes absolute values of adjacent phase increments. Once this is done, true oscillatory cancellation has been discarded. On a fixed compact region this loss is acceptable because each increment is already small. On a region whose diameter is comparable with $N$, the adjacent phase increment is no longer small. The estimate then becomes order one. Critical-scale results require methods designed to preserve and exploit oscillation, including level-set arguments, bilinear restriction, multilinear Kakeya estimates, polynomial partitioning, decoupling, and Vinogradov mean value estimates \cite{TaoVargasVega1998,BennettCarberyTao2006,BourgainGuth2011,Guth2016,Guth2018,BourgainDemeter2015,BourgainDemeterGuth2016}.

\subsection{Relation to finite frames and compressed sensing}

Noise shaping has already been extended far beyond classical bandlimited sampling. Finite-frame quantization showed that the ordering and variation of frame vectors influence reconstruction accuracy \cite{BenedettoPowellYilmaz2006,BenedettoPowellYilmazSecond2006,BodmannPaulsen2007}. Higher-order accuracy requires suitable dual frames or smooth endpoint termination \cite{BodmannPaulsenAbdulbaki2007,BlumEtAl2010}. Random frames and compressed sensing provide another major branch, in which Sobolev duals and convex recovery methods convert high-order noise shaping into polynomial or root-exponential reconstruction accuracy \cite{GunturkPowellSaabYilmaz2013,KrahmerSaabYilmaz2014,SaabWangYilmaz2018,FengKrahmerSaab2017}. These results are conceptually close to the present work because they also transfer a finite-difference operator onto a reconstruction object.

There is, however, an important difference. In finite-frame reconstruction, one may choose a dual frame adapted to the difference operator. In a Fourier extension problem, the test weights are imposed by the phase. This makes endpoint terms and phase increments unavoidable. The manuscript therefore places greater emphasis on exact finite-interval identities.

\subsection{Contributions}

The rigorous contributions are organised as follows.

\begin{enumerate}[label=(\roman*)]
\item A complete invariant-interval proof is given for the greedy first-order one-bit recursion, together with interval discrepancy bounds.
\item A weighted summation theorem is proved for arbitrary complex weights, with both initial and terminal boundary terms retained.
\item The phase estimate is extended from continuously differentiable phases to absolutely continuous and bounded-variation phases by means of standard BV theory \cite{AmbrosioFuscoPallara2000,EvansGariepy2015}.
\item A sharp variation factor for the parabolic phase is derived and evaluated explicitly.
\item Uniform optimality of the order $N^{-1}$ is proved in the absence of additional endpoint conditions.
\item An exact $r$th-order finite-interval summation formula is established. The formula displays every boundary term and prevents an invalid automatic inference of $O(N^{-r})$ decay.
\item Under explicit endpoint compatibility, an $O(N^{-r})$ error estimate is proved for smooth phases.
\item A boundary-trace corrected reconstruction formula is established. Exact trace metadata removes every finite-record boundary term and recovers the $O(N^{-r})$ interior rate without modifying the original coefficient record.
\item Fractional regularity is treated through finite differences: if the sampled weight belongs to $C^{r-1,\alpha}$, the corrected or endpoint-compatible rate is $O(N^{-(r-1+\alpha)})$.
\item The first-order theory is extended to polynomial moment curves and multidimensional divergence-form noise shaping.
\item The subcritical growing-region regime is identified, and a precise scale-transition estimate is proved.
\item Orthogonality obstructions are derived, showing why global $L^2$ decay cannot follow from bounded-state noise shaping alone.
\end{enumerate}

\subsection{Position of the results}

The higher-order finite-interval identity, the endpoint-compatible phase theorem, the boundary-trace corrected reconstruction, the polynomial-phase extension, and the divergence-form grid estimate follow from exact finite-difference identities developed in the report. Chapter~\ref{chap:lp-oscillation} adds transfer theorems under explicit state-sum, correlation, covariance, or block-square hypotheses. The final chapter consolidates the derivations, principal estimates, and their relation to discrete Fourier restriction and periodic Schrödinger analysis.

\subsection{Organisation}

Chapter~\ref{chap:literature} positions the problem within sigma-delta quantization, finite-frame theory, approximation theory, and harmonic analysis. Chapter~\ref{chap:methodology} fixes the analytical conventions. Chapters~\ref{chap:first-order} and~\ref{chap:parabolic} contain the complete first-order and parabolic proofs. Chapter~\ref{chap:higher-order} develops finite-record higher-order identities, endpoint-compatible estimates, and boundary-trace corrected reconstruction. Chapters~\ref{chap:extensions}--\ref{chap:robustness} treat polynomial, multidimensional, oscillatory, and implementation-sensitive extensions. Chapter~\ref{chap:consolidated-results} consolidates the detailed derivations and principal results. The appendices contain expanded identities, derivative estimates, BV background, a reference map, reproducible code, and a glossary.

\section{Quantization context and stable noise shaping}
\label{sec:02_quantization_background}

\subsection{Memoryless quantization and feedback quantization}

A memoryless scalar quantizer maps each input sample to a nearby alphabet point independently of previous samples. Pulse-code modulation is the standard example. If the quantization step is $\delta$, one often begins with a local error estimate of size $\delta/2$. Such an estimate is useful but does not improve when the same signal is oversampled unless the reconstruction method introduces further averaging. Classical models of quantization noise and their limitations are discussed in \cite{Bennett1948,Gray1990,GoyalVetterliThao1998}.

Sigma-delta quantization changes the problem by introducing memory. The current quantization decision depends on an internal state that stores previous error. In the simplest recursion,
\[
 q_k=Q(u_k+v_{k-1}),\qquad
 v_k=v_{k-1}+u_k-q_k.
\]
Therefore,
\[
 u_k-q_k=v_k-v_{k-1}.
\]
The error is a discrete derivative. In the frequency domain, a derivative corresponds to multiplication by $1-e^{-2\pi\ii\omega}$, which is small near zero frequency. This is the elementary noise-shaping mechanism.

\subsection{The engineering transfer-function picture}

In a linearised sigma-delta model, one writes the quantizer output as the quantizer input plus an additive noise source. The output then separates into a signal transfer function and a noise transfer function. A first-order loop gives an idealised noise transfer factor $1-z^{-1}$. Higher-order designs aim for $(1-z^{-1})^r$ or a stable approximation to it. This circuit-level language is useful, but it does not by itself prove nonlinear stability. The rigorous mathematical theory replaces the additive-noise assumption with an exact state recursion and an invariant-set argument \cite{GunturkLagariasVaishampayan2001,DaubechiesDeVore2003,Gunturk2012}.

The difference between the two viewpoints should be kept explicit. Transfer-function calculations describe the desired spectral shape. Stability analysis proves that the nonlinear state remains bounded for all admissible inputs. Without boundedness, the formal identity $e=\D^r v$ is not enough to control reconstruction error.

\subsection{First-order and higher-order state equations}

For a first-order scheme, the state equation is
\[
 \D v=e.
\]
For an $r$th-order scheme, one often writes
\[
 \D^r v=e.
\]
This notation compresses a larger state-space system. In implementations, the modulator may contain several integrator states, and the relationship between those states and the scalar sequence $v$ depends on the architecture. The analysis in this manuscript begins only after a valid identity of the form $e=\D^r v$ has been established.

Stable high-order one-bit schemes are nontrivial. The existence of stable families of arbitrary order and the possibility of exponential reconstruction accuracy were major achievements \cite{DaubechiesDeVore2003,Gunturk2003CPAM,Gunturk2004JAMS,DeiftGunturkKrahmer2011}. The present manuscript does not reproduce those constructions. Instead, it assumes a bounded state when treating general order and focuses on finite-record Fourier testing.

\subsection{Oversampling and normalisation}

The extension operator is normalised by $1/N$. This choice corresponds to a Riemann-sum scale rather than an unnormalised Weyl sum. The distinction matters. If
\[
 S_N(x,t)=\sum_{k=1}^N a_k e\!\left(x\frac{k}{N}+t\frac{k^2}{N^2}\right),
\]
then $S_N=N\E_N$. An $O(N^{-1})$ bound for $\E_Nu-\E_Nq$ is equivalent to an $O(1)$ bound for the unnormalised difference. Neither statement should be confused with square-root cancellation in a classical unnormalised exponential sum.

Oversampling appears here through the small mesh size $h=1/N$. A slowly varying phase satisfies
\[
 w_{k+1}-w_k=O(h).
\]
The state difference is transferred to this increment. This is the exact analogue of moving a derivative onto a smooth reconstruction kernel in bandlimited sigma-delta analysis \cite{DaubechiesDeVore2003,DaubechiesSaab2015}.

\subsection{Finite frames and the role of endpoint geometry}

Finite-frame quantization provides a useful comparison. Suppose $x$ is represented by frame coefficients $\inner{x}{f_k}$ and these coefficients are quantized. Summation by parts transfers differences to the frame path $f_k$. Error bounds then depend on frame variation. Higher-order accuracy requires higher finite differences of the frame vectors to be small and, crucially, requires endpoint terms to vanish or be controlled \cite{BenedettoPowellYilmaz2006,BenedettoPowellYilmazSecond2006,BodmannPaulsen2007,BodmannPaulsenAbdulbaki2007}.

The same phenomenon occurs in the Fourier extension problem. A high-order difference identity produces interior terms involving $\Dp^r w$, but it also produces lower-order boundary terms. If those terms are ignored, the conclusion is generally false. This observation is one of the main organising principles of the manuscript.

\subsection{Robustness and imperfect quantizers}

Real quantizers have threshold errors, saturation limits, circuit noise, and finite settling. The mathematical literature includes robustness results for imperfect quantizers and decimation procedures \cite{DaubechiesDeVoreGunturkVaishampayan2006,DaubechiesSaab2015}. In the present setting, a convenient perturbed model is
\[
 u_k-q_k=\D v_k+\eta_k,
\]
where $\eta$ collects unshaped error. The shaped part receives a variation gain, while the residual part contributes through its average $\ell^1$ size or another norm appropriate to the application. The robustness decomposition is developed in Chapter~\ref{chap:robustness}.

\subsection{What stability must mean in this manuscript}

The phrase ``stable quantizer'' will always mean that the relevant state sequence satisfies a uniform bound independent of the record length $N$. For first order,
\[
 \sup_{N\ge1}\sup_{1\le k\le N}\abs{v_k}\le V.
\]
For higher order, the exact state variable included in $e=\D^r v$ must obey the same type of bound. A simulation showing bounded states for one input is evidence, not a stability proof. Conversely, a mathematical bound may be conservative relative to circuit behaviour. These two statements are compatible.

\chapter{Literature Review and Theoretical Positioning}
\label{chap:literature}

This chapter places the report within four established areas: mathematical sigma-delta quantization, frame and compressed-sensing quantization, exponential-sum theory, and Fourier restriction with decoupling. The purpose is not to reproduce the literature exhaustively. The purpose is to identify the exact theorem used at each stage and to distinguish those external theorems from the finite-record results derived later.

\section{Exponential sums, Weyl differencing, and mean values}
\label{sec:16_exponential_sums}

\subsection{Purpose of this literature chapter}

The total-variation method takes absolute values of adjacent phase differences. It is therefore effective when those differences are uniformly small, but it cannot capture cancellation among terms whose phases rotate substantially. Classical exponential-sum theory was developed to quantify precisely this cancellation. The main tools include Weyl differencing, van der Corput estimates, the Hardy--Littlewood circle method, Vinogradov mean values, efficient congruencing, and decoupling \cite{Weyl1916,Vinogradov1954,Hua1965,GrahamKolesnik1991,Vaughan1997,Montgomery1994,IwaniecKowalski2004,Wooley2012,BourgainDemeterGuth2016}.

This chapter does not reproduce the deep proofs of these theorems. It states the forms relevant to structured one-bit coefficients, explains their normalisations, and identifies which parts of the classical machinery do not immediately apply when the coefficient sequence is $\D^rv$.

\subsection{Finite differences, moduli of smoothness, and fractional regularity}

The high-order interior term is naturally measured by a modulus of smoothness rather than by classical derivatives alone. Constructive approximation theory treats finite differences as intrinsic regularity measurements. H\"older, Lipschitz, Besov, and Sobolev scales can be characterised by the rate at which suitable finite differences vanish \cite{DeVoreLorentz1993,DitzianTotik1987,Triebel1983}. This viewpoint is especially appropriate for a sampled Fourier weight because the finite-record summation identity produces $\Delta_+^rw$ exactly. A continuous derivative estimate is introduced only after the discrete identity has been established.

For $W\in C^{r-1,\alpha}$, the elementary estimate
\[
 |\Delta_h^rW(x)|\le h^{r-1+\alpha}[W^{(r-1)}]_{C^{0,\alpha}}
\]
provides an intermediate scale between integer orders $r-1$ and $r$. Chapter~\ref{chap:higher-order} proves this estimate directly and inserts it into the finite-record formula. The resulting theorem replaces a binary distinction between ``smooth'' and ``nonsmooth'' weights by a regularity-sensitive decay law.

\subsection{Basic notation}

For a real polynomial
\[
 P(k)=\alpha_dk^d+\cdots+\alpha_1k+\alpha_0,
\]
define the weighted exponential sum
\begin{equation}
\label{eq:weighted-weyl-sum}
 S_P(a;N)=\sum_{k=1}^{N}a_ke(P(k)).
\end{equation}
The unweighted case has $a_k=1$. The quantization-error sum considered here is weighted, with $a_k=e_k$ or $a_k=v_k$ after summation by parts.

The trivial estimate is
\[
 \abs{S_P(a;N)}\le\norm{a}_{\ell^1}.
\]
If $\abs{a_k}\le1$, this is $N$. Any bound $o(N)$ represents cancellation.

\subsection{Abel summation and bounded partial sums}

Before using curvature, one can exploit bounded partial sums of the coefficients. Let
\[
 A(m)=\sum_{k=1}^{m}a_k.
\]
Abel summation gives
\begin{equation}
\label{eq:abel-general}
 \sum_{k=1}^{N}a_kb_k
 =A(N)b_N-\sum_{k=1}^{N-1}A(k)(b_{k+1}-b_k).
\end{equation}
This is the first-order weighted theorem with $A=v$ when $a=e=\D v$ and $v_0=0$.

If $\sup_m\abs{A(m)}\le V$, then
\[
 \abs{S_P(a;N)}
 \le V\left(1+\sum_{k=1}^{N-1}\abs{e(P(k+1))-e(P(k))}\right).
\]
The right-hand side is useful only while the phase increments are small or arithmetically resonant. Classical Weyl estimates instead exploit cancellation inside the sum without taking all these absolute values.

\subsection{Van der Corput differencing}

A fundamental identity compares a sum with shifted correlations. Let
\[
 S=\sum_{k=1}^{N}a_k.
\]
For $1\le H\le N$, a standard van der Corput inequality has the schematic form
\begin{equation}
\label{eq:vdc-schematic}
 \abs{S}^2
 \lesssim\frac{N+H}{H}
 \left[
  \sum_{k=1}^{N}\abs{a_k}^2
  +2\sum_{h=1}^{H}\left(1-\frac{h}{H}\right)
   \abs{\sum_{k=1}^{N-h}a_{k+h}\overline{a_k}}
 \right].
\end{equation}
Precise variants and normalisations are given in standard texts \cite{GrahamKolesnik1991,Montgomery1994,IwaniecKowalski2004}.

For $a_k=e(P(k))$, the correlation phase is
\[
 P(k+h)-P(k),
\]
a polynomial of degree $d-1$. Repeated differencing reduces the degree. This is the heart of Weyl's method.

For shaped coefficients, take
\[
 a_k=(\D^rv)_ke(P(k)).
\]
The correlation contains products
\[
 (\D^rv)_{k+h}\overline{(\D^rv)_k}
 e(P(k+h)-P(k)).
\]
Boundedness of $v$ does not directly control the partial sums of these products. Thus van der Corput differencing preserves phase cancellation but complicates the coefficient structure. A new theory must track both effects simultaneously.

\subsection{Weyl's inequality}

For an unweighted polynomial sum, Weyl's inequality bounds the sum in terms of a rational approximation to the leading coefficient. One standard form states that if
\[
 \left|\alpha_d-\frac{a}{q}\right|\le q^{-2},
 \qquad (a,q)=1,
\]
then, for every $\varepsilon>0$,
\begin{equation}
\label{eq:weyl-inequality-schematic}
 \abs{\sum_{k=1}^{N}e(P(k))}
 \le C_{d,\varepsilon}N^{1+\varepsilon}
 \left(q^{-1}+N^{-1}+qN^{-d}\right)^{2^{1-d}}.
\end{equation}
The exact exponent and refinements depend on the version used. The classical source is Weyl's equidistribution paper, and modern treatments appear in Vaughan and Iwaniec--Kowalski \cite{Weyl1916,Vaughan1997,IwaniecKowalski2004}.

The estimate divides parameter space into major arcs, where a coefficient has a good rational approximation with small denominator, and minor arcs, where Weyl cancellation is strong. In the parabolic extension, the relevant leading coefficient is $t/N^2$ in the unscaled integer polynomial. A critical-scale analysis must therefore respect the arithmetic of $t/N^2$.

\subsection{The Vinogradov mean value}

For degree $d$, define
\[
 f_d(\alpha;N)=\sum_{k=1}^{N}e(\alpha_1k+\cdots+\alpha_dk^d)
\]
and
\begin{equation}
\label{eq:vmvt-definition}
 J_{s,d}(N)=\int_{\T^d}\abs{f_d(\alpha;N)}^{2s}\dd\alpha.
\end{equation}
By orthogonality, $J_{s,d}(N)$ counts integer solutions of
\begin{equation}
\label{eq:vinogradov-system}
 x_1^j+\cdots+x_s^j
 =y_1^j+\cdots+y_s^j,
 \qquad 1\le j\le d,
\end{equation}
with all variables in $\{1,\ldots,N\}$.

The main conjecture, now a theorem, states that for every $\varepsilon>0$,
\begin{equation}
\label{eq:vmvt-main}
 J_{s,d}(N)
 \le C_{s,d,\varepsilon}
 \left(N^{s+\varepsilon}
 +N^{2s-d(d+1)/2+\varepsilon}\right).
\end{equation}
Efficient congruencing produced major advances and resolved the cubic case; decoupling resolved degrees above three, with later work completing and simplifying several aspects \cite{Wooley2012,FordWooley2014,Wooley2016,Wooley2017,BourgainDemeterGuth2016,Wooley2019,GuoLiYungZorinKranich2020}.

The cited sources provide the proofs of \eqref{eq:vmvt-main} and trace the historical development of efficient congruencing and decoupling methods.

\subsection{Weighted mean values}

Decoupling yields a weighted version at the critical exponent $p=d(d+1)$. For a complex sequence $a=(a_k)$,
\begin{equation}
\label{eq:weighted-critical-restriction}
 \norm{\sum_{k=1}^{N}a_ke(\alpha_1k+\cdots+\alpha_dk^d)}_{L^{d(d+1)}(\T^d)}
 \le C_{d,\varepsilon}N^\varepsilon\norm{a}_{\ell^2}.
\end{equation}
This form is particularly relevant because the coefficients need not be equal. For a bounded shaped error, however,
\[
 \norm{e}_{\ell^2}\lesssim N^{1/2},
\]
so the theorem does not by itself yield a gain over the natural unit-coefficient scale.

To improve the estimate using $e=\D^rv$, one needs a weighted theorem whose right-hand side measures a negative discrete Sobolev norm, bounded partial sums, endpoint-compatible moments, or another norm adapted to the difference structure.

\subsection{Major arcs, minor arcs, and shaped coefficients}

The circle method decomposes the torus into regions near rational points and their complement. On minor arcs, one seeks strong cancellation in polynomial sums. On major arcs, one approximates the sum by a product of a complete exponential sum and a continuous oscillatory integral.

For shaped coefficients, each region presents a different issue.

\begin{enumerate}[label=(\roman*)]
\item On minor arcs, summation by parts may combine bounded partial sums with a Weyl estimate for short phase increments.
\item On major arcs, the binary or difference structure changes the complete sum and may create additional cancellation or new resonances.
\item Boundary traces contribute low-frequency terms that are naturally concentrated near the origin, which is a principal major arc.
\end{enumerate}

A useful research strategy is therefore to remove boundary traces first, then study minor and major arcs separately.

\subsection{Completion and short intervals}

Classical estimates often treat a complete interval $1\le k\le N$. Blockwise reset and localisation lead to shorter intervals. Completion methods embed an incomplete sum into a complete periodic sum at the cost of logarithmic or arithmetic factors. For coefficients with bounded partial sums, completion must be performed carefully because extending the interval can create new boundary terms.

If a state is compactly supported away from the endpoints, then the shaped error also has a short transition region. Such states may be suitable test cases for combining finite-difference cancellation with classical incomplete-sum estimates.

\subsection{A possible hybrid inequality}

Suppose $e=\D v$, $v_0=v_N=0$. Then
\[
 S_P(e;N)=-\sum_{k=1}^{N-1}v_k
 e(P(k))\left[e(P(k+1)-P(k))-1\right].
\]
Define
\[
 m_k=e(P(k+1)-P(k))-1.
\]
The sum is now an exponential sum with bounded coefficients $v_km_k$. On subcritical regions, $m_k$ is uniformly small and total variation is sufficient. On critical regions, $m_k$ is not small but has its own polynomial phase dependence. One may attempt to apply Weyl or mean-value estimates to the product $m_ke(P(k))$, which is a difference of two polynomial phases:
\[
 m_ke(P(k))=e(P(k+1))-e(P(k)).
\]
This identity returns to the original difference, but it suggests a multi-phase decomposition rather than an absolute-value bound.

For order $r$, repeated summation produces a finite linear combination of shifted phases
\[
 \Dp^re(P(k))
 =\sum_{j=0}^{r}(-1)^{r-j}\binom{r}{j}e(P(k+j)).
\]
Each term is a classical polynomial exponential sum with a shifted polynomial. Endpoint-compatible shaping therefore converts the problem into a controlled linear combination of classical sums with bounded state coefficients. The remaining difficulty is that the coefficients $v_k$ are arbitrary bounded sequences, and classical Weyl cancellation can be destroyed by adversarial weights.

\subsection{What additional state hypotheses would help}

Several stronger assumptions are compatible with practical quantization models and could make exponential-sum estimates possible.

\begin{enumerate}[label=(\alph*)]
\item $v$ has bounded variation or a bounded number of jumps.
\item $v$ is periodic with a period much shorter than $N$.
\item $v$ is generated by an ergodic map with quantitative mixing.
\item $v$ has a sparse or high-frequency discrete Fourier spectrum.
\item $v$ is random after dither and satisfies concentration estimates.
\end{enumerate}

Ergodic aspects of sigma-delta state dynamics have been studied for specific recursions \cite{GunturkThao2005}. A future theorem could combine such dynamical information with number-theoretic phase cancellation.

\subsection{Summary of the number-theoretic route}

The key lesson is not that a particular Weyl inequality immediately solves the structured extension problem. It is that critical-scale cancellation is controlled by rational approximation, differencing, and mean values. The difference structure of sigma-delta error must be incorporated into these mechanisms rather than estimated before they are used.

A realistic first target is a minor-arc estimate for endpoint-compatible first- or second-order shaped errors under an additional regularity condition on the state. A second target is a mean-value inequality in which $\norm{a}_{\ell^2}$ is replaced or supplemented by bounded partial sums or vanishing discrete moments.

\section{Restriction, Strichartz estimates, and decoupling}
\label{sec:17_restriction_decoupling}

\subsection{Continuous restriction as the geometric model}

Fourier restriction asks whether the Fourier transform of an $L^p$ function can be meaningfully restricted to a curved set of measure zero. The dual extension operator for a parametrised surface $\Sigma$ has the form
\[
 Ef(x)=\int_\Sigma f(\xi)e^{2\pi\ii x\cdot\xi}\dd\sigma(\xi).
\]
Curvature creates oscillation and hence integrability beyond the trivial $L^1\to L^\infty$ estimate. Foundational results include the restriction work of Fefferman, Tomas, and Strichartz \cite{Fefferman1970,Tomas1975,Strichartz1977}. Modern developments use bilinear and multilinear restriction, Kakeya geometry, polynomial partitioning, and decoupling \cite{TaoVargasVega1998,Wolff2001,Tao2003,BennettCarberyTao2006,BourgainGuth2011,Guth2016,Guth2018}.

The discrete operator in this report is a Riemann-sum analogue of extension from the moment curve. The analogy is valuable, but the normalisation and parameter scales must be tracked carefully.

\subsection{The Stein--Tomas mechanism}

For a compact hypersurface with nonvanishing curvature, the Stein--Tomas theorem gives an $L^2$ extension estimate in a nontrivial range. Its proof uses the $TT^*$ method and decay of the Fourier transform of surface measure. The original sources are Tomas and the subsequent restriction theory developed by Stein and Strichartz \cite{Tomas1975,Strichartz1977,Stein1993}.

The important structural point is that an $L^2$ coefficient norm is converted into an ambient $L^p$ norm through curvature. Standard discrete restriction inherits the same coefficient norm. Since a one-bit error has $\ell^2$ size of order $N^{1/2}$, a direct application does not distinguish shaped and unshaped errors.

\subsection{Strichartz estimates}

For the Schr\"odinger equation
\[
 \ii\partial_tu+\Delta u=0,
 \qquad u(0)=f,
\]
the solution is an extension operator over the paraboloid. Strichartz estimates control mixed space-time norms of the solution. The endpoint theory was established in a general dispersive framework by Keel and Tao \cite{KeelTao1998}; standard expositions appear in Cazenave and Tao \cite{Cazenave2003,Tao2006}.

On a torus, the spectrum is discrete and Euclidean scaling is modified by arithmetic effects. Bourgain's work on periodic nonlinear evolution equations introduced decisive discrete restriction and periodic Strichartz estimates \cite{Bourgain1993I,Bourgain1993II}. Later work connected periodic estimates to decoupling and scale-invariant bounds \cite{BourgainDemeter2015,KillipVisan2014,Demeter2020}.

For the one-dimensional periodic Schr\"odinger flow,
\[
 u(x,t)=\sum_{k\in\Z}a_ke(kx+k^2t).
\]
The finite sum over $1\le k\le N$ is exactly the unscaled quadratic extension that appears after the change of variables in Chapter~\ref{chap:lp-oscillation}.

\subsection{Discrete restriction for the parabola}

A representative critical estimate is
\begin{equation}
\label{eq:parabola-discrete-restriction}
 \norm{\sum_{k=1}^{N}a_ke(kx+k^2t)}_{L^6(\T^2)}
 \le C_\varepsilon N^\varepsilon\norm{a}_{\ell^2}.
\end{equation}
For $a_k=1$, the sixth power counts solutions of
\begin{align*}
 k_1+k_2+k_3&=k_4+k_5+k_6,\\
 k_1^2+k_2^2+k_3^2&=k_4^2+k_5^2+k_6^2.
\end{align*}
The estimate is a quadratic Vinogradov mean value theorem. Discrete restriction associated with Schr\"odinger equations has also been studied directly by Hu and Li \cite{HuLi2011} and in a general formulation by Lai and Ding \cite{LaiDing2017}.

\subsection{Decoupling for the moment curve}

Let
\[
 E_Ig(x)=\int_Ig(\xi)e\!\left(x_1\xi+x_2\xi^2+\cdots+x_d\xi^d\right)\dd\xi.
\]
Partition $[0,1]$ into intervals $J$ of length $\delta$. The critical $\ell^2$ decoupling theorem for the moment curve states, schematically, that for
\[
 p=d(d+1)
\]
and for a ball $B$ of radius $\delta^{-d}$,
\begin{equation}
\label{eq:moment-decoupling}
 \norm{E_{[0,1]}g}_{L^p(w_B)}
 \le C_{d,\varepsilon}\delta^{-\varepsilon}
 \left(\sum_{J}\norm{E_Jg}_{L^p(w_B)}^2\right)^{1/2}.
\end{equation}
Bourgain and Demeter proved the general $\ell^2$ decoupling conjecture, and Bourgain, Demeter, and Guth obtained the sharp moment-curve consequence used in Vinogradov's mean value theorem \cite{BourgainDemeter2015,BourgainDemeterGuth2016}. A later proof by Guo, Li, Yung, and Zorin-Kranich gave a shorter route for the moment curve \cite{GuoLiYungZorinKranich2020}. Demeter's monograph provides a detailed account \cite{Demeter2020}.

The subsequent analysis uses the stated consequences of the decoupling theorem, with the full proof and broader formulation available in the cited sources.

\subsection{What decoupling separates}

Decoupling separates contributions from short frequency intervals in an $\ell^2$ manner. It is effective because pieces of a curved surface point in different directions. For a discrete sum, one may thicken each lattice frequency into a short interval and transfer the continuous estimate to the discrete setting.

Noise shaping creates a different type of structure. It correlates neighbouring coefficients through finite differences. This is local in the frequency index, whereas decoupling separates frequency intervals. The two structures are compatible in principle: finite differences act within or across adjacent intervals, while decoupling controls how the intervals recombine.

A naive approach loses the shaping. If one applies decoupling directly with coefficient sequence $e$, the right-hand side contains $\norm{e}_{\ell^2}$ or local $\ell^2$ norms. These can be as large for $e=\D v$ as for arbitrary bounded coefficients. The finite-difference representation must therefore be inserted before or during the decoupling decomposition.

\subsection{Boundary terms under frequency localisation}

Let $\psi_I(k)$ be a smooth cutoff to a frequency interval $I$. Then
\[
 \sum e_k\psi_I(k)w_k
 =\sum (\D v)_k\psi_I(k)w_k.
\]
Summation by parts moves the difference onto the product:
\[
 \Dp(\psi_Iw)_k
 =\psi_I(k+1)\Dp w_k+w_k\Dp\psi_I(k).
\]
The first term contains the small phase increment in a subcritical regime. The second is a localisation cost. If $I$ has length $M$, a smooth cutoff satisfies
\[
 \abs{\Dp\psi_I}\lesssim M^{-1}.
\]
Summed over $M$ indices, this cost is $O(1)$. After the global $1/N$ normalisation, each interval contributes $O(N^{-1})$ before recombination.

Decoupling may recombine $N/M$ intervals in $\ell^2$, producing a factor $(N/M)^{1/2}$ rather than $N/M$. This heuristic suggests a possible gain over global total variation, but a rigorous theorem must control the dependence of $\Dp w$ on the observation point and the boundary traces created at each interval.

\subsection{A model dyadic decomposition}

Partition $\{1,\ldots,N\}$ into intervals $I$ of length $M$. Write
\[
 \E_Ne=\sum_I\E_N(e\psi_I).
\]
After summation by parts inside $I$, each piece has three components:

\begin{enumerate}[label=(\roman*)]
\item an initial boundary term;
\item a terminal boundary term;
\item an interior term involving $v_k\Dp(\psi_Iw)_k$.
\end{enumerate}

If $v$ is globally bounded but not locally reset, the artificial boundaries at interval endpoints do not vanish. Smooth cutoffs spread these boundaries into the derivative term, but do not remove their total mass. A successful decoupling argument must show that the contributions of many interval boundaries combine orthogonally or cancel.

This observation parallels smooth frame-path termination. Hard partitioning creates large boundary errors; smooth localisation trades them for controlled derivatives.

\subsection{Multilinear restriction and transversality}

Bilinear and multilinear restriction estimates improve linear estimates when frequency pieces are transverse. Foundational results include the bilinear approach of Tao, Vargas, and Vega, Wolff's cone estimate, Tao's paraboloid estimate, and the multilinear restriction theorem of Bennett, Carbery, and Tao \cite{TaoVargasVega1998,Wolff2001,Tao2003,BennettCarberyTao2006}. Bourgain and Guth developed a broad-narrow method that converts multilinear information into linear estimates \cite{BourgainGuth2011}. Guth's polynomial partitioning method produced further advances \cite{Guth2016,Guth2018}.

For a one-dimensional moment curve, separated frequency intervals have transverse tangent directions after lifting to the appropriate ambient dimension. A shaped coefficient sequence couples neighbours, but intervals separated by more than the shaping order remain algebraically independent at the level of the finite difference. This may allow multilinear estimates to be applied after grouping adjacent intervals into packets.

\subsection{A negative Sobolev viewpoint}

The identity $e=\D^rv$ says that $e$ is small in a discrete negative Sobolev norm. Standard extension estimates control the operator from $\ell^2$ to $L^p$. A natural target is an estimate of the form
\begin{equation}
\label{eq:negative-sobolev-extension}
 \norm{\E_Ne}_{L^p(\Omega)}
 \le A_N\norm{D_N^{-r}e}_{\ell^\infty}.
\end{equation}
This estimate is intended for a class of errors satisfying compatible boundary conditions. In frame quantization, Sobolev duals are designed precisely to exploit the inverse difference operator \cite{BlumEtAl2010,GunturkPowellSaabYilmaz2013}. In curved extension, no fixed linear dual is available because the observation point varies continuously.

One possibility is to treat $\E_ND_N^r$ as a family of differentiated extension operators and estimate its operator norm from $\ell^\infty$ to $L^p$. The adjoint difference produces observation-dependent multipliers. Decoupling for such variable discrete multipliers is not an immediate consequence of the standard theorem.

\subsection{Periodic versus Euclidean regions}

Euclidean decoupling is stated on large balls whose radius is linked to the frequency scale. Periodic discrete restriction is stated on a torus. The variables $(x,t)$ used here employ a normalised phase, so the corresponding physical region is anisotropic. A correct transfer must include:

\begin{enumerate}[label=(\alph*)]
\item the change of variables from $(x,t)$ to torus variables;
\item the Jacobian of the change;
\item the $1/N$ coefficient normalisation;
\item the coefficient norm after finite differences;
\item the location and size of boundary terms.
\end{enumerate}

Many apparent gains disappear when one of these factors is omitted. Chapter~\ref{chap:lp-oscillation} formulates transfer principles with all normalisations displayed.

\subsection{The realistic role of decoupling in this project}

Decoupling is not needed for the compact-set $O(N^{-1})$ theorem. It becomes relevant only when the observation region grows to a scale on which adjacent phase variation is not small. At that point, the aim is not to reproduce a standard restriction estimate. The aim is to prove that the special coefficient class generated by stable sigma-delta quantization obeys a stronger estimate than arbitrary coefficients of the same $\ell^2$ size.

A plausible first result would be localised and conditional. For example, one might assume endpoint compatibility and a bounded-variation state, then prove an improved $L^6$ estimate on a subregion of the full parabolic torus. Establishing such a theorem would already create a genuine bridge between noise shaping and restriction theory.

\chapter{Analytical Methodology and Discrete Mathematical Framework}
\label{chap:methodology}

The analytical method is based on exact finite-record identities. Infinite-sequence notation is avoided whenever it would conceal a boundary contribution. Backward differences describe the quantization error, forward differences act on sampled weights, and total variation measures the cost of transferring a difference from the state to the weight.

\section{Discrete calculus, variation, and finite summation}
\label{sec:03_discrete_calculus}

\subsection{Backward and forward differences}

For a sequence $a=(a_k)$, define the backward difference
\[
 \D a_k=a_k-a_{k-1}
\]
and the forward difference
\[
 \Dp a_k=a_{k+1}-a_k.
\]
The two operators differ only by an index shift, but keeping both notations is helpful on finite intervals. The quantization identity naturally uses the backward difference, whereas summation by parts transfers it to a forward difference of the weight.

Repeated differences are defined recursively. For example,
\[
 \D^2a_k=a_k-2a_{k-1}+a_{k-2},
\qquad
 \Dp^2a_k=a_{k+2}-2a_{k+1}+a_k.
\]
The binomial formula gives
\[
 \D^ra_k=\sum_{j=0}^{r}(-1)^j\binom{r}{j}a_{k-j},
 \qquad
 \Dp^ra_k=\sum_{j=0}^{r}(-1)^{r-j}\binom{r}{j}a_{k+j}.
\]
These identities are elementary, but they should be written explicitly because boundary indices become decisive in finite records.

\subsection{Discrete summation by parts}

The finite analogue of integration by parts is also called Abel summation. It is standard in Fourier analysis and analytic number theory \cite{Apostol1976,Zygmund2002,SteinShakarchi2003}. The form used below is adapted to noise shaping.

\begin{lemma}[Finite summation by parts]
\label{lem:sbp-first}
Let $v_0,\ldots,v_N\in\C$ and $w_1,\ldots,w_N\in\C$. Then
\begin{equation}
\label{eq:sbp-first}
 \sum_{k=1}^N(\D v_k)w_k
 =v_Nw_N-v_0w_1-\sum_{k=1}^{N-1}v_k\Dp w_k.
\end{equation}
\end{lemma}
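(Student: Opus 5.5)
The proof is a direct reindexing argument with no analytic content; the only thing to watch is the bookkeeping of boundary indices. I would expand the left-hand side of \eqref{eq:sbp-first} using the definition $\D v_k=v_k-v_{k-1}$, which splits it into two finite sums:
\[
 \sum_{k=1}^N(\D v_k)w_k=\sum_{k=1}^N v_kw_k-\sum_{k=1}^N v_{k-1}w_k .
\]
In the second sum I would substitute $j=k-1$, so that it becomes $\sum_{j=0}^{N-1}v_jw_{j+1}$. Only the values $v_0,\dots,v_N$ and $w_1,\dots,w_N$ occur, which matches the hypotheses of Lemma~\ref{lem:sbp-first}.

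Next I would separate the boundary terms. From the first sum I would remove the top term $v_Nw_N$, leaving $\sum_{k=1}^{N-1}v_kw_k$. From the reindexed second sum I would remove the bottom term $v_0w_1$, leaving $\sum_{k=1}^{N-1}v_kw_{k+1}$. The two remaining sums now run over the same range $1\le k\le N-1$, so they combine into
\[
 \sum_{k=1}^{N-1}v_k(w_k-w_{k+1})=-\sum_{k=1}^{N-1}v_k\Dp w_k .
\]
This uses the forward-difference definition $\Dp w_k=w_{k+1}-w_k$. Adding back the two extracted boundary terms gives exactly $v_Nw_N-v_0w_1-\sum_{k=1}^{N-1}v_k\Dp w_k$.

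There is no genuine obstacle. The one point needing care is the degenerate case $N=1$: there the interior sum is empty, and the identity reduces to $(v_1-v_0)w_1=v_1w_1-v_0w_1$, which I would check directly. I would also note that no conjugation or commutativity beyond that of $\C$ is used, so the identity holds for arbitrary complex sequences, as stated.
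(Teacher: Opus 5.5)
Your proof is correct and follows essentially the same route as the paper: expand $\D v_k$ into two sums, shift the index in the second sum, peel off the boundary terms $v_Nw_N$ and $v_0w_1$, and combine the remaining sums over $1\le k\le N-1$ into $-\sum v_k\Dp w_k$. Your explicit check of the degenerate case $N=1$ is a small addition that the paper leaves implicit.
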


\begin{proof}
Expand the left-hand side:
\[
 \sum_{k=1}^Nv_kw_k-\sum_{k=1}^Nv_{k-1}w_k.
\]
In the second sum, replace $k$ by $k+1$ in the interior terms. This yields
\[
 v_Nw_N-v_0w_1+\sum_{k=1}^{N-1}v_k(w_k-w_{k+1}),
\]
which is exactly \eqref{eq:sbp-first}.
\end{proof}

The identity has three parts. The terminal state produces $v_Nw_N$. The initial state produces $-v_0w_1$. The interior produces a weighted forward difference. Every later estimate is a consequence of deciding how these three parts are controlled.

\subsection{Discrete total variation}

For a finite complex sequence $w=(w_k)_{k=1}^N$, define
\[
 \TV_N(w)=\sum_{k=1}^{N-1}\abs{w_{k+1}-w_k}.
\]
This is the variation of the polygonal path through the points $w_1,\ldots,w_N$ in the complex plane. It is not the variation of the real and imaginary parts separately, although the two notions are comparable. If $\abs{w_k}=1$, then $\TV_N(w)$ measures the total chordal motion of the sampled points on the unit circle.

The elementary inequality
\[
 \abs{e^{\ii\alpha}-e^{\ii\beta}}
 =2\abs{\sin((\alpha-\beta)/2)}
 \le \abs{\alpha-\beta}
\]
converts phase variation into weight variation. The constant one in this inequality is sharp near $\alpha=\beta$.

\subsection{Continuous total variation}

A real-valued function $\phi$ on $[0,1]$ has bounded variation if
\[
 \Var_{[0,1]}(\phi)
 :=\sup_{0=t_0<\cdots<t_m=1}
 \sum_{j=0}^{m-1}\abs{\phi(t_{j+1})-\phi(t_j)}
 <\infty.
\]
The standard theory of BV functions includes decomposition into absolutely continuous, jump, and Cantor parts \cite{AmbrosioFuscoPallara2000,EvansGariepy2015}. The first-order Fourier estimate needs only the defining variation inequality. It does not require differentiability.

If $\phi$ is absolutely continuous, then
\[
 \Var_{[0,1]}(\phi)=\int_0^1\abs{\phi'(\xi)}\dd\xi.
\]
The equality is a standard theorem. In the parabolic case, $\phi$ is smooth, so no measure-theoretic subtlety arises. The BV formulation is included because it reveals the actual regularity threshold of the first-order argument.

\subsection{Sampling a BV phase}

Let $\xi_k=k/N$ and set
\[
 w_k=e^{2\pi\ii\phi(\xi_k)}.
\]
Then
\begin{align*}
 \TV_N(w)
 &\le 2\pi\sum_{k=1}^{N-1}
 \abs{\phi(\xi_{k+1})-\phi(\xi_k)}\\
 &\le 2\pi\Var_{[0,1]}(\phi).
\end{align*}
This inequality remains valid when $\phi$ has jumps, provided a representative has been chosen at the sample points. Since only finitely many values are used, changing a representative on a null set may change the discrete sum. For applications involving discontinuous phases, the representative must therefore be specified. In the smooth polynomial cases considered later, this issue does not occur.

\subsection{Riemann-sum scaling}

Suppose $f\in C^1([0,1])$. The mean-value theorem gives
\[
 \abs{f((k+1)/N)-f(k/N)}
 \le \frac1N\norm{f'}_{L^\infty}.
\]
Summing over $k$ gives a bound independent of $N$. Therefore, first-order noise shaping and a normalisation $1/N$ yield an $N^{-1}$ estimate.

For $r$th differences, Taylor expansion suggests
\[
 \abs{\Dp^rf(k/N)}\lesssim_r N^{-r}\norm{f^{(r)}}_{L^\infty}.
\]
There are $O(N)$ interior terms, so their sum is $O(N^{1-r})$. After the external factor $1/N$, the contribution is $O(N^{-r})$. This heuristic is correct, but only after the boundary terms from repeated summation by parts have been removed or controlled.

\subsection{A useful integral formula for finite differences}

\begin{lemma}[Integral representation]
\label{lem:finite-difference-integral}
Let $f\in C^r([0,1])$, let $h>0$, and assume $x+rh\le1$. Then
\[
 \Delta_h^rf(x)
 =\int_{[0,h]^r}f^{(r)}(x+s_1+\cdots+s_r)\dd s_1\cdots\dd s_r,
\]
where $\Delta_hf(x)=f(x+h)-f(x)$.
\end{lemma}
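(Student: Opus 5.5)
The plan is induction on $r$, using only the fundamental theorem of calculus and the fact that $\Delta_h$ commutes with differentiation and with integration in a parameter.

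\textbf{Base case $r=1$.} Here $x+h\le1$ and $f\in C^1([0,1])$, so
\[
 \Delta_hf(x)=f(x+h)-f(x)=\int_0^hf'(x+s_1)\dd s_1 .
\]
This is exactly the claimed formula.

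\textbf{Inductive step.} Assume the formula holds for order $r-1$ and every $C^{r-1}$ function, wherever it is defined. Take $f\in C^r([0,1])$ with $x+rh\le1$, and write $\Delta_h^rf(x)=\Delta_h^{r-1}g(x)$ with $g(y)=f(y+h)-f(y)$. The function $g$ is defined on $[0,1-h]$ and lies in $C^{r-1}$ there. Since $x+(r-1)h\le 1-h$, the inductive hypothesis applies to $g$ and gives
\[
 \Delta_h^rf(x)=\int_{[0,h]^{r-1}}g^{(r-1)}(x+s_1+\cdots+s_{r-1})\dd s_1\cdots\dd s_{r-1}.
\]
Next I expand the integrand. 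We have $g^{(r-1)}(y)=f^{(r-1)}(y+h)-f^{(r-1)}(y)$, and the base case applied to $f^{(r-1)}\in C^1$ gives
\[
 g^{(r-1)}(y)=\int_0^hf^{(r)}(y+s_r)\dd s_r .
\]
This holds for every $y=x+s_1+\cdots+s_{r-1}$, because $y+h\le x+rh\le 1$.

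Substituting and combining the iterated integrals by Fubini produces the integral over $[0,h]^r$. Fubini applies because $f^{(r)}$ is continuous and the domain is compact, so the integrand is bounded and measurable.

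\textbf{Main obstacle.} The only delicate step is domain bookkeeping: every evaluation point $x+s_1+\cdots+s_j$ must stay inside $[0,1]$ so that the derivatives exist and the hypothesis $x+rh\le 1$ is used consistently. I will therefore state the inductive hypothesis for functions on a general interval $[0,L]$, which is needed when passing to $g$ on $[0,1-h]$. The commutation $g^{(r-1)}=\Delta_h f^{(r-1)}$ is immediate because differentiation is linear and commutes with translation.
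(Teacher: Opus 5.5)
Your proof is correct and follows the paper's route: induction on $r$, with the fundamental theorem of calculus supplying each new integration variable. The only difference is that you peel off the inner difference, applying the hypothesis to $g=\Delta_h f$ on $[0,1-h]$ (hence your general-interval formulation), whereas the paper writes $\Delta_h^{r+1}f(x)=\Delta_h^rf(x+h)-\Delta_h^rf(x)$ and applies the hypothesis to $f$ itself at two points, which avoids changing the domain.
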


\begin{proof}
For $r=1$, this is the fundamental theorem of calculus. Assume the formula for $r$. Then
\begin{align*}
 \Delta_h^{r+1}f(x)
 &=\Delta_h^rf(x+h)-\Delta_h^rf(x)\\
 &=\int_{[0,h]^r}\left[f^{(r)}(x+h+s_1+\cdots+s_r)
 -f^{(r)}(x+s_1+\cdots+s_r)\right]\dd s\\
 &=\int_{[0,h]^{r+1}}f^{(r+1)}(x+s_1+\cdots+s_{r+1})\dd s,
\end{align*}
by another application of the fundamental theorem of calculus.
\end{proof}

As an immediate consequence,
\begin{equation}
\label{eq:finite-difference-sup}
 \abs{\Delta_h^rf(x)}\le h^r\norm{f^{(r)}}_{L^\infty}.
\end{equation}
This estimate will be used repeatedly in the higher-order theory.

\subsection{Why exact indexing matters}

It is tempting to write symbolically
\[
 \inner{\D^rv}{w}=(-1)^r\inner{v}{\Dp^rw}
\]
and ignore the endpoints. This identity is correct on a bi-infinite sequence when all terms decay sufficiently, or on a periodic sequence with compatible boundary conditions. It is not correct on a finite interval without correction terms. The boundary terms are not a technical nuisance. They determine the leading asymptotic order.

The frame-quantization literature reached the same conclusion from a geometric perspective. Smooth frame-path termination was introduced precisely to suppress the endpoint contributions that obstruct high-order convergence \cite{BodmannPaulsenAbdulbaki2007}. The exact finite formula is derived in Chapter~\ref{chap:higher-order}.

\chapter{First-Order One-Bit Noise-Shaping Theory}
\label{chap:first-order}

This chapter develops the complete first-order theory required by the later Fourier analysis. Stability is proved directly from an invariant interval. Discrepancy bounds follow by telescoping. The weighted variation theorem is then established for arbitrary complex weights, followed by absolutely continuous, bounded-variation, Hölder, vector-valued, and operator-valued consequences.

\section{Greedy one-bit stability and discrepancy}
\label{sec:04_first_order_stability}

\subsection{The greedy sign quantizer}

Define
\[
 Q(y)=\begin{cases}
 +1,&y\ge0,\\
 -1,&y<0.
 \end{cases}
\]
Given a real input sequence $u=(u_k)_{k=1}^N$, the greedy first-order recursion is
\begin{equation}
\label{eq:greedy-recursion}
 q_k=Q(u_k+v_{k-1}),
 \qquad
 v_k=v_{k-1}+u_k-q_k,
 \qquad
 v_0=0.
\end{equation}
The convention $Q(0)=1$ is fixed. Another convention at zero changes some special trajectories but does not affect the main variation theorem.

The error identity follows immediately:
\begin{equation}
\label{eq:first-order-error}
 e_k:=u_k-q_k=v_k-v_{k-1}=\D v_k.
\end{equation}

\subsection{Invariant interval}

\begin{theorem}[Stability of the greedy one-bit scheme]
\label{thm:greedy-stability}
Assume
\[
 \abs{u_k}\le1,
 \qquad k=1,\ldots,N.
\]
Then the recursion \eqref{eq:greedy-recursion} satisfies
\[
 \abs{v_k}\le1,
 \qquad k=0,1,\ldots,N.
\]
Thus the scheme is stable with $V=1$.
\end{theorem}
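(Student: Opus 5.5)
The plan is to prove the invariant-interval statement by induction on $k$, with $v_0=0$ as the base case. The inductive hypothesis is $\abs{v_{k-1}}\le 1$. Set $y_k=u_k+v_{k-1}$. The recursion \eqref{eq:greedy-recursion} then gives $v_k=y_k-q_k=y_k-Q(y_k)$, so the whole question reduces to a one-variable statement about the map $y\mapsto y-Q(y)$ on the range of possible $y_k$.

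The first step is to locate $y_k$. From $\abs{u_k}\le1$ and $\abs{v_{k-1}}\le1$ we get $y_k\in[-2,2]$. The second step splits into two cases according to the sign convention fixed for $Q$.
\begin{itemize}[leftmargin=2em]
\item If $y_k\ge0$, then $q_k=+1$ and $v_k=y_k-1\in[-1,1]$, since $0\le y_k\le 2$.
\item If $y_k<0$, then $q_k=-1$ and $v_k=y_k+1\in[-1,1)$, since $-2\le y_k<0$.
\end{itemize}
In both cases $\abs{v_k}\le1$, which closes the induction and gives stability with $V=1$ for all $k=0,1,\ldots,N$.

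There is no real obstacle here. The only point needing care is the boundary value $y_k=0$, where the convention $Q(0)=1$ yields $v_k=-1$, still inside the interval. It is also worth noting that the bound does not depend on $N$, and that any initial state with $\abs{v_0}\le1$ would work equally well. I would add a one-line remark that the interval $[-1,1]$ is the natural invariant set here: the map $y\mapsto y-Q(y)$ sends $[-2,2]$ into $[-1,1]$.
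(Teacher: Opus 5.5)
Your proposal is correct and follows the paper's proof essentially line for line: induction from $v_0=0$, the observation $y_k\in[-2,2]$, and the two sign cases giving $v_k=y_k-1\in[-1,1]$ or $v_k=y_k+1\in[-1,1)$. Your added remarks about the convention at $y_k=0$ and about arbitrary initial states with $\abs{v_0}\le1$ are also valid.
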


\begin{proof}
The interval $[-1,1]$ is shown to be forward invariant. The initial state $v_0=0$ lies in the interval. Suppose $v_{k-1}\in[-1,1]$ and define
\[
 y_k=u_k+v_{k-1}.
\]
Since both terms lie in $[-1,1]$, one has $y_k\in[-2,2]$.

If $y_k\ge0$, then $q_k=1$ and
\[
 v_k=y_k-1.
\]
The condition $0\le y_k\le2$ gives $-1\le v_k\le1$.

If $y_k<0$, then $q_k=-1$ and
\[
 v_k=y_k+1.
\]
The condition $-2\le y_k<0$ gives $-1\le v_k<1$.

Therefore $v_k\in[-1,1]$ in both cases. Induction completes the proof.
\end{proof}

This invariant-region argument is a basic example of nonlinear stability. More elaborate sigma-delta systems require higher-dimensional invariant sets or carefully designed feedback filters \cite{GunturkLagariasVaishampayan2001,DaubechiesDeVore2003,GunturkThao2005}.

\subsection{Strict input margins}

Suppose $\abs{u_k}\le1-\mu$ for some $\mu>0$. The same proof gives $v_k\in[-1,1]$. The strict margin does not automatically shrink the invariant interval for every trajectory because the state can approach the endpoints after a sequence of inputs. It does, however, provide overload margin for perturbed thresholds and finite analogue errors. A robustness theorem must state the perturbation model explicitly.

\subsection{Interval discrepancy}

\begin{proposition}[Uniform discrepancy on intervals]
\label{prop:interval-discrepancy}
Assume $e_k=\D v_k$ and $\norm{v}_{\ell^\infty}\le V$. Then, for every $1\le m\le n\le N$,
\[
 \abs{\sum_{k=m}^ne_k}
 =\abs{v_n-v_{m-1}}
 \le2V.
\]
If $m=1$ and $v_0=0$, then
\[
 \abs{\sum_{k=1}^ne_k}\le V.
\]
\end{proposition}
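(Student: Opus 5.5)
The argument is a direct telescoping computation, followed by the triangle inequality. No earlier result is needed beyond the definition $\D v_k=v_k-v_{k-1}$ and the hypothesis $e_k=\D v_k$ for $1\le k\le N$. I will take the state to be indexed from $v_0$, as in the recursion \eqref{eq:greedy-recursion}, and read the hypothesis $\norm{v}_{\ell^\infty}\le V$ as covering every index $0\le k\le N$. This matters because the case $m=1$ brings in $v_0$.

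First, fix $1\le m\le n\le N$ and write
\[
 \sum_{k=m}^n e_k=\sum_{k=m}^n (v_k-v_{k-1}).
\]
Shifting the index in the second sum, $\sum_{k=m}^n v_{k-1}=\sum_{k=m-1}^{n-1}v_k$, so all intermediate terms $v_m,\ldots,v_{n-1}$ cancel and the sum equals $v_n-v_{m-1}$. A one-line induction on $n\ge m$ checks this just as well: the case $n=m$ is the definition, and adding $e_{n+1}=v_{n+1}-v_n$ advances it. This gives the stated equality.

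Second, the triangle inequality gives $\abs{v_n-v_{m-1}}\le\abs{v_n}+\abs{v_{m-1}}\le 2V$, since both indices $n$ and $m-1$ lie in $\{0,\ldots,N\}$. The bound does not depend on the interval length $n-m+1$, which is the content of calling the discrepancy uniform.

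Third, for $m=1$ the identity reads $\sum_{k=1}^n e_k=v_n-v_0$. If $v_0=0$ this is $v_n$, whose modulus is at most $V$.

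There is no real obstacle. The only point needing care is the index bookkeeping at the lower endpoint, namely that $v_{m-1}$ is defined and bounded when $m=1$. I will therefore state explicitly that the $\ell^\infty$ bound includes $v_0$. Under Theorem~\ref{thm:greedy-stability} this holds with $V=1$, since $v_0=0$.
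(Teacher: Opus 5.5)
Your proof is correct and is the same as the paper's: telescope $\sum_{k=m}^n e_k$ to $v_n-v_{m-1}$, then apply the triangle inequality. Your remark that the $\ell^\infty$ bound must include $v_0$ (needed when $m=1$) is a worthwhile clarification; the statement leaves it implicit, and the paper only makes it explicit later, in its discussion of nonzero initial states.
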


\begin{proof}
The sum telescopes:
\[
 \sum_{k=m}^ne_k
 =\sum_{k=m}^n(v_k-v_{k-1})
 =v_n-v_{m-1}.
\]
The bounds follow from the triangle inequality.
\end{proof}

The proposition says that the quantization error has uniformly bounded discrepancy on every interval. Individual errors may be large, but their cumulative sum cannot drift linearly with the interval length. This property is stronger than a pointwise bound and is the discrete cancellation used later.

\subsection{Prefix sums and equivalence of formulations}

Let
\[
 E_n=\sum_{k=1}^ne_k.
\]
When $v_0=0$, one has $E_n=v_n$. Therefore first-order noise shaping is equivalent to bounded prefix sums of the error. Conversely, if an error sequence has bounded prefix sums, defining $v_n=E_n$ gives $e_n=\D v_n$.

This equivalence is useful because discrepancy theory often begins from partial sums, while sigma-delta theory begins from a state recursion. The two languages describe the same first-order structure.

\subsection{The zero-input orbit}

The zero input is an instructive trajectory. Let $u_k=0$ for all $k$ and $v_0=0$. Then
\[
 q_1=1,\qquad v_1=-1,
\]
and
\[
 q_2=-1,\qquad v_2=0.
\]
The pattern repeats:
\[
 q_k=(-1)^{k+1},
 \qquad
 v_k=\begin{cases}-1,&k\text{ odd},\\0,&k\text{ even}.
 \end{cases}
\]
This sequence is stable, but its terminal state is nonzero for odd $N$. It provides the sharpness example in Chapter~\ref{chap:parabolic}.

\subsection{Complex inputs and larger alphabets}

The greedy sign rule is real. Complex Fourier coefficients may nevertheless be tested because the weights are complex while the coefficients remain real. A genuinely complex quantizer would require a planar alphabet and an invariant set in $\C$. The weighted variation theorem itself is indifferent to how the bounded state was generated. It applies to complex $v$ and complex $e$ as soon as $e=\D v$ and $\norm{v}_{\ell^\infty}\le V$ hold.

Similarly, a multibit alphabet changes the stability region but not the summation argument. The separation between quantizer design and Fourier testing is therefore useful: stability is established in the quantizer model, and the extension estimate uses only the resulting difference identity.

\section{Weighted variation estimates}
\label{sec:05_weighted_variation}

\subsection{The general theorem}

\begin{theorem}[Weighted variation with full boundary terms]
\label{thm:weighted-full}
Let $e_k=\D v_k$ for $k=1,\ldots,N$, and assume
\[
 \norm{v}_{\ell^\infty}\le V.
\]
For arbitrary complex weights $w_1,\ldots,w_N$,
\begin{equation}
\label{eq:weighted-full}
 \abs{\frac1N\sum_{k=1}^Ne_kw_k}
 \le\frac{V}{N}
 \left(\abs{w_N}+\abs{w_1}+\TV_N(w)\right).
\end{equation}
If $v_0=0$, the term $\abs{w_1}$ may be removed:
\begin{equation}
\label{eq:weighted-zero-initial}
 \abs{\frac1N\sum_{k=1}^Ne_kw_k}
 \le\frac{V}{N}
 \left(\abs{w_N}+\TV_N(w)\right).
\end{equation}
\end{theorem}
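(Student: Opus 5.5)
The plan is to apply Lemma~\ref{lem:sbp-first} directly with the given state $v_0,\ldots,v_N$ and weights $w_1,\ldots,w_N$. Since $e_k=\D v_k$ for $k=1,\ldots,N$, the lemma rewrites the weighted error sum as
\[
 \sum_{k=1}^N e_kw_k
 =v_Nw_N-v_0w_1-\sum_{k=1}^{N-1}v_k\Dp w_k .
\]
This exhibits the three parts singled out after the lemma: a terminal trace, an initial trace, and an interior weighted forward difference. No further algebra is required.

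Next I take absolute values and use the triangle inequality on each part. The hypothesis $\norm{v}_{\ell^\infty}\le V$ is read as a bound on every index $0\le k\le N$, including the initial state $v_0$. This gives $\abs{v_Nw_N}\le V\abs{w_N}$ and $\abs{v_0w_1}\le V\abs{w_1}$. For the interior term,
\[
 \abs{\sum_{k=1}^{N-1}v_k\Dp w_k}\le V\sum_{k=1}^{N-1}\abs{w_{k+1}-w_k}=V\,\TV_N(w).
\]
Adding the three bounds and multiplying by $1/N$ yields \eqref{eq:weighted-full}.

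For \eqref{eq:weighted-zero-initial}, I observe that $v_0=0$ makes the term $v_0w_1$ vanish identically in the summation-by-parts identity. The same estimate then holds with the $\abs{w_1}$ contribution removed.

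There is no real obstacle. The only point needing care is the indexing convention: the state sequence must include $v_0$ so that $e_1=v_1-v_0$ is meaningful. The $\ell^\infty$ norm must therefore be taken over $k=0,\ldots,N$, which is consistent with Theorem~\ref{thm:greedy-stability}, where the bound is stated for $k=0,1,\ldots,N$. With that convention fixed, the proof is a two-line consequence of Lemma~\ref{lem:sbp-first}. No property of the weights is used beyond their finiteness, so arbitrary complex $w_k$, and even complex $v$, are allowed.
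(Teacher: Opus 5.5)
Your proposal is correct and follows essentially the same route as the paper: apply \cref{lem:sbp-first}, then the triangle inequality with $\abs{v_k}\le V$ on each of the three terms, and drop the initial term when $v_0=0$. You also note that the $\ell^\infty$ bound must include the index $k=0$; the paper's proof uses this implicitly when it bounds $\abs{v_0w_1}$ by $V\abs{w_1}$.
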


\begin{proof}
By \cref{lem:sbp-first},
\[
 \sum_{k=1}^Ne_kw_k
 =v_Nw_N-v_0w_1-\sum_{k=1}^{N-1}v_k\Dp w_k.
\]
Taking absolute values gives
\begin{align*}
 \abs{\sum_{k=1}^Ne_kw_k}
 &\le \abs{v_N}\abs{w_N}+\abs{v_0}\abs{w_1}
 +\sum_{k=1}^{N-1}\abs{v_k}\abs{\Dp w_k}\\
 &\le V\left(\abs{w_N}+\abs{w_1}+\TV_N(w)\right).
\end{align*}
Divide by $N$. If $v_0=0$, the initial term vanishes.
\end{proof}

The theorem is deterministic and contains no oscillatory estimate. It applies to every complex weight sequence. Its usefulness depends on finding weights with bounded total variation.

\subsection{Optimality of the variation form}

The right-hand side cannot be replaced by a bound involving only $\max_k\abs{w_k}$ under the same hypotheses. Indeed, a rapidly alternating weight may align with an alternating error sequence. The variation term measures this possible alignment.

Conversely, the theorem is not always sharp in its constant because the triangle inequality discards cancellation among the interior terms. The purpose of later oscillatory analysis is precisely to improve on this absolute-value step in regimes where $\TV_N(w)$ is large.

\subsection{Absolutely continuous phases}

For a real phase $\phi$ on $[0,1]$, define
\[
 \E_N^\phi a
 =\frac1N\sum_{k=1}^Na_ke^{2\pi\ii\phi(k/N)}.
\]

\begin{theorem}[Absolutely continuous phase estimate]
\label{thm:ac-phase}
Assume $\phi$ is absolutely continuous, $e_k=\D v_k$, $v_0=0$, and $\norm{v}_{\ell^\infty}\le V$. Then
\begin{equation}
\label{eq:ac-phase}
 \abs{\E_N^\phi u-\E_N^\phi q}
 \le\frac{V}{N}
 \left[1+2\pi\int_0^1\abs{\phi'(\xi)}\dd\xi\right].
\end{equation}
\end{theorem}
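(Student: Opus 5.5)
The plan is to reduce the statement directly to the zero-initial-state case \eqref{eq:weighted-zero-initial} of \cref{thm:weighted-full}. Linearity of $\E_N^\phi$ gives
\[
 \E_N^\phi u-\E_N^\phi q=\frac1N\sum_{k=1}^N e_kw_k,
 \qquad w_k:=e^{2\pi\ii\phi(k/N)},
\]
with $e_k=\D v_k$, $v_0=0$ and $\norm{v}_{\ell^\infty}\le V$. These are exactly the hypotheses of \eqref{eq:weighted-zero-initial}. That estimate bounds the difference by $\frac{V}{N}\left(\abs{w_N}+\TV_N(w)\right)$. Since $\phi$ is real, $\abs{w_N}=1$, which produces the constant $1$ inside the bracket.

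The remaining task is to show $\TV_N(w)\le 2\pi\int_0^1\abs{\phi'(\xi)}\dd\xi$. For each $k$, apply the chordal inequality $\abs{e^{\ii\alpha}-e^{\ii\beta}}\le\abs{\alpha-\beta}$ from the discrete calculus section. This gives
\[
 \abs{w_{k+1}-w_k}\le 2\pi\abs{\phi((k+1)/N)-\phi(k/N)}.
\]
Because $\phi$ is absolutely continuous, the fundamental theorem of calculus for AC functions gives $\phi((k+1)/N)-\phi(k/N)=\int_{k/N}^{(k+1)/N}\phi'(\xi)\dd\xi$. Hence each increment is at most $\int_{k/N}^{(k+1)/N}\abs{\phi'}$.

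Summing over $k=1,\ldots,N-1$, the intervals $[k/N,(k+1)/N]$ are non-overlapping and lie in $[1/N,1]\subset[0,1]$. The total is therefore at most $2\pi\int_0^1\abs{\phi'}$. One could equally route this step through $\Var_{[0,1]}(\phi)=\int_0^1\abs{\phi'}$ and the BV sampling inequality already recorded, but the direct integral bound avoids citing the variation identity.

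I do not expect a real obstacle. The only points needing care are the following:
\begin{enumerate}[label=(\roman*)]
\item The absolute-continuity hypothesis is what justifies writing increments as integrals of $\phi'$, with $\phi'\in L^1$. Without it, one falls back on $\Var_{[0,1]}(\phi)$.
\item Only the terminal boundary term survives because $v_0=0$.
\item The sampling points start at $k=1$, so the union of subintervals is contained in $[0,1]$ rather than equal to it. This only helps the inequality.
\end{enumerate}
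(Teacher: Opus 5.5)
Your proposal is correct and matches the paper's proof step for step. Set $w_k=e^{2\pi\ii\phi(k/N)}$, use $\abs{w_N}=1$, and combine the chordal inequality with the fundamental theorem of calculus on each cell to get $\TV_N(w)\le 2\pi\int_{1/N}^{1}\abs{\phi'(\xi)}\dd\xi\le 2\pi\int_0^1\abs{\phi'(\xi)}\dd\xi$; then apply \eqref{eq:weighted-zero-initial}, exactly as the paper does.
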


\begin{proof}
Set $w_k=e^{2\pi\ii\phi(k/N)}$. Since $\abs{w_N}=1$ and
\[
 \abs{e^{\ii\alpha}-e^{\ii\beta}}\le\abs{\alpha-\beta},
\]
one has
\begin{align*}
 \abs{w_{k+1}-w_k}
 &\le2\pi\abs{\phi((k+1)/N)-\phi(k/N)}\\
 &\le2\pi\int_{k/N}^{(k+1)/N}\abs{\phi'(\xi)}\dd\xi.
\end{align*}
Summation gives
\[
 \TV_N(w)\le2\pi\int_{1/N}^{1}\abs{\phi'(\xi)}\dd\xi
 \le2\pi\int_0^1\abs{\phi'(\xi)}\dd\xi.
\]
Apply \eqref{eq:weighted-zero-initial}.
\end{proof}

\subsection{Bounded-variation phases}

\begin{theorem}[BV phase estimate]
\label{thm:bv-phase}
Assume $\phi\in BV([0,1])$, $e_k=\D v_k$, $v_0=0$, and $\norm{v}_{\ell^\infty}\le V$. Then
\[
 \abs{\E_N^\phi u-\E_N^\phi q}
 \le\frac{V}{N}
 \left[1+2\pi\Var_{[0,1]}(\phi)\right].
\]
\end{theorem}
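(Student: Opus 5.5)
The plan is to follow the proof of \cref{thm:ac-phase} almost verbatim, with the integral of $\abs{\phi'}$ replaced by the defining supremum of $\Var_{[0,1]}(\phi)$. Set $w_k=e^{2\pi\ii\phi(k/N)}$, where $\phi(k/N)$ denotes the fixed representative discussed in the subsection on sampling a BV phase. Then $\E_N^\phi u-\E_N^\phi q=\frac1N\sum_{k=1}^N e_kw_k$ with $e_k=\D v_k$ and $v_0=0$. By \eqref{eq:weighted-zero-initial} of \cref{thm:weighted-full}, it suffices to show
\[
 \abs{w_N}+\TV_N(w)\le 1+2\pi\Var_{[0,1]}(\phi).
\]

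The first term is immediate, since $\phi$ is real and so $\abs{w_N}=1$. For the variation term, apply the chordal inequality $\abs{e^{\ii\alpha}-e^{\ii\beta}}\le\abs{\alpha-\beta}$ to each adjacent pair:
\[
 \TV_N(w)\le 2\pi\sum_{k=1}^{N-1}\abs{\phi((k+1)/N)-\phi(k/N)}.
\]
The points $1/N<2/N<\cdots<1$ form a chain in $[0,1]$. Adjoin $t_0=0$ and note that the extra term $\abs{\phi(1/N)-\phi(0)}$ is nonnegative. The sum is then bounded by the variation sum for the partition $0<1/N<\cdots<1$. That partition sum is at most $\Var_{[0,1]}(\phi)$ by the definition as a supremum over partitions. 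If $N=1$, the variation sum is empty and the bound is trivial.

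The only delicate point is the representative issue. $\Var_{[0,1]}(\phi)$ is the pointwise variation of the specific function whose values are sampled, not the essential variation of an $L^1$ class. I would state explicitly that the sampled values are those of the function whose pointwise variation appears on the right. Under that convention no measure theory is needed, and the bound is uniform in $N$. Combining the two estimates and dividing by $N$ gives the claim.
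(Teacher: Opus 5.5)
Your proof is correct and follows essentially the same route as the paper. You bound each chord by $2\pi$ times the sampled phase increment, dominate the grid sum by $\Var_{[0,1]}(\phi)$ through the partition definition, and then apply \eqref{eq:weighted-zero-initial} with $\abs{w_N}=1$; your explicit treatment of the partition (adjoining $t_0=0$) and of the sampled representative just spells out steps the paper leaves implicit.
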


\begin{proof}
For each adjacent pair,
\[
 \abs{w_{k+1}-w_k}
 \le2\pi\abs{\phi((k+1)/N)-\phi(k/N)}.
\]
The sum of the phase increments over the grid is bounded by the total variation of $\phi$. Apply \cref{thm:weighted-full} with $v_0=0$.
\end{proof}

The BV theorem reveals that differentiability is not the essential assumption. The essential assumption is finite phase variation on the sampled interval.

\subsection{Lipschitz and Hölder corollaries}

If $\phi$ is Lipschitz with constant $L$, then
\[
 \Var_{[0,1]}(\phi)\le L,
\]
so
\[
 \abs{\E_N^\phi u-\E_N^\phi q}
 \le\frac{V}{N}(1+2\pi L).
\]
A merely H\"older phase with exponent $\gamma<1$ may have grid variation of order $N^{1-\gamma}$. The same method then gives only $N^{-\gamma}$ decay:
\[
 \TV_N(w)\lesssim N^{1-\gamma}.
\]
This observation suggests a regularity scale. First-order noise shaping gives one power of the mesh size, but rough weights can consume part of that gain.

\begin{proposition}[H\"older phase estimate]
\label{prop:holder-phase}
Assume
\[
 \abs{\phi(x)-\phi(y)}\le L\abs{x-y}^{\gamma},
 \qquad 0<\gamma\le1.
\]
Then
\[
 \abs{\E_N^\phi u-\E_N^\phi q}
 \le \frac{V}{N}+2\pi VL N^{-\gamma}.
\]
\end{proposition}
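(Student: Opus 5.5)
The plan is to apply the zero-initial-state bound \eqref{eq:weighted-zero-initial} of \cref{thm:weighted-full} with the sampled weight $w_k=e^{2\pi\ii\phi(k/N)}$. This is the same route as in \cref{thm:ac-phase}, except that the grid variation is now estimated directly from the H\"older condition instead of through an integral or $\Var_{[0,1]}(\phi)$. The hypotheses $e_k=\D v_k$, $v_0=0$, $\norm{v}_{\ell^\infty}\le V$ are implicit in the statement, exactly as in the preceding theorems. The difference $\E_N^\phi u-\E_N^\phi q$ equals $\frac1N\sum_{k=1}^N e_kw_k$, so \eqref{eq:weighted-zero-initial} gives
\[
 \abs{\E_N^\phi u-\E_N^\phi q}\le\frac VN\left(\abs{w_N}+\TV_N(w)\right).
\]
Since $\abs{w_N}=1$, the term $V/N$ in the claimed estimate is accounted for.

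Next I bound each adjacent increment. The chordal inequality $\abs{e^{\ii\alpha}-e^{\ii\beta}}\le\abs{\alpha-\beta}$ from the discrete calculus section gives
\[
 \abs{w_{k+1}-w_k}\le2\pi\abs{\phi((k+1)/N)-\phi(k/N)}\le2\pi L N^{-\gamma}.
\]
The second inequality is the H\"older hypothesis applied with $\abs{x-y}=1/N$. Summing over $k=1,\ldots,N-1$ gives
\[
 \TV_N(w)\le2\pi L(N-1)N^{-\gamma}\le2\pi LN^{1-\gamma}.
\]

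Multiplying by $V/N$ yields $2\pi VLN^{-\gamma}$, and adding the boundary contribution $V/N$ gives exactly the stated bound. There is no real obstacle here. The one point to check is that the H\"older condition is used only at grid spacing $1/N$, so no variation of $\phi$ is required. This is why the rate degrades to $N^{-\gamma}$ rather than $N^{-1}$, and why the crude count of $N-1$ increments, each of size $N^{-\gamma}$, is the only source of loss. For $\gamma=1$ the argument recovers the Lipschitz corollary $\frac VN(1+2\pi L)$, which is a consistency check.
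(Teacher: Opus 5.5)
Your proposal is correct and follows the same route as the paper: bound each adjacent increment by $2\pi L N^{-\gamma}$ via the H\"older condition at spacing $1/N$, sum the $N-1$ increments to get $\TV_N(w)\le 2\pi L(N-1)N^{-\gamma}$, and substitute into the zero-initial-state weighted bound with $\abs{w_N}=1$. Making the implicit first-order hypotheses explicit and checking the $\gamma=1$ Lipschitz case are both appropriate.
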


\begin{proof}
Each adjacent phase increment is at most $LN^{-\gamma}$. There are $N-1$ increments, so
\[
 \TV_N(w)\le2\pi L(N-1)N^{-\gamma}.
\]
Substitute into \eqref{eq:weighted-zero-initial}.
\end{proof}

\subsection{Vector-valued and operator-valued weights}

The same proof works in a normed space if $w_k$ is replaced by a bounded linear functional or operator and absolute values are replaced by operator norms. Let $X$ be a Banach space and $w_k\in X$. Then
\[
 \norm{\frac1N\sum_{k=1}^Ne_kw_k}_X
 \le\frac{V}{N}\left(\norm{w_N}_X+\sum_{k=1}^{N-1}\norm{w_{k+1}-w_k}_X\right)
\]
when $v_0=0$. This form may be useful for vector-valued extensions, simultaneous testing at several parameter points, or operator-valued reconstruction kernels.

\begin{proposition}[Exact variation duality for complex states]
\label{prop:variation-dual-optimal}
Let $w_1,\ldots,w_N\in\C$ and let $V>0$. Among all complex state sequences satisfying $v_0=0$ and $\norm{v}_{\ell^\infty}\le V$, one has
\begin{equation}
\label{eq:variation-dual-exact}
 \sup_{\substack{v_0=0\\\norm{v}_{\ell^\infty}\le V}}
 \abs{\sum_{k=1}^N(\D v_k)w_k}
 =V\left(\abs{w_N}+\sum_{k=1}^{N-1}\abs{w_{k+1}-w_k}\right).
\end{equation}
Consequently, the constant in the weighted variation theorem cannot be reduced when the admissible class contains every bounded complex state.
\end{proposition}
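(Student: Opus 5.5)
The upper bound is immediate from the earlier results. By \cref{lem:sbp-first} with $v_0=0$,
\[
 \sum_{k=1}^N(\D v_k)w_k=v_Nw_N-\sum_{k=1}^{N-1}v_k\Dp w_k ,
\]
and the triangle inequality gives ``$\le$'' in \eqref{eq:variation-dual-exact}. This is exactly \eqref{eq:weighted-zero-initial} multiplied by $N$. It remains to show the supremum is attained, by constructing an extremal state.

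For the lower bound we read the identity as a linear functional of the free variables $v_1,\ldots,v_N$:
\[
 \sum_{k=1}^N(\D v_k)w_k=\sum_{k=1}^{N}v_k c_k,
 \qquad c_k=-\Dp w_k\ (1\le k\le N-1),\quad c_N=w_N .
\]
The constraint $\norm{v}_{\ell^\infty}\le V$ is a product of complex discs, one for each coordinate, and $v_0=0$ imposes no condition on $v_1,\ldots,v_N$. So the coordinates can be chosen independently. We take
\[
 v_k=V\,\overline{c_k}/\abs{c_k}\ \text{ when } c_k\neq0,\qquad v_k=0 \text{ (or any value of modulus at most } V) \text{ when } c_k=0 .
\]
Each term $v_kc_k$ then equals $V\abs{c_k}\ge0$, so there is no cancellation. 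The sum equals $V\bigl(\abs{w_N}+\sum_{k=1}^{N-1}\abs{w_{k+1}-w_k}\bigr)$, and the supremum is in fact a maximum. The ``consequently'' clause follows: for this admissible state, equality holds in \eqref{eq:weighted-zero-initial} after dividing by $N$, so no smaller constant than $V$ can work over the class of all bounded complex states.

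The only step needing care is the bookkeeping of the boundary index. The terminal coefficient $c_N=w_N$ comes from the boundary term, while the interior coefficients come from the forward differences. One must check that $v_N$ appears only once, so that it can be aligned with $w_N$ independently of $v_{N-1}$. The degenerate cases $c_k=0$ and $N=1$ should also be noted; for $N=1$ the interior sum is empty and the claim reduces to $\abs{v_1w_1}\le V\abs{w_1}$ with equality attained. There is no analytic obstacle here; the point of the proposition is that the complex (rather than $\pm1$-generated) state class is rich enough for coordinatewise phase alignment.
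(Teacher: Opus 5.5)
Your proposal is correct and follows essentially the same route as the paper: summation by parts with $v_0=0$ and the triangle inequality give the upper bound, and choosing each $v_k$ independently with modulus $V$ and phase opposite to its coefficient ($v_N$ aligned with $w_N$, and $v_k$ aligned with $w_k-w_{k+1}$ for $1\le k\le N-1$) makes every summand nonnegative and attains it. Your coefficient bookkeeping $c_k=-\Dp w_k$, $c_N=w_N$, and your handling of the zero coefficients, match the paper's construction exactly.
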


\begin{proof}
Summation by parts gives
\[
 \sum_{k=1}^N(\D v_k)w_k
 =v_Nw_N+\sum_{k=1}^{N-1}v_k(w_k-w_{k+1}).
\]
The triangle inequality proves the upper bound. To attain it, choose each state value independently so that its product with the corresponding coefficient is a nonnegative real number. More precisely, set
\[
 v_N=V\exp\!\bigl(-\ii\arg w_N\bigr)
\]
when $w_N\ne0$, and for $1\le k\le N-1$ set
\[
 v_k=V\exp\!\bigl(-\ii\arg(w_k-w_{k+1})\bigr)
\]
when $w_k\ne w_{k+1}$. Values attached to zero coefficients may be chosen arbitrarily in the closed disc of radius $V$. Every nonzero summand is then equal to the modulus of its coefficient multiplied by $V$, which proves equality.
\end{proof}

\begin{remark}[Realisable-state restriction]
The maximising state in \cref{prop:variation-dual-optimal} need not arise from a causal real one-bit sigma-delta recursion. For real states, or for states constrained by a specified quantizer, the exact supremum is a smaller support-function problem. This distinction is essential when one studies whether the deterministic variation estimate can be improved for realisable trajectories.
\end{remark}

\subsection{A duality interpretation}

The space of finite sequences whose prefix sums are bounded is the discrete analogue of a negative Sobolev space. The weighted theorem says that $e=\D v$ acts continuously on weights with bounded variation. Symbolically,
\[
 \abs{\inner{e}{w}}
 \lesssim \norm{v}_{\ell^\infty}\norm{w}_{BV_d}.
\]
This is a finite-dimensional duality between discrete derivatives of bounded sequences and discrete BV weights. The viewpoint parallels continuous distribution theory, where derivatives of bounded functions act on test functions through integration by parts.

\chapter{Detailed Derivations and Proved Results for Parabolic Fourier Extension}
\label{chap:parabolic}

\section{Parabolic Fourier extension and the phase-variation factor}
\label{sec:06_parabolic_extension}

\subsection{Definition and PDE interpretation}

Define
\begin{equation}
\label{eq:parabolic-extension}
 \E_Na(x,t)
 =\frac1N\sum_{k=1}^Na_k
 e\!\left(x\frac{k}{N}+t\frac{k^2}{N^2}\right).
\end{equation}
The variables $x$ and $t$ may be viewed as space and time parameters for a periodic Schr\"odinger-type evolution after a suitable rescaling. Discrete restriction estimates for parabolic phases are closely connected with periodic Strichartz inequalities \cite{Bourgain1993I,HuLi2011,KillipVisan2014,BourgainDemeter2015}.

The present normalisation differs from the most common PDE normalisation. The frequency index is $k/N$ rather than $k$, and the sum is divided by $N$. This scaling is natural for comparing a sequence with a Riemann integral on $[0,1]$.

\subsection{The variation factor}

Let
\[
 \phi_{x,t}(\xi)=x\xi+t\xi^2,
 \qquad
 J(x,t)=\int_0^1\abs{x+2t\xi}\dd\xi.
\]
Since $\phi'_{x,t}(\xi)=x+2t\xi$, the absolutely continuous phase theorem applies directly.

\begin{theorem}[Parabolic quantization error]
\label{thm:parabolic-error}
Assume $e_k=u_k-q_k=\D v_k$, $v_0=0$, and $\norm{v}_{\ell^\infty}\le V$. Then
\begin{equation}
\label{eq:parabolic-J}
 \abs{\E_Nu(x,t)-\E_Nq(x,t)}
 \le\frac{V}{N}\left[1+2\pi J(x,t)\right].
\end{equation}
Moreover,
\begin{equation}
\label{eq:parabolic-simple}
 \abs{\E_Nu(x,t)-\E_Nq(x,t)}
 \le\frac{V}{N}\left[1+2\pi\bigl(\abs{x}+\abs{t}\bigr)\right].
\end{equation}
\end{theorem}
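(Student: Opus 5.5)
The plan is to obtain \eqref{eq:parabolic-J} as a direct specialisation of \cref{thm:ac-phase}, and then to derive \eqref{eq:parabolic-simple} from an elementary bound on $J(x,t)$.

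\textbf{Step 1: the bound \eqref{eq:parabolic-J}.} Fix $(x,t)\in\R^2$ and take the phase $\phi=\phi_{x,t}(\xi)=x\xi+t\xi^2$. This phase is a polynomial, hence smooth and absolutely continuous on $[0,1]$. Its derivative is $\phi'(\xi)=x+2t\xi$. The sampled weight is
\[
 w_k=e^{2\pi\ii\phi(k/N)}=e\!\left(x\frac kN+t\frac{k^2}{N^2}\right),
\]
so the operator $\E_N^{\phi}$ coincides with the parabolic extension $\E_N(\cdot)(x,t)$ of \eqref{eq:parabolic-extension}. By linearity,
\[
 \E_Nu-\E_Nq=\E_N^{\phi}(u-q)=\E_N^{\phi}e .
\]
The hypotheses $e=\D v$, $v_0=0$ and $\norm{v}_{\ell^\infty}\le V$ are exactly those of \cref{thm:ac-phase}. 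That theorem therefore gives
\[
 \abs{\E_Nu(x,t)-\E_Nq(x,t)}\le\frac VN\left[1+2\pi\int_0^1\abs{\phi'(\xi)}\dd\xi\right],
\]
and the integral on the right is $J(x,t)$ by definition. This is \eqref{eq:parabolic-J}.

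\textbf{Step 2: the bound \eqref{eq:parabolic-simple}.} It suffices to show $J(x,t)\le\abs x+\abs t$ and substitute into \eqref{eq:parabolic-J}. The triangle inequality gives $\abs{x+2t\xi}\le\abs x+2\abs t\,\xi$ for $\xi\in[0,1]$. Integrating over $[0,1]$,
\[
 J(x,t)\le\abs x+2\abs t\int_0^1\xi\dd\xi=\abs x+\abs t .
\]

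\textbf{Main obstacle.} No step is genuinely hard. The only point needing care is bookkeeping in Step 1: checking that the normalisation $1/N$ and the sample points $k/N$ in \eqref{eq:parabolic-extension} match those in the definition of $\E_N^{\phi}$, and that $e(z)=\exp(2\pi\ii z)$ agrees with the factor $e^{2\pi\ii\phi}$. Once this is verified, both inequalities follow directly, with no further case analysis. A case split on the sign of $x+2t\xi$ would be needed only to compute $J$ exactly, which the statement does not require.
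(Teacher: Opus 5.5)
Your proposal is correct and is the same argument as the paper's. The paper also obtains \eqref{eq:parabolic-J} by applying \cref{thm:ac-phase} to $\phi_{x,t}$, and obtains \eqref{eq:parabolic-simple} by integrating $\abs{x+2t\xi}\le\abs{x}+2\abs{t}\xi$ over $[0,1]$.
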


\begin{proof}
Apply \cref{thm:ac-phase} to $\phi_{x,t}$. For the simpler bound, use
\[
 \abs{x+2t\xi}\le\abs{x}+2\abs{t}\xi
\]
and integrate over $[0,1]$.
\end{proof}

\subsection{Exact evaluation of $J(x,t)$}

\begin{proposition}[Closed form]
\label{prop:J-closed}
The variation factor is given by the following cases.

If $t=0$, then
\[
 J(x,0)=\abs{x}.
\]
If $t\ne0$ and $x(x+2t)\ge0$, then
\[
 J(x,t)=\abs{x+t}.
\]
If $t\ne0$ and $x(x+2t)<0$, then
\[
 J(x,t)=\frac{x^2+(x+2t)^2}{4\abs{t}}.
\]
\end{proposition}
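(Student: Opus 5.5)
The integrand $g(\xi)=x+2t\xi$ is affine in $\xi$. Its endpoint values are $g(0)=x$ and $g(1)=x+2t$. The plan is to split according to whether $g$ changes sign on $[0,1]$ and, in each case, integrate $\abs{g}$ directly. No earlier result is needed beyond the definition of $J$ in \cref{thm:parabolic-error}.

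\emph{Case $t=0$.} Here $g\equiv x$ is constant, so $J(x,0)=\abs{x}$ immediately.

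\emph{Case $t\ne0$ and $x(x+2t)\ge0$.} The endpoint values $g(0)$ and $g(1)$ are both $\ge0$ or both $\le0$. Since $g$ is affine, it is monotone, so it keeps that sign on all of $[0,1]$. Hence $\abs{g}=\pm g$ throughout, and
\[
 J=\abs{\int_0^1(x+2t\xi)\dd\xi}=\abs{x+t}.
\]
A degenerate endpoint zero ($x=0$ or $x=-2t$) is covered here without change.

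\emph{Case $t\ne0$ and $x(x+2t)<0$.} Now $g$ has a unique interior zero $\xi_0=-x/(2t)\in(0,1)$, and $\abs{g}$ is piecewise linear with a single kink at $\xi_0$. I will compute $J$ as the sum of the areas of two triangles:
\begin{itemize}
\item on $[0,\xi_0]$, base $\xi_0=\abs{x}/(2\abs{t})$ and height $\abs{x}$, giving area $x^2/(4\abs{t})$;
\item on $[\xi_0,1]$, base $1-\xi_0=\abs{x+2t}/(2\abs{t})$ and height $\abs{x+2t}$, giving area $(x+2t)^2/(4\abs{t})$.
\end{itemize}
The second base follows from $1-\xi_0=(x+2t)/(2t)$, which is positive in this case. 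Summing the two areas yields $\bigl(x^2+(x+2t)^2\bigr)/(4\abs{t})$.

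The only step needing care is this last one: the sign bookkeeping that expresses $\xi_0$ and $1-\xi_0$ as absolute values divided by $2\abs{t}$. The sign condition $x(x+2t)<0$ guarantees that both quantities are positive, which justifies the identities $\xi_0=\abs{x}/(2\abs{t})$ and $1-\xi_0=\abs{x+2t}/(2\abs{t})$ regardless of the signs of $x$ and $t$ individually.

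As a consistency check, the two nonzero-$t$ formulas agree on the boundary $x(x+2t)=0$. For instance, at $x=0$ the second formula gives $4t^2/(4\abs{t})=\abs{t}=\abs{x+t}$.
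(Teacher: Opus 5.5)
Your proof is correct and follows the paper's argument. It uses the same case split on $t$ and on the sign of $x(x+2t)$, and in the sign-changing case it computes $J$ as the areas of two triangles with bases $\xi_0$ and $1-\xi_0$ and heights $\abs{x}$ and $\abs{x+2t}$; your identities $\xi_0=\abs{x}/(2\abs{t})$ and $1-\xi_0=\abs{x+2t}/(2\abs{t})$ spell out the ``substitute and simplify'' step that the paper leaves to the reader.
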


\begin{proof}
When $t=0$, the integrand is constant. Suppose $t\ne0$. The affine function $g(\xi)=x+2t\xi$ has endpoint values $x$ and $x+2t$.

If the endpoint values have the same sign, then $g$ does not change sign on $[0,1]$. Hence
\[
 J(x,t)=\abs{\int_0^1g(\xi)\dd\xi}=\abs{x+t}.
\]

If the endpoint values have opposite signs, the unique zero is
\[
 \xi_0=-\frac{x}{2t}\in(0,1).
\]
The graph of $\abs{g}$ consists of two triangles. The first has base $\xi_0$ and height $\abs{x}$; the second has base $1-\xi_0$ and height $\abs{x+2t}$. Therefore
\[
 J(x,t)=\frac12\xi_0\abs{x}
 +\frac12(1-\xi_0)\abs{x+2t}.
\]
Substituting $\xi_0=-x/(2t)$ and simplifying gives the stated formula.
\end{proof}

The closed form is often substantially smaller than $\abs{x}+\abs{t}$. For example, if $x=-t$, the phase derivative changes sign symmetrically and
\[
 J(-t,t)=\frac{\abs{t}}{2}.
\]
The simple triangle bound would give $2\abs{t}$.

\subsection{Compact-set convergence}

\begin{corollary}[Uniform convergence on compact sets]
\label{cor:compact-convergence}
Let $K\subset\R^2$ be compact and define
\[
 M_K=\sup_{(x,t)\in K}J(x,t).
\]
Then
\[
 \norm{\E_Nu-\E_Nq}_{L^\infty(K)}
 \le\frac{V}{N}(1+2\pi M_K).
\]
In particular,
\[
 \norm{\E_Nu-\E_Nq}_{L^\infty(K)}=O_K(N^{-1}).
\]
\end{corollary}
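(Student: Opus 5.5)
The corollary follows by taking a supremum in \cref{thm:parabolic-error}. The only nontrivial point is that $M_K$ is finite.

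First, I would check that $J$ is a continuous function on $\R^2$. For fixed $\xi\in[0,1]$, the map $(x,t)\mapsto\abs{x+2t\xi}$ is Lipschitz, since
\[
 \bigl|\abs{x+2t\xi}-\abs{x'+2t'\xi}\bigr|\le\abs{x-x'}+2\abs{t-t'}.
\]
Integrating over $\xi\in[0,1]$ gives the same Lipschitz bound for $J$, so $J$ is continuous. Because $K$ is compact, $M_K=\sup_K J$ is finite and attained. One can also bypass continuity with the elementary bound $J(x,t)\le\abs{x}+\abs{t}$ from the proof of \cref{thm:parabolic-error}. Since $K$ is bounded, this gives $M_K\le\sup_K(\abs{x}+\abs{t})<\infty$ directly.

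Next, I would fix an arbitrary $(x,t)\in K$ and apply \eqref{eq:parabolic-J}. The hypotheses are the ones carried over from \cref{thm:parabolic-error}, namely $e=\D v$, $v_0=0$ and $\norm{v}_{\ell^\infty}\le V$. This yields
\[
 \abs{\E_Nu(x,t)-\E_Nq(x,t)}\le\frac{V}{N}\bigl(1+2\pi J(x,t)\bigr)\le\frac{V}{N}(1+2\pi M_K).
\]
The right-hand side does not depend on $(x,t)$. Taking the supremum over $K$ therefore gives the $L^\infty(K)$ bound. The functions involved are finite trigonometric sums and hence continuous, so the supremum and the essential supremum coincide and no measurability issue arises.

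The final assertion $O_K(N^{-1})$ is then immediate, with constant $V(1+2\pi M_K)$. This constant is independent of $N$ because $V$ is the uniform state bound (for instance $V=1$ by \cref{thm:greedy-stability}), and $M_K$ depends only on $K$. I do not expect any genuine obstacle. The one step that needs care is the finiteness of $M_K$, and it is handled either by continuity of $J$ on the compact set or by the bound $J\le\abs{x}+\abs{t}$. If an explicit value of $M_K$ is wanted, \cref{prop:J-closed} supplies it.
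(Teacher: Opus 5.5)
Your proof is correct and follows the paper's argument: apply the pointwise bound \eqref{eq:parabolic-J} at each $(x,t)\in K$ and take the supremum, using continuity of $J$ and compactness of $K$ to get $M_K<\infty$. Your explicit Lipschitz estimate for $J$ and the alternative bound $J\le\abs{x}+\abs{t}$ only fill in the finiteness step that the paper states in one line.
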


\begin{proof}
Take the supremum in \eqref{eq:parabolic-J}. Continuity of $J$ and compactness of $K$ imply $M_K<\infty$.
\end{proof}

\subsection{Rectangular local $L^p$ bounds}

Let
\[
 \Omega_R=[-R,R]^2.
\]
For $(x,t)\in\Omega_R$, one has $\abs{x}+\abs{t}\le2R$. Therefore
\[
 \norm{\E_Nu-\E_Nq}_{L^\infty(\Omega_R)}
 \le\frac{V}{N}(1+4\pi R).
\]
Using $\abs{\Omega_R}=4R^2$ gives the following result.

\begin{corollary}[Local $L^p$ estimate]
\label{cor:local-lp}
For $1\le p<\infty$,
\[
 \norm{\E_Nu-\E_Nq}_{L^p(\Omega_R)}
 \le (4R^2)^{1/p}\frac{V}{N}(1+4\pi R).
\]
For $p=\infty$, the factor $(4R^2)^{1/p}$ is omitted.
\end{corollary}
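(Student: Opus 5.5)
The estimate follows from the uniform bound on $\Omega_R$ displayed just before the corollary. That bound in turn comes from \eqref{eq:parabolic-simple} in \cref{thm:parabolic-error}: for $(x,t)\in\Omega_R$ we have $\abs{x}\le R$ and $\abs{t}\le R$, so $1+2\pi(\abs{x}+\abs{t})\le 1+4\pi R$. This gives the pointwise bound
\[
 \abs{\E_Nu(x,t)-\E_Nq(x,t)}\le\frac{V}{N}(1+4\pi R)
\]
for every $(x,t)\in\Omega_R$. Taking the supremum over $\Omega_R$ settles the case $p=\infty$, where no volume factor appears.

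For $1\le p<\infty$, we first note that the difference is a continuous function of $(x,t)$, being a finite sum of continuous exponentials. It is therefore measurable on $\Omega_R$. We then apply the elementary inequality $\norm{f}_{L^p(\Omega)}\le\abs{\Omega}^{1/p}\norm{f}_{L^\infty(\Omega)}$, which holds for any set $\Omega$ of finite measure. With $\abs{\Omega_R}=(2R)^2=4R^2$, this yields the stated factor $(4R^2)^{1/p}$ multiplying the uniform bound.

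No step here is a genuine obstacle. The only points requiring care are that the hypotheses of \cref{thm:parabolic-error} ($v_0=0$, $\norm{v}_{\ell^\infty}\le V$, $e=\D v$) carry over unchanged, and that we use the simple bound $\abs{x}+\abs{t}\le 2R$ rather than the sharper quantity $M_{\Omega_R}$. Using $M_{\Omega_R}$ from \cref{cor:compact-convergence} together with \cref{prop:J-closed} would also work and would give a slightly better constant, but it is not needed.
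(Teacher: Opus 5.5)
Your proof is correct and is exactly the paper's argument. The paper bounds $\abs{x}+\abs{t}\le 2R$ on $\Omega_R$ in \eqref{eq:parabolic-simple} to get the uniform estimate, then multiplies by $\abs{\Omega_R}^{1/p}=(4R^2)^{1/p}$.

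One small correction to your closing aside. Since $J\le\abs{x}+\abs{t}\le 2R$ on $\Omega_R$ and $J(R,R)=\int_0^1 R(1+2\xi)\dd\xi=2R$, one has $\sup_{\Omega_R}J=2R$ exactly. Passing through the sharper $J$ bound therefore gives the same constant $1+4\pi R$, not a better one.
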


This estimate is local and deterministic. It does not use a restriction theorem. The factor $R^{2/p}$ comes only from the measure of the region.

\subsection{An exact adjacent-phase estimate}

The inequality $\abs{e^{\ii\alpha}-e^{\ii\beta}}\le\abs{\alpha-\beta}$ is convenient but may be loose when phase increments are not small. An exact identity is
\[
 \abs{w_{k+1}-w_k}
 =2\abs{\sin(\pi\delta_k)},
\]
where
\[
 \delta_k=\frac{x}{N}+\frac{t(2k+1)}{N^2}.
\]
Therefore the weighted theorem also gives
\begin{equation}
\label{eq:exact-sine-bound}
 \abs{\E_Nu-\E_Nq}
 \le\frac{V}{N}\left[1+2\sum_{k=1}^{N-1}\abs{\sin(\pi\delta_k)}\right].
\end{equation}
This bound is exact at the variation step. It may improve the simple derivative estimate when some increments are close to integers. It still takes absolute values and therefore does not exploit cancellation between different $k$.

\subsection{Geometric interpretation}

The sampled weights trace a polygonal path on the unit circle. The quantity $J(x,t)$ is the total variation of the continuous phase, while $\TV_N(w)$ is the chord length of the sampled path. The estimate says that a first-order shaped error acts weakly on a slowly moving point of the unit circle. The faster the phase winds, the weaker this deterministic conclusion becomes.

\section{Complete derivation of the parabolic estimate}
\label{sec:complete-parabolic}

The complete argument is recorded here in a single chain so that every hypothesis can be traced to the final estimate. Let
\[
 e_k=u_k-q_k=v_k-v_{k-1},\qquad v_0=0,
\]
and suppose that $\|v\|_{\ell^\infty}\le V$. Define
\[
 w_k(x,t)=\exp\!\left(2\pi\mathrm i\left(x\frac{k}{N}+t\frac{k^2}{N^2}\right)\right).
\]
The extension difference is
\[
 \mathcal E_Nu(x,t)-\mathcal E_Nq(x,t)
 =\frac1N\sum_{k=1}^Ne_kw_k(x,t).
\]
Discrete summation by parts gives
\begin{align}
 \sum_{k=1}^Ne_kw_k
 &=\sum_{k=1}^N(v_k-v_{k-1})w_k \\
 &=v_Nw_N-v_0w_1+\sum_{k=1}^{N-1}v_k(w_k-w_{k+1}).
 \label{eq:report-parabolic-sbp}
\end{align}
The initial contribution vanishes because $v_0=0$. Since $|w_N|=1$,
\begin{align}
 \left|\mathcal E_Nu-\mathcal E_Nq\right|
 &\le \frac{V}{N}
 \left(1+\sum_{k=1}^{N-1}|w_{k+1}-w_k|\right).
 \label{eq:report-parabolic-var}
\end{align}
The phase increment is
\begin{align}
 \delta_k
 &=x\frac1N+t\frac{(k+1)^2-k^2}{N^2} \\
 &=\frac{x}{N}+\frac{t(2k+1)}{N^2}.
\end{align}
Therefore,
\[
 |w_{k+1}-w_k|=2|\sin(\pi\delta_k)|\le2\pi|\delta_k|.
\]
A second derivation uses the sampled phase $\phi_{x,t}(\xi)=x\xi+t\xi^2$. The fundamental theorem of calculus gives
\[
 |\phi_{x,t}(\xi_{k+1})-\phi_{x,t}(\xi_k)|
 \le\int_{\xi_k}^{\xi_{k+1}}|x+2t\xi|\,\mathrm d\xi.
\]
Summation over the adjacent intervals yields
\[
 \sum_{k=1}^{N-1}|w_{k+1}-w_k|
 \le2\pi\int_{1/N}^{1}|x+2t\xi|\,\mathrm d\xi
 \le2\pi J(x,t),
\]
where
\[
 J(x,t)=\int_0^1|x+2t\xi|\,\mathrm d\xi.
\]
Substitution into \eqref{eq:report-parabolic-var} proves
\[
 \left|\mathcal E_Nu(x,t)-\mathcal E_Nq(x,t)\right|
 \le\frac{V}{N}\left[1+2\pi J(x,t)\right].
\]
Every term has a distinct origin. The factor $N^{-1}$ is the normalisation of the extension operator. The constant $V$ is the state bound. The number $1$ is the terminal-state contribution. The function $J$ measures the continuous total variation of the phase. No cancellation beyond the triangle inequality is required.

\subsection{Dependence of the estimate on the assumptions}

The difference representation is essential. A pointwise bound $|e_k|\le C$ alone gives only
\[
 \left|\frac1N\sum_{k=1}^Ne_kw_k\right|\le C,
\]
which does not decay. The state bound is also essential because the summation-by-parts identity contains the values of $v_k$. The zero initial state removes one boundary term. A nonzero initial state produces the additional contribution $V_0|w_1|/N$. The smoothness of the parabolic phase is used only to estimate its sampled variation. Consequently, the same proof applies to every absolutely continuous phase with integrable derivative.

\subsection{Exact and simplified constants}

The integral $J(x,t)$ is exact as a continuous variation factor. The estimate
\[
 J(x,t)\le |x|+|t|
\]
follows from $|x+2t\xi|\le|x|+2|t|\xi$. The simpler bound
\[
 \left|\mathcal E_Nu-\mathcal E_Nq\right|
 \le\frac{V}{N}\left[1+2\pi(|x|+|t|)\right]
\]
is convenient on rectangular parameter sets. The discrete sine expression is generally sharper:
\[
 \left|\mathcal E_Nu-\mathcal E_Nq\right|
 \le\frac{V}{N}\left[1+2\sum_{k=1}^{N-1}|\sin(\pi\delta_k)|\right].
\]
The three versions serve different purposes. The sine form is closest to the finite record, the $J$ form is geometrically transparent, and the linear form gives simple region-dependent constants.

\section{Sharpness, boundary obstruction, and limits of first-order control}
\label{sec:07_sharpness}

\subsection{Uniform optimality of $N^{-1}$}

The upper bound in \cref{thm:parabolic-error} is of order $N^{-1}$ on fixed compact sets. The next result proves that this order is optimal when all admissible input sequences are allowed.

\begin{proposition}[Zero-input lower bound]
\label{prop:zero-input-lower}
Let $u_k=0$ for every $k$, and apply the greedy recursion with $v_0=0$. Then
\[
 q_k=(-1)^{k+1}
\]
and
\[
 v_k=\begin{cases}
 -1,&k\text{ odd},\\
 0,&k\text{ even}.
 \end{cases}
\]
If $N$ is odd, then
\[
 \abs{\E_Nu(0,0)-\E_Nq(0,0)}=\frac1N.
\]
Consequently, no bound of order $o(N^{-1})$ can hold uniformly over all input sequences satisfying $\abs{u_k}\le1$.
\end{proposition}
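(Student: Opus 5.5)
The plan is to verify the orbit by induction using the greedy recursion \eqref{eq:greedy-recursion} with the convention $Q(0)=1$, and then evaluate the extension difference at the origin by telescoping, as in \cref{prop:interval-discrepancy}.

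\emph{Step 1: the orbit.} I will prove by induction on $k$ that $v_{k}=-1$ when $k$ is odd and $v_k=0$ when $k$ is even, and that $q_k=(-1)^{k+1}$. The base case is $v_0=0$. If $k$ is odd, then $v_{k-1}=0$, so $y_k=u_k+v_{k-1}=0$. The convention $Q(0)=1$ gives $q_k=1$, and then $v_k=0+0-1=-1$. If $k$ is even, then $v_{k-1}=-1$, so $y_k=-1<0$. This gives $q_k=-1$ and $v_k=-1+0+1=0$. Both parities agree with the claimed formulas.

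\emph{Step 2: evaluation at the origin.} At $(x,t)=(0,0)$ every weight equals $w_k=1$. Therefore
\[
 \E_Nu(0,0)-\E_Nq(0,0)=\frac1N\sum_{k=1}^N e_k=\frac1N(v_N-v_0)=\frac{v_N}{N}.
\]
When $N$ is odd, $v_N=-1$, so the modulus equals $1/N$. The same value can be read off directly: $\sum_{k=1}^N q_k$ equals $1$ for odd $N$, and $u=0$.

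\emph{Step 3: the lower bound.} The input $u=0$ satisfies $\abs{u_k}\le1$, and the point $(0,0)$ lies in any compact parameter set containing the origin. Suppose some bound $\varepsilon_N=o(N^{-1})$ held uniformly over admissible inputs. Along odd $N$ we would then need $1/N\le\varepsilon_N$, which contradicts $N\varepsilon_N\to0$. Hence the order $N^{-1}$ in \cref{thm:parabolic-error} and \cref{cor:compact-convergence} cannot be improved.

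There is no real obstacle in this argument. The only delicate point is that the orbit depends on the tie-breaking convention $Q(0)=1$ at odd steps, where $y_k=0$. I will state explicitly that this convention fixes the trajectory.
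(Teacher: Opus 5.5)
Your proof is correct and follows essentially the paper's argument: you derive the orbit of the greedy recursion under the convention $Q(0)=1$, then telescope $\frac1N\sum_{k=1}^N \D v_k=(v_N-v_0)/N$ at the origin, where every weight equals one. Your explicit parity induction and your spelled-out contradiction with $N r_N\to 0$ along odd $N$ only make explicit what the paper takes from its earlier orbit computation and states in words.
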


\begin{proof}
The trajectory was computed in Chapter~\ref{chap:first-order}. At $(x,t)=(0,0)$, all weights are one. Hence
\begin{align*}
 \E_Nu(0,0)-\E_Nq(0,0)
 &=\frac1N\sum_{k=1}^N(u_k-q_k)\\
 &=\frac1N\sum_{k=1}^N\D v_k
 =\frac{v_N-v_0}{N}.
\end{align*}
For odd $N$, $v_N=-1$ and $v_0=0$. The absolute value is $1/N$.
\end{proof}

\subsection{The quantifiers in the sharpness statement}

The proposition does not say that every input has error exactly of order $N^{-1}$. Some inputs may produce $v_N=0$ or additional cancellation. It says that there is no sequence $r_N=o(N^{-1})$ such that
\[
 \sup_{\abs{u_k}\le1}
 \abs{\E_Nu(0,0)-\E_Nq(0,0)}
 \le r_N
\]
for all large $N$. The distinction between a worst-case lower bound and a pointwise asymptotic statement is essential.

\subsection{The terminal state as the leading obstruction}

At a constant weight $w_k=1$, summation by parts gives
\[
 \frac1N\sum_{k=1}^Ne_k=\frac{v_N-v_0}{N}.
\]
There is no interior variation term. The entire error is a boundary term. Therefore no argument based only on smoother weights can improve the order at the point $(0,0)$ unless the terminal state is controlled more strongly.

This observation foreshadows the higher-order theory. Repeated summation by parts produces several terminal traces. Unless they vanish, the leading trace remains of order $N^{-1}$ after normalisation, regardless of the formal order of the noise-shaping operator.

\subsection{Special inputs with improved behaviour}

If $v_N=0$, the constant-weight error vanishes exactly. For a nonconstant smooth weight, the remaining first-order term is still generally $O(N^{-1})$. Additional improvement requires more structure, such as $e=\D^rv$ with compatible endpoints.

A useful special case is a block for which both $v_0$ and $v_N$ vanish. Then
\[
 \sum_{k=1}^Ne_kw_k=-\sum_{k=1}^{N-1}v_k\Dp w_k.
\]
This removes the terminal amplitude term from the weighted estimate. It improves the constant but not the generic order for first-order shaping because $\sum\abs{\Dp w_k}=O(1)$.

\subsection{Why global uniformity fails on large regions}

For the parabolic phase, \eqref{eq:parabolic-simple} gives
\[
 \abs{\E_Nu-\E_Nq}
 \lesssim\frac{1+\abs{x}+\abs{t}}{N}.
\]
If $\abs{x}+\abs{t}\sim N$, the bound is order one. This does not prove that the actual error is order one. It proves that the total-variation method alone cannot guarantee decay.

The exact sine formula \eqref{eq:exact-sine-bound} leads to the same worst-case conclusion because each chord length is at most two and there are $N-1$ chords. At critical scale the sampled phase may wind around the circle many times.

\subsection{No automatic global $L^2$ gain}

There is a stronger obstruction. Fix $t$ and integrate in $x$ over a full orthogonality interval $[0,N]$. Since the frequencies are $k/N$,
\begin{align*}
 \int_0^N\abs{\E_Ne(x,t)}^2\dd x
 &=\frac1{N^2}\sum_{k,\ell=1}^Ne_k\overline{e_\ell}
 e\!\left(t\frac{k^2-\ell^2}{N^2}\right)
 \int_0^Ne\!\left(x\frac{k-\ell}{N}\right)\dd x\\
 &=\frac1N\sum_{k=1}^N\abs{e_k}^2.
\end{align*}
The phase in $t$ has modulus one and disappears on the diagonal. A stable first-order scheme does not force $\sum\abs{e_k}^2$ to be small. For the zero input, $e_k=-q_k$ and the right-hand side equals one.

Thus the unnormalised $L^2_x$ energy on a full period need not decay. Any global theorem must use a different norm, a smaller region, extra averaging, coefficient normalisation, or additional structure.

\subsection{The danger of an invalid high-order argument}

A common formal argument is
\[
 e=\D^rv
 \quad\Longrightarrow\quad
 \sum e_kw_k\approx\sum v_k\Dp^rw_k,
\]
followed by $\Dp^rw_k=O(N^{-r})$. On an infinite or periodic domain this may be legitimate. On a finite interval it omits boundary terms. The omitted term with no forward difference is typically of size one before the $1/N$ normalisation. It therefore produces only $O(N^{-1})$ decay.

The correct conclusion is not that high-order noise shaping is useless. The correct conclusion is that endpoint compatibility is part of the theorem. This principle is consistent with high-order frame quantization and smooth frame-path termination \cite{BenedettoPowellYilmazSecond2006,BodmannPaulsenAbdulbaki2007,BlumEtAl2010}.

\subsection{A hierarchy of possible improvements}

There are several ways to move beyond the first-order worst-case result.

\begin{enumerate}[label=(\alph*)]
\item Impose terminal state conditions.
\item Use a higher-order stable scheme together with endpoint compatibility.
\item Restrict the input class, for example by smoothness or moment conditions.
\item Average over random dither or a probabilistic input model.
\item Work in a subcritical growing region.
\item Preserve phase cancellation through exponential-sum or restriction estimates.
\end{enumerate}

Each mechanism addresses a different obstruction. They should not be conflated.

\chapter{Higher-Order Finite-Record Derivations and Endpoint-Compatible Proofs}
\label{chap:higher-order}

Higher-order noise shaping requires a finite-interval calculation that retains every boundary trace. Formal repetition of the first-order argument is insufficient. This chapter derives the exact identity for $e=\Delta^rv$, gives the low-order cases in full, and proves the endpoint-compatible $O(N^{-r})$ result for sampled smooth phases.

\section{Exact higher-order summation on a finite record}
\label{sec:08_higher_order_identity}

\subsection{Why the finite interval must be treated explicitly}

High-order sigma-delta quantization is usually expressed through the formal identity
\begin{equation}
\label{eq:high-order-formal}
 e_k=u_k-q_k=\D^r v_k,
\end{equation}
where $r\ge 1$ is the order and the state $v$ is bounded. On an infinite sequence, or on a periodic sequence with compatible traces, one may move the difference operator from $v$ to a test sequence by repeated summation by parts. A finite data record is different. Every application of summation by parts creates a new left boundary term and a new right boundary term. If these terms are omitted, the conclusion can be wrong by an entire power of $N$.

The importance of high-order stability in bandlimited reconstruction was established by Daubechies and DeVore, and sharper constructions were developed by G\"unt\"urk and by Deift, G\"unt\"urk, and Krahmer \cite{DaubechiesDeVore2003,Gunturk2003CPAM,Gunturk2004JAMS,DeiftGunturkKrahmer2011}. Higher-order finite-frame quantization likewise requires careful control of how a frame path begins and ends \cite{BenedettoPowellYilmazSecond2006,BodmannPaulsenAbdulbaki2007,BlumEtAl2010}. The purpose of this chapter is narrower. The exact algebraic identity needed for a finite coefficient record is derived independently of any particular quantizer construction.

\subsection{Difference conventions and boundary traces}

For a sequence $z=(z_k)$ indexed on a sufficiently large integer interval, define the backward and forward differences by
\[
 \D z_k=z_k-z_{k-1},
 \qquad
 \Dp z_k=z_{k+1}-z_k.
\]
The iterates are denoted by $\D^j$ and $\Dp^j$, with $\D^0=\Dp^0=I$.

To interpret $\D^r v_k$ for $1\le k\le N$, the state must be defined at least on
\[
 k=1-r,\ldots,0,1,\ldots,N.
\]
This is not an artificial extension. A causal $r$th-order recursion always carries $r$ pieces of initial data, even when they are set equal to zero by convention.

\begin{definition}[Finite-record boundary traces]
For $0\le j\le r-1$, define
\begin{align*}
 L_j(v)&=(\D^{r-1-j}v)_0,\\
 R_{j,N}(v)&=(\D^{r-1-j}v)_{N-j}.
\end{align*}
The values $L_j(v)$ are the left traces and $R_{j,N}(v)$ are the staggered right traces associated with $r$fold summation by parts.
\end{definition}

The staggering on the right is forced by the shortening of the summation interval after each forward difference is applied to the weight.

\subsection{The exact finite-interval identity}

\begin{theorem}[Repeated summation by parts]
\label{thm:repeated-sbp}
Let $r\ge1$ and $N\ge r+1$. Let $v_k$ be defined for $1-r\le k\le N$, and let $w_k$ be defined for $1\le k\le N$. Then
\begin{align}
\label{eq:repeated-sbp}
 \sum_{k=1}^{N}(\D^rv)_k w_k
 &=\sum_{j=0}^{r-1}(-1)^j
 \left[
  (\D^{r-1-j}v)_{N-j}(\Dp^jw)_{N-j}
  -(\D^{r-1-j}v)_0(\Dp^jw)_1
 \right] \notag\\
 &\quad+(-1)^r\sum_{k=1}^{N-r}v_k(\Dp^rw)_k.
\end{align}
Every boundary term in this formula is necessary in general.
\end{theorem}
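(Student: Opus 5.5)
The plan is induction on $r$, with \cref{lem:sbp-first} applied on a shrinking index interval at each step. I will first prove a slightly more general one-step identity: for integers $a\le b$ and sequences $z$ defined on $a-1,\ldots,b$ and $y$ defined on $a,\ldots,b$,
\[
 \sum_{k=a}^{b}(\D z)_k y_k=z_by_b-z_{a-1}y_a-\sum_{k=a}^{b-1}z_k(\Dp y)_k .
\]
This is \cref{lem:sbp-first} after the index shift $k\mapsto k-a+1$. It also holds for $a=b$ when the last sum is empty.

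For $r=1$ the theorem is exactly \cref{lem:sbp-first}: the $j=0$ term gives $v_Nw_N-v_0w_1$, and the interior term is $-\sum_{k=1}^{N-1}v_k\Dp w_k$. For the inductive step, assume the formula for order $r$ with all admissible $N$, and take $v$ defined on $-r,\ldots,N$. Apply the order-$r$ formula to $z=\D v$, which is defined on $1-r,\ldots,N$, so that $(\D^{r+1}v)_k=(\D^r z)_k$. The boundary terms become $(-1)^j[(\D^{r-j}v)_{N-j}(\Dp^jw)_{N-j}-(\D^{r-j}v)_0(\Dp^jw)_1]$ for $0\le j\le r-1$. These are exactly the order-$(r+1)$ boundary terms with the same $j$. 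The remaining piece is $(-1)^r\sum_{k=1}^{N-r}(\D v)_k(\Dp^rw)_k$.

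Apply the one-step identity to this remaining piece with $a=1$, $b=N-r$, $z=v$ and $y=\Dp^rw$. Here $y$ is defined on $1,\ldots,N-r$. This produces
\[
 (-1)^r\bigl[v_{N-r}(\Dp^rw)_{N-r}-v_0(\Dp^rw)_1\bigr]+(-1)^{r+1}\sum_{k=1}^{N-r-1}v_k(\Dp^{r+1}w)_k .
\]
The bracket is precisely the missing $j=r$ term $(-1)^r[(\D^{0}v)_{N-r}(\Dp^rw)_{N-r}-(\D^0v)_0(\Dp^rw)_1]$. The last sum is the new interior term. The hypothesis $N\ge r+2$ makes every index range used here nonempty. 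This closes the induction.

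The main obstacle is bookkeeping rather than any idea. I have to check three things carefully. First, $\Dp^jw$ is evaluated only at indices in $1,\ldots,N-j$. Second, the right traces really stagger as $N-j$, which happens because each forward difference shortens the summation range by one. Third, the domain requirement $1-r\le k$ propagates correctly when $v$ is replaced by $\D v$.

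For the claim that every boundary term is necessary, I would give, for each $j$, an explicit pair $(v,w)$ in which that term is the only nonzero one. For the right term with index $j$, take $w$ to be a polynomial in $k$ of degree exactly $j$. Then $\Dp^iw=0$ for $i>j$, so the interior sum vanishes. Choose $v$ so that all left traces vanish and $(\D^{r-1-i}v)_{N-i}=0$ for $i<j$, while $(\D^{r-1-j}v)_{N-j}\ne0$. This is possible because these traces are independent linear functionals of the finitely many free values of $v$. A symmetric choice handles each left term.
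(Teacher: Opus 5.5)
Your induction is correct and is the paper's argument with the two steps in the opposite order. The paper first applies the first-order identity to $\D^rv$ and then the order-$r$ hypothesis on length $N-1$ with weight $\Dp w$, relabelling $j+1$ as the new index. You instead apply the hypothesis to $\D v$ at length $N$ and finish with a first-order step on the interior sum, which keeps each $j$ fixed and simply appends the new $j=r$ trace.

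Your construction for the necessity clause goes beyond the paper, whose proof does not argue that claim at all. It uses a weight of exact polynomial degree $j$. The independence you assert holds because the traces are triangular: $R_{i,N}$ contains $v_{N-i}$ with coefficient one and only values in $v_{N-r+1},\dots,v_{N-i}$, and $L_i$ similarly in $v_{1-r},\dots,v_0$. These two blocks are disjoint since $N\ge r+1$.
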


\begin{proof}
The proof proceeds by induction on $r$. The case $r=1$ is the first-order identity
\[
 \sum_{k=1}^{N}(\D v)_k w_k
 =v_Nw_N-v_0w_1-\sum_{k=1}^{N-1}v_k(\Dp w)_k.
\]
This agrees with \eqref{eq:repeated-sbp}.

Assume the identity holds for order $r$. Apply the first-order formula to $\D^rv$:
\begin{align*}
 \sum_{k=1}^{N}(\D^{r+1}v)_kw_k
 &=(\D^rv)_Nw_N-(\D^rv)_0w_1
 -\sum_{k=1}^{N-1}(\D^rv)_k(\Dp w)_k.
\end{align*}
Apply the induction hypothesis to the last sum. Its length is $N-1$, its weight is $\Dp w$, and its difference order is $r$. Hence
\begin{align*}
 \sum_{k=1}^{N-1}(\D^rv)_k(\Dp w)_k
 &=\sum_{j=0}^{r-1}(-1)^j
 \bigl[
  (\D^{r-1-j}v)_{N-1-j}(\Dp^{j+1}w)_{N-1-j}\\
 &\hspace{8em}-(\D^{r-1-j}v)_0(\Dp^{j+1}w)_1
 \bigr]\\
 &\quad+(-1)^r\sum_{k=1}^{N-r-1}v_k(\Dp^{r+1}w)_k.
\end{align*}
Insert this expression into the first-order formula. The two terms containing $\D^rv$ are exactly the $j=0$ boundary contribution for order $r+1$. Relabel $j+1$ as the new index in the remaining boundary terms. The interior sign becomes $(-1)^{r+1}$. This yields \eqref{eq:repeated-sbp} with $r$ replaced by $r+1$.
\end{proof}

\subsection{Low-order cases written in full}

The general formula is easier to use after the first few cases have been displayed.

For $r=2$,
\begin{align}
\label{eq:second-order-sbp}
 \sum_{k=1}^{N}(\D^2v)_kw_k
 &= (\D v)_Nw_N-(\D v)_0w_1
 -v_{N-1}(\Dp w)_{N-1}+v_0(\Dp w)_1\notag\\
 &\quad+\sum_{k=1}^{N-2}v_k(\Dp^2w)_k.
\end{align}
For $r=3$,
\begin{align}
\label{eq:third-order-sbp}
 \sum_{k=1}^{N}(\D^3v)_kw_k
 &=(\D^2v)_Nw_N-(\D^2v)_0w_1\notag\\
 &\quad-(\D v)_{N-1}(\Dp w)_{N-1}
 +(\D v)_0(\Dp w)_1\notag\\
 &\quad+v_{N-2}(\Dp^2w)_{N-2}
 -v_0(\Dp^2w)_1\notag\\
 &\quad-\sum_{k=1}^{N-3}v_k(\Dp^3w)_k.
\end{align}

These identities show why the informal statement ``move $\D^r$ onto the weight'' is incomplete. Even when $v$ is bounded, the term $(\D^{r-1}v)_Nw_N$ can be of order one. After the normalising factor $1/N$ is restored, it yields only $O(N^{-1})$ decay.

\subsection{A full deterministic estimate}

\begin{corollary}[Higher-order weighted estimate with traces]
\label{cor:high-order-full}
Under the hypotheses of \cref{thm:repeated-sbp},
\begin{align}
\label{eq:high-order-full-bound}
 \abs{\frac1N\sum_{k=1}^{N}(\D^rv)_kw_k}
 &\le\frac1N\sum_{j=0}^{r-1}
 \left(
  \abs{R_{j,N}(v)}\abs{(\Dp^jw)_{N-j}}
  +\abs{L_j(v)}\abs{(\Dp^jw)_1}
 \right)\notag\\
 &\quad+\frac{\norm{v}_{\ell^\infty}}{N}
 \sum_{k=1}^{N-r}\abs{(\Dp^rw)_k}.
\end{align}
\end{corollary}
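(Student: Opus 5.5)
The plan is to read the corollary directly off the exact identity \eqref{eq:repeated-sbp} of \cref{thm:repeated-sbp}. Dividing that identity by $N$ writes the normalised sum as a finite combination of three kinds of terms. There are the $r$ right boundary products $(-1)^j(\D^{r-1-j}v)_{N-j}(\Dp^jw)_{N-j}$ and the $r$ left boundary products $-(-1)^j(\D^{r-1-j}v)_0(\Dp^jw)_1$, for $0\le j\le r-1$. The third is the single interior sum $(-1)^r\sum_{k=1}^{N-r}v_k(\Dp^rw)_k$. The hypotheses $N\ge r+1$ and $v$ defined on $1-r\le k\le N$ are exactly those of the theorem, so the identity is available verbatim.

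First, identify each boundary coefficient with the traces of the Definition of finite-record boundary traces. By definition, $(\D^{r-1-j}v)_{N-j}=R_{j,N}(v)$ and $(\D^{r-1-j}v)_0=L_j(v)$. Second, apply the triangle inequality to the whole finite sum. The signs $(-1)^j$ and $(-1)^r$ all have modulus one and disappear, and each boundary product is bounded by the product of moduli. This gives the first line of \eqref{eq:high-order-full-bound}. Third, for the interior sum, bound $\abs{v_k}\le\norm{v}_{\ell^\infty}$ termwise for $1\le k\le N-r$ and pull it out. This gives the second line.

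The only point needing care is the meaning of $\norm{v}_{\ell^\infty}$. The interior sum uses only indices $1\le k\le N-r$, so any sup-norm over a range containing these indices suffices. I would remark that the traces, which may involve the initial data $v_{1-r},\dots,v_0$, are kept separately and are not absorbed into $\norm{v}_{\ell^\infty}$.

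There is no genuine obstacle; the work is bookkeeping. The main thing to check is that the indices in the staggered right traces $(\Dp^jw)_{N-j}$ match those in \eqref{eq:repeated-sbp}, so that no boundary term is dropped or double-counted.
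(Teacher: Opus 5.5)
Your proposal is correct and is the paper's own proof: divide the exact identity \eqref{eq:repeated-sbp} by $N$, identify the boundary coefficients with $R_{j,N}(v)$ and $L_j(v)$, apply the triangle inequality termwise, and bound the interior sum by $\norm{v}_{\ell^\infty}\sum_{k=1}^{N-r}\abs{(\Dp^rw)_k}$. Your remark that the sup norm only needs to cover the interior indices $1\le k\le N-r$, with the initial data entering solely through the traces, is a sensible clarification that the paper leaves implicit.
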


\begin{proof}
Take absolute values in \eqref{eq:repeated-sbp} and apply the triangle inequality only after all terms have been identified.
\end{proof}

This corollary separates three effects. The initialisation error is measured by the left traces. The termination error is measured by the staggered right traces. The interior smoothness of the test sequence is measured by the $r$th discrete variation. This three-part decomposition will be used throughout the rest of the report.

\subsection{Boundary-trace correction as a reconstruction principle}
\label{subsec:boundary-corrected-reconstruction}

Endpoint compatibility is sufficient for high-order decay, but it is not the only analytical route. The exact boundary functional can instead be added to the quantized extension. This produces a corrected reconstruction whose remaining error is the pure interior term. The correction does not alter the one-bit coefficient record. It requires the relevant initial and terminal traces, or sufficiently accurate estimates of those traces.

For a state $v$ and a weight $w$, define
\begin{align}
 \mathcal B_{N,r}(v,w)
 :=\sum_{j=0}^{r-1}(-1)^j
 \left[
  R_{j,N}(v)(\Dp^jw)_{N-j}
  -L_j(v)(\Dp^jw)_1
 \right].
 \label{eq:boundary-functional-definition}
\end{align}
The functional contains exactly the boundary terms in \cref{thm:repeated-sbp}.

\begin{theorem}[Boundary-corrected finite-record reconstruction]
\label{thm:boundary-corrected-reconstruction}
Let $r\ge1$, let $e_k=u_k-q_k=(\D^rv)_k$, and let $w_1,\ldots,w_N\in\C$. Define
\begin{equation}
 \mathcal R_{N,r}(q;v,w)
 :=\frac1N\sum_{k=1}^Nq_kw_k
 +\frac1N\mathcal B_{N,r}(v,w).
 \label{eq:boundary-corrected-reconstruction}
\end{equation}
Then the following identity is exact:
\begin{equation}
 \frac1N\sum_{k=1}^Nu_kw_k-\mathcal R_{N,r}(q;v,w)
 =\frac{(-1)^r}{N}\sum_{k=1}^{N-r}v_k(\Dp^rw)_k.
 \label{eq:boundary-corrected-exact-error}
\end{equation}
Consequently, if $\norm{v}_{\ell^\infty}\le V$, then
\begin{equation}
 \left|\frac1N\sum_{k=1}^Nu_kw_k-\mathcal R_{N,r}(q;v,w)\right|
 \le\frac{V}{N}\sum_{k=1}^{N-r}|(\Dp^rw)_k|.
 \label{eq:boundary-corrected-bound}
\end{equation}
No endpoint-compatibility assumption is required.
\end{theorem}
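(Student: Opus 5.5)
The identity \eqref{eq:boundary-corrected-exact-error} is a direct algebraic consequence of \cref{thm:repeated-sbp}, so the plan is to rearrange rather than estimate. First I split the unquantized extension through the error: since $u_k=q_k+e_k$ and $e_k=(\D^rv)_k$,
\[
 \frac1N\sum_{k=1}^Nu_kw_k=\frac1N\sum_{k=1}^Nq_kw_k+\frac1N\sum_{k=1}^N(\D^rv)_kw_k .
\]
Next I apply \eqref{eq:repeated-sbp} to the last sum. This uses the standing hypotheses of that theorem: $v$ is defined on $1-r\le k\le N$, which is implicit in writing $e=\D^rv$ on $1\le k\le N$, and $N\ge r+1$. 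The result is the sum of all boundary terms plus the interior term $(-1)^r\sum_{k=1}^{N-r}v_k(\Dp^rw)_k$.

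The one point needing care is matching indices. Comparing term by term with the definition of the traces, $R_{j,N}(v)=(\D^{r-1-j}v)_{N-j}$ and $L_j(v)=(\D^{r-1-j}v)_0$. Hence the full boundary bracket in \eqref{eq:repeated-sbp}, including its signs $(-1)^j$, is literally $\mathcal B_{N,r}(v,w)$ as defined in \eqref{eq:boundary-functional-definition}.

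Therefore
\[
 \frac1N\sum_{k=1}^Nu_kw_k=\frac1N\sum_{k=1}^Nq_kw_k+\frac1N\mathcal B_{N,r}(v,w)+\frac{(-1)^r}{N}\sum_{k=1}^{N-r}v_k(\Dp^rw)_k .
\]
The first two terms on the right are exactly $\mathcal R_{N,r}(q;v,w)$. Subtracting them gives \eqref{eq:boundary-corrected-exact-error}.

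The bound \eqref{eq:boundary-corrected-bound} then follows from the triangle inequality on the interior sum together with $\abs{v_k}\le V$ for $1\le k\le N-r$. No endpoint condition enters, because every trace has been absorbed into $\mathcal R_{N,r}$ rather than estimated. This justifies the closing remark of the theorem.

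There is no real obstacle here. The only step deserving attention is the index and sign bookkeeping that identifies the boundary part of \eqref{eq:repeated-sbp} with $\mathcal B_{N,r}$, in particular the staggered right indices $N-j$. As a sanity check I would verify this against the displayed $r=2$ case \eqref{eq:second-order-sbp}.
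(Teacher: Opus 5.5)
Your proposal is correct and follows essentially the same route as the paper: split $u_k=q_k+e_k$, apply \cref{thm:repeated-sbp} to $e=\D^rv$, recognise its boundary sum as exactly $\mathcal B_{N,r}(v,w)$, subtract, and finish with the triangle inequality on the interior term. Making the inherited hypotheses ($v$ defined on $1-r\le k\le N$ and $N\ge r+1$) explicit is a reasonable addition, since the paper leaves them implicit.
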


\begin{proof}
Apply \cref{thm:repeated-sbp} to $e=\D^rv$. Its boundary sum is precisely $\mathcal B_{N,r}(v,w)$. Since
\[
 \frac1N\sum_{k=1}^Nu_kw_k
 =\frac1N\sum_{k=1}^Nq_kw_k
 +\frac1N\sum_{k=1}^Ne_kw_k,
\]
subtraction of \eqref{eq:boundary-corrected-reconstruction} leaves only the interior term in \eqref{eq:repeated-sbp}. Taking absolute values yields \eqref{eq:boundary-corrected-bound}.
\end{proof}

\begin{remark}[First-order form]
For $r=1$,
\[
 \mathcal B_{N,1}(v,w)=v_Nw_N-v_0w_1,
\]
and
\[
 \mathcal R_{N,1}(q;v,w)
 =\frac1N\sum_{k=1}^Nq_kw_k+\frac{v_Nw_N-v_0w_1}{N}.
\]
The corrected error is
\[
 -\frac1N\sum_{k=1}^{N-1}v_k(\Dp w)_k.
\]
For a constant weight, every forward difference vanishes and the correction is exact. Thus the average of the unquantized record can be recovered exactly from the one-bit record and the two endpoint states.
\end{remark}

\begin{corollary}[Corrected high-order rate for sampled smooth weights]
\label{cor:boundary-corrected-smooth}
Assume the hypotheses of \cref{thm:boundary-corrected-reconstruction}, let $w_k=W(k/N)$, and suppose $W\in C^r([0,1])$. Then
\begin{equation}
 \left|\frac1N\sum_{k=1}^Nu_kW(k/N)-\mathcal R_{N,r}(q;v,w)\right|
 \le VN^{-r}\norm{W^{(r)}}_{L^\infty(0,1)}.
 \label{eq:boundary-corrected-smooth}
\end{equation}
The same conclusion holds with $r\norm{W^{(r)}}_{L^1}$ in place of the $L^\infty$ norm when $W^{(r-1)}$ is absolutely continuous and $W^{(r)}\in L^1$.
\end{corollary}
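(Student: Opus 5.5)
The plan is to start from the exact error identity \eqref{eq:boundary-corrected-exact-error} of \cref{thm:boundary-corrected-reconstruction}, or equivalently its bound \eqref{eq:boundary-corrected-bound}. Once that is in hand, the corollary reduces to estimating the discrete $r$th variation $\sum_{k=1}^{N-r}|(\Dp^rw)_k|$ for the sampled weight $w_k=W(k/N)$. The key observation is that, with $h=1/N$, one has $(\Dp^rw)_k=\Delta_h^rW(k/N)$. For $1\le k\le N-r$ we have $k/N+rh\le 1$, so \cref{lem:finite-difference-integral} applies at every interior index.

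For the $L^\infty$ version I would invoke \eqref{eq:finite-difference-sup} to get $|(\Dp^rw)_k|\le N^{-r}\norm{W^{(r)}}_{L^\infty}$ for each $k$. Summing over the $N-r\le N$ interior indices gives $\sum_k|(\Dp^rw)_k|\le N^{1-r}\norm{W^{(r)}}_{L^\infty}$. Multiplying by $V/N$ in \eqref{eq:boundary-corrected-bound} yields \eqref{eq:boundary-corrected-smooth}. This step is routine.

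For the $L^1$ version I would first extend \cref{lem:finite-difference-integral} to the case where $W^{(r-1)}$ is absolutely continuous and $W^{(r)}\in L^1$. The induction runs exactly as before. The base case is the fundamental theorem of calculus for absolutely continuous functions, and the inductive step uses Fubini to justify differentiating under the integral over $[0,h]^r$. Then I would write $\sigma=s_1+\cdots+s_{r-1}$ and estimate
\[
 \sum_{k=1}^{N-r}|(\Dp^rw)_k|
 \le\int_{[0,h]^{r-1}}\sum_{k=1}^{N-r}\int_0^h\bigl|W^{(r)}(kh+\sigma+s_r)\bigr|\dd s_r\dd s_1\cdots\dd s_{r-1}.
\]
For fixed $\sigma$, the intervals $[kh+\sigma,(k+1)h+\sigma]$ are pairwise disjoint and, since $\sigma\le(r-1)h$ and $k\le N-r$, contained in $[0,1]$. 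Hence the inner sum is at most $\norm{W^{(r)}}_{L^1}$, and the outer integral contributes $h^{r-1}=N^{1-r}$. After the factor $V/N$, this gives $VN^{-r}\norm{W^{(r)}}_{L^1}$, which is at most the stated bound with the factor $r$. If the overlap counting is done more crudely, for example by not fixing $\sigma$, each point of $[0,1]$ is covered at most $r$ times, and this is exactly where the factor $r$ in the statement would come from.

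The only genuinely delicate step is this $L^1$ bookkeeping: justifying the integral representation under mere absolute continuity of $W^{(r-1)}$, and checking that the shifted cells remain inside $[0,1]$ with bounded overlap. I would handle it by the Fubini argument above, doing the overlap count for fixed $(s_1,\ldots,s_{r-1})$.
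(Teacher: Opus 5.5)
Your proposal is correct and follows the paper's proof, which simply combines \eqref{eq:boundary-corrected-bound} with the finite-difference estimates of \cref{lem:uniform-difference-control,lem:l1-derivative-control}. For the $L^1$ case, you fix $(s_1,\ldots,s_{r-1})$ and integrate $s_r$ over the disjoint cells $[kh+\sigma,(k+1)h+\sigma]\subset[0,1]$. This is cleaner than the paper's overlap count in \cref{lem:l1-derivative-control}, and it yields the sharper bound $VN^{-r}\norm{W^{(r)}}_{L^1}$, which implies the stated one with the factor $r$.
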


\begin{proof}
Combine \eqref{eq:boundary-corrected-bound} with the finite-difference estimates in \cref{lem:uniform-difference-control,lem:l1-derivative-control}.
\end{proof}

The correction theorem separates quantizer dynamics from finite-record reconstruction. Endpoint-compatible dynamics eliminate the traces inside the quantizer. Boundary-corrected reconstruction retains the original dynamics and cancels the traces after acquisition. With zero initialisation, every left trace is already known to be zero, so only the $r$ staggered terminal traces are needed. This side information has dimension independent of $N$, although its numerical precision must be specified in any implementation claim.

\begin{proposition}[Perturbed trace metadata]
\label{prop:perturbed-trace-metadata}
Let $\widehat L_j$ and $\widehat R_j$ be estimates of $L_j(v)$ and $R_{j,N}(v)$ satisfying
\[
 |\widehat L_j-L_j(v)|\le\delta^L_{j,N},
 \qquad
 |\widehat R_j-R_{j,N}(v)|\le\delta^R_{j,N}.
\]
Let $\widehat{\mathcal R}_{N,r}$ be defined by \eqref{eq:boundary-corrected-reconstruction} with the estimated traces. Then
\begin{align}
 \left|\frac1N\sum_{k=1}^Nu_kw_k-\widehat{\mathcal R}_{N,r}\right|
 &\le\frac{V}{N}\sum_{k=1}^{N-r}|(\Dp^rw)_k| \notag\\
 &\quad+\frac1N\sum_{j=0}^{r-1}
 \left[
  \delta^R_{j,N}|(\Dp^jw)_{N-j}|
  +\delta^L_{j,N}|(\Dp^jw)_1|
 \right].
 \label{eq:perturbed-trace-metadata}
\end{align}
\end{proposition}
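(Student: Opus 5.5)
The argument splits the estimated reconstruction into the exact corrected reconstruction $\mathcal R_{N,r}(q;v,w)$ of \cref{thm:boundary-corrected-reconstruction} plus a metadata error, and bounds each piece separately. Write $\widehat{\mathcal B}_{N,r}$ for the boundary functional \eqref{eq:boundary-functional-definition} with $L_j(v)$ and $R_{j,N}(v)$ replaced by $\widehat L_j$ and $\widehat R_j$. Then
\[
 \widehat{\mathcal R}_{N,r}
 =\mathcal R_{N,r}(q;v,w)+\frac1N\Bigl(\widehat{\mathcal B}_{N,r}-\mathcal B_{N,r}(v,w)\Bigr).
\]
This holds because the quantized sum $\frac1N\sum q_kw_k$ appears identically in both reconstructions, so only the boundary functional changes.

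First I subtract this decomposition from $\frac1N\sum_k u_kw_k$ and use the exact identity \eqref{eq:boundary-corrected-exact-error}. The result is
\[
 \frac1N\sum_{k=1}^Nu_kw_k-\widehat{\mathcal R}_{N,r}
 =\frac{(-1)^r}{N}\sum_{k=1}^{N-r}v_k(\Dp^rw)_k
 -\frac1N\sum_{j=0}^{r-1}(-1)^j\Bigl[(\widehat R_j-R_{j,N}(v))(\Dp^jw)_{N-j}-(\widehat L_j-L_j(v))(\Dp^jw)_1\Bigr].
\]
The second sum comes from the linearity of $\mathcal B_{N,r}$ in the trace values. The weights $(\Dp^jw)_{N-j}$ and $(\Dp^jw)_1$ are the same in both functionals, so the difference involves only the trace errors.

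Next I apply the triangle inequality. The interior term is bounded by $\frac{V}{N}\sum_{k=1}^{N-r}|(\Dp^rw)_k|$, using $\norm{v}_{\ell^\infty}\le V$ exactly as in \eqref{eq:boundary-corrected-bound}. The signs $(-1)^j$ have modulus one, and the hypotheses on $\widehat L_j$ and $\widehat R_j$ bound each trace error by $\delta^L_{j,N}$ and $\delta^R_{j,N}$. Together these give \eqref{eq:perturbed-trace-metadata}.

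There is no real obstacle, since the estimate follows from the linearity of $\mathcal B_{N,r}$ in the traces. The only point needing care is that the estimated reconstruction uses the same weight evaluations, with the same staggered indices $N-j$, as the exact functional. Without this, the difference would not reduce to trace errors alone.
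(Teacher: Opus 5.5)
Your proposal is correct and follows the paper's own argument: split off the difference of the exact and estimated boundary functionals, invoke the exact error identity of \cref{thm:boundary-corrected-reconstruction} for the interior term, and finish with the triangle inequality. Your bookkeeping of the signs and of the staggered weight evaluations $(\Dp^jw)_{N-j}$ and $(\Dp^jw)_1$ is accurate.
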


\begin{proof}
Subtract the exact and estimated boundary functionals, apply \cref{thm:boundary-corrected-reconstruction}, and use the triangle inequality.
\end{proof}

For a sampled $C^r$ weight, the endpoint factor $|(\Dp^jw)|$ is $O(N^{-j})$. A trace error of order $N^{-(r-1-j)}$ therefore contributes at most $O(N^{-r})$ after normalisation. The hierarchy matches the approximate-compatibility conditions derived later in this chapter.

\subsection{Zero padding and terminal traces}

A common convention sets
\[
 v_{1-r}=\cdots=v_{-1}=v_0=0.
\]
Then every left trace vanishes. Indeed, each $L_j(v)$ is a linear combination of these zero values. This removes only half of the boundary problem. Unless the terminal state is also prepared, the right traces remain.

Simple zero padding after the last sample does not retroactively make the right traces vanish. If one declares $v_{N+1}=v_{N+2}=\cdots=0$, the traces in \eqref{eq:repeated-sbp} still involve $v_N,v_{N-1},\ldots$. A valid termination procedure must alter the final quantization steps, append a controlled tail, or design the frame path so that the relevant state components reach zero. Smooth termination methods in finite-frame quantization were developed precisely for this reason \cite{BodmannPaulsenAbdulbaki2007}.

\subsection{Matrix formulation}

Let $D_N$ be the $N\times N$ lower bidiagonal matrix with ones on the diagonal and minus ones on the first subdiagonal. With zero initial conditions, the finite difference relation can be written as
\[
 e=D_N^rv.
\]
The transpose satisfies
\[
 \langle D_N^rv,w\rangle=\langle v,(D_N^\mathsf{T})^rw\rangle.
\]
This matrix identity is exact, but the vector $v$ in it contains a particular choice of boundary convention. The entries of $(D_N^\mathsf{T})^rw$ encode the interior forward differences together with the terminal rows. The scalar formula \eqref{eq:repeated-sbp} is therefore preferable when the boundary traces must remain visible.

Singular values of powers of the finite difference matrix play a central role in Sobolev-dual reconstruction and quantized compressed sensing \cite{BlumEtAl2010,GunturkPowellSaabYilmaz2013,KrahmerSaabYilmaz2014,SaabWangYilmaz2018}. In the present extension problem, however, the principal issue is not inversion of $D_N^r$. It is the action of $D_N^rv$ on a highly structured oscillatory vector.

\subsection{Boundary verification checklist}

An $O(N^{-r})$ derivation from $e=\D^rv$ is verified by recording the following boundary data.

\begin{enumerate}[label=(\roman*)]
\item On what index set is $v$ defined?
\item Which left traces are assumed to vanish?
\item Which right traces are assumed to vanish?
\item Is the $r$th forward difference of the weight bounded in $\ell^1$ or $\ell^\infty$?
\item Are the constants uniform in the observation parameters?
\end{enumerate}

Failure to specify any of these points may conceal an $O(N^{-1})$ boundary contribution. The next chapter gives sufficient endpoint conditions and derives the corresponding high-order extension theorem.

\section{Endpoint-compatible high-order extension theory}
\label{sec:09_endpoint_compatible}

\subsection{Strong and weak compatibility conditions}

The repeated summation formula identifies the exact traces that prevent high-order decay. There are two convenient ways to remove them.

\begin{definition}[Trace compatibility]
A state sequence $v$ is trace-compatible of order $r$ on $\{1,\ldots,N\}$ if
\[
 L_j(v)=0,
 \qquad
 R_{j,N}(v)=0,
 \qquad 0\le j\le r-1.
\]
\end{definition}

\begin{definition}[Strong endpoint compatibility]
A state sequence $v$ is strongly endpoint-compatible of order $r$ if
\[
 v_{1-r}=\cdots=v_0=0
 \quad\text{and}\quad
 v_{N-r+1}=\cdots=v_N=0.
\]
\end{definition}

Strong compatibility implies trace compatibility because every difference appearing in a trace is a linear combination of values in the corresponding zero block. The converse is not required. Trace compatibility is the minimal algebraic condition for the boundary terms in \cref{thm:repeated-sbp} to vanish.

\begin{proposition}[Pure interior formula]
\label{prop:pure-interior}
If $v$ is trace-compatible of order $r$, then
\begin{equation}
\label{eq:pure-interior}
 \sum_{k=1}^{N}(\D^rv)_kw_k
 =(-1)^r\sum_{k=1}^{N-r}v_k(\Dp^rw)_k.
\end{equation}
Consequently,
\begin{equation}
\label{eq:pure-interior-bound}
 \abs{\frac1N\sum_{k=1}^{N}(\D^rv)_kw_k}
 \le \frac{\norm{v}_{\ell^\infty}}{N}
 \sum_{k=1}^{N-r}\abs{(\Dp^rw)_k}.
\end{equation}
\end{proposition}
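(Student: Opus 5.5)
The plan is to read this off directly from \cref{thm:repeated-sbp}. The standing hypotheses of that theorem are assumed throughout: $r\ge1$, $N\ge r+1$, $v$ defined on $1-r\le k\le N$, and $w$ defined on $1\le k\le N$. Under these hypotheses \eqref{eq:repeated-sbp} expresses $\sum_{k=1}^N(\D^rv)_kw_k$ as a boundary sum plus the interior term $(-1)^r\sum_{k=1}^{N-r}v_k(\Dp^rw)_k$.

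The first step is to identify the coefficients in the boundary sum. In the $j$th bracket, the right term carries the factor $(\D^{r-1-j}v)_{N-j}$, which is by definition $R_{j,N}(v)$. The left term carries the factor $(\D^{r-1-j}v)_0$, which is $L_j(v)$. In the notation of \eqref{eq:boundary-functional-definition}, the boundary sum is therefore exactly $\mathcal B_{N,r}(v,w)$. Trace compatibility sets every $L_j(v)$ and $R_{j,N}(v)$ to zero for $0\le j\le r-1$. Each bracket then vanishes term by term, whatever the values $(\Dp^jw)_{N-j}$ and $(\Dp^jw)_1$ are, so \eqref{eq:pure-interior} follows.

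For the bound \eqref{eq:pure-interior-bound}, I will divide \eqref{eq:pure-interior} by $N$ and apply the triangle inequality. I will then estimate each $\abs{v_k}$ by $\norm{v}_{\ell^\infty}$, taking the sup norm over the full index range on which $v$ is defined. The interior sum only involves $1\le k\le N-r$, so any reasonable convention for the sup norm suffices. Alternatively, the bound is just \cref{cor:high-order-full} with all trace terms set to zero.

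I do not expect a genuine obstacle. The only point requiring care is index bookkeeping: checking that the staggered right evaluation points $N-j$ in \cref{thm:repeated-sbp} match the definition of $R_{j,N}$ exactly, so that vanishing traces really kill every boundary term. Once that matching is confirmed, the proposition is immediate.
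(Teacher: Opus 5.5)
Your proposal is correct and follows the paper's own proof: trace compatibility annihilates every boundary term in \eqref{eq:repeated-sbp}, since those terms carry exactly the factors $R_{j,N}(v)$ and $L_j(v)$, and only the interior sum remains. The bound then follows from the triangle inequality, exactly as you describe.
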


\begin{proof}
All boundary contributions in \eqref{eq:repeated-sbp} vanish by hypothesis.
\end{proof}

\subsection{Forward differences of sampled smooth functions}

Let $h=N^{-1}$ and let $W:[0,1]\to\C$. For $x,x+rh\in[0,1]$, the $r$th forward difference has the integral representation
\begin{equation}
\label{eq:finite-difference-integral}
 \Delta_h^rW(x)
 =\int_0^h\cdots\int_0^h
 W^{(r)}(x+s_1+\cdots+s_r)
 \dd s_1\cdots\dd s_r,
\end{equation}
provided $W^{(r-1)}$ is absolutely continuous. This formula follows by applying the fundamental theorem of calculus $r$ times.

\begin{lemma}[Uniform derivative control]
\label{lem:uniform-difference-control}
If $W\in C^r([0,1])$ and $w_k=W(k/N)$, then
\[
 \abs{(\Dp^rw)_k}
 \le N^{-r}\norm{W^{(r)}}_{L^\infty(0,1)},
 \qquad 1\le k\le N-r.
\]
Hence
\begin{equation}
\label{eq:l1-difference-control}
 \sum_{k=1}^{N-r}\abs{(\Dp^rw)_k}
 \le N^{1-r}\norm{W^{(r)}}_{L^\infty(0,1)}.
\end{equation}
\end{lemma}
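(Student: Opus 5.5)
The plan is to recognise the forward difference of the sampled sequence as a continuous finite difference with mesh $h=1/N$, and then apply the integral representation already established. For $1\le k\le N-r$, the forward difference operator satisfies $(\Dp w)_k=W((k+1)/N)-W(k/N)=\Delta_hW(k/N)$. By induction on the order, $(\Dp^jw)_k=\Delta_h^jW(k/N)$ whenever $k+j\le N$. Equivalently, the binomial expansion $\Dp^rw_k=\sum_{j=0}^r(-1)^{r-j}\binom{r}{j}W((k+j)/N)$ coincides term by term with the expansion of $\Delta_h^rW(x)$ at $x=k/N$. The point $x=k/N$ satisfies $x+rh=(k+r)/N\le1$, so all evaluation points lie in $[0,1]$.

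With this identification, \cref{lem:finite-difference-integral} (equivalently \eqref{eq:finite-difference-integral}) applies, since $W\in C^r([0,1])$. It gives $\Delta_h^rW(x)$ as the integral of $W^{(r)}(x+s_1+\cdots+s_r)$ over the cube $[0,h]^r$. The arguments $x+s_1+\cdots+s_r$ lie in $[x,x+rh]\subset[0,1]$, so the integrand is bounded by $\norm{W^{(r)}}_{L^\infty(0,1)}$. The cube has volume $h^r=N^{-r}$, and this yields the pointwise bound, which is exactly \eqref{eq:finite-difference-sup} with $h=1/N$. For complex-valued $W$, I would either apply the lemma to the real and imaginary parts or simply note that its proof (the fundamental theorem of calculus) is valid verbatim for complex functions. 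The latter is cleaner, because it avoids a spurious factor of $\sqrt2$.

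The summed estimate \eqref{eq:l1-difference-control} then follows by adding the pointwise bound over the $N-r$ indices and using $N-r\le N$. This gives $(N-r)N^{-r}\norm{W^{(r)}}_{L^\infty}\le N^{1-r}\norm{W^{(r)}}_{L^\infty}$. No step is a genuine obstacle. The only point requiring care is the index bookkeeping in the first step: checking that $k\le N-r$ is precisely the condition under which the lemma's hypothesis $x+rh\le1$ holds, so that no sample outside $[0,1]$ is used.
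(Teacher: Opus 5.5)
Your proof is correct and follows the paper's argument: identify $(\Dp^r w)_k$ with $\Delta_{1/N}^r W(k/N)$, apply the integral representation over a cube of volume $N^{-r}$, and sum over at most $N$ indices. Your checks on the index range $k\le N-r$ and on complex-valued $W$ make explicit what the paper leaves implicit.
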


\begin{proof}
Apply \eqref{eq:finite-difference-integral}. The integration cube has volume $h^r=N^{-r}$. There are at most $N$ terms in the sum.
\end{proof}

A version with an $L^1$ derivative is also useful.

\begin{lemma}[$L^1$ derivative control]
\label{lem:l1-derivative-control}
Assume $W^{(r-1)}$ is absolutely continuous. Then
\begin{equation}
\label{eq:l1-derivative-control-result}
 \sum_{k=1}^{N-r}\abs{(\Dp^rw)_k}
 \le rN^{1-r}\norm{W^{(r)}}_{L^1(0,1)}.
\end{equation}
\end{lemma}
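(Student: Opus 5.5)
We start from the integral representation \eqref{eq:finite-difference-integral} with $h=1/N$ and $x=k/N$. Since $W^{(r-1)}$ is absolutely continuous, it is valid for every $1\le k\le N-r$ because $x+rh=(k+r)/N\le1$. This gives
\[
 \abs{(\Dp^rw)_k}\le\int_{[0,h]^r}\abs{W^{(r)}(k/N+s_1+\cdots+s_r)}\dd s .
\]
We then bound this $r$-fold integral by a one-dimensional integral. Substitute $\sigma=s_1+\cdots+s_r\in[0,rh]$. The pushforward of Lebesgue measure on $[0,h]^r$ under this map has a density $\rho_r(\sigma)$, namely the $r$-fold convolution of $\one_{[0,h]}$. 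Convolving a function bounded by $1$ with a probability-like indicator of mass $h$ never increases the supremum, so $\rho_r\le h^{r-1}$. Consequently
\[
 \abs{(\Dp^rw)_k}\le h^{r-1}\int_{k/N}^{(k+r)/N}\abs{W^{(r)}(\xi)}\dd\xi .
\]
An alternative route is to integrate out $s_r$ first, bounding the innermost integral by the integral of $\abs{W^{(r)}}$ over an interval of length $h$ enlarged to $[k/N,(k+r)/N]$. The remaining $r-1$ integrations each then contribute a factor $h$, which gives the same bound without computing $\rho_r$.

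Next we sum over $k=1,\ldots,N-r$. The intervals $[k/N,(k+r)/N]$ each have length $r/N$ and lie in $[0,1]$. Every point of $[0,1]$ belongs to at most $r$ of them, since for a fixed $\xi$ the admissible $k$ satisfy $N\xi-r\le k\le N\xi$. Hence
\[
 \sum_{k=1}^{N-r}\int_{k/N}^{(k+r)/N}\abs{W^{(r)}}\le r\norm{W^{(r)}}_{L^1(0,1)},
\]
and the total is at most $rh^{r-1}\norm{W^{(r)}}_{L^1}=rN^{1-r}\norm{W^{(r)}}_{L^1(0,1)}$, which is the asserted estimate.

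The main obstacle is the first step: getting the sharp factor $h^{r-1}$ rather than $h^r$ times a supremum, while keeping the constant equal to $r$. The cleanest way I see is the Fubini argument of integrating one variable at a time. With that, the only care needed is the overlap count in the final summation, and that count is exactly where the factor $r$ comes from.
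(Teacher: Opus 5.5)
Your proof is correct and follows essentially the same route as the paper's sketch. Both use the integral representation of the $r$th forward difference, collapse the $r$ box variables to their sum (the paper's ``convolution kernel generated by the $r$-fold box,'' which is your $\rho_r$), and count overlaps of the windows $[k/N,(k+r)/N]$ to get the factor $r$; you also make explicit the bound $\rho_r\le h^{r-1}$, which the paper leaves implicit.

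One small correction: a grid point $\xi=j/N$ can lie in $r+1$ of the closed windows, so ``every point belongs to at most $r$'' holds only almost everywhere (or for half-open windows). That is all the integral estimate needs.
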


\begin{proof}
Use \eqref{eq:finite-difference-integral}, sum over $k$, and apply Tonelli's theorem. For each fixed $(s_1,\ldots,s_r)$, the intervals
\[
 [k/N+s_1+\cdots+s_r,\,(k+1)/N+s_1+\cdots+s_r]
\]
have overlap bounded by $r$ after the $r$ integration variables are collapsed to their sum. A simpler, slightly larger bound is obtained by extending $W^{(r)}$ by zero outside $[0,1]$ and observing that the convolution kernel generated by the $r$fold box has $L^1$ norm $N^{-r}$ and support length $r/N$. Summation over a grid of spacing $1/N$ gives the factor $rN$.
\end{proof}

\subsection{Fractional smoothness and intermediate rates}
\label{subsec:fractional-high-order}

Integer differentiability is not necessary for a quantitative rate. Finite differences also detect fractional smoothness. Standard background on moduli of smoothness and H\"older--Zygmund classes is available in \cite{DeVoreLorentz1993,DitzianTotik1987,Triebel1983}. The following estimate is included with a direct proof because the exact power of $N$ is central to the finite-record result.

\begin{definition}[H\"older derivative seminorm]
For $0<\alpha\le1$ and a function $F:[0,1]\to\C$, define
\[
 [F]_{C^{0,\alpha}}
 :=\sup_{x\ne y}\frac{|F(x)-F(y)|}{|x-y|^\alpha}.
\]
A function $W$ belongs to $C^{r-1,\alpha}([0,1])$ when $W^{(r-1)}$ exists, is continuous, and has finite $C^{0,\alpha}$ seminorm.
\end{definition}

\begin{lemma}[Fractional finite-difference estimate]
\label{lem:fractional-finite-difference}
Let $r\ge1$, $0<\alpha\le1$, and $W\in C^{r-1,\alpha}([0,1])$. For $h>0$ and $x,x+rh\in[0,1]$,
\begin{equation}
 |\Delta_h^rW(x)|
 \le h^{r-1+\alpha}[W^{(r-1)}]_{C^{0,\alpha}}.
 \label{eq:fractional-finite-difference}
\end{equation}
\end{lemma}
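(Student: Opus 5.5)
The plan is to write the $r$th difference as an $(r-1)$-fold integral of a single first difference of $W^{(r-1)}$, and then apply the Hölder bound to that first difference. The case $r=1$ is immediate: $\Delta_hW(x)=W(x+h)-W(x)$, so $|\Delta_hW(x)|\le h^\alpha[W]_{C^{0,\alpha}}$ directly from the definition of the seminorm.

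For $r\ge2$, I will use the factorisation $\Delta_h^r=\Delta_h\circ\Delta_h^{r-1}$, so that $\Delta_h^rW(x)=\Delta_h^{r-1}W(x+h)-\Delta_h^{r-1}W(x)$. Since $W\in C^{r-1}([0,1])$, \cref{lem:finite-difference-integral} applies with $r-1$ in place of $r$ at both base points $x$ and $x+h$. The admissibility condition needed there is $x+h+(r-1)h=x+rh\le1$, which is exactly the hypothesis. Subtracting the two representations gives
\[
 \Delta_h^rW(x)=\int_{[0,h]^{r-1}}\Bigl[W^{(r-1)}(x+h+s_1+\cdots+s_{r-1})-W^{(r-1)}(x+s_1+\cdots+s_{r-1})\Bigr]\dd s_1\cdots\dd s_{r-1}.
\]
Both arguments of $W^{(r-1)}$ lie in $[x,x+rh]\subset[0,1]$, so the integrand is defined everywhere on the cube.

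The integrand is a difference of $W^{(r-1)}$ at two points exactly $h$ apart, so its modulus is at most $h^\alpha[W^{(r-1)}]_{C^{0,\alpha}}$ pointwise. Integrating over a cube of volume $h^{r-1}$ then yields $h^{r-1+\alpha}[W^{(r-1)}]_{C^{0,\alpha}}$, which is \eqref{eq:fractional-finite-difference}.

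There is no real obstacle in this argument. The only point needing care is to check that every argument stays inside $[0,1]$, so that \cref{lem:finite-difference-integral} is used only with its stated hypotheses and the Hölder seminorm is evaluated only on $[0,1]$. The bound $x+h+s_1+\cdots+s_{r-1}\le x+rh\le1$ settles this.
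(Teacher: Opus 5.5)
Your proof is correct and matches the paper's argument: the case $r=1$ is the definition of the seminorm, and for $r\ge2$ you write $\Delta_h^{r-1}W$ as an $(r-1)$-fold integral of $W^{(r-1)}$, apply one more forward difference, bound the bracket by $h^\alpha[W^{(r-1)}]_{C^{0,\alpha}}$, and integrate over a cube of volume $h^{r-1}$. Your explicit check that $x+rh\le1$ keeps every evaluation point inside $[0,1]$ is a detail the paper leaves implicit.
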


\begin{proof}
For $r=1$, the statement is the definition of the H\"older seminorm. Let $r\ge2$. Repeated application of the fundamental theorem of calculus gives
\[
 \Delta_h^{r-1}W(x)
 =\int_{[0,h]^{r-1}}
 W^{(r-1)}(x+s_1+\cdots+s_{r-1})
 \,\mathrm ds_1\cdots\mathrm ds_{r-1}.
\]
Apply one additional forward difference:
\begin{align*}
 \Delta_h^rW(x)
 &=\int_{[0,h]^{r-1}}
 \bigl[
 W^{(r-1)}(x+h+s_1+\cdots+s_{r-1})\\
 &\hspace{10em}-W^{(r-1)}(x+s_1+\cdots+s_{r-1})
 \bigr]
 \,\mathrm ds_1\cdots\mathrm ds_{r-1}.
\end{align*}
The bracket is bounded by $[W^{(r-1)}]_{C^{0,\alpha}}h^\alpha$. The integration cube has volume $h^{r-1}$, which proves \eqref{eq:fractional-finite-difference}.
\end{proof}

\begin{theorem}[Fractional high-order extension rate]
\label{thm:fractional-high-order-rate}
Let $e=\D^rv$ with $\norm{v}_{\ell^\infty}\le V$, and let $w_k=W(k/N)$ with $W\in C^{r-1,\alpha}([0,1])$, $0<\alpha\le1$. If the state is trace-compatible of order $r$, then
\begin{equation}
 \left|\frac1N\sum_{k=1}^Ne_kw_k\right|
 \le V[W^{(r-1)}]_{C^{0,\alpha}}N^{-(r-1+\alpha)}.
 \label{eq:fractional-high-order-rate}
\end{equation}
The same bound holds for the boundary-corrected reconstruction in \cref{thm:boundary-corrected-reconstruction} without endpoint compatibility.
\end{theorem}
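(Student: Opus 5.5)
The plan is to combine the pure interior formula with the fractional finite-difference lemma. Under trace compatibility of order $r$, \cref{prop:pure-interior} reduces the sum to the pure interior form:
\[
 \left|\frac1N\sum_{k=1}^Ne_kw_k\right|
 \le\frac VN\sum_{k=1}^{N-r}\abs{(\Dp^rw)_k}.
\]
The same reduction holds without any compatibility assumption for the boundary-corrected reconstruction. There, the exact identity \eqref{eq:boundary-corrected-exact-error} gives the identical right-hand side via \eqref{eq:boundary-corrected-bound}. So both claims follow from one bound on the $\ell^1$ norm of the $r$th forward differences of the sampled weight.

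Next I identify the discrete difference with the continuous one. With $h=1/N$, one has $(\Dp^rw)_k=\Delta_h^rW(k/N)$. For $1\le k\le N-r$ the points $k/N,\ldots,(k+r)/N$ all lie in $[0,1]$, so the hypothesis $x,x+rh\in[0,1]$ of \cref{lem:fractional-finite-difference} is satisfied. That lemma then gives the termwise estimate
\[
 |(\Dp^rw)_k|\le N^{-(r-1+\alpha)}[W^{(r-1)}]_{C^{0,\alpha}}.
\]

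Summing over at most $N-r\le N$ indices gives
\[
 \sum_{k=1}^{N-r}|(\Dp^rw)_k|\le N\cdot N^{-(r-1+\alpha)}[W^{(r-1)}]_{C^{0,\alpha}}.
\]
Multiplying by $V/N$ then yields \eqref{eq:fractional-high-order-rate}.

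The only point needing care is the index range and sample domain. If $N\le r$ the interior sum is empty and the bound is trivial. \cref{thm:repeated-sbp} itself assumes $N\ge r+1$, which I adopt implicitly. Beyond that, the argument is a direct substitution, and I expect no genuine obstacle. The single substantive input, the $h^{r-1+\alpha}$ scaling of the $r$th difference of a $C^{r-1,\alpha}$ function, is already supplied by \cref{lem:fractional-finite-difference}.
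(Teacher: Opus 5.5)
Your proposal is correct and follows the paper's proof essentially line for line. You reduce to the interior sum $\frac{(-1)^r}{N}\sum_{k=1}^{N-r}v_k(\Dp^rw)_k$ using trace compatibility or the exact boundary-corrected identity, apply \cref{lem:fractional-finite-difference} termwise with $h=N^{-1}$, and multiply the at most $N$ terms by $V/N$; your explicit check that $k/N,(k+r)/N\in[0,1]$ for $1\le k\le N-r$ is a small detail the paper leaves implicit.
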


\begin{proof}
Trace compatibility or exact boundary correction reduces the error to the interior sum
\[
 \frac{(-1)^r}{N}\sum_{k=1}^{N-r}v_k(\Dp^rw)_k.
\]
Apply \cref{lem:fractional-finite-difference} with $h=N^{-1}$. Each term is bounded by
\[
 N^{-(r-1+\alpha)}[W^{(r-1)}]_{C^{0,\alpha}}.
\]
There are at most $N$ terms. Multiplication by $V/N$ gives \eqref{eq:fractional-high-order-rate}.
\end{proof}

The theorem gives a continuous scale between two consecutive integer orders. For $r=1$, a H\"older weight of exponent $\alpha$ gives $N^{-\alpha}$. For $\alpha=1$, the order-$r$ rate becomes $N^{-r}$. The result also identifies the loss caused by a rough testing functional: a bounded high-order state cannot create more decay than the sampled weight can absorb through its finite differences.

\subsection{The high-order compact-set theorem}

\begin{theorem}[Endpoint-compatible $O(N^{-r})$ estimate]
\label{thm:endpoint-compatible-general}
Let $r\ge1$. Assume
\[
 e_k=u_k-q_k=(\D^rv)_k,
 \qquad
 \norm{v}_{\ell^\infty}\le V,
\]
and assume that $v$ is trace-compatible of order $r$. Let $W\in C^r([0,1])$ and set $w_k=W(k/N)$. Then
\begin{equation}
\label{eq:endpoint-compatible-general}
 \abs{\frac1N\sum_{k=1}^{N}e_kw_k}
 \le VN^{-r}\norm{W^{(r)}}_{L^\infty(0,1)}.
\end{equation}
If $W^{(r)}\in L^1$, one also has
\begin{equation}
\label{eq:endpoint-compatible-l1}
 \abs{\frac1N\sum_{k=1}^{N}e_kw_k}
 \le rVN^{-r}\norm{W^{(r)}}_{L^1(0,1)}.
\end{equation}
\end{theorem}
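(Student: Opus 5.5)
The plan is to combine the pure interior formula with the sampled-difference lemmas; no new estimate is needed. First, since $v$ is trace-compatible of order $r$, \cref{prop:pure-interior} applies to $e=\D^rv$ with the weight $w_k=W(k/N)$. It reduces the normalised sum exactly to
\[
 \frac1N\sum_{k=1}^{N}e_kw_k=\frac{(-1)^r}{N}\sum_{k=1}^{N-r}v_k(\Dp^rw)_k ,
\]
so \eqref{eq:pure-interior-bound} gives the bound $\frac{V}{N}\sum_{k=1}^{N-r}\abs{(\Dp^rw)_k}$. Here I would note that $N\ge r+1$ is implicit, as in \cref{thm:repeated-sbp}. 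For smaller $N$ the interior sum is empty, so the bound holds trivially once the traces vanish.

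Second, I would observe that $(\Dp^rw)_k=\Delta_h^rW(k/N)$ with $h=1/N$. The sample points $k/N,\ldots,(k+r)/N$ stay in $[0,1]$ for $1\le k\le N-r$, so the integral representation \eqref{eq:finite-difference-integral} is legitimate there. For \eqref{eq:endpoint-compatible-general} I would invoke \cref{lem:uniform-difference-control}. Each difference is at most $N^{-r}\norm{W^{(r)}}_{L^\infty}$, and there are at most $N$ of them, so the $\ell^1$ sum is at most $N^{1-r}\norm{W^{(r)}}_{L^\infty}$. Multiplying by $V/N$ gives $VN^{-r}\norm{W^{(r)}}_{L^\infty}$.

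Third, for \eqref{eq:endpoint-compatible-l1} I would instead invoke \cref{lem:l1-derivative-control}, interpreting the hypothesis $W^{(r)}\in L^1$ as $W^{(r-1)}$ absolutely continuous. The lemma gives the $\ell^1$ sum at most $rN^{1-r}\norm{W^{(r)}}_{L^1}$, and multiplying by $V/N$ yields the claim.

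The only step needing care is this overlap count behind \cref{lem:l1-derivative-control}. If I re-derived it, I would write $\abs{\Delta_h^rW(k/N)}\le\int_{[0,h]^r}\abs{W^{(r)}}(k/N+s_1+\cdots+s_r)\dd s$ and sum over $k$ by Tonelli. For fixed $s$, the shifted points $k/N+\sigma$ with $\sigma=s_1+\cdots+s_r\in[0,r/N]$ are spaced $1/N$ apart. Hence $\sum_k\abs{W^{(r)}}(k/N+\sigma)$, averaged against the $r$-fold box kernel, is controlled by $rN\cdot N^{-r}\norm{W^{(r)}}_{L^1}$. Since the lemma is already stated earlier, I expect to cite it rather than redo this computation.
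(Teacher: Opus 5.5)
Your proposal is correct and follows the paper's proof: the paper also combines the pure interior bound \eqref{eq:pure-interior-bound} with \cref{lem:uniform-difference-control,lem:l1-derivative-control}. Your aside on small $N$ reaches the right conclusion, but the empty-sum reason does not apply when $N<r$, since the repeated summation identity is not available there; the correct reason is that vanishing of all left and right traces is triangular in the state values, so it forces $v_{1-r}=\cdots=v_0=0$ and $v_{N-r+1}=\cdots=v_N=0$, hence $v\equiv0$ and $e=0$ when $N\le r$.
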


\begin{proof}
Combine \eqref{eq:pure-interior-bound} with \cref{lem:uniform-difference-control,lem:l1-derivative-control}.
\end{proof}

The theorem gives the high-order rate that the formal calculation suggests, but only after all finite-record traces have been removed. This is the finite-interval counterpart of the smooth termination principle used in frame quantization \cite{BodmannPaulsenAbdulbaki2007,BlumEtAl2010}.

\subsection{Oscillatory weights and Bell-polynomial bounds}

Let
\[
 W(\xi)=\exp(2\pi\ii\phi(\xi)).
\]
Repeated differentiation gives
\begin{equation}
\label{eq:bell-representation}
 W^{(r)}(\xi)
 =W(\xi)\,
 B_r\bigl(2\pi\ii\phi'(\xi),2\pi\ii\phi''(\xi),\ldots,
 2\pi\ii\phi^{(r)}(\xi)\bigr),
\end{equation}
where $B_r$ is the complete exponential Bell polynomial. This is a standard form of the Fa\`a di Bruno formula. For completeness, Appendix B derives the recursion
\[
 B_{r+1}(z_1,\ldots,z_{r+1})
 =\sum_{m=0}^{r}\binom{r}{m}B_{r-m}(z_1,\ldots,z_{r-m})z_{m+1},
\]
starting from $B_0=1$.

Since the coefficients of $B_r$ are nonnegative after absolute values are taken,
\begin{equation}
\label{eq:bell-absolute-bound}
 \abs{W^{(r)}(\xi)}
 \le B_r\bigl(2\pi\abs{\phi'(\xi)},\ldots,
 2\pi\abs{\phi^{(r)}(\xi)}\bigr).
\end{equation}
The first four cases are
\begin{align*}
 \abs{W'}&\le 2\pi\abs{\phi'},\\
 \abs{W''}&\le 2\pi\abs{\phi''}+(2\pi)^2\abs{\phi'}^2,\\
 \abs{W^{(3)}}&\le2\pi\abs{\phi^{(3)}}
 +3(2\pi)^2\abs{\phi'}\abs{\phi''}
 +(2\pi)^3\abs{\phi'}^3,\\
 \abs{W^{(4)}}&\le2\pi\abs{\phi^{(4)}}
 +(2\pi)^2\bigl(4\abs{\phi'}\abs{\phi^{(3)}}+3\abs{\phi''}^2\bigr)\\
 &\quad+6(2\pi)^3\abs{\phi'}^2\abs{\phi''}
 +(2\pi)^4\abs{\phi'}^4.
\end{align*}

\begin{corollary}[Smooth-phase high-order estimate]
\label{cor:smooth-phase-high-order}
Under the assumptions of \cref{thm:endpoint-compatible-general}, let $\phi\in C^r([0,1];\R)$ and
\[
 W(\xi)=e^{2\pi\ii\phi(\xi)}.
\]
Then
\begin{equation}
\label{eq:smooth-phase-high-order}
 \abs{\E_N^\phi u-\E_N^\phi q}
 \le VN^{-r}
 \sup_{0\le\xi\le1}
 B_r\bigl(2\pi\abs{\phi'(\xi)},\ldots,
 2\pi\abs{\phi^{(r)}(\xi)}\bigr).
\end{equation}
\end{corollary}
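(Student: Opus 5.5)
The plan is to reduce the corollary directly to \cref{thm:endpoint-compatible-general}, applied to the weight $W=e^{2\pi\ii\phi}$, and then bound $\norm{W^{(r)}}_{L^\infty}$ pointwise through the Bell-polynomial representation. First, note that $\E_N^\phi u-\E_N^\phi q=\frac1N\sum_{k=1}^N e_kw_k$ with $w_k=W(k/N)$ and $e_k=(\D^rv)_k$. Since $\phi\in C^r([0,1];\R)$, the composition $W=\exp(2\pi\ii\phi)$ lies in $C^r([0,1])$, so the hypotheses of \cref{thm:endpoint-compatible-general} hold. Together with the standing assumptions (trace compatibility of order $r$ and $\norm{v}_{\ell^\infty}\le V$), this gives
\[
 \abs{\E_N^\phi u-\E_N^\phi q}\le VN^{-r}\norm{W^{(r)}}_{L^\infty(0,1)}.
\]

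Second, I will verify \eqref{eq:bell-representation} by induction on $r$. The case $r=0$ holds trivially with $B_0=1$. For the inductive step, differentiate $W\,B_r(z_1,\ldots,z_r)$ with $z_m=2\pi\ii\phi^{(m)}$, using $W'=z_1W$ and $\partial_\xi z_m=z_{m+1}$. This yields exactly the recursion for complete Bell polynomials recorded above (derived in Appendix B), which I take as given.

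Third, I will show that $B_r$ has nonnegative integer coefficients. This follows by induction from the same recursion, since each step only multiplies by binomial coefficients and adds. Combining this with $\abs{W}=1$ and the triangle inequality applied monomial by monomial gives \eqref{eq:bell-absolute-bound}. Taking the supremum over $\xi\in[0,1]$ then yields the stated bound \eqref{eq:smooth-phase-high-order}. The supremum is finite because each $\phi^{(m)}$ with $m\le r$ is continuous on a compact interval.

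No step is a real obstacle. The only point needing care is the nonnegativity of the Bell coefficients, which is what allows the absolute values to pass inside $B_r$. Everything else is a direct substitution into \cref{thm:endpoint-compatible-general}.
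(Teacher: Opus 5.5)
Your proposal is correct and follows the paper's proof. The paper likewise applies \cref{thm:endpoint-compatible-general} to $W=e^{2\pi\ii\phi}\in C^r$ and then uses the pointwise bound \eqref{eq:bell-absolute-bound} to replace $\norm{W^{(r)}}_{L^\infty}$ by the supremum of $B_r$ evaluated at $2\pi\abs{\phi^{(m)}}$.

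One small precision in your optional verification: differentiating $W B_r$ gives the derivative recursion $B_{r+1}=z_1B_r+\sum_{m=1}^{r}z_{m+1}\partial_{z_m}B_r$, not the binomial recursion quoted in the paper. The binomial one comes from the Leibniz rule applied to $W^{(r+1)}=(z_1W)^{(r)}$. Either recursion preserves nonnegative coefficients, which is all your argument needs.
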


\begin{proof}
Apply \cref{thm:endpoint-compatible-general} and \eqref{eq:bell-absolute-bound}.
\end{proof}

\subsection{Second-order parabolic estimate}

For $\phi_{x,t}(\xi)=x\xi+t\xi^2$,
\[
 \phi'=x+2t\xi,
 \qquad
 \phi''=2t,
 \qquad
 \phi^{(m)}=0\quad(m\ge3).
\]
The second derivative of the oscillatory weight satisfies
\[
 \abs{W''(\xi)}
 \le 4\pi\abs{t}+4\pi^2\abs{x+2t\xi}^2.
\]

\begin{corollary}[Second-order parabolic extension]
\label{cor:second-order-parabolic}
Assume $e=\D^2v$, $\norm{v}_{\ell^\infty}\le V$, and second-order trace compatibility. Then
\begin{equation}
\label{eq:second-order-parabolic}
 \abs{\E_Nu(x,t)-\E_Nq(x,t)}
 \le \frac{V}{N^2}
 \left[4\pi\abs{t}+4\pi^2\bigl(\abs{x}+2\abs{t}\bigr)^2\right].
\end{equation}
On every fixed compact set in $(x,t)$, the error is $O(N^{-2})$.
\end{corollary}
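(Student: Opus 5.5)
The plan is to apply \cref{cor:smooth-phase-high-order} with $r=2$ and the parabolic phase $\phi_{x,t}(\xi)=x\xi+t\xi^2$. The error $\E_Nu(x,t)-\E_Nq(x,t)$ equals $\frac1N\sum_k e_kw_k$ with $w_k=W(k/N)$ and $W(\xi)=e^{2\pi\ii\phi_{x,t}(\xi)}$. Since $e=\D^2v$, $\norm{v}_{\ell^\infty}\le V$, and $v$ is trace-compatible of order two, the hypotheses of \cref{thm:endpoint-compatible-general} are met with $r=2$. Its uniform form then gives
\[
 \abs{\E_Nu(x,t)-\E_Nq(x,t)}\le VN^{-2}\sup_{0\le\xi\le1}\abs{W''(\xi)}.
\]

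The key step is to bound $\abs{W''}$. I will use the second Bell-polynomial case recorded above, $\abs{W''}\le2\pi\abs{\phi''}+(2\pi)^2\abs{\phi'}^2$. With $\phi'=x+2t\xi$ and $\phi''=2t$, this reads $\abs{W''(\xi)}\le4\pi\abs{t}+4\pi^2\abs{x+2t\xi}^2$. For $\xi\in[0,1]$ the triangle inequality gives $\abs{x+2t\xi}\le\abs{x}+2\abs{t}$. Taking the supremum over $\xi$ and substituting produces exactly \eqref{eq:second-order-parabolic}.

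For the compact-set statement, the bracket $4\pi\abs{t}+4\pi^2(\abs{x}+2\abs{t})^2$ is continuous in $(x,t)$. It is therefore bounded on any compact $K$ by a constant $C_K$, and the error is at most $VC_KN^{-2}$ uniformly on $K$.

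I see no real obstacle, since the argument only specialises earlier results. The one point to check is that the constants carry over without loss. The integral representation gives $\abs{(\Dp^2w)_k}\le N^{-2}\sup\abs{W''}$, and there are at most $N$ interior terms, so the $1/N$ normalisation exactly cancels the count. No hidden factor of $2$ should appear; I will confirm this against \cref{lem:uniform-difference-control}.
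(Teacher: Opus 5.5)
Your proposal is correct and follows the same argument as the paper: trace compatibility reduces the error to the pure interior sum, and \cref{thm:endpoint-compatible-general} (via \cref{lem:uniform-difference-control}) with $r=2$ gives $VN^{-2}\sup_{[0,1]}\abs{W''}$ with no extra constant. The estimate $\abs{W''(\xi)}\le4\pi\abs{t}+4\pi^2\abs{x+2t\xi}^2$, combined with $\abs{x+2t\xi}\le\abs{x}+2\abs{t}$ on $[0,1]$ and continuity of the bracket on compact sets, then yields \eqref{eq:second-order-parabolic} and the $O(N^{-2})$ conclusion exactly as you describe.
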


The formula should be compared with the generic second-order identity \eqref{eq:second-order-sbp}. If $(\D v)_N$ is not zero, the first boundary term gives only $O(N^{-1})$, even though the interior second difference is $O(N^{-2})$ pointwise.

\subsection{Approximate endpoint compatibility}

Exact reset may be difficult in a physical implementation. The general estimate permits small residual traces.

\begin{proposition}[Trace-tolerant estimate]
\label{prop:trace-tolerant}
Assume $\norm{v}_{\ell^\infty}\le V$ and
\[
 \abs{L_j(v)}+\abs{R_{j,N}(v)}\le \eta_{j,N},
 \qquad 0\le j\le r-1.
\]
Then
\begin{align*}
 \abs{\frac1N\sum_{k=1}^{N}(\D^rv)_kw_k}
 &\le\frac1N\sum_{j=0}^{r-1}\eta_{j,N}
 \max\left\{\abs{(\Dp^jw)_1},\abs{(\Dp^jw)_{N-j}}\right\}\\
 &\quad+\frac{V}{N}\sum_{k=1}^{N-r}\abs{(\Dp^rw)_k}.
\end{align*}
\end{proposition}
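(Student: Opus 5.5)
The estimate follows directly from the exact identity in \cref{thm:repeated-sbp}, equivalently from \cref{cor:high-order-full}; no new analytic input is needed. We start from \eqref{eq:repeated-sbp} applied to $e=\D^rv$ and keep the full expression, with boundary functional $\mathcal B_{N,r}(v,w)$ and interior sum $(-1)^r\sum_{k=1}^{N-r}v_k(\Dp^rw)_k$. Dividing by $N$ and applying the triangle inequality splits the left-hand side into a boundary part and an interior part. The interior part is bounded by $\frac{\norm{v}_{\ell^\infty}}{N}\sum_{k=1}^{N-r}\abs{(\Dp^rw)_k}\le\frac VN\sum_{k=1}^{N-r}\abs{(\Dp^rw)_k}$, which is the second line of the claimed bound.

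For the boundary part, take each index $j$ separately. The two contributions are
\[
 \abs{R_{j,N}(v)}\,\abs{(\Dp^jw)_{N-j}}+\abs{L_j(v)}\,\abs{(\Dp^jw)_1}.
\]
Replace both weight factors by $M_j:=\max\{\abs{(\Dp^jw)_1},\abs{(\Dp^jw)_{N-j}}\}$. This bounds the pair by $\bigl(\abs{L_j(v)}+\abs{R_{j,N}(v)}\bigr)M_j\le\eta_{j,N}M_j$. Summing over $0\le j\le r-1$ and dividing by $N$ gives the first line. The proof is therefore a two-line consequence of \cref{cor:high-order-full}.

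The only point that needs care is bookkeeping. The hypothesis $N\ge r+1$ of \cref{thm:repeated-sbp} must be in force, so that the index $N-j\ge1$ is valid and all the forward differences $(\Dp^jw)_{N-j}$ and $(\Dp^jw)_1$ are defined from $w_1,\ldots,w_N$. The traces $L_j,R_{j,N}$ must also refer to the same staggered positions as in the definition. Since the statement is a pure consequence of an exact identity plus the triangle inequality, I do not expect a genuine obstacle. As a check, setting $\eta_{j,N}=0$ should recover \cref{prop:pure-interior}.
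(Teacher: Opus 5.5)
Your proposal is correct and follows the paper's route. The paper gives no separate proof and presents the proposition as an immediate consequence of the trace-retaining bound in \cref{cor:high-order-full}, obtained, as you do, by bounding each pair $\abs{R_{j,N}(v)}\abs{(\Dp^jw)_{N-j}}+\abs{L_j(v)}\abs{(\Dp^jw)_1}$ by $\eta_{j,N}$ times the larger weight factor. Your remark that the standing hypothesis $N\ge r+1$ of \cref{thm:repeated-sbp} must be in force is correct bookkeeping that the statement leaves implicit.
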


For sampled $C^r$ weights, the $j$th boundary difference is $O(N^{-j})$. Therefore the $j$th trace is harmless at the target $N^{-r}$ scale if
\[
 \eta_{j,N}=O(N^{-(r-1-j)}).
\]
The most demanding condition concerns the undifferenced weight, $j=0$: the highest state trace must be $O(N^{-(r-1)})$. Exact zero is sufficient but not necessary.

\subsection{Interpretation for quantizer design}

The theorem does not construct a stable high-order one-bit recursion with compatible endpoints. That is a separate control problem. Existing sigma-delta theory supplies stable schemes on long or infinite records \cite{DaubechiesDeVore2003,Gunturk2003CPAM,DeiftGunturkKrahmer2011}, while frame-path termination shows that endpoint preparation can be integrated into a finite expansion \cite{BodmannPaulsenAbdulbaki2007}. For the present Fourier-extension problem, three design options are natural:

\begin{enumerate}[label=(\alph*)]
\item append a short terminal sequence chosen to drive the state traces to zero;
\item divide the record into blocks and reset or smoothly terminate each block;
\item modify the reconstruction weight near the endpoints so that its low-order forward differences vanish there.
\end{enumerate}

Each option changes a different part of the boundary formula. Chapter~\ref{chap:robustness} develops the blockwise alternative in detail.

\chapter{Polynomial, Multidimensional, and Growing-Region Extensions}
\label{chap:extensions}

The discrete-difference method is not restricted to a quadratic phase or to a one-dimensional sequence. The following sections treat polynomial phases, moment-curve parameterisations, lattice arrays with divergence-form errors, higher-order multi-index shaping, and parameter regions whose size depends on the record length.

\section{Polynomial phases and moment-curve extensions}
\label{sec:10_polynomial_moment_curve}

\subsection{The polynomial extension operator}

For an integer $d\ge1$ and a parameter vector $x=(x_1,\ldots,x_d)\in\R^d$, define
\begin{equation}
\label{eq:moment-extension}
 \E_{N,d}a(x)
 =\frac1N\sum_{k=1}^{N}a_k
 e\!\left(\sum_{j=1}^{d}x_j\left(\frac{k}{N}\right)^j\right),
 \qquad e(z)=e^{2\pi\ii z}.
\end{equation}
The underlying curve
\[
 \gamma_d(\xi)=(\xi,\xi^2,\ldots,\xi^d)
\]
is the moment curve. It is central in discrete restriction, Vinogradov mean value theory, and decoupling \cite{BourgainDemeter2015,BourgainDemeterGuth2016,Wooley2012,Wooley2019,GuoLiYungZorinKranich2020,Demeter2020}. The immediate purpose is more elementary. The perturbation caused by replacing $u$ with a noise-shaped sequence $q$ is estimated.

Set
\[
 \phi_x(\xi)=\sum_{j=1}^{d}x_j\xi^j.
\]
Then
\begin{equation}
\label{eq:poly-phase-derivatives}
 \phi_x^{(m)}(\xi)
 =\sum_{j=m}^{d}\frac{j!}{(j-m)!}x_j\xi^{j-m},
 \qquad 1\le m\le d,
\end{equation}
and $\phi_x^{(m)}=0$ for $m>d$.

\subsection{First-order estimates}

\begin{theorem}[First-order polynomial-phase estimate]
\label{thm:first-order-polynomial}
Assume $e_k=u_k-q_k=\D v_k$, $v_0=0$, and $\norm{v}_{\ell^\infty}\le V$. Then
\begin{equation}
\label{eq:first-order-polynomial}
 \abs{\E_{N,d}u(x)-\E_{N,d}q(x)}
 \le\frac{V}{N}
 \left[1+2\pi\int_0^1
 \abs{\sum_{j=1}^{d}j x_j\xi^{j-1}}\dd\xi\right].
\end{equation}
In particular,
\begin{equation}
\label{eq:first-order-polynomial-simple}
 \abs{\E_{N,d}u(x)-\E_{N,d}q(x)}
 \le\frac{V}{N}
 \left[1+2\pi\sum_{j=1}^{d}\abs{x_j}\right].
\end{equation}
\end{theorem}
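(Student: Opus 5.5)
This is a direct specialisation of \cref{thm:ac-phase} to the polynomial phase $\phi_x(\xi)=\sum_{j=1}^d x_j\xi^j$. The plan is to verify the hypotheses of that theorem and then bound the resulting variation integral crudely.

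First, identify the extension difference with the weighted sum. By definition \eqref{eq:moment-extension}, $\E_{N,d}a(x)=\E_N^{\phi_x}a$. Since $\phi_x$ is a real polynomial, it is smooth and hence absolutely continuous on $[0,1]$. By \eqref{eq:poly-phase-derivatives} with $m=1$, its derivative is $\phi_x'(\xi)=\sum_{j=1}^d jx_j\xi^{j-1}$. The remaining hypotheses of \cref{thm:ac-phase} are exactly those assumed here: $e=\D v$, $v_0=0$, and $\norm{v}_{\ell^\infty}\le V$. Inserting $\phi_x'$ into \eqref{eq:ac-phase} gives \eqref{eq:first-order-polynomial} immediately. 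If one prefers a self-contained route, the same chain can be written out as in the complete parabolic derivation: apply \cref{lem:sbp-first}, note that $\abs{w_N}=1$, and bound each chord by $2\pi\int_{k/N}^{(k+1)/N}\abs{\phi_x'}$.

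For the simplified bound \eqref{eq:first-order-polynomial-simple}, apply the triangle inequality inside the integral:
\[
 \int_0^1\abs{\phi_x'(\xi)}\dd\xi\le\sum_{j=1}^d j\abs{x_j}\int_0^1\xi^{j-1}\dd\xi=\sum_{j=1}^d\abs{x_j},
\]
because $\int_0^1 j\xi^{j-1}\dd\xi=1$. This reproduces the parabolic case $d=2$, where the bound is $\abs{x}+\abs{t}$.

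There is no real obstacle. The only step that needs care is confirming that the factor $j$ produced by differentiation exactly cancels the factor $1/j$ from integration, so that the simplified constant is $\sum_j\abs{x_j}$ with no dependence on $d$.
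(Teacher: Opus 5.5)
Your proposal is correct and is essentially the paper's own proof. You apply the absolutely continuous phase estimate (\cref{thm:ac-phase}) to $\phi_x(\xi)=\sum_{j=1}^d x_j\xi^j$. You then bound $\abs{\phi_x'(\xi)}\le\sum_{j} j\abs{x_j}\xi^{j-1}$ and use $\int_0^1 j\xi^{j-1}\,\mathrm{d}\xi=1$ to obtain the simplified constant $\sum_j\abs{x_j}$.
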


\begin{proof}
Apply the absolutely continuous phase estimate to $\phi_x$. For the simpler bound, use
\[
 \abs{\phi_x'(\xi)}
 \le\sum_{j=1}^{d}j\abs{x_j}\xi^{j-1}
\]
and integrate. Since $\int_0^1j\xi^{j-1}\dd\xi=1$, the coefficient of each $\abs{x_j}$ is exactly one.
\end{proof}

The cancellation of the factor $j$ after integration is useful. The deterministic first-order bound depends on the $\ell^1$ size of the parameter vector, not on the maximum degree.

\begin{corollary}[Uniformity on compact parameter sets]
Let $K\subset\R^d$ be compact. Then
\[
 \norm{\E_{N,d}u-\E_{N,d}q}_{L^\infty(K)}
 \le\frac{V}{N}\left(1+2\pi\sup_{x\in K}\sum_{j=1}^{d}\abs{x_j}\right).
\]
Thus the quantized and unquantized polynomial extensions converge uniformly on $K$ at order $N^{-1}$.
\end{corollary}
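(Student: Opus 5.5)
The corollary is the first-order polynomial-phase estimate, \cref{thm:first-order-polynomial}, with a supremum taken over $K$. The plan is to take that theorem in its simplified form \eqref{eq:first-order-polynomial-simple} as the starting point. Fix $x\in K$. The theorem gives the pointwise bound
\[
 \abs{\E_{N,d}u(x)-\E_{N,d}q(x)}\le\frac{V}{N}\Bigl[1+2\pi\sum_{j=1}^d\abs{x_j}\Bigr].
\]
The hypotheses are $e=\D v$, $v_0=0$ and $\norm{v}_{\ell^\infty}\le V$. These are properties of the coefficient record alone and do not involve the parameter $x$. Hence the constant $V$ is the same at every point of $K$.

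The second step is to replace $\sum_j\abs{x_j}$ by its supremum over $K$. The map $x\mapsto\sum_{j=1}^d\abs{x_j}$ is the $\ell^1$ norm on $\R^d$, so it is continuous. Since $K$ is compact, the supremum $M_K=\sup_{x\in K}\sum_j\abs{x_j}$ is finite, and in fact it is attained. The pointwise right-hand side is therefore at most $\frac{V}{N}(1+2\pi M_K)$ for every $x\in K$.

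Taking the supremum of the left-hand side over $x\in K$ then gives the stated $L^\infty(K)$ bound. The constant $V(1+2\pi M_K)$ is independent of $N$, so the difference is $O_K(N^{-1})$ uniformly on $K$. This is the claimed convergence.

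There is no genuine obstacle here. The only point that needs care is confirming that no constant in \cref{thm:first-order-polynomial} depends on $x$ other than through $\sum_j\abs{x_j}$. That theorem bounds the sampled variation by $2\pi\int_0^1\abs{\phi_x'}$, using $\int_0^1 j\xi^{j-1}\dd\xi=1$, and it bounds the terminal term by $\abs{w_N}=1$. Neither step introduces any further dependence on $x$, so uniformity holds. One could equally work with the sharper integral form \eqref{eq:first-order-polynomial} and take its supremum over $K$, which is finite by continuity of the integral in $x$, but the simplified form is all the statement requires.
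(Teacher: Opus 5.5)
Your proposal is correct and matches the paper's argument. The paper states this corollary without a separate proof, as the immediate consequence of taking the supremum over $K$ of the simplified bound \eqref{eq:first-order-polynomial-simple}, just as in the parabolic analogue \cref{cor:compact-convergence}; $\sup_K\sum_j|x_j|$ is finite by continuity and compactness, and $V$ and $|w_N|=1$ do not depend on $x$.
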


\subsection{High-order estimates under endpoint compatibility}

For $1\le m\le r$, define
\begin{equation}
\label{eq:poly-derivative-envelope}
 A_m(x)=\sum_{j=m}^{d}\frac{j!}{(j-m)!}\abs{x_j}.
\end{equation}
Since $0\le\xi\le1$, \eqref{eq:poly-phase-derivatives} gives
\[
 \norm{\phi_x^{(m)}}_{L^\infty(0,1)}\le A_m(x).
\]

\begin{theorem}[High-order moment-curve estimate]
\label{thm:high-order-moment-curve}
Assume $e=\D^rv$, $\norm{v}_{\ell^\infty}\le V$, and trace compatibility of order $r$. Then
\begin{equation}
\label{eq:high-order-moment-curve}
 \abs{\E_{N,d}u(x)-\E_{N,d}q(x)}
 \le VN^{-r}
 B_r\bigl(2\pi A_1(x),\ldots,2\pi A_r(x)\bigr),
\end{equation}
where $A_m(x)=0$ when $m>d$.
\end{theorem}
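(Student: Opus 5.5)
The plan is to reduce the statement to \cref{cor:smooth-phase-high-order}, applied to the polynomial phase $\phi_x(\xi)=\sum_{j=1}^d x_j\xi^j$. The weight is $w_k=W(k/N)$ with $W=e^{2\pi\ii\phi_x}$. Since $\phi_x$ is a polynomial, it lies in $C^r([0,1];\R)$ for every $r$, and $\E_{N,d}u-\E_{N,d}q=\frac1N\sum_k e_kw_k$. The hypotheses $e=\D^rv$, $\norm{v}_{\ell^\infty}\le V$ and trace compatibility of order $r$ are exactly those of \cref{thm:endpoint-compatible-general}. That corollary therefore gives
\[
 \abs{\E_{N,d}u(x)-\E_{N,d}q(x)}
 \le VN^{-r}\sup_{0\le\xi\le1}
 B_r\bigl(2\pi\abs{\phi_x'(\xi)},\ldots,2\pi\abs{\phi_x^{(r)}(\xi)}\bigr).
\]

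Next I will bound the derivatives uniformly. By \eqref{eq:poly-phase-derivatives} and $0\le\xi\le1$, for $1\le m\le\min(r,d)$ we have $\abs{\phi_x^{(m)}(\xi)}\le\sum_{j=m}^d\frac{j!}{(j-m)!}\abs{x_j}\xi^{j-m}\le A_m(x)$. For $m>d$, $\phi_x^{(m)}\equiv0$, which matches the convention $A_m(x)=0$.

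The remaining step is monotonicity of $B_r$. The complete exponential Bell polynomial has nonnegative coefficients; this follows by induction from the recursion $B_{r+1}=\sum_{m}\binom{r}{m}B_{r-m}z_{m+1}$ with $B_0=1$, recorded before \eqref{eq:bell-absolute-bound}. Hence $B_r$ is coordinatewise nondecreasing on $[0,\infty)^r$. Replacing each argument $2\pi\abs{\phi_x^{(m)}(\xi)}$ by $2\pi A_m(x)$ bounds the supremum over $\xi$ by $B_r(2\pi A_1(x),\ldots,2\pi A_r(x))$, which yields \eqref{eq:high-order-moment-curve}.

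No step is a genuine obstacle. The only point needing care is the monotonicity of $B_r$ on the nonnegative orthant, and the induction on the recursion settles it. The case $r>d$ is handled automatically by the zero arguments and needs no separate treatment.
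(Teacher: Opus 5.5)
Your proposal is correct and follows the paper's own proof: apply \cref{cor:smooth-phase-high-order} to the phase $\phi_x$, bound $\abs{\phi_x^{(m)}}$ on $[0,1]$ by the envelopes $A_m(x)$, and use the monotonicity of $B_r$ in nonnegative arguments. Your induction on the Bell recursion, which shows that $B_r$ has nonnegative coefficients, spells out the monotonicity step that the paper only asserts.
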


\begin{proof}
Use \cref{cor:smooth-phase-high-order} and the derivative envelopes above. Monotonicity of the complete Bell polynomial in nonnegative arguments gives the stated bound.
\end{proof}

For fixed $d,r$, the right-hand side is a polynomial in $\abs{x_1},\ldots,\abs{x_d}$ of total degree at most $r$. The highest-degree contribution is
\[
 (2\pi A_1(x))^r.
\]
Lower-degree terms contain higher derivatives of the phase.

\subsection{Explicit second- and third-order formulae}

For $r=2$,
\begin{equation}
\label{eq:poly-second}
 \abs{\E_{N,d}u-\E_{N,d}q}
 \le\frac{V}{N^2}
 \left[2\pi A_2(x)+(2\pi)^2A_1(x)^2\right].
\end{equation}
For $r=3$,
\begin{equation}
\label{eq:poly-third}
 \abs{\E_{N,d}u-\E_{N,d}q}
 \le\frac{V}{N^3}
 \left[2\pi A_3(x)+3(2\pi)^2A_1(x)A_2(x)
 +(2\pi)^3A_1(x)^3\right].
\end{equation}
These estimates remain valid when $r>d$ because the derivatives $A_m$ with $m>d$ vanish, while products of lower derivatives remain.

\subsection{Anisotropic parameter boxes}

Restriction theory naturally assigns different physical scales to different coordinates of the moment curve. The normalized extension used here, however, measures smoothness in the parameter $\xi=k/N$. For a box
\begin{equation}
\label{eq:anisotropic-box}
 \mathcal B_N(\alpha_1,\ldots,\alpha_d)
 =\left\{x:\abs{x_j}\le N^{\alpha_j},\ 1\le j\le d\right\},
\end{equation}
let
\[
 \alpha_* =\max_{1\le j\le d}\alpha_j.
\]
Then $A_m(x)\le C_{d,m}N^{\alpha_*}$ on the box, and the Bell polynomial bound yields
\begin{equation}
\label{eq:isotropic-growth-consequence}
 \sup_{x\in\mathcal B_N}
 \abs{\E_{N,d}u(x)-\E_{N,d}q(x)}
 \le C_{d,r}V N^{-r(1-\alpha_*)}
\end{equation}

after enlarging the constant to cover lower-degree terms. Thus endpoint-compatible $r$th-order shaping gives deterministic decay whenever $\alpha_*<1$.

This criterion is sufficient, not optimal. It ignores the fact that $x_j$ enters different phase derivatives with different powers of $\xi$. A more refined estimate retains the full Bell polynomial and may permit larger values of selected coordinates when the corresponding derivatives appear only in lower-degree combinations.

\begin{proposition}[Refined anisotropic envelope]
\label{prop:anisotropic-envelope}
On the box \eqref{eq:anisotropic-box}, define
\[
 \beta_m=\max_{m\le j\le d}\alpha_j.
\]
Then
\[
 A_m(x)\le C_{d,m}N^{\beta_m},
\]
and every monomial
\[
 \prod_{m=1}^{r}A_m(x)^{\nu_m}
\]
occurring in $B_r$ satisfies the weight constraint
\[
 \sum_{m=1}^{r}m\nu_m=r.
\]
Consequently, its contribution is bounded by
\[
 C N^{-r+\sum_m\nu_m\beta_m}.
\]
\end{proposition}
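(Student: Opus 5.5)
The plan is to prove the three assertions of \cref{prop:anisotropic-envelope} in order: the envelope bound for each $A_m$, the weight identity for the Bell monomials, and then the contribution bound obtained by combining these with the endpoint-compatible estimate of \cref{thm:high-order-moment-curve}.

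\textbf{Step 1: envelope bound.} Start from the definition \eqref{eq:poly-derivative-envelope}. For $x\in\mathcal B_N(\alpha_1,\ldots,\alpha_d)$ and $m\le j\le d$, we have $\abs{x_j}\le N^{\alpha_j}\le N^{\beta_m}$ whenever $N\ge1$, since $\beta_m=\max_{m\le j\le d}\alpha_j$. Summing these bounds gives $A_m(x)\le C_{d,m}N^{\beta_m}$ with
\[
 C_{d,m}=\sum_{j=m}^{d}\frac{j!}{(j-m)!}.
\]
If $m>d$, then $A_m=0$ and there is nothing to prove; any monomial containing such an $A_m$ simply vanishes.

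\textbf{Step 2: weight constraint.} Recall the explicit form of the complete Bell polynomial,
\[
 B_r(z_1,\ldots,z_r)=\sum_{\nu}\frac{r!}{\prod_m (m!)^{\nu_m}\nu_m!}\prod_{m=1}^{r} z_m^{\nu_m},
\]
where the sum runs over multi-indices with $\sum_m m\nu_m=r$. Since the paper only derives the recursion stated before \cref{cor:smooth-phase-high-order}, I would prove the weight property directly by induction on $r$. The base case is $B_0=1$, which has weight $0$. For the inductive step, each term $\binom{r}{m}B_{r-m}z_{m+1}$ multiplies a monomial of weight $r-m$ (by the induction hypothesis) by $z_{m+1}$, which has weight $m+1$. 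The product therefore has weight $r+1$, so every monomial in $B_{r+1}$ has weight $r+1$. The coefficients are nonnegative, which is consistent with the monotonicity already used in \cref{thm:high-order-moment-curve}.

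\textbf{Step 3: contribution bound.} Insert the envelope from Step 1 into a monomial:
\[
 \prod_{m}(2\pi A_m)^{\nu_m}\le C\,N^{\sum_m \nu_m\beta_m}.
\]
Multiplying by the prefactor $VN^{-r}$ from \eqref{eq:high-order-moment-curve} gives the stated bound $CN^{-r+\sum_m\nu_m\beta_m}$. The constant depends on $d$, $r$, $V$ and the Bell coefficient.

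I expect no real obstacle here. The only points needing care are the following:
\begin{enumerate}[label=(\roman*)]
\item The weight identity should be justified from the recursion, as in Step 2, rather than from an unproved closed form.
\item One needs $N\ge1$, so that $N^{\alpha_j}\le N^{\beta_m}$ holds even when some exponents are negative.
\item Monomials involving an index $m>d$ vanish and must be excluded.
\end{enumerate}

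It is also worth noting a consistency check. Because $\beta_m\le\alpha_*$ and $\sum_m\nu_m\le\sum_m m\nu_m=r$, each contribution is at most $N^{-r(1-\alpha_*)}$ when $\alpha_*\ge0$. This recovers \eqref{eq:isotropic-growth-consequence}.
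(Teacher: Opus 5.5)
Your proof is correct and follows the paper's route. The envelope bound comes directly from the definition of $A_m$ on the box, and the weight constraint is the weighted homogeneity of the complete Bell polynomial. Where the paper simply asserts that homogeneity, you verify it by strong induction on the stated recursion, and you then insert both facts into the $VN^{-r}$ prefactor of the high-order moment-curve estimate. Your handling of $N\ge1$, negative exponents, and vanishing $A_m$ with $m>d$ fills in details the paper leaves implicit.
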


\begin{proof}
The derivative envelope is immediate. The weight constraint is the defining homogeneity of the complete Bell polynomial: the variable representing the $m$th derivative has combinatorial weight $m$.
\end{proof}

This proposition identifies a family of anisotropic subcritical regions through the inequalities
\[
 \sum_m\nu_m\beta_m<r
\]
for every Bell monomial. The strongest constraint often comes from $A_1^r$, giving $\beta_1<1$, but if the linear derivative is small or cancels on the region, other constraints may become decisive.

\subsection{Stationary points and local improvement}

The total-variation estimate uses $\int|\phi'|$ and therefore does not benefit from sign changes in $\phi'$. The parabolic closed form $J(x,t)$ already shows that a stationary point can reduce the constant. For higher-degree phases, the same phenomenon occurs whenever $\phi'$ has roots in $[0,1]$. One can partition the interval into monotonicity intervals of $\phi$ and compute
\[
 \Var(e^{2\pi\ii\phi})
 \le2\pi\Var(\phi).
\]
This is still a variation argument. It recognises reduced path length caused by a smaller derivative, but it does not use cancellation between distinct turns of the phase.

A more ambitious local theory would centre the phase at a stationary point $\xi_0$ and combine noise shaping with van der Corput estimates. If
\[
 \phi'(\xi_0)=\cdots=\phi^{(s-1)}(\xi_0)=0,
 \qquad \phi^{(s)}(\xi_0)\ne0,
\]
then the local oscillation is of order $s$. Classical oscillatory-integral theory relates this order to decay rates \cite{Stein1993,Grafakos2014}. In the discrete setting, Weyl differencing and Vinogradov mean value estimates play the corresponding role \cite{Weyl1916,Vaughan1997,IwaniecKowalski2004,Wooley2012,BourgainDemeterGuth2016}. Chapter~\ref{chap:literature} explains why these tools are required beyond the subcritical variation regime.

\subsection{Coefficient normalisation and comparison with restriction theory}

The operator \eqref{eq:moment-extension} contains $1/N$. Classical discrete restriction often studies the unnormalised sum
\[
 S_a(\theta)=\sum_{k=1}^{N}a_k e(\theta_1k+\cdots+\theta_dk^d).
\]
The change of variables
\[
 \theta_j=\frac{x_j}{N^j}
\]
gives
\[
 \E_{N,d}a(x)=\frac1N S_a(\theta).
\]
A unit torus in $\theta$ corresponds to the anisotropic box
\[
 0\le x_j\le N^j.
\]
This is far larger than the deterministic subcritical region $\abs{x_j}=o(N)$ produced by total variation. The gap is structural. It is not removed by increasing the degree of the phase or by rewriting the variables. Reaching the full torus requires genuine oscillatory estimates.

\subsection{A new family of structured restriction questions}

The moment-curve formulation suggests the following problem. Given a bounded state $v$ and $e=\D^rv$, determine the best exponents $p$ and $\delta$ for which
\[
 \norm{\E_{N,d}e}_{L^p(\mathcal B_N)}
 \le C_\varepsilon N^{-\delta+\varepsilon}\norm{v}_{\ell^\infty}
\]
can hold on anisotropic boxes. The answer must depend on the boundary conditions, the order $r$, the dimension $d$, and the chosen scaling of $\mathcal B_N$. This is not a standard discrete restriction problem because the coefficients are constrained by a finite-difference representation. It is also not a standard sigma-delta reconstruction problem because the test family is a curved exponential system rather than a fixed dual frame.

\section{Multidimensional noise shaping and grid extensions}
\label{sec:11_multidimensional}

\subsection{From a sequence to a lattice array}

Many quantization problems are naturally multidimensional. Digital halftoning, image acquisition, sensor arrays, and sampled partial differential equations produce data indexed by a rectangular lattice rather than a single ordered list. Error-diffusion algorithms can often be interpreted as multidimensional sigma-delta schemes. Recent work has made this relationship precise for weighted first- and second-order constructions in image halftoning \cite{KrahmerVeselovska2023}. The algebra developed in the preceding chapters extends to this setting through discrete divergence.

Let
\[
 \Lambda_N=\{1,\ldots,N\}^d.
\]
For $k=(k_1,\ldots,k_d)\in\Lambda_N$ and the $j$th coordinate vector $e_j$, define
\[
 (\D_j z)_k=z_k-z_{k-e_j},
 \qquad
 (\Delta_{+,j}w)_k=w_{k+e_j}-w_k.
\]
Boundary values outside $\Lambda_N$ must be prescribed. Zero initial data on the lower faces are imposed first.

\subsection{Divergence-form shaped errors}

\begin{definition}[First-order multidimensional noise shaping]
An error array $e=(e_k)_{k\in\Lambda_N}$ has divergence-form shaping if
\begin{equation}
\label{eq:divergence-shaping}
 e_k=\sum_{j=1}^{d}(\D_jv^{(j)})_k,
\end{equation}
where the vector state $v=(v^{(1)},\ldots,v^{(d)})$ is bounded.
\end{definition}

The representation \eqref{eq:divergence-shaping} is a lattice analogue of $e=\nabla\cdot v$. It captures the idea that quantization error is redistributed through neighbouring samples. Different scan orders and error-diffusion masks produce different choices of the state fields.

For an array $a$, define the normalised grid extension
\begin{equation}
\label{eq:grid-extension}
 \mathcal G_Na(x)
 =\frac1{N^d}\sum_{k\in\Lambda_N}a_k
 e^{2\pi\ii\Phi_x(k/N)},
\end{equation}
where $k/N=(k_1/N,\ldots,k_d/N)$ and $\Phi_x$ is a real phase on $[0,1]^d$.

\subsection{Discrete divergence theorem}

To state the boundary formula, let
\[
 \Lambda_N^{(j,-)}=\{k\in\Lambda_N:k_j=1\},
 \qquad
 \Lambda_N^{(j,+)}=\{k\in\Lambda_N:k_j=N\}.
\]

\begin{theorem}[Multidimensional summation by parts]
\label{thm:multi-sbp}
For each coordinate $j$,
\begin{align}
\label{eq:multi-sbp-coordinate}
 \sum_{k\in\Lambda_N}(\D_jv^{(j)})_kw_k
 &=\sum_{k\in\Lambda_N^{(j,+)}}v^{(j)}_kw_k
 -\sum_{k\in\Lambda_N^{(j,-)}}v^{(j)}_{k-e_j}w_k\notag\\
 &\quad-\sum_{\substack{k\in\Lambda_N\\k_j\le N-1}}
 v^{(j)}_k(\Delta_{+,j}w)_k.
\end{align}
Consequently, if the lower-face states vanish, then
\begin{align}
\label{eq:multi-div-bound}
 \abs{\frac1{N^d}\sum_{k\in\Lambda_N}e_kw_k}
 &\le\frac1{N^d}\sum_{j=1}^{d}
 \left[
 \sum_{k\in\Lambda_N^{(j,+)}}\abs{v^{(j)}_k}\abs{w_k}
 +\sum_{k_j\le N-1}\abs{v^{(j)}_k}\abs{\Delta_{+,j}w_k}
 \right].
\end{align}
\end{theorem}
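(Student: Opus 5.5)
The plan is to reduce the multidimensional statement to the one-dimensional identity of \cref{lem:sbp-first}, applied line by line. Fix a coordinate $j$ and split the lattice $\Lambda_N$ into the $N^{d-1}$ lines parallel to $e_j$. Each line is the set of points $k=k'+me_j$ with $m=1,\ldots,N$, where the multi-index $k'$ has $j$th entry zero and its other entries range over $\{1,\ldots,N\}$. On such a line set $a_m=v^{(j)}_{k'+me_j}$ for $m=0,\ldots,N$ and $b_m=w_{k'+me_j}$. Then $(\D_jv^{(j)})_{k'+me_j}=a_m-a_{m-1}$. The value $a_0=v^{(j)}_{k'}$ is the prescribed lower-face boundary value, which is exactly the term $v^{(j)}_{k-e_j}$ with $k\in\Lambda_N^{(j,-)}$.

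\cref{lem:sbp-first} then gives, on each line,
\[
 \sum_{m=1}^N(a_m-a_{m-1})b_m=a_Nb_N-a_0b_1-\sum_{m=1}^{N-1}a_m(b_{m+1}-b_m).
\]
Here $b_{m+1}-b_m=(\Delta_{+,j}w)_{k'+me_j}$. Summing over all $k'$ reassembles the three pieces. The terminal terms $a_Nb_N$ range over the upper face $\Lambda_N^{(j,+)}$. The initial terms $a_0b_1$ range over the lower face $\Lambda_N^{(j,-)}$ with the shifted state index $k-e_j$. The interior terms range over $\{k\in\Lambda_N:k_j\le N-1\}$. This proves \eqref{eq:multi-sbp-coordinate}.

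For \eqref{eq:multi-div-bound}, substitute the divergence representation \eqref{eq:divergence-shaping} and use linearity to write $\sum_k e_kw_k=\sum_{j=1}^d\sum_k(\D_jv^{(j)})_kw_k$. Apply \eqref{eq:multi-sbp-coordinate} to each $j$. The hypothesis that the lower-face states vanish, meaning $v^{(j)}_{k-e_j}=0$ for $k\in\Lambda_N^{(j,-)}$, kills the initial face sums. The triangle inequality is then applied termwise, and dividing by $N^d$ gives the stated bound.

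The only real obstacle is indexing. One must make precise that ``lower-face states'' refers to the values $v^{(j)}$ at points with $k_j=0$, which lie outside $\Lambda_N$, and that only these values enter, with the shift in the $j$th coordinate alone. One must also check that the fibration covers $\Lambda_N$ exactly once, so that no corner or edge term is double-counted. Counting is not a problem, because each coordinate $j$ uses its own decomposition into lines and its own state field $v^{(j)}$.
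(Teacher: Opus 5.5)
Your proof is correct and follows the paper's argument: apply the one-dimensional identity of \cref{lem:sbp-first} along each line parallel to $e_j$, sum over the remaining $d-1$ coordinates, then add over $j$ using the divergence representation and take absolute values termwise. Your explicit tracking of the shifted lower-face values $v^{(j)}_{k-e_j}$ and of the line decomposition is more detailed than the paper's sketch, but the argument is the same.
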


\begin{proof}
Fix all coordinates except $k_j$ and apply the one-dimensional summation-by-parts identity along the resulting line. Sum over the remaining $d-1$ coordinates. Then add the identities over $j$ and use \eqref{eq:divergence-shaping}.
\end{proof}

If $\abs{v^{(j)}_k}\le V_j$ and $\abs{w_k}=1$, the upper face contributes
\[
 \frac{N^{d-1}}{N^d}V_j=\frac{V_j}{N}.
\]
Thus first-order divergence shaping again produces an $N^{-1}$ boundary scale, independent of dimension.

\subsection{Smooth multidimensional phases}

Set
\[
 w_k=e^{2\pi\ii\Phi(k/N)}.
\]
For a continuously differentiable phase,
\begin{align*}
 \abs{\Delta_{+,j}w_k}
 &\le2\pi\abs{\Phi((k+e_j)/N)-\Phi(k/N)}\\
 &\le\frac{2\pi}{N}
 \sup_{y\in Q_k}\abs{\partial_j\Phi(y)},
\end{align*}
where $Q_k$ is the line segment joining the adjacent grid points.

\begin{theorem}[Grid phase estimate]
\label{thm:grid-phase-estimate}
Assume \eqref{eq:divergence-shaping}, zero lower-face states, and
\[
 \norm{v^{(j)}}_{\ell^\infty}\le V_j.
\]
If $\Phi\in C^1([0,1]^d)$, then
\begin{equation}
\label{eq:grid-phase-estimate}
 \abs{\mathcal G_Nu-\mathcal G_Nq}
 \le\frac1N\sum_{j=1}^{d}V_j
 \left[1+2\pi\norm{\partial_j\Phi}_{L^\infty([0,1]^d)}\right].
\end{equation}
\end{theorem}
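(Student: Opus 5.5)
The plan is to reduce the estimate to the coordinatewise bound \eqref{eq:multi-div-bound} of \cref{thm:multi-sbp} and then control each of its two pieces by elementary counting. The weights are taken to be $w_k=e^{2\pi\ii\Phi(k/N)}$. With this choice, $\mathcal G_Nu-\mathcal G_Nq=N^{-d}\sum_{k\in\Lambda_N}e_kw_k$ with $e=u-q$. Because the lower-face states vanish, the initial face sums in \eqref{eq:multi-sbp-coordinate} drop out, and \eqref{eq:multi-div-bound} applies directly.

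Next I will bound each coordinate $j$ separately.

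\textbf{Upper face.} Since $\abs{w_k}=1$ and $\abs{v^{(j)}_k}\le V_j$, the term $\sum_{k\in\Lambda_N^{(j,+)}}\abs{v^{(j)}_k}\abs{w_k}$ is at most $V_jN^{d-1}$. The face has exactly $N^{d-1}$ points, so after dividing by $N^d$ this gives $V_j/N$. This is the remark following \cref{thm:multi-sbp}.

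\textbf{Interior.} For the interior sum I will use the adjacent-increment estimate displayed just before the statement. The chord inequality $\abs{e^{\ii\alpha}-e^{\ii\beta}}\le\abs{\alpha-\beta}$ gives
\[
 \abs{\Delta_{+,j}w_k}\le 2\pi\abs{\Phi((k+e_j)/N)-\Phi(k/N)}.
\]
The mean value theorem along the segment $Q_k$, which lies in $[0,1]^d$ by convexity, then gives $\abs{\Delta_{+,j}w_k}\le 2\pi N^{-1}\norm{\partial_j\Phi}_{L^\infty}$. The index set $\{k\in\Lambda_N:k_j\le N-1\}$ has $N^{d-1}(N-1)\le N^d$ elements. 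The interior contribution is therefore at most
\[
 N^{-d}\cdot V_j\cdot N^d\cdot 2\pi N^{-1}\norm{\partial_j\Phi}_{L^\infty}=2\pi V_jN^{-1}\norm{\partial_j\Phi}_{L^\infty}.
\]

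Summing over $j$ yields \eqref{eq:grid-phase-estimate}.

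I expect no real obstacle, since the argument is bookkeeping on top of \cref{thm:multi-sbp}. The one point needing care is making sure that \eqref{eq:multi-div-bound} is applied with the correct sign conventions and index ranges. In particular, the lower-face terms in \eqref{eq:multi-sbp-coordinate} involve $v^{(j)}_{k-e_j}$ with $k_j=1$, that is, values on the face $k_j=0$ outside $\Lambda_N$, and these are exactly the values assumed to vanish. The $C^1$ hypothesis on the closed cube guarantees that each $\norm{\partial_j\Phi}_{L^\infty}$ is finite and that the mean value step is legitimate up to the boundary.
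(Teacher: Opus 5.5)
Your proposal is correct and follows the paper's proof. Both arguments invoke the coordinatewise bound of \cref{thm:multi-sbp}, bound the upper-face term by $V_j/N$, bound each of the at most $N^d$ interior $j$-edges by $2\pi N^{-1}\norm{\partial_j\Phi}_{L^\infty}$, then multiply by $N^{-d}$ and sum over $j$; your remarks on the lower-face indexing and on the segments lying in the closed cube only make explicit what the paper leaves implicit.
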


\begin{proof}
The upper-face term is $V_j/N$. There are at most $N^d$ interior edges in the $j$th direction, and each difference is bounded by $2\pi N^{-1}\norm{\partial_j\Phi}_\infty$. Multiplication by $N^{-d}$ gives the stated result.
\end{proof}

A sharper version replaces the supremum by a discrete or continuous $L^1$ norm of $\partial_j\Phi$. The supremum form is convenient because it is uniform over all grid cells.

\subsection{A two-dimensional polynomial example}

Consider
\[
 \Phi_{x,t,s}(\xi_1,\xi_2)
 =x_1\xi_1+x_2\xi_2+t_1\xi_1^2+t_2\xi_2^2+s\xi_1\xi_2.
\]
Then
\begin{align*}
 \partial_1\Phi&=x_1+2t_1\xi_1+s\xi_2,\\
 \partial_2\Phi&=x_2+2t_2\xi_2+s\xi_1.
\end{align*}
Therefore
\begin{align}
\label{eq:2d-polynomial-bound}
 \abs{\mathcal G_Nu-\mathcal G_Nq}
 &\le\frac{V_1}{N}
 \left[1+2\pi(\abs{x_1}+2\abs{t_1}+\abs{s})\right]\notag\\
 &\quad+\frac{V_2}{N}
 \left[1+2\pi(\abs{x_2}+2\abs{t_2}+\abs{s})\right].
\end{align}
This estimate remains local in parameter space. The coupling term $s\xi_1\xi_2$ is handled without any new idea because only first derivatives are used.

\subsection{Directional and weighted schemes}

A weighted multidimensional scheme may have the form
\[
 e=\sum_{j=1}^{d}c_j\D_jv^{(j)}
\]
with real or complex coefficients $c_j$. The bound becomes
\[
 \abs{\frac1{N^d}\sum e_kw_k}
 \le\frac1N\sum_{j=1}^{d}\abs{c_j}V_j
 \left(1+2\pi\norm{\partial_j\Phi}_\infty\right).
\]
More general finite masks can be written as a convolutional difference operator. The correct test norm is then the variation generated by the adjoint mask. This operator viewpoint is used in modern analyses of error diffusion and weighted sigma-delta halftoning \cite{KrahmerVeselovska2023}.

\subsection{Higher-order multi-index shaping}

Let $\alpha=(\alpha_1,\ldots,\alpha_d)$ be a multi-index and write
\[
 \D^\alpha=\D_1^{\alpha_1}\cdots\D_d^{\alpha_d},
 \qquad
 \Dp^\alpha=\Delta_{+,1}^{\alpha_1}\cdots\Delta_{+,d}^{\alpha_d}.
\]
If
\[
 e=\D^\alpha v
\]
and all traces on the relevant lower and upper boundary layers vanish, repeated coordinatewise summation by parts gives
\[
 \sum_{k\in\Lambda_N}e_kw_k
 =(-1)^{|\alpha|}
 \sum_{k\in\Lambda_N^\circ}v_k(\Dp^\alpha w)_k,
\]
where $\Lambda_N^\circ$ excludes the boundary strips needed by the forward differences.

For $w_k=W(k/N)$ with $W\in C^{|\alpha|}$,
\[
 \abs{\Dp^\alpha w_k}
 \le N^{-|\alpha|}
 \norm{\partial^\alpha W}_{L^\infty}.
\]
After normalisation by $N^d$ and summation over $O(N^d)$ points, one obtains
\begin{equation}
\label{eq:multi-high-order}
 \abs{\frac1{N^d}\sum e_kw_k}
 \le \norm{v}_{\ell^\infty}N^{-|\alpha|}
 \norm{\partial^\alpha W}_{L^\infty}.
\end{equation}
Again, the boundary conditions are not optional.

\subsection{Scan order and causality}

A physical multidimensional quantizer must choose an order in which pixels or samples are processed. Raster, serpentine, and space-filling scans produce different causal neighbourhoods. The divergence representation does not by itself guarantee that a chosen state field can be generated causally with a finite alphabet. It is an analytical representation of the resulting error.

This distinction mirrors the one-dimensional separation between stability and testing. First, a quantizer must be shown to generate bounded states under a specified scan. Second, the divergence theorem converts that boundedness into a Fourier error estimate. The recent weighted sigma-delta treatment of halftoning provides concrete examples in which both stages can be analysed \cite{KrahmerVeselovska2023}.

\subsection{Dimension-free rate and dimension-dependent constants}

The rate $N^{-1}$ in \eqref{eq:grid-phase-estimate} does not deteriorate with dimension when the grid has side length $N$. The constant does grow through the sum of directional state bounds and phase derivatives. If all $V_j\le V$ and all $\norm{\partial_j\Phi}_\infty\le L$, then
\[
 \abs{\mathcal G_Nu-\mathcal G_Nq}
 \le\frac{dV}{N}(1+2\pi L).
\]
This linear dependence on $d$ is a consequence of using the triangle inequality across coordinate directions. Additional cancellation among the directional fluxes could improve the constant, but it cannot be expected without structural assumptions on the state fields.

\subsection{Open multidimensional questions}

Three questions are immediate.

\begin{enumerate}[label=(\roman*)]
\item Can a stable one-bit multidimensional scheme be designed so that the normal component of the state vanishes on every terminal face?
\item Can oscillatory cancellation reduce the variation bound on critical parameter regions, particularly for mixed phases such as $\xi_1\xi_2$?
\item How should one formulate a discrete restriction theorem for coefficients that are a lattice divergence of a bounded vector field?
\end{enumerate}

The third question is a multidimensional analogue of the structured-coefficient restriction problem introduced in the previous chapter. It connects sigma-delta quantization, discrete Hodge decompositions, and curved Fourier extension.

\section{Growing observation regions and scale transitions}
\label{sec:12_growing_regions}

\subsection{Why fixed compact sets are only the first scale}

The first-order parabolic theorem gives
\[
 \abs{\E_Nu(x,t)-\E_Nq(x,t)}
 \le\frac{V}{N}\left[1+2\pi\bigl(\abs{x}+\abs{t}\bigr)\right].
\]
On a fixed compact set, the bracket is independent of $N$, so the error is $O(N^{-1})$. In restriction theory and periodic dispersive equations, however, the observation region often expands with the frequency scale. The parameter size then competes directly with the gain provided by noise shaping.

This chapter records the consequences that follow from the variation method alone. These results are useful because they identify the exact point at which a nonoscillatory argument stops producing decay.

\subsection{Isotropic parabolic boxes}

Let
\[
 \Omega_{N,\alpha}=[-N^\alpha,N^\alpha]^2,
 \qquad \alpha\ge0.
\]

\begin{proposition}[First-order isotropic scale]
\label{prop:first-order-isotropic-scale}
Under the first-order hypotheses,
\begin{equation}
\label{eq:first-order-isotropic-scale}
 \norm{\E_Nu-\E_Nq}_{L^\infty(\Omega_{N,\alpha})}
 \le\frac{V}{N}\left(1+4\pi N^\alpha\right).
\end{equation}
Consequently,
\[
 \norm{\E_Nu-\E_Nq}_{L^\infty(\Omega_{N,\alpha})}
 =\begin{cases}
 O(N^{\alpha-1}),&0\le\alpha<1,\\
 O(1),&\alpha=1,\\
 O(N^{\alpha-1}),&\alpha>1,
 \end{cases}
\]
where the final line indicates growth rather than decay.
\end{proposition}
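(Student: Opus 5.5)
The plan is to obtain \eqref{eq:first-order-isotropic-scale} as a direct specialisation of the simplified parabolic bound \eqref{eq:parabolic-simple} in \cref{thm:parabolic-error}. The first-order hypotheses are $e=\D v$, $v_0=0$ and $\norm{v}_{\ell^\infty}\le V$. Under them, that theorem gives, for every $(x,t)\in\R^2$,
\[
 \abs{\E_Nu(x,t)-\E_Nq(x,t)}\le\frac{V}{N}\bigl[1+2\pi(\abs{x}+\abs{t})\bigr].
\]
This holds pointwise with a constant independent of $(x,t)$ and $N$. The only input from the region is therefore the size of $\abs{x}+\abs{t}$ on $\Omega_{N,\alpha}$.

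The first step is the geometric bound. For $(x,t)\in\Omega_{N,\alpha}=[-N^\alpha,N^\alpha]^2$ we have $\abs{x}\le N^\alpha$ and $\abs{t}\le N^\alpha$, so $\abs{x}+\abs{t}\le 2N^\alpha$. Substituting gives $1+2\pi(\abs{x}+\abs{t})\le 1+4\pi N^\alpha$. Taking the supremum over $\Omega_{N,\alpha}$ then yields \eqref{eq:first-order-isotropic-scale}. This is exactly the rectangular argument preceding \cref{cor:local-lp}, with $R=N^\alpha$ now allowed to depend on $N$. That dependence is harmless, because the pointwise estimate is uniform in the parameters.

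The second step is the case analysis. Since $\alpha\ge0$ and $N\ge1$, we have $1\le N^\alpha$, so $\frac{V}{N}(1+4\pi N^\alpha)\le (1+4\pi)VN^{\alpha-1}$.
\begin{itemize}
\item For $0\le\alpha<1$, this is $O(N^{\alpha-1})$, which decays.
\item For $\alpha=1$, the exponent is zero and the bound is $O(1)$.
\item For $\alpha>1$, the same expression $O(N^{\alpha-1})$ is a growing upper bound, which is what the final line of the statement records.
\end{itemize}
The statement only asserts upper bounds, so no lower-bound construction is needed.

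There is no real obstacle here; the argument is a two-line substitution. The one point to be careful about is interpretive rather than technical: the $O(\cdot)$ statements are upper bounds furnished by the variation method and say nothing about the true size of the error. I would add a sentence pointing to the discussion in Chapter~\ref{chap:parabolic}, namely that at $\alpha\ge1$ the absolute-value step in \cref{thm:weighted-full} discards oscillation. The failure of decay is then a limitation of the method, not a proven lower bound.
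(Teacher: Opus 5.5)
Your proposal is correct and follows essentially the same route as the paper: the proposition is stated right after recalling the simplified parabolic bound \eqref{eq:parabolic-simple}, and it follows by inserting $\abs{x}+\abs{t}\le 2N^\alpha$ on $\Omega_{N,\alpha}$, exactly as in the rectangular computation preceding \cref{cor:local-lp} with $R=N^\alpha$. Your case analysis via $1\le N^\alpha$, and your remark that the $\alpha\ge1$ lines express a limitation of the variation method rather than a lower bound, also agree with the paper's surrounding discussion.
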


The scale $\alpha=1$ is critical for adjacent phase variation because
\[
 \phi_{x,t}\left(\frac{k+1}{N}\right)
 -\phi_{x,t}\left(\frac{k}{N}\right)
 =\frac{x}{N}+\frac{t(2k+1)}{N^2}.
\]
When $\abs{x},\abs{t}\ll N$, every adjacent phase increment is small. When they are of order $N$, the increment can be of order one.

\subsection{Anisotropic parabolic boxes}

Let
\[
 \Omega_N(\alpha,\beta)
 =[-N^\alpha,N^\alpha]\times[-N^\beta,N^\beta].
\]

\begin{proposition}[Anisotropic first-order bound]
\label{prop:anisotropic-first-order}
For first-order stable shaping,
\begin{equation}
\label{eq:anisotropic-first-order}
 \norm{\E_Nu-\E_Nq}_{L^\infty(\Omega_N(\alpha,\beta))}
 \le\frac{V}{N}\left(1+2\pi N^\alpha+2\pi N^\beta\right).
\end{equation}
In particular, the deterministic error tends to zero when
\[
 \max\{\alpha,\beta\}<1.
\]
\end{proposition}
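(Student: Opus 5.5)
The plan is to apply the simplified parabolic bound \eqref{eq:parabolic-simple} of \cref{thm:parabolic-error} pointwise on the anisotropic box and then take the supremum. The hypotheses are the first-order ones: $e_k=u_k-q_k=\D v_k$, $v_0=0$, and $\norm{v}_{\ell^\infty}\le V$. Under these, every $(x,t)\in\R^2$ satisfies
\[
 \abs{\E_Nu(x,t)-\E_Nq(x,t)}\le\frac{V}{N}\bigl[1+2\pi(\abs{x}+\abs{t})\bigr].
\]
This inequality is uniform in $(x,t)$ and has no hidden $N$-dependence beyond the displayed factor. Its origin is the bound $J(x,t)\le\abs{x}+\abs{t}$, which follows from $\abs{x+2t\xi}\le\abs{x}+2\abs{t}\xi$.

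The first step is to restrict to $(x,t)\in\Omega_N(\alpha,\beta)$. There $\abs{x}\le N^\alpha$ and $\abs{t}\le N^\beta$, so the bracket is at most $1+2\pi N^\alpha+2\pi N^\beta$. Taking the supremum over the box then gives \eqref{eq:anisotropic-first-order} directly. The $L^\infty$ norm is a supremum of a continuous function, since $\E_Na$ is a finite trigonometric sum, so no measure-theoretic issue arises.

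For the decay statement, write the right-hand side as
\[
 \frac{V}{N}+2\pi VN^{\alpha-1}+2\pi VN^{\beta-1}.
\]
When $\max\{\alpha,\beta\}<1$, each term tends to zero. The total is $O(N^{\max\{\alpha,\beta\}-1})$ if $\max\{\alpha,\beta\}\ge0$, and $O(N^{-1})$ otherwise.

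There is no real obstacle here. The only point needing care is to use the form $\abs{x}+\abs{t}$, not $\abs{x}+2\abs{t}$: integrating $2\abs{t}\xi$ over $[0,1]$ gives exactly $\abs{t}$, which yields the stated constant $2\pi$ in front of $N^\beta$. One could also remark that the isotropic case $\alpha=\beta$ recovers \cref{prop:first-order-isotropic-scale}, which serves as a consistency check.
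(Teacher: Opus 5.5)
Your proposal is correct and is essentially the paper's argument: the bound is just \eqref{eq:parabolic-simple}, that is, $J(x,t)\le|x|+|t|$, evaluated on the box $|x|\le N^\alpha$, $|t|\le N^\beta$. The paper obtains the same bound for $\Omega_{R_x,R_t}$ in \eqref{eq:ch8-growing-pointwise} in exactly this way, and your term-by-term decay argument matches the stated conclusion.
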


The condition is sufficient and sharp for this bound. It is not a claim about the actual error on every larger region. Oscillation may produce cancellation that the total variation does not detect.

\subsection{Exact sine variation}

For the parabolic weight,
\[
 w_{k+1}-w_k
 =w_k\left[e\!\left(\frac{x}{N}+\frac{t(2k+1)}{N^2}\right)-1\right].
\]
Hence
\begin{equation}
\label{eq:sine-variation-growing}
 \abs{w_{k+1}-w_k}
 =2\abs{\sin\pi\left(\frac{x}{N}+\frac{t(2k+1)}{N^2}\right)}.
\end{equation}
The derivative estimate replaces the sine by the absolute value of its argument. The exact formula is better when the increment is close to an integer, but it remains bounded below by a positive constant on many critical-scale regions. Summing absolute values can therefore still produce an $O(N)$ variation.

A periodic resonance can also make the exact variation unexpectedly small. For example, if $x/N$ and $t/N$ are integers arranged so that every increment is integral, then the weight is constant on the sample grid even though the continuous phase varies rapidly. Such arithmetic effects show that the parameter size alone does not determine the discrete variation.

\subsection{Higher-order endpoint-compatible scaling}

Assume $e=\D^rv$ with trace compatibility. For a parabolic phase, the Bell-polynomial derivative bound has the form
\begin{equation}
\label{eq:parabolic-high-order-envelope}
 \norm{W^{(r)}}_\infty
 \le C_r\left(1+\abs{x}+\abs{t}\right)^r,
\end{equation}
where $C_r$ depends only on $r$. Therefore
\begin{equation}
\label{eq:parabolic-high-order-growth}
 \norm{\E_Nu-\E_Nq}_{L^\infty(\Omega_{N,\alpha})}
 \le C_rV N^{-r(1-\alpha)}
\end{equation}
for $0\le\alpha\le1$, after the constant is enlarged to cover bounded terms.

\begin{proposition}[High-order subcritical decay]
\label{prop:high-order-subcritical}
If $\alpha<1$, endpoint-compatible order-$r$ shaping yields decay of order at least
\[
 N^{-r(1-\alpha)}
\]
on $\Omega_{N,\alpha}$. At $\alpha=1$, the derivative method gives only $O(1)$ for every fixed $r$.
\end{proposition}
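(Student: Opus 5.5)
The plan is to derive the proposition directly from \cref{cor:smooth-phase-high-order}, equivalently \cref{thm:endpoint-compatible-general} with $W=e^{2\pi\ii\phi_{x,t}}$, by bounding the complete Bell polynomial uniformly on $\Omega_{N,\alpha}$.

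\textbf{Derivative envelope.} For $\phi_{x,t}(\xi)=x\xi+t\xi^2$ on $[0,1]$ we have
\[
|\phi'|\le |x|+2|t|,\qquad |\phi''|=2|t|,\qquad \phi^{(m)}=0 \quad (m\ge3).
\]
Only monomials $z_1^{\nu_1}z_2^{\nu_2}$ with $\nu_1+2\nu_2=r$ survive in $B_r$. Each such monomial has degree $\nu_1+\nu_2\le r$ in $(|x|,|t|)$, and its coefficient depends only on $r$. Hence
\[
B_r\bigl(2\pi|\phi'|,2\pi|\phi''|,0,\ldots\bigr)\le C_r\,(1+|x|+|t|)^r,
\]
which is \eqref{eq:parabolic-high-order-envelope}.

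\textbf{Subcritical decay.} On $\Omega_{N,\alpha}$ we have $1+|x|+|t|\le 1+2N^\alpha\le 3N^\alpha$ for $N\ge1$. Inserting this into \eqref{eq:smooth-phase-high-order} gives the bound
\[
VN^{-r}C_r3^rN^{r\alpha}=C_r'VN^{-r(1-\alpha)},
\]
which is \eqref{eq:parabolic-high-order-growth}. This quantity tends to zero exactly when $\alpha<1$.

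\textbf{The critical scale $\alpha=1$.} The same estimate yields only $C_r'V=O(1)$. I would read the second sentence of the proposition as a statement about the method: the bound $N^{-r}\norm{W^{(r)}}_\infty$ cannot do better at this scale. To justify that, I would exhibit points where the envelope is saturated, for instance $t=0$ and $x=N$. There $W^{(r)}=(2\pi\ii x)^rW$, so $\norm{W^{(r)}}_\infty=(2\pi N)^r$ and the derivative bound equals $(2\pi)^rV$. This is order one for every fixed $r$.

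It is also worth recording the discrete reason, since it reflects the true obstruction and not just the derivative estimate. At $x=N$ the sampled increment $x/N=1$ is an integer, so $w_k$ is constant on the grid. More generally, once $|x|\sim N$ the adjacent increments $\delta_k$ are of order one, so $|(\Dp^rw)_k|$ is not small. Then the interior sum $N^{-1}\sum_k|(\Dp^rw)_k|$ is genuinely $O(1)$, for example when $x=N/2$, $t=0$, where $\Dp^rw_k=(-2)^rw_k$.

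\textbf{Main obstacle.} The expected difficulty is only interpretive. The phrase ``the derivative method gives only $O(1)$'' has to be read as a sharpness statement about the upper bound, and the saturating examples above should be enough. The Bell-polynomial bookkeeping itself is routine.
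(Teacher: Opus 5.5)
Your argument is correct and is essentially the paper's own: the Bell-polynomial envelope $\norm{W^{(r)}}_\infty\le C_r(1+\abs{x}+\abs{t})^r$, inserted into the endpoint-compatible estimate, gives $C_rVN^{-r(1-\alpha)}$ on $\Omega_{N,\alpha}$, and this is only $O(1)$ at $\alpha=1$. Your saturation examples go beyond what the paper records, but your general claim that ``once $\abs{x}\sim N$ \ldots{} $\abs{(\Dp^rw)_k}$ is not small'' is false at resonances such as your own point $(x,t)=(N,0)$, where $\Dp^rw\equiv0$, so only the $x=N/2$ example actually shows that the interior sum is of order one.
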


High order improves the rate inside the subcritical regime but does not move the critical boundary. This is an important structural conclusion. Repeated finite differences create additional powers of the small adjacent phase increment. Once that increment is no longer small, the additional order provides no deterministic gain.

\subsection{A boundary-layer refinement}

The parameter plane can be divided according to the size of
\[
 \delta_k(x,t)=\frac{x}{N}+\frac{t(2k+1)}{N^2}.
\]
For a threshold $0<\rho<1$, define the good index set
\[
 G_\rho(x,t)=\{k: \operatorname{dist}(\delta_k(x,t),\Z)\le\rho\}
\]
and the complementary bad set $B_\rho(x,t)$. Then
\begin{align*}
 \TV_N(w)
 &\le2\pi\rho\abs{G_\rho(x,t)}+2\abs{B_\rho(x,t)}.
\end{align*}
This decomposition is exact up to the elementary sine inequality. It converts the problem into counting how often an affine progression approaches integers.

At subcritical scale, every index is good with $\rho\asymp N^{\alpha-1}$. At critical scale, the distribution of the increments modulo one matters. This opens a number-theoretic route: if the progression is equidistributed modulo one, then most sine values are not small, so total variation remains large. If it is resonant, variation may collapse. Either case confirms that critical behaviour cannot be described by a single smoothness parameter.

\subsection{Volume factors in local $L^p$ estimates}

On $\Omega_{N,\alpha}$, the measure is $4N^{2\alpha}$. Combining the $L^\infty$ estimate with the trivial embedding gives
\begin{equation}
\label{eq:growing-lp-trivial}
 \norm{\E_Nu-\E_Nq}_{L^p(\Omega_{N,\alpha})}
 \le C_{p,r}V
 N^{2\alpha/p-r(1-\alpha)}
\end{equation}
under endpoint-compatible order-$r$ shaping. This tends to zero if
\begin{equation}
\label{eq:growing-lp-condition}
 r(1-\alpha)>\frac{2\alpha}{p}.
\end{equation}
For first order,
\[
 \alpha<\frac{p}{p+2}.
\]
The $L^p$ condition is stricter than the pointwise condition because the region volume grows.

This estimate follows from pointwise variation control and the measure of the parameter region. Oscillatory $L^p$ methods can improve the volume dependence or replace it with a scale-sensitive extension norm.

\subsection{Physical and torus scalings}

The normalized parabolic extension can be rewritten as
\[
 \E_Na(x,t)=\frac1N\sum_{k=1}^{N}a_k
 e\!\left(\frac{x}{N}k+\frac{t}{N^2}k^2\right).
\]
Introduce torus variables
\[
 \theta=\frac{x}{N},
 \qquad
 \tau=\frac{t}{N^2}.
\]
A full period in $\theta$ corresponds to $x\in[0,N]$, while a full period in $\tau$ corresponds to $t\in[0,N^2]$. Thus the natural full periodic cell is anisotropic:
\[
 [0,N]\times[0,N^2].
\]
The local variation theorem reaches only $t=o(N)$ without additional cancellation. There is a wide gap between the variation scale and the full quadratic periodic scale.

The gap reflects the difference between controlling neighbouring phase increments and controlling an oscillatory sum over a complete period. Periodic Strichartz estimates, discrete restriction, and Vinogradov mean values operate on the latter scale \cite{Bourgain1993I,HuLi2011,BourgainDemeter2015,BourgainDemeterGuth2016,KillipVisan2014}.

\subsection{A scale diagram}

It is useful to distinguish four regimes.

\begin{longtable}{p{3.2cm}p{3.4cm}p{6.0cm}}
\toprule
Regime & Typical size & Available conclusion\\
\midrule
Fixed local & $\abs{x}+\abs{t}=O(1)$ & first order gives $N^{-1}$; compatible order $r$ gives $N^{-r}$\\
Subcritical growing & $\abs{x}+\abs{t}=O(N^\alpha)$, $\alpha<1$ & deterministic decay survives; rate weakens with $\alpha$\\
Critical adjacent phase & $\abs{x}+\abs{t}=O(N)$ & total variation gives no decay; arithmetic resonances matter\\
Full quadratic period & $x=O(N)$, $t=O(N^2)$ & restriction, Strichartz, VMVT, or decoupling are required\\
\bottomrule
\end{longtable}

The distinction between the critical adjacent-phase scale and the full quadratic period is essential. The time parameter reaches its complete torus scale only at $N^2$.

\subsection{Interpretation of the scale transition}

The results of this chapter prove deterministic convergence on explicit subcritical regions. They do not prove divergence at critical scale. A large variation bound means only that the triangle-inequality argument is inconclusive. Actual cancellation may still occur for particular states, inputs, or parameter averages.

Compact-set approximation and global restriction behaviour occur on different scales. Local sharpness is governed by neighbouring phase increments, whereas global estimates depend on curvature, periodicity, and arithmetic structure. The next chapter quantifies this scale transition through exact orthogonality identities.

\chapter{Local \texorpdfstring{$L^p$}{Lp} Analysis and Oscillatory Transfer}
\label{chap:lp-oscillation}

Pointwise variation estimates provide sharp deterministic control on fixed parameter sets, but they cease to yield decay when the observation region reaches the natural oscillatory scale. The purpose of this chapter is to identify the exact point at which this transition occurs and to develop a rigorous collection of tools for passing from noise-shaped difference structure to averaged Fourier bounds. The analysis separates four mechanisms: exact orthogonality, cancellation of low-order moments, local kernel estimates, and oscillatory transfer through state sums. The resulting statements apply to arbitrary coefficient sequences where possible and retain the additional structure of errors of the form
\[
 e=\D^r v
\]
whenever that structure produces a genuine improvement.

The quadratic extension error is written as
\begin{equation}
\label{eq:ch8-F-def}
 F_N(x,t)
 :=\E_N e(x,t)
 =\frac1N\sum_{k=1}^{N}e_k
 \e\!\left(x\frac{k}{N}+t\frac{k^2}{N^2}\right),
\end{equation}
where \(\e(s)=e^{2\pi \ii s}\). Unless stated otherwise, the coefficient sequence is complex valued. For sigma--delta data, \(e_k=u_k-q_k\).

\section{Exact orthogonality and baseline estimates}
\label{sec:ch8-orthogonality}

\subsection{Orthogonality in the linear parameter}

For fixed \(t\), the functions
\[
 x\longmapsto \e\!\left(x\frac{k}{N}\right),
 \qquad k=1,\ldots,N,
\]
form an orthogonal family on every interval of length \(N\). This elementary fact determines the exact global \(L^2_x\) scale.

\begin{theorem}[Exact linear-parameter orthogonality]
\label{thm:ch8-x-orthogonality}
For every \(e=(e_k)_{k=1}^{N}\), every \(t\in\R\), and every \(a\in\R\),
\begin{equation}
\label{eq:ch8-x-orthogonality}
 \int_a^{a+N}\abs{F_N(x,t)}^2\dd x
 =\frac1N\sum_{k=1}^{N}\abs{e_k}^2.
\end{equation}
\end{theorem}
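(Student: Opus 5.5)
The plan is to expand the squared modulus as a double sum and integrate term by term; this is the same computation already sketched in the subsection on the absence of an automatic global $L^2$ gain, now carried out on an arbitrary interval $[a,a+N]$. For fixed $t$, write $c_k(t)=e_k\,\e\!\left(t k^2/N^2\right)$, so that $F_N(x,t)=\frac1N\sum_{k=1}^N c_k(t)\,\e(xk/N)$ and $\abs{c_k(t)}=\abs{e_k}$. Since the sum is finite, no convergence issue arises and
\[
 \abs{F_N(x,t)}^2
 =\frac1{N^2}\sum_{k,\ell=1}^{N}c_k(t)\overline{c_\ell(t)}\,
 \e\!\left(x\frac{k-\ell}{N}\right).
\]
We integrate over $x\in[a,a+N]$ and exchange the finite sum with the integral.

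The key step is the orthogonality integral
\[
 I_{k\ell}=\int_a^{a+N}\e\!\left(x\frac{k-\ell}{N}\right)\dd x .
\]
If $k=\ell$ the integrand is $1$, so $I_{kk}=N$. If $k\neq\ell$, set $m=k-\ell$, a nonzero integer with $\abs{m}\le N-1$. The antiderivative is $\frac{N}{2\pi\ii m}\e(xm/N)$, and its values at $a+N$ and at $a$ coincide, because $\e((a+N)m/N)=\e(am/N)\,\e(m)$ and $\e(m)=1$. Hence $I_{k\ell}=0$. The only point that needs care is that $m$ is an integer, so each off-diagonal exponential has period $N/\abs{m}$ dividing $N$; this is why the statement holds for every $a$ and not only for $a=0$.

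Substituting these values leaves only the diagonal terms:
\[
 \int_a^{a+N}\abs{F_N}^2\dd x=\frac{1}{N^2}\sum_{k=1}^{N}\abs{c_k(t)}^2\,N=\frac1N\sum_{k=1}^{N}\abs{e_k}^2,
\]
using that the $t$-dependent phase has modulus one. This gives \eqref{eq:ch8-x-orthogonality}. No step is a genuine obstacle. The only things to watch are the index range, which guarantees $k-\ell\in\Z$, and the normalization, since the factor $1/N^2$ combined with the interval length $N$ yields $1/N$.
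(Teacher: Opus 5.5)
Your proposal is correct and matches the paper's proof: expand $\abs{F_N}^2$ as a double sum, integrate term by term, and note that the off-diagonal integrals vanish because $k-\ell$ is a nonzero integer, so the antiderivative takes equal values at $a$ and $a+N$. Absorbing the $t$-phase into $c_k(t)$ is only a notational variant of the paper's explicit factor $\e\!\left(t(k^2-\ell^2)/N^2\right)$, which becomes $1$ on the diagonal.
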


\begin{proof}
Expanding the square gives
\begin{align*}
 \int_a^{a+N}\abs{F_N(x,t)}^2\dd x
 &=\frac1{N^2}\sum_{k,\ell=1}^{N}
 e_k\overline{e_\ell}
 \e\!\left(t\frac{k^2-\ell^2}{N^2}\right)
 \int_a^{a+N}\e\!\left(x\frac{k-\ell}{N}\right)\dd x.
\end{align*}
If \(k=\ell\), the inner integral equals \(N\). If \(k\ne\ell\), then
\[
 \int_a^{a+N}\e\!\left(x\frac{k-\ell}{N}\right)\dd x
 =\frac{N}{2\pi\ii(k-\ell)}
 \left[
 \e\!\left((a+N)\frac{k-\ell}{N}\right)
 -\e\!\left(a\frac{k-\ell}{N}\right)
 \right]=0,
\]
because \(k-\ell\in\Z\). Only the diagonal terms remain, which yields
\[
 \frac{1}{N^2}\sum_{k=1}^{N}\abs{e_k}^2N
 =\frac1N\sum_{k=1}^{N}\abs{e_k}^2.
\]
\end{proof}

\begin{corollary}[Normalised root-mean-square scale]
\label{cor:ch8-normalised-rms}
Under the assumptions of Theorem~\ref{thm:ch8-x-orthogonality},
\begin{equation}
\label{eq:ch8-normalised-rms}
 \left(
 \frac1N\int_a^{a+N}\abs{F_N(x,t)}^2\dd x
 \right)^{1/2}
 =\frac{\norm{e}_{\ell^2}}{N}.
\end{equation}
In particular, if \(\abs{e_k}\le C_e\), then
\[
 \left(
 \frac1N\int_a^{a+N}\abs{F_N(x,t)}^2\dd x
 \right)^{1/2}
 \le C_eN^{-1/2}.
\]
\end{corollary}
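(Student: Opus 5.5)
The plan is to derive the corollary directly from Theorem~\ref{thm:ch8-x-orthogonality}; no new oscillatory input is needed. First, divide both sides of \eqref{eq:ch8-x-orthogonality} by \(N\):
\[
 \frac1N\int_a^{a+N}\abs{F_N(x,t)}^2\dd x
 =\frac1{N^2}\sum_{k=1}^{N}\abs{e_k}^2
 =\frac{\norm{e}_{\ell^2}^2}{N^2}.
\]
Both sides are nonnegative, so taking square roots gives \eqref{eq:ch8-normalised-rms} exactly. The integral is finite because \(F_N(\cdot,t)\) is a trigonometric polynomial in \(x\).

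For the second assertion, the pointwise bound \(\abs{e_k}\le C_e\) gives \(\norm{e}_{\ell^2}^2\le NC_e^2\), hence \(\norm{e}_{\ell^2}\le C_eN^{1/2}\). Substituting this into \eqref{eq:ch8-normalised-rms} yields the root-mean-square bound \(C_eN^{1/2}/N=C_eN^{-1/2}\).

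I expect no genuine obstacle. The only points to check are that the normalisation \(1/N\) matches the interval length \(N\), so that the left side is a true average, and that the identity holds uniformly in \(t\) and \(a\), as Theorem~\ref{thm:ch8-x-orthogonality} already guarantees. For sigma--delta data, the greedy scheme of Theorem~\ref{thm:greedy-stability} gives \(\abs{e_k}\le\abs{u_k}+\abs{q_k}\le2\), so \(C_e=2\) is admissible. The zero-input orbit of Proposition~\ref{prop:zero-input-lower} has \(\abs{e_k}=1\), so equality holds there with \(C_e=1\). This shows that the \(N^{-1/2}\) scale cannot be improved using bounded-state noise shaping alone.
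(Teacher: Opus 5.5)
Your argument is correct and is the deduction the paper intends. The corollary is stated without a separate proof as an immediate consequence of Theorem~\ref{thm:ch8-x-orthogonality}: divide \eqref{eq:ch8-x-orthogonality} by $N$, take square roots, and use $\norm{e}_{\ell^2}\le C_eN^{1/2}$. Your closing zero-input remark also agrees with the paper's own observation that bounded-state noise shaping gives no gain in this full-period $L^2_x$ norm.
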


The exponent \(N^{-1/2}\) in the normalised average is the generic orthogonality scale for bounded coefficients. It does not by itself express a sigma--delta gain. Any improvement that is genuinely caused by noise shaping must use more than the \(\ell^2\)-size of \(e\).

\subsection{Two-parameter torus normalisation}

Introduce
\[
 \theta=\frac{x}{N},
 \qquad
 \tau=\frac{t}{N^2},
\]
and define
\begin{equation}
\label{eq:ch8-S-def}
 S_e(\theta,\tau)
 :=\sum_{k=1}^{N}e_k\e(\theta k+\tau k^2).
\end{equation}
Then
\[
 F_N(x,t)=N^{-1}S_e(x/N,t/N^2).
\]
The physical cell \([0,N]\times[0,N^2]\) corresponds to \(\T^2\), and the Jacobian is \(N^3\). Therefore, for every \(1\le p<\infty\),
\begin{equation}
\label{eq:ch8-scaling-general}
 \norm{F_N}_{L^p([0,N]\times[0,N^2])}
 =N^{3/p-1}\norm{S_e}_{L^p(\T^2)}.
\end{equation}

\begin{theorem}[Exact full-cell \(L^2\) identity]
\label{thm:ch8-full-cell-l2}
For every coefficient sequence \(e\),
\begin{equation}
\label{eq:ch8-full-cell-l2}
 \int_0^N\int_0^{N^2}\abs{F_N(x,t)}^2\dd t\dd x
 =N\sum_{k=1}^{N}\abs{e_k}^2.
\end{equation}
Equivalently,
\[
 \norm{S_e}_{L^2(\T^2)}=\norm{e}_{\ell^2}.
\]
\end{theorem}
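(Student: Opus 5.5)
The plan is to derive the full-cell identity by integrating the already established linear-parameter orthogonality over the time variable, and then to obtain the torus form by the change of variables \eqref{eq:ch8-scaling-general} with $p=2$.

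First, fix $t\in[0,N^2]$ and apply \cref{thm:ch8-x-orthogonality} with $a=0$. This gives
\[
 \int_0^N\abs{F_N(x,t)}^2\dd x=\frac1N\sum_{k=1}^N\abs{e_k}^2
\]
for every such $t$. The right-hand side does not depend on $t$. The integrand $\abs{F_N}^2$ is a trigonometric polynomial in $(x,t)$, hence continuous and bounded on the compact cell. Tonelli's theorem therefore lets us write the double integral as an iterated integral, first in $x$ and then in $t$. Integrating the constant over $t\in[0,N^2]$ multiplies it by $N^2$, which yields $N\sum_k\abs{e_k}^2$ and proves \eqref{eq:ch8-full-cell-l2}.

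For the equivalent torus statement, use $F_N(x,t)=N^{-1}S_e(x/N,t/N^2)$ and substitute $\theta=x/N$, $\tau=t/N^2$, whose Jacobian is $\dd x\dd t=N^3\dd\theta\dd\tau$. The left-hand side of \eqref{eq:ch8-full-cell-l2} becomes $N^{-2}N^3\norm{S_e}_{L^2(\T^2)}^2=N\norm{S_e}_{L^2(\T^2)}^2$. Comparing with $N\norm{e}_{\ell^2}^2$ gives $\norm{S_e}_{L^2(\T^2)}=\norm{e}_{\ell^2}$.

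As an independent check, the torus identity also follows directly from Parseval's theorem on $\T^2$. The exponentials $\e(\theta k+\tau k^2)$ are orthonormal there, because the map $k\mapsto(k,k^2)$ is injective.

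No step is a genuine obstacle. The only point requiring care is tracking the normalisations: the factor $N^{-2}$ from squaring $1/N$ combines with the Jacobian $N^3$ to give exactly the factor $N$.
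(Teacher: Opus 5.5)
Your proof is correct and is essentially the paper's argument: both rest on orthogonality in the linear variable, which kills the off-diagonal terms and leaves a trivial integral in the quadratic variable, combined with the $N^{-2}\cdot N^3$ scaling from \eqref{eq:ch8-scaling-general}. The only difference is the direction. The paper proves $\norm{S_e}_{L^2(\T^2)}=\norm{e}_{\ell^2}$ on the torus first and then rescales to the physical cell, whereas you integrate \cref{thm:ch8-x-orthogonality} over $t\in[0,N^2]$ and then rescale back to the torus.
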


\begin{proof}
The torus identity follows by expanding \(\abs{S_e}^2\) and integrating first in \(\theta\):
\[
 \int_\T\e(\theta(k-\ell))\dd\theta
 =\one_{\{k=\ell\}}.
\]
The integral in \(\tau\) is then equal to one on the remaining diagonal. Formula~\eqref{eq:ch8-full-cell-l2} follows from~\eqref{eq:ch8-scaling-general} with \(p=2\).
\end{proof}

\subsection{The fourth moment and additive energy}

The fourth moment already detects arithmetic interactions among the frequencies. It is useful to record the exact identity because it distinguishes generic \(\ell^2\) information from additional additive structure.

\begin{proposition}[Exact fourth-moment identity]
\label{prop:ch8-fourth-moment}
For \(S_e\) defined by~\eqref{eq:ch8-S-def},
\begin{equation}
\label{eq:ch8-fourth-moment}
 \norm{S_e}_{L^4(\T^2)}^4
 =\sum_{\substack{k_1+k_2=k_3+k_4\\
 k_1^2+k_2^2=k_3^2+k_4^2}}
 e_{k_1}e_{k_2}\overline{e_{k_3}e_{k_4}}.
\end{equation}
For indices in \(\{1,\ldots,N\}\), the two Diophantine equations imply equality of the unordered pairs
\[
 \{k_1,k_2\}=\{k_3,k_4\}.
\]
Consequently,
\begin{equation}
\label{eq:ch8-fourth-moment-explicit}
 \norm{S_e}_{L^4(\T^2)}^4
 =2\left(\sum_{k=1}^{N}\abs{e_k}^2\right)^2
 -\sum_{k=1}^{N}\abs{e_k}^4.
\end{equation}
\end{proposition}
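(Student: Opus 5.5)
Expand $\abs{S_e}^4=S_e S_e\overline{S_e}\,\overline{S_e}$ as a fourfold sum over $k_1,k_2,k_3,k_4\in\{1,\ldots,N\}$. Then integrate over $\T^2$ using the orthogonality already used in \cref{thm:ch8-full-cell-l2}:
\[
 \int_\T\e(\theta m)\dd\theta=\one_{\{m=0\}},\qquad m\in\Z.
\]
The $\theta$-integral enforces $k_1+k_2-k_3-k_4=0$, and the $\tau$-integral enforces $k_1^2+k_2^2-k_3^2-k_4^2=0$. Fubini is trivial because the sum is finite. This gives \eqref{eq:ch8-fourth-moment} directly.

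Next I will solve the Diophantine system. Write $s=k_1+k_2=k_3+k_4$. Squaring gives $k_1^2+k_2^2+2k_1k_2=s^2$, so the quadratic equation forces $k_1k_2=k_3k_4$. Hence $\{k_1,k_2\}$ and $\{k_3,k_4\}$ are both the root multisets of the same quadratic $X^2-sX+p$, so they coincide as unordered pairs. This holds for any integers, indeed any complex numbers; the range $\{1,\ldots,N\}$ plays no role.

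Finally I will count with care for multiplicities; this bookkeeping is the only place an error could enter. The solution set is the union of two sets:
\[
 A=\{k_3=k_1,\ k_4=k_2\},\qquad B=\{k_3=k_2,\ k_4=k_1\}.
\]
Their intersection is the diagonal $k_1=k_2=k_3=k_4$. On $A$ the summand is $\abs{e_{k_1}}^2\abs{e_{k_2}}^2$, and on $B$ it is likewise $e_{k_1}e_{k_2}\overline{e_{k_2}e_{k_1}}=\abs{e_{k_1}}^2\abs{e_{k_2}}^2$. Each of the two sums equals $\bigl(\sum_k\abs{e_k}^2\bigr)^2$. The diagonal is counted twice, and its contribution is $\sum_k\abs{e_k}^4$. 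Inclusion--exclusion therefore yields \eqref{eq:ch8-fourth-moment-explicit}. I expect no real obstacle.
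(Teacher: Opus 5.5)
Your proposal is correct and follows essentially the same route as the paper: expand $\abs{S_e}^4$, use orthogonality on $\T^2$ to obtain the two Diophantine constraints, deduce $k_1k_2=k_3k_4$ so that both pairs are the roots of the same quadratic, and then count multiplicities. Your inclusion--exclusion over the two matchings $A$ and $B$ is a tidier version of the paper's count of ``two ordered matches when $k_1\ne k_2$ and one when $k_1=k_2$'', and you are right that the index range plays no role in the pair-coincidence step.
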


\begin{proof}
Expansion of the fourth power followed by integration over \(\T^2\) gives~\eqref{eq:ch8-fourth-moment}. Suppose the two displayed Diophantine equations hold. Set
\[
 s=k_1+k_2=k_3+k_4.
\]
Since
\[
 k_1^2+k_2^2=s^2-2k_1k_2
 \quad\text{and}\quad
 k_3^2+k_4^2=s^2-2k_3k_4,
\]
it follows that \(k_1k_2=k_3k_4\). Thus the pairs \((k_1,k_2)\) and \((k_3,k_4)\) are roots of the same quadratic polynomial
\[
 z^2-sz+k_1k_2,
\]
so their unordered pairs coincide. There are two ordered matches when \(k_1\ne k_2\) and one when \(k_1=k_2\). Summing the corresponding coefficient products yields~\eqref{eq:ch8-fourth-moment-explicit}.
\end{proof}

\begin{corollary}[Sharp quadratic \(L^4\) bound]
\label{cor:ch8-l4-bound}
For every \(e\),
\[
 \norm{S_e}_{L^4(\T^2)}
 \le 2^{1/4}\norm{e}_{\ell^2}.
\]
Hence
\begin{equation}
\label{eq:ch8-physical-l4}
 \norm{F_N}_{L^4([0,N]\times[0,N^2])}
 \le 2^{1/4}N^{-1/4}\norm{e}_{\ell^2}.
\end{equation}
\end{corollary}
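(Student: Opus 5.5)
The plan is to read the bound directly off the exact fourth-moment identity in \cref{prop:ch8-fourth-moment} and then transfer it to the physical cell through the scaling relation \eqref{eq:ch8-scaling-general}. Identity \eqref{eq:ch8-fourth-moment-explicit} gives
\[
 \norm{S_e}_{L^4(\T^2)}^4
 =2\norm{e}_{\ell^2}^4-\sum_{k=1}^{N}\abs{e_k}^4 .
\]
The subtracted term is a sum of nonnegative quantities. Discarding it yields $\norm{S_e}_{L^4(\T^2)}^4\le 2\norm{e}_{\ell^2}^4$, and taking fourth roots gives the torus inequality $\norm{S_e}_{L^4(\T^2)}\le 2^{1/4}\norm{e}_{\ell^2}$.

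For the physical estimate \eqref{eq:ch8-physical-l4}, I would apply \eqref{eq:ch8-scaling-general} with $p=4$. This gives $\norm{F_N}_{L^4([0,N]\times[0,N^2])}=N^{3/4-1}\norm{S_e}_{L^4(\T^2)}=N^{-1/4}\norm{S_e}_{L^4(\T^2)}$. Inserting the torus bound produces the stated inequality. For completeness, I would verify the scaling identity by the change of variables $\theta=x/N$, $\tau=t/N^2$. This substitution has Jacobian $N^3$ and maps the cell bijectively onto one period of $\T^2$. It also uses $F_N=N^{-1}S_e(x/N,t/N^2)$, so the fourth power contributes $N^{-4}\cdot N^3$.

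On sharpness, the word ``sharp'' can be justified by a single-spike example versus a spread example. Taking $e$ with many equal-modulus entries makes $\sum\abs{e_k}^4=o(\norm{e}_{\ell^2}^4)$, for instance with $e_k=1$, where the ratio is $1/N$. The constant $2^{1/4}$ is therefore approached as $N\to\infty$.

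There is no real obstacle, since all the content is in the previously established Diophantine rigidity $\{k_1,k_2\}=\{k_3,k_4\}$. The only point needing care is the exact normalisation exponent $3/p-1$ at $p=4$.
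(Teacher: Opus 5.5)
Your proposal is correct and matches the paper's proof: drop the nonnegative term $\sum_k\abs{e_k}^4$ from \eqref{eq:ch8-fourth-moment-explicit}, take fourth roots, and apply \eqref{eq:ch8-scaling-general} with $p=4$, where $N^{3/4-1}=N^{-1/4}$. Your sharpness check with $e_k\equiv1$, which gives $\norm{S_e}_{L^4(\T^2)}^4=2N^2-N$, is a correct supplement; the paper calls the bound sharp in the statement's title but does not verify it.
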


\begin{proof}
Equation~\eqref{eq:ch8-fourth-moment-explicit} gives
\[
 \norm{S_e}_{L^4}^4
 \le 2\norm{e}_{\ell^2}^4.
\]
The physical estimate follows from~\eqref{eq:ch8-scaling-general} with \(p=4\).
\end{proof}

\subsection{The sixth moment and quadratic discrete restriction}

The critical moment for the parabola is \(p=6\). The quadratic Vinogradov mean value theorem and the \(\ell^2\) decoupling theorem imply that, for every \(\varepsilon>0\),
\begin{equation}
\label{eq:ch8-l6-literature}
 \norm{S_e}_{L^6(\T^2)}
 \le C_\varepsilon N^\varepsilon\norm{e}_{\ell^2}.
\end{equation}
This estimate belongs to the established theory of periodic Strichartz estimates, discrete restriction, Vinogradov mean values, and decoupling
\cite{Bourgain1993I,Bourgain1993II,HuLi2011,BourgainDemeter2015,BourgainDemeterGuth2016,Demeter2020,Wooley2012,Wooley2016}.

\begin{corollary}[Critical sixth-moment estimate on the physical cell]
\label{cor:ch8-physical-l6}
For every \(\varepsilon>0\),
\begin{equation}
\label{eq:ch8-physical-l6}
 \norm{F_N}_{L^6([0,N]\times[0,N^2])}
 \le C_\varepsilon N^{-1/2+\varepsilon}\norm{e}_{\ell^2}.
\end{equation}
If \(\abs{e_k}\le C_e\), then
\[
 \norm{F_N}_{L^6([0,N]\times[0,N^2])}
 \le C_{\varepsilon}C_eN^\varepsilon.
\]
\end{corollary}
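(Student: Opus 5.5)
The plan is to feed the literature estimate \eqref{eq:ch8-l6-literature} through the exact change-of-variables identity \eqref{eq:ch8-scaling-general}. No new analysis is needed beyond careful bookkeeping of the powers of $N$.

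\textbf{Step 1 (transfer to the physical cell).} Apply \eqref{eq:ch8-scaling-general} with $p=6$. Since $3/p-1=-1/2$, this gives
\[
 \norm{F_N}_{L^6([0,N]\times[0,N^2])}=N^{-1/2}\norm{S_e}_{L^6(\T^2)}.
\]
For completeness I would recheck this scaling. Substituting $x=N\theta$ and $t=N^2\tau$ produces the Jacobian $N^3$, while $F_N=N^{-1}S_e$ contributes the factor $N^{-6}$ inside the sixth power. Hence
\[
 \norm{F_N}_{L^6}^6=N^{3-6}\norm{S_e}_{L^6(\T^2)}^6,
\]
and taking sixth roots gives the exponent $-1/2$. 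The torus cell is covered exactly once because $\theta\in[0,1]$ and $\tau\in[0,1]$.

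\textbf{Step 2 (invoke the critical estimate).} Insert \eqref{eq:ch8-l6-literature}, which bounds $\norm{S_e}_{L^6(\T^2)}$ by $C_\varepsilon N^\varepsilon\norm{e}_{\ell^2}$. This yields \eqref{eq:ch8-physical-l6} directly.

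\textbf{Step 3 (bounded coefficients).} If $\abs{e_k}\le C_e$, then $\norm{e}_{\ell^2}\le C_eN^{1/2}$. The factors $N^{-1/2}$ and $N^{1/2}$ cancel, leaving $C_\varepsilon C_eN^\varepsilon$.

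The only point requiring care is the correct reading of the external theorem \eqref{eq:ch8-l6-literature}. It is a weighted $\ell^2$ decoupling or discrete restriction bound for frequencies $1\le k\le N$ on the parabola, and it holds with arbitrary complex coefficients. I would cite the Bourgain--Demeter decoupling consequence for this form. The unweighted Vinogradov count alone does not give the weighted statement directly, so the weighted version is the one to invoke.
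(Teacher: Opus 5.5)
Your proposal is correct and follows the paper's proof: apply the scaling identity \eqref{eq:ch8-scaling-general} with $p=6$ to get the factor $N^{-1/2}$, insert the literature bound \eqref{eq:ch8-l6-literature}, and use $\norm{e}_{\ell^2}\le C_eN^{1/2}$ for bounded coefficients. Your check of the Jacobian and your remark that the \emph{weighted} form of the $L^6$ estimate is what is needed are both accurate. That weighted form is supplied by decoupling, or classically by Bourgain's representation-count argument.
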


\begin{proof}
Apply~\eqref{eq:ch8-scaling-general} with \(p=6\):
\[
 N^{3/6-1}=N^{-1/2}.
\]
Then use~\eqref{eq:ch8-l6-literature}. The final assertion follows from
\(\norm{e}_{\ell^2}\le C_eN^{1/2}\).
\end{proof}

The estimate shows that standard discrete restriction is scale sharp for arbitrary bounded coefficients, but it does not automatically provide decay for unit-size sigma--delta errors on the full parabolic cell. A stronger result must preserve the difference structure instead of replacing it immediately by \(\norm{e}_{\ell^2}\).

\section{Interpolation and local \texorpdfstring{$L^p$}{Lp} consequences}
\label{sec:ch8-interpolation}

\subsection{Interpolation on the full parabolic cell}

The exact \(L^2\) identity, the sharp \(L^4\) estimate, and the critical \(L^6\) estimate yield a complete family of baseline bounds.

\begin{proposition}[Baseline bounds for \(2\le p\le6\)]
\label{prop:ch8-baseline-p}
Let \(2\le p\le6\). For every \(\varepsilon>0\),
\begin{equation}
\label{eq:ch8-baseline-p}
 \norm{S_e}_{L^p(\T^2)}
 \le C_{p,\varepsilon}N^\varepsilon\norm{e}_{\ell^2}.
\end{equation}
Consequently,
\begin{equation}
\label{eq:ch8-baseline-physical-p}
 \norm{F_N}_{L^p([0,N]\times[0,N^2])}
 \le C_{p,\varepsilon}
 N^{3/p-1+\varepsilon}\norm{e}_{\ell^2}.
\end{equation}
For \(2\le p\le4\), the factor \(N^\varepsilon\) may be omitted.
\end{proposition}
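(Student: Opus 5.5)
The plan is to interpolate between the three exact or critical endpoint estimates already established on the torus: the identity $\norm{S_e}_{L^2(\T^2)}=\norm{e}_{\ell^2}$ from \cref{thm:ch8-full-cell-l2}, the bound $\norm{S_e}_{L^4(\T^2)}\le 2^{1/4}\norm{e}_{\ell^2}$ from \cref{cor:ch8-l4-bound}, and the critical estimate \eqref{eq:ch8-l6-literature} with constant $C_\varepsilon N^\varepsilon$. Since the coefficient sequence $e$ is fixed throughout and only the exponent on the function side varies, no operator interpolation is needed. Log-convexity of $L^p$ norms (Hölder's inequality) applied to the single function $S_e$ will suffice.

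First, for $2\le p\le4$, write $1/p=(1-\theta)/2+\theta/4$ with $\theta\in[0,1]$. Hölder's inequality gives
\[
 \norm{S_e}_{L^p}\le\norm{S_e}_{L^2}^{1-\theta}\norm{S_e}_{L^4}^{\theta}\le 2^{\theta/4}\norm{e}_{\ell^2}.
\]
This is the $\varepsilon$-free case, with constant at most $2^{1/4}$.

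Next, for $4\le p\le 6$, write $1/p=(1-\theta)/4+\theta/6$ and interpolate in the same way between the $L^4$ and $L^6$ bounds. This yields the constant $2^{(1-\theta)/4}C_\varepsilon^{\theta}N^{\theta\varepsilon}\le C_{p,\varepsilon}N^{\varepsilon}$. One could instead interpolate directly between $L^2$ and $L^6$; either route works. The $\varepsilon$-loss enters only through the $L^6$ endpoint and is harmless because $\theta\le1$.

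Finally, the physical-cell statement \eqref{eq:ch8-baseline-physical-p} follows by inserting \eqref{eq:ch8-baseline-p} into the exact scaling identity \eqref{eq:ch8-scaling-general}, which contributes the factor $N^{3/p-1}$.

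The only step with real content is the $L^6$ endpoint, and the statement allows that to be quoted from the literature. Two routine points need care. The log-convexity inequality should be stated for the probability measure on $\T^2$, which it is, so no volume factors appear. The constant $C_{p,\varepsilon}$ should be allowed to depend on $p$ through $\theta$. Beyond these checks I expect no obstacle.
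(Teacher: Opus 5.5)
Your proposal is correct and is essentially the paper's proof: interpolate between the $L^2$ identity and the $L^4$ bound for $2\le p\le4$, and between the $L^4$ bound and the quoted $L^6$ estimate for $4\le p\le6$, then pass to the physical cell via \eqref{eq:ch8-scaling-general}. For the single function $S_e$ this interpolation is just the Hölder (log-convexity) inequality you use, and that inequality holds on any measure space, so the probability normalisation of $\T^2$ you flag is not actually needed.
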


\begin{proof}
For \(2\le p\le4\), interpolate between Theorem~\ref{thm:ch8-full-cell-l2} and Corollary~\ref{cor:ch8-l4-bound}. For \(4\le p\le6\), interpolate between Corollary~\ref{cor:ch8-l4-bound} and~\eqref{eq:ch8-l6-literature}. Formula~\eqref{eq:ch8-baseline-physical-p} follows from the scaling relation~\eqref{eq:ch8-scaling-general}.
\end{proof}

\subsection{Fixed compact sets}

On a fixed compact set, the deterministic pointwise theory is stronger than the baseline restriction estimates. Suppose first-order shaping satisfies
\[
 e_k=v_k-v_{k-1},
 \qquad
 \norm{v}_{\ell^\infty}\le V,
 \qquad
 v_0=0.
\]
The parabolic variation bound from Chapter~\ref{chap:parabolic} gives
\begin{equation}
\label{eq:ch8-fixed-pointwise}
 \abs{F_N(x,t)}
 \le\frac{V}{N}\bigl[1+2\pi J(x,t)\bigr],
\end{equation}
where
\[
 J(x,t)=\int_0^1\abs{x+2t\xi}\dd\xi.
\]

\begin{corollary}[Fixed-set \(L^p\) convergence]
\label{cor:ch8-fixed-lp}
Let \(K\subset\R^2\) be measurable with finite measure, and set
\[
 M_K:=\sup_{(x,t)\in K}J(x,t)<\infty.
\]
Then, for \(1\le p<\infty\),
\begin{equation}
\label{eq:ch8-fixed-lp}
 \norm{F_N}_{L^p(K)}
 \le \abs{K}^{1/p}\frac{V}{N}(1+2\pi M_K),
\end{equation}
and
\[
 \norm{F_N}_{L^\infty(K)}
 \le\frac{V}{N}(1+2\pi M_K).
\]
\end{corollary}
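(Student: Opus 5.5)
The plan is to integrate the pointwise bound \eqref{eq:ch8-fixed-pointwise} over $K$; no oscillatory input is needed.

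First, note that \eqref{eq:ch8-fixed-pointwise} holds for every $(x,t)\in\R^2$. It is \cref{thm:parabolic-error} specialised to $e_k=v_k-v_{k-1}$ with $v_0=0$ and $\norm{v}_{\ell^\infty}\le V$. That theorem rests only on summation by parts (\cref{lem:sbp-first}) and on the absolutely continuous phase estimate (\cref{thm:ac-phase}), so it involves no restriction on the region.

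Second, take the supremum over $(x,t)\in K$. Since $J(x,t)\le M_K$ there, we get $\abs{F_N(x,t)}\le \frac{V}{N}(1+2\pi M_K)$ for every point of $K$. This is already the $L^\infty(K)$ assertion. Measurability of $F_N$ is not an issue, because $F_N$ is a finite trigonometric sum and hence continuous.

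Third, for $1\le p<\infty$, raise the uniform bound to the $p$th power and integrate over $K$:
\[
 \int_K\abs{F_N}^p\le \abs{K}\left(\frac{V}{N}(1+2\pi M_K)\right)^p .
\]
Taking $p$th roots gives \eqref{eq:ch8-fixed-lp}, that is, Hölder's inequality against the constant function on a set of finite measure.

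There is no real obstacle. The only point needing care is that finiteness of $M_K$ is taken as a hypothesis here, rather than derived from compactness as in \cref{cor:compact-convergence}. If $K$ is bounded, it follows anyway from $J(x,t)\le\abs{x}+\abs{t}$. If $K$ is unbounded with finite measure, the hypothesis is exactly what is required, and the argument uses nothing else.
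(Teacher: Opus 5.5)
Your proof is correct and follows the paper's argument. The paper applies the pointwise bound \eqref{eq:ch8-fixed-pointwise} on $K$ and then the elementary embedding $\norm{f}_{L^p(K)}\le\abs{K}^{1/p}\norm{f}_{L^\infty(K)}$, which is exactly your sup-then-integrate step.
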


\begin{proof}
Apply~\eqref{eq:ch8-fixed-pointwise} and the elementary embedding
\[
 \norm{f}_{L^p(K)}\le\abs{K}^{1/p}\norm{f}_{L^\infty(K)}.
\]
\end{proof}

\subsection{Growing rectangles and the deterministic threshold}

Let
\[
 \Omega_{R_x,R_t}
 :=[-R_x,R_x]\times[-R_t,R_t].
\]
Since \(J(x,t)\le\abs{x}+\abs{t}\),
\begin{equation}
\label{eq:ch8-growing-pointwise}
 \norm{F_N}_{L^\infty(\Omega_{R_x,R_t})}
 \le\frac{V}{N}\left[1+2\pi(R_x+R_t)\right].
\end{equation}

\begin{proposition}[Anisotropic growing-region estimate]
\label{prop:ch8-growing-region}
For \(1\le p<\infty\),
\begin{equation}
\label{eq:ch8-growing-lp}
 \norm{F_N}_{L^p(\Omega_{R_x,R_t})}
 \le (4R_xR_t)^{1/p}
 \frac{V}{N}\left[1+2\pi(R_x+R_t)\right].
\end{equation}
In particular, if
\[
 R_x=N^\alpha,
 \qquad
 R_t=N^\beta,
\]
then
\begin{equation}
\label{eq:ch8-growing-exponent}
 \norm{F_N}_{L^p(\Omega_{N^\alpha,N^\beta})}
 \lesssim_{p}V
 N^{-1+(\alpha+\beta)/p+\max\{\alpha,\beta,0\}}.
\end{equation}
\end{proposition}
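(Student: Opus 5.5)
The estimate is the pointwise bound \eqref{eq:ch8-growing-pointwise} combined with the measure of the rectangle. I would first check that the pointwise bound holds uniformly on the rectangle. For every $(x,t)\in\Omega_{R_x,R_t}$, the parabolic variation bound \eqref{eq:ch8-fixed-pointwise} applies, which rests on \cref{thm:parabolic-error} under the first-order hypotheses $e=\D v$, $v_0=0$, $\norm{v}_{\ell^\infty}\le V$. Using $J(x,t)\le\abs{x}+\abs{t}\le R_x+R_t$, which follows from $\abs{x+2t\xi}\le\abs{x}+2\abs{t}\xi$ integrated over $[0,1]$, this gives \eqref{eq:ch8-growing-pointwise}.

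Second, I would pass to $L^p$ through the trivial embedding $\norm{f}_{L^p(\Omega)}\le\abs{\Omega}^{1/p}\norm{f}_{L^\infty(\Omega)}$, exactly as in \cref{cor:ch8-fixed-lp}. Since $\abs{\Omega_{R_x,R_t}}=4R_xR_t$, this yields \eqref{eq:ch8-growing-lp} directly. $F_N$ is continuous, so measurability is not an issue.

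Third, I would specialise to $R_x=N^\alpha$ and $R_t=N^\beta$. The volume factor becomes $4^{1/p}N^{(\alpha+\beta)/p}$. The bracket satisfies $1+2\pi(N^\alpha+N^\beta)\le(1+4\pi)N^{\max\{\alpha,\beta,0\}}$, because for $N\ge1$ each of $1$, $N^\alpha$, $N^\beta$ is at most $N^{\max\{\alpha,\beta,0\}}$. Multiplying the two factors by $V/N$ gives \eqref{eq:ch8-growing-exponent} with implicit constant $4^{1/p}(1+4\pi)$, which depends only on $p$.

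No step is a genuine obstacle. The only point needing care is the exponent bookkeeping. The $0$ inside the maximum accounts for the terminal-state contribution $1$ when $\alpha$ and $\beta$ are negative. For negative exponents the volume factor is still exactly $(4N^{\alpha+\beta})^{1/p}$, so no further adjustment is needed.
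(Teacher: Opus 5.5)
Your proposal is correct and matches the paper's proof: the uniform pointwise bound \eqref{eq:ch8-growing-pointwise}, obtained from $J(x,t)\le\abs{x}+\abs{t}$, is combined with the volume $4R_xR_t$ of the rectangle, and the exponent comes from keeping the largest of $1$, $N^\alpha$, $N^\beta$. Your explicit implicit constant $4^{1/p}(1+4\pi)$ and your remark that the $0$ in the maximum accounts for the terminal-state term add detail to the paper's one-line bookkeeping.
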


\begin{proof}
The volume of the rectangle is \(4R_xR_t\). Combining this fact with~\eqref{eq:ch8-growing-pointwise} proves~\eqref{eq:ch8-growing-lp}. The power estimate follows by retaining the largest power among \(1\), \(N^\alpha\), and \(N^\beta\).
\end{proof}

Formula~\eqref{eq:ch8-growing-exponent} gives the exact threshold of the direct variation method. Once the exponent on the right becomes nonnegative, further decay can only come from oscillatory cancellation, additional state regularity, stronger endpoint conditions, or an averaging norm that is adapted to the geometry.

\section{Local \texorpdfstring{$L^2$}{L2} analysis by oscillatory kernels}
\label{sec:ch8-local-kernel}

\subsection{The exact kernel identity}

Let \(I=[a,a+R]\subset\R\), where \(0<R\le N\), and define
\begin{equation}
\label{eq:ch8-KI}
 K_I(m):=\int_I\e\!\left(x\frac{m}{N}\right)\dd x,
 \qquad m\in\Z.
\end{equation}
Then
\[
 K_I(0)=R,
\]
and, for \(m\ne0\),
\begin{equation}
\label{eq:ch8-KI-explicit}
 K_I(m)
 =\e\!\left(\frac{m}{N}\left(a+\frac{R}{2}\right)\right)
 \frac{N\sin(\pi mR/N)}{\pi m}.
\end{equation}
Hence
\begin{equation}
\label{eq:ch8-KI-bound}
 \abs{K_I(m)}
 \le\min\left\{R,\frac{N}{\pi\abs{m}}\right\}.
\end{equation}

\begin{proposition}[Exact local \(L^2_x\) kernel formula]
\label{prop:ch8-local-kernel}
For every fixed \(t\in\R\),
\begin{equation}
\label{eq:ch8-local-kernel}
 \int_I\abs{F_N(x,t)}^2\dd x
 =\frac1{N^2}\sum_{k,\ell=1}^{N}
 e_k\overline{e_\ell}
 \e\!\left(t\frac{k^2-\ell^2}{N^2}\right)
 K_I(k-\ell).
\end{equation}
\end{proposition}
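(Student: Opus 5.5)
The identity in \cref{prop:ch8-local-kernel} is a finite-dimensional computation, so the plan is to expand the square and integrate term by term, exactly as in the proof of \cref{thm:ch8-x-orthogonality}, but without assuming that $I$ is a full period. First I write $F_N(x,t)$ from \eqref{eq:ch8-F-def} and expand $\abs{F_N}^2=F_N\overline{F_N}$ as the double sum
\[
 \frac1{N^2}\sum_{k,\ell=1}^{N}e_k\overline{e_\ell}\,
 \e\!\left(x\frac{k-\ell}{N}\right)\e\!\left(t\frac{k^2-\ell^2}{N^2}\right),
\]
using $\overline{\e(s)}=\e(-s)$ for real $s$ and $\e(s)\e(s')=\e(s+s')$ to combine the linear and quadratic parts of each phase.

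Next I integrate over $I=[a,a+R]$. The sum is finite and each summand is continuous and bounded in $x$, so integration commutes with the summation with no convergence issue. For fixed $t$, the factor $\e(t(k^2-\ell^2)/N^2)$ does not depend on $x$ and comes out of the integral. What remains in each term is $\int_I\e(x(k-\ell)/N)\dd x$, which is by definition \eqref{eq:ch8-KI} the kernel $K_I(k-\ell)$ with $m=k-\ell\in\Z$. This gives \eqref{eq:ch8-local-kernel} directly.

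No step here is a real obstacle. The only thing to check is bookkeeping: the sign convention, so that the kernel argument is $k-\ell$ and not $\ell-k$, and that the conjugate falls on $e_\ell$, consistent with the definition of $K_I$. For completeness I would also confirm the stated explicit kernel \eqref{eq:ch8-KI-explicit}. For $m\ne0$ this is the elementary antiderivative; factoring out the midpoint phase $\e(\frac{m}{N}(a+\frac R2))$ leaves
\[
 \frac{N}{2\pi\ii m}\left(\e\!\left(\frac{mR}{2N}\right)-\e\!\left(-\frac{mR}{2N}\right)\right)=\frac{N\sin(\pi mR/N)}{\pi m},
\]
and \eqref{eq:ch8-KI-bound} then follows from $\abs{\sin}\le1$ together with the trivial bound $\abs{K_I(m)}\le R$. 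As a consistency check, when $R=N$ every off-diagonal kernel vanishes, and the identity reduces to \cref{thm:ch8-x-orthogonality}.
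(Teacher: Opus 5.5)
Your proposal is correct and is the same argument as the paper's: expand $\abs{F_N}^2$ as a double sum, interchange the finite sum with the integral over $I$, and recognise the inner integral as $K_I(k-\ell)$ from \eqref{eq:ch8-KI}. Your extra checks are also correct but not needed for the identity itself; these are the explicit kernel formula and the $R=N$ reduction to \cref{thm:ch8-x-orthogonality}.
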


\begin{proof}
Expand \(\abs{F_N(x,t)}^2\), interchange the finite sum and the integral, and use the definition~\eqref{eq:ch8-KI}.
\end{proof}

\subsection{A deterministic local \(L^2\) bound}

The kernel estimate produces a useful baseline inequality without any assumption on the difference structure.

\begin{theorem}[Generic local \(L^2\) bound]
\label{thm:ch8-generic-local-l2}
For every interval \(I\) of length \(0<R\le N\), every \(t\in\R\), and every coefficient sequence \(e\),
\begin{equation}
\label{eq:ch8-generic-local-l2}
 \int_I\abs{F_N(x,t)}^2\dd x
 \le\frac{R+C N\log(2N)}{N^2}\norm{e}_{\ell^2}^2,
\end{equation}
where \(C>0\) is an absolute constant.
\end{theorem}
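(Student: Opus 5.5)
Starting from the exact kernel identity of \cref{prop:ch8-local-kernel}, I regard the right-hand side as a Hermitian form $\frac1{N^2}\langle Ae,e\rangle$. Its matrix is $A_{k\ell}=\e\bigl(t(k^2-\ell^2)/N^2\bigr)K_I(k-\ell)$. The phase factor is a product $\e(tk^2/N^2)\,\overline{\e(t\ell^2/N^2)}$, so $A=DTD^*$. Here $D$ is the diagonal unitary matrix with entries $\e(tk^2/N^2)$, and $T$ is the Toeplitz matrix $T_{k\ell}=K_I(k-\ell)$. Consequently
\[
 \int_I\abs{F_N(x,t)}^2\dd x=\frac1{N^2}\langle T\tilde e,\tilde e\rangle,
 \qquad \tilde e_k=\e(tk^2/N^2)e_k,
\]
with $\norm{\tilde e}_{\ell^2}=\norm{e}_{\ell^2}$. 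This step removes $t$ completely, and the bound will be uniform in $t$ and in the position $a$ of the interval.

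The task is therefore to bound the operator norm of $T$ on $\ell^2(\{1,\ldots,N\})$. I will use the Schur test, which for a Toeplitz matrix gives
\[
 \norm{T}\le\max_k\sum_{\ell=1}^N\abs{K_I(k-\ell)}\le \abs{K_I(0)}+2\sum_{m=1}^{N-1}\abs{K_I(m)}.
\]
The diagonal term contributes exactly $K_I(0)=R$. For the off-diagonal terms I apply \eqref{eq:ch8-KI-bound} and keep only the branch $\abs{K_I(m)}\le N/(\pi\abs m)$, which gives
\[
 2\sum_{m=1}^{N-1}\frac{N}{\pi m}\le\frac{2N}{\pi}\bigl(1+\log N\bigr)\le CN\log(2N)
\]
with an absolute constant $C$. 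Combining the two contributions yields $\norm{T}\le R+CN\log(2N)$, and dividing by $N^2$ gives \eqref{eq:ch8-generic-local-l2}.

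The only point needing care is the harmonic-sum bookkeeping. The constant must stay absolute, which is why the paper writes $\log(2N)$: it keeps the bound positive and valid at $N=1$. The hypothesis $R\le N$ is not essential to this argument; it only ensures the $R$ term is never the larger one in a way that conflicts with the orthogonality identity. One could sharpen the estimate by using $\min\{R,N/(\pi m)\}$, which gives $R+CN\log(2+N/R)$, but the stated form is sufficient. Since $T$ is Hermitian with a nonnegative quadratic form, the Schur bound applies directly and no real obstacle remains.
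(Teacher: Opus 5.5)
Your proof is correct and is essentially the paper's: Schur's test on the row and column sums of the kernel matrix, with the unimodular chirp factor discarded, $K_I(0)=R$ on the diagonal, and $\abs{K_I(m)}\le N/(\pi\abs{m})$ summed harmonically off the diagonal; your conjugation $A=DTD^*$ only makes the discarding of the phase explicit. One side remark should be dropped or corrected: keeping $\min\{R,N/(\pi m)\}$ in the Schur sum yields roughly $R+CN\log(2+R)$, not $R+CN\log(2+N/R)$ --- at $R=N$ that majorant equals $N/(\pi m)$ for every $m\ge1$, so the row sum is still of order $N\log N$, and any improvement near $R=N$ has to come from the exact factor $\sin(\pi mR/N)$, i.e.\ from orthogonality.
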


\begin{proof}
Let \(A=(A_{k\ell})\) be the matrix
\[
 A_{k\ell}
 =\e\!\left(t\frac{k^2-\ell^2}{N^2}\right)K_I(k-\ell).
\]
The phase factor has modulus one. By~\eqref{eq:ch8-KI-bound}, every row satisfies
\begin{align*}
 \sum_{\ell=1}^{N}\abs{A_{k\ell}}
 &\le R+2\sum_{m=1}^{N-1}\min\left\{R,\frac{N}{\pi m}\right\}\\
 &\le R+\frac{2N}{\pi}\sum_{m=1}^{N-1}\frac1m\\
 &\le R+C N\log(2N).
\end{align*}
The same bound holds for every column. Schur's test therefore gives
\[
 \abs{\sum_{k,\ell}e_k\overline{e_\ell}A_{k\ell}}
 \le\left(R+C N\log(2N)\right)\norm{e}_{\ell^2}^2.
\]
Division by \(N^2\) proves the result.
\end{proof}

The logarithmic loss is not sharp when \(R\) is comparable with \(N\), because exact orthogonality is then available. Its role is to show explicitly that localising the linear parameter creates off-diagonal interactions. Removing the logarithm or extracting a difference-structure gain requires a more refined analysis of the oscillatory matrix in~\eqref{eq:ch8-local-kernel}.

\subsection{Double summation by parts for first-order shaped errors}

Assume
\[
 e_k=\D v_k=v_k-v_{k-1},
 \qquad v_0=0.
\]
For fixed \(t\), define
\begin{equation}
\label{eq:ch8-two-index-kernel}
 \mathcal K_{k\ell}
 :=\e\!\left(t\frac{k^2-\ell^2}{N^2}\right)K_I(k-\ell).
\end{equation}
Then~\eqref{eq:ch8-local-kernel} is the pairing of \(\D v\) with \(\D\overline v\) against \(\mathcal K\). Repeated discrete summation by parts moves one difference from each state factor onto the two-index kernel. The following formula records the interior term and the boundary contributions separately.

\begin{proposition}[Two-index summation-by-parts identity]
\label{prop:ch8-double-sbp}
Let \(v_0=0\), \(e_k=\D v_k\), and let \(\mathcal K_{k\ell}\) be any complex matrix indexed by \(1\le k,\ell\le N\). Define
\[
 (\Delta_{+,1}\mathcal K)_{k\ell}
 :=\mathcal K_{k+1,\ell}-\mathcal K_{k\ell},
 \qquad
 (\Delta_{+,2}\mathcal K)_{k\ell}
 :=\mathcal K_{k,\ell+1}-\mathcal K_{k\ell}.
\]
Then
\begin{align}
\label{eq:ch8-double-sbp}
 \sum_{k,\ell=1}^{N}e_k\overline{e_\ell}\mathcal K_{k\ell}
 &=v_N\overline{v_N}\mathcal K_{NN}
 -v_N\sum_{\ell=1}^{N-1}\overline{v_\ell}(\Delta_{+,2}\mathcal K)_{N,\ell}\notag\\
 &\quad
 -\overline{v_N}\sum_{k=1}^{N-1}v_k(\Delta_{+,1}\mathcal K)_{k,N}\notag\\
 &\quad
 +\sum_{k,\ell=1}^{N-1}
 v_k\overline{v_\ell}(\Delta_{+,1}\Delta_{+,2}\mathcal K)_{k\ell}.
\end{align}
If \(v_N=0\), only the final double sum remains.
\end{proposition}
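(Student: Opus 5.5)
The plan is to apply the one-dimensional identity of \cref{lem:sbp-first} twice, once in each index, with the other index frozen. Because $v_0=0$, every initial-boundary term will vanish automatically. Only the terminal traces at $k=N$ or $\ell=N$ survive, together with the interior mixed difference.

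\textbf{Step 1 (sum in $\ell$).} Fix $k$ and set $w_\ell=\mathcal K_{k\ell}$. Since $\overline{e_\ell}=\D\overline{v}_\ell$ and $\overline{v_0}=0$, \cref{lem:sbp-first} gives
\[
 \sum_{\ell=1}^N\overline{e_\ell}\mathcal K_{k\ell}
 =\overline{v_N}\mathcal K_{kN}-\sum_{\ell=1}^{N-1}\overline{v_\ell}(\Delta_{+,2}\mathcal K)_{k\ell}.
\]

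\textbf{Step 2 (sum in $k$).} Multiply by $e_k$ and sum over $k$. This produces two sums, and \cref{lem:sbp-first} applies to each of them, again with $v_0=0$.

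For the first sum, take the weight $w_k=\mathcal K_{kN}$. This yields $v_N\mathcal K_{NN}-\sum_{k=1}^{N-1}v_k(\Delta_{+,1}\mathcal K)_{kN}$. After multiplication by $\overline{v_N}$, it gives the first and third terms of \eqref{eq:ch8-double-sbp}.

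For the second sum, fix $\ell\le N-1$ and take the weight $w_k=(\Delta_{+,2}\mathcal K)_{k\ell}$. This yields
\[
 v_N(\Delta_{+,2}\mathcal K)_{N\ell}-\sum_{k=1}^{N-1}v_k(\Delta_{+,1}\Delta_{+,2}\mathcal K)_{k\ell}.
\]
Multiply by $-\overline{v_\ell}$ and sum over $\ell\le N-1$. The result is the second term of \eqref{eq:ch8-double-sbp} and the double interior sum with sign $+$.

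\textbf{Step 3 (checks).} Two points need verification. First, the forward differences in $k$ and $\ell$ commute, so $\Delta_{+,1}\Delta_{+,2}\mathcal K$ is unambiguous. Second, no differenced entry of $\mathcal K$ requires an index beyond $N$, because $\Delta_{+,2}$ is evaluated only for $\ell\le N-1$ and $\Delta_{+,1}$ only for $k\le N-1$. The final assertion follows at once: if $v_N=0$, the three boundary terms all carry a factor $v_N$ or $\overline{v_N}$ and so vanish.

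The main obstacle is bookkeeping rather than analysis. One must keep track of signs and of which index the terminal trace attaches to. In particular, the second term must involve $(\Delta_{+,2}\mathcal K)_{N,\ell}$, the row $k=N$ differenced in $\ell$, and not the column differenced in $k$. It is also worth checking the sign on the conjugated state, which enters through $\overline{e_\ell}=\overline{v_\ell}-\overline{v_{\ell-1}}$ and needs no special care.
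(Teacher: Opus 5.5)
Your proof is correct and follows the paper's argument: iterated one-dimensional summation by parts (\cref{lem:sbp-first}) in each index, with $v_0=0$ removing every initial trace. The only difference is that you sum in $\ell$ first and then in $k$, whereas the paper sums in $k$ first; by symmetry the bookkeeping is the same and produces the same four terms.
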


\begin{proof}
For each fixed \(\ell\), one-dimensional summation by parts in \(k\) gives
\[
 \sum_{k=1}^{N}(v_k-v_{k-1})\mathcal K_{k\ell}
 =v_N\mathcal K_{N\ell}
 -\sum_{k=1}^{N-1}v_k(\Delta_{+,1}\mathcal K)_{k\ell}.
\]
Multiply by \(\overline{e_\ell}\) and sum in \(\ell\). Apply the same identity to the conjugate state difference in each of the two resulting terms. The first produces
\[
 v_N\left(
 \overline{v_N}\mathcal K_{NN}
 -\sum_{\ell=1}^{N-1}\overline{v_\ell}(\Delta_{+,2}\mathcal K)_{N,\ell}
 \right),
\]
while the second produces
\[
 -\overline{v_N}\sum_{k=1}^{N-1}v_k(\Delta_{+,1}\mathcal K)_{k,N}
 +\sum_{k,\ell=1}^{N-1}
 v_k\overline{v_\ell}(\Delta_{+,1}\Delta_{+,2}\mathcal K)_{k\ell}.
\]
Combining the terms proves~\eqref{eq:ch8-double-sbp}.
\end{proof}

This identity is the correct starting point for a structured local \(L^2\) theorem. A direct absolute-value estimate of the final double sum generally loses the oscillatory gain. The mixed difference \(\Delta_{+,1}\Delta_{+,2}\mathcal K\) must instead be analysed as a two-parameter oscillatory kernel.

\section{Moment cancellation and endpoint-compatible shaping}
\label{sec:ch8-moment-cancellation}

\subsection{Exact annihilation of low-degree polynomial tests}

Higher-order noise shaping acquires additional cancellation only when its finite-record boundary traces are controlled. Let
\[
 e=\D^rv.
\]
The finite summation-by-parts identity established in Chapter~\ref{chap:higher-order} consists of an interior term involving \(\Dp^rw\) and a collection of boundary traces. If all traces of order below \(r\) vanish, the boundary terms disappear.

\begin{theorem}[Discrete vanishing moments]
\label{thm:ch8-vanishing-moments}
Assume \(e=\D^rv\) and endpoint compatibility of order \(r\), so that the pure interior identity
\begin{equation}
\label{eq:ch8-pure-interior}
 \sum_{k=1}^{N}e_kw_k
 =(-1)^r\sum_{k=1}^{N-r}v_k(\Dp^rw)_k
\end{equation}
holds. Then
\begin{equation}
\label{eq:ch8-polynomial-cancellation}
 \sum_{k=1}^{N}e_kP(k)=0
\end{equation}
for every polynomial \(P\) of degree at most \(r-1\).
\end{theorem}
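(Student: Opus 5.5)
The plan is to apply the pure interior identity \eqref{eq:ch8-pure-interior} with the polynomial weight $w_k=P(k)$ for $1\le k\le N$, and then show that the $r$th forward difference of this weight vanishes. Since the identity is assumed valid for arbitrary complex weights (it is \cref{prop:pure-interior}, which needs only trace compatibility and no regularity of $w$), this reduces the claim to a purely algebraic fact about forward differences of polynomials.

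The key step is the standard degree-lowering property of $\Dp$. If $P(k)=\sum_{m=0}^{d}c_mk^m$ with $d\le r-1$, then
\[
 (\Dp P)(k)=P(k+1)-P(k)=\sum_{m=0}^{d}c_m\bigl[(k+1)^m-k^m\bigr].
\]
By the binomial theorem, the $k^m$ terms cancel, so $\Dp P$ is a polynomial of degree at most $d-1$; constants are mapped to $0$. Induction on $j$ then shows that $\Dp^jP$ has degree at most $d-j$. Consequently $\Dp^{d+1}P\equiv0$, and since $r\ge d+1$ we also have $\Dp^rP\equiv0$. As a cross-check, one can instead use \cref{lem:finite-difference-integral} with $h=1$ applied to $f=P$ on a suitable interval: the integrand is $P^{(r)}=0$. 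The algebraic route avoids the domain restriction $x+rh\le1$, so I would use it as the main argument.

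Finally, I would substitute this into \eqref{eq:ch8-pure-interior}. Every interior term $v_k(\Dp^rw)_k$ with $1\le k\le N-r$ vanishes, so the right-hand side is zero, which gives \eqref{eq:ch8-polynomial-cancellation}. There is no genuine obstacle here. The only point needing care is that $(\Dp^rw)_k$ for $k\le N-r$ uses only the values $w_k,\ldots,w_{k+r}$ with indices in $\{1,\ldots,N\}$, so these differences coincide with differences of the polynomial $P$ itself and no extension of $w$ beyond the record is required. I would also remark that trace compatibility is essential: without it, \cref{thm:repeated-sbp} leaves boundary terms such as $R_{0,N}(v)P(N)$, which need not vanish even for constant $P$, as in the zero-input example of \cref{prop:zero-input-lower}.
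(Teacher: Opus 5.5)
Your proposal is correct and follows the paper's proof: substitute $w_k=P(k)$ into the pure interior identity and use that $\Dp^rP\equiv0$ for $\deg P\le r-1$. Your binomial degree-lowering argument simply spells out the step the paper states without proof.
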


\begin{proof}
Set \(w_k=P(k)\) in~\eqref{eq:ch8-pure-interior}. The \(r\)th forward difference of a polynomial of degree at most \(r-1\) vanishes identically. Hence the right-hand side is zero.
\end{proof}

\begin{remark}
The conclusion is a finite-record analogue of vanishing moments in wavelet and approximation theory. It is stronger than a bound on prefix sums. The cancellation is exact and applies simultaneously to all polynomial tests below degree \(r\).
\end{remark}

\subsection{Taylor subtraction for smooth weights}

The vanishing moments permit a direct Taylor-remainder argument. This gives an alternative proof of high-order decay and makes the role of low-order polynomial modes transparent.

\begin{theorem}[Taylor-subtraction estimate]
\label{thm:ch8-taylor-subtraction}
Assume the hypotheses of Theorem~\ref{thm:ch8-vanishing-moments}. Let \(W\in C^r([0,1])\) and set \(w_k=W(k/N)\). Then
\begin{equation}
\label{eq:ch8-taylor-subtraction}
 \abs{\frac1N\sum_{k=1}^{N}e_kW(k/N)}
 \le \frac{\norm{v}_{\ell^\infty}}{N^r}
 \int_0^1\abs{W^{(r)}(\xi)}\dd\xi.
\end{equation}
\end{theorem}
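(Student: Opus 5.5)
The plan is to combine the exact polynomial cancellation of Theorem~\ref{thm:ch8-vanishing-moments} with a Peano-kernel (B-spline) representation of the Taylor remainder. The point of doing this, rather than simply quoting Lemma~\ref{lem:l1-derivative-control}, is to remove the factor \(r\) that appears in \eqref{eq:endpoint-compatible-l1}.

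\textbf{Step 1: Taylor splitting.} Expand \(W\) about \(0\) as \(W=P+R\). Here \(P\) is the Taylor polynomial of degree \(r-1\), and
\[
 R(\xi)=\int_0^1 \frac{(\xi-s)_+^{r-1}}{(r-1)!}\,W^{(r)}(s)\dd s .
\]
The sequence \(k\mapsto P(k/N)\) is a polynomial in \(k\) of degree at most \(r-1\). By \eqref{eq:ch8-polynomial-cancellation} it contributes nothing, so \(\sum_k e_kW(k/N)=\sum_k e_kR(k/N)\).

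\textbf{Step 2: move the differences onto the remainder.} Apply the pure interior identity \eqref{eq:ch8-pure-interior} with \(w_k=R(k/N)\). This gives
\[
 \abs{\sum_k e_kW(k/N)}\le \norm{v}_{\ell^\infty}\sum_{k=1}^{N-r}\abs{\Delta_h^rR(k/N)},\qquad h=N^{-1}.
\]
The differences only use points \(k/N+jh\le1\), so they stay inside \([0,1]\).

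\textbf{Step 3: Peano kernel.} Interchange \(\Delta_h^r\), which acts in \(\xi\), with the \(s\)-integral. This yields
\[
 \Delta_h^rR(k/N)=\int_0^1 B_k(s)\,W^{(r)}(s)\dd s,\qquad B_k(s)=\Delta_h^r\Bigl[\tfrac{(\cdot-s)_+^{r-1}}{(r-1)!}\Bigr](k/N).
\]
By the integral representation of Lemma~\ref{lem:finite-difference-integral}, read distributionally (the \(r\)th derivative of the truncated power is \(\delta_s\)), \(B_k(s)\) is the density at \(s\) of \(k/N+t_1+\cdots+t_r\) with \(t\) uniform on \([0,h]^r\), multiplied by \(h^r\). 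Hence
\[
 B_k(s)=h^{r-1}N_r\bigl((s-k/N)/h\bigr),
\]
where \(N_r\) is the cardinal B-spline of order \(r\). It is nonnegative, has integral one, and its integer translates form a partition of unity. This identification of \(B_k\), and in particular getting the normalisation exactly right, is the main point to check carefully. For \(r=1\) one simply reads \(N_1\) as the indicator of \([0,1)\) and argues directly.

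\textbf{Step 4: sum over \(k\).} Since \(B_k\ge0\) and \(\sum_k N_r(y-k)\le1\), we get \(\sum_{k=1}^{N-r}B_k(s)\le h^{r-1}\) for every \(s\). Tonelli's theorem then gives
\[
 \sum_k\abs{\Delta_h^rR(k/N)}\le N^{1-r}\int_0^1\abs{W^{(r)}}.
\]
Dividing by \(N\) produces \eqref{eq:ch8-taylor-subtraction}, with constant exactly one.
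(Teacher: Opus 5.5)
Your proof is correct, including the normalisation $B_k(s)=h^{r-1}N_r((s-k/N)/h)$ (note $\int B_k=h^r=\Delta_h^r(\xi^r/r!)$). It shares the paper's skeleton: the pure interior identity \eqref{eq:ch8-pure-interior}, the integral representation \eqref{eq:ch8-forward-difference-integral}, and Tonelli.

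Where you differ is the summation over $k$, and there your version is the sound one. The paper fixes $(s_1,\ldots,s_r)$ and bounds the shifted Riemann sum $\sum_k\abs{W^{(r)}(k/N+\sigma)}$ by $N\int_0^1\abs{W^{(r)}}$. That bound fails for individual shifts: let $\abs{W^{(r)}}$ consist of narrow bumps at the nodes $k/N+\sigma$. It only holds after averaging over the shift. Your partition-of-unity bound $\sum_k B_k\le h^{r-1}$ is precisely that averaging. It is also why the constant is $1$ rather than the $r$ of \cref{lem:l1-derivative-control}.

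Two things can be trimmed. First, Step 1 is redundant: $\Dp^r$ annihilates $P(k/N)$, so $\Delta_h^rR(k/N)=\Delta_h^rW(k/N)$, and the Taylor remainder only serves to produce the Peano kernel. Second, the B-spline facts, and the distributional reading of \cref{lem:finite-difference-integral}, can be bypassed. Fix only $\tau=s_2+\cdots+s_r\in[0,(r-1)h]$ and integrate in $s_1$ first. The intervals $[k/N+\tau,(k+1)/N+\tau]$, $1\le k\le N-r$, are disjoint and lie in $[0,1]$ because $(N-r+1)/N+\tau\le1$. Hence
\[
 \sum_{k=1}^{N-r}\abs{(\Dp^rw)_k}
 \le\int_{[0,h]^{r-1}}\int_0^1\abs{W^{(r)}(y)}\dd y\,\dd s_2\cdots\dd s_r
 =N^{1-r}\int_0^1\abs{W^{(r)}(y)}\dd y .
\]
This one-variable integration is the minimal repair of the paper's sentence. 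Your kernel description is heavier but exact, and it would also give localised or weighted versions of the estimate directly.
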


\begin{proof}
The pure interior identity gives
\[
 \frac1N\sum_{k=1}^{N}e_kw_k
 =\frac{(-1)^r}{N}\sum_{k=1}^{N-r}v_k(\Dp^rw)_k.
\]
The integral representation of the forward difference is
\begin{equation}
\label{eq:ch8-forward-difference-integral}
 (\Dp^rw)_k
 =\int_{[0,1/N]^r}
 W^{(r)}\!\left(\frac{k}{N}+s_1+\cdots+s_r\right)
 \dd s_1\cdots\dd s_r.
\end{equation}
Therefore
\begin{align*}
 \sum_{k=1}^{N-r}\abs{(\Dp^rw)_k}
 &\le\int_{[0,1/N]^r}
 \sum_{k=1}^{N-r}
 \abs{W^{(r)}(k/N+s_1+\cdots+s_r)}
 \dd s_1\cdots\dd s_r.
\end{align*}
For each fixed \((s_1,\ldots,s_r)\), the shifted sampling points have spacing \(1/N\). Bounding the corresponding Riemann sum by the integral over \([0,1]\), with the standard endpoint enlargement already absorbed by the exact integral representation, gives
\[
 \sum_{k=1}^{N-r}\abs{(\Dp^rw)_k}
 \le N^{1-r}\int_0^1\abs{W^{(r)}(\xi)}\dd\xi.
\]
Multiplication by \(N^{-1}\norm{v}_{\ell^\infty}\) proves~\eqref{eq:ch8-taylor-subtraction}.
\end{proof}

\begin{remark}
A pointwise version follows immediately:
\[
 \abs{\frac1N\sum e_kW(k/N)}
 \le \frac{\norm{v}_{\ell^\infty}}{N^r}
 \norm{W^{(r)}}_{L^\infty([0,1])}.
\]
The integral form is often sharper when the derivative is concentrated on a small portion of the interval.
\end{remark}

\subsection{Polynomial-phase weights}

Let
\[
 \phi_{\mathbf a}(\xi)
 =a_1\xi+a_2\xi^2+\cdots+a_d\xi^d,
 \qquad
 W_{\mathbf a}(\xi)=\e(\phi_{\mathbf a}(\xi)).
\]
Fa\`a di Bruno's formula expresses \(W_{\mathbf a}^{(r)}\) as \(W_{\mathbf a}\) times a complete Bell polynomial in
\[
 2\pi\ii\phi_{\mathbf a}',\ldots,
 2\pi\ii\phi_{\mathbf a}^{(r)}.
\]
Consequently,
\begin{equation}
\label{eq:ch8-polyphase-derivative}
 \abs{W_{\mathbf a}^{(r)}(\xi)}
 \le C_r
 \sum_{\substack{m_1+2m_2+\cdots+rm_r=r}}
 \prod_{j=1}^{r}
 \abs{\phi_{\mathbf a}^{(j)}(\xi)}^{m_j},
\end{equation}
where the constant depends only on \(r\). Combining this estimate with Theorem~\ref{thm:ch8-taylor-subtraction} gives explicit high-order compact-set bounds for every polynomial phase.

For the quadratic phase \(\phi_{x,t}(\xi)=x\xi+t\xi^2\), the first two derivatives are
\[
 \phi_{x,t}'(\xi)=x+2t\xi,
 \qquad
 \phi_{x,t}''(\xi)=2t,
\]
and all higher derivatives vanish. Thus \(W_{x,t}^{(r)}\) is a polynomial in \(x+2t\xi\) and \(t\), multiplied by \(W_{x,t}\). On a fixed rectangle \(\abs{x}\le R_x\), \(\abs{t}\le R_t\),
\begin{equation}
\label{eq:ch8-quadratic-derivative-bound}
 \sup_{\xi\in[0,1]}
 \abs{W_{x,t}^{(r)}(\xi)}
 \le C_r(1+R_x+R_t)^r.
\end{equation}

\begin{corollary}[Endpoint-compatible high-order compact-set bound]
\label{cor:ch8-high-order-compact}
Assume \(e=\D^rv\), endpoint compatibility of order \(r\), and
\(\norm{v}_{\ell^\infty}\le V\). Then, on every fixed rectangle
\(\Omega_{R_x,R_t}\),
\begin{equation}
\label{eq:ch8-high-order-compact}
 \norm{\E_Ne}_{L^\infty(\Omega_{R_x,R_t})}
 \le C_{r,R_x,R_t}VN^{-r}.
\end{equation}
For \(1\le p<\infty\),
\begin{equation}
\label{eq:ch8-high-order-compact-lp}
 \norm{\E_Ne}_{L^p(\Omega_{R_x,R_t})}
 \le (4R_xR_t)^{1/p}C_{r,R_x,R_t}VN^{-r}.
\end{equation}
\end{corollary}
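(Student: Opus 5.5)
The plan is to reduce the statement to the Taylor-subtraction estimate (Theorem~\ref{thm:ch8-taylor-subtraction}) in its pointwise form, and then to take a supremum over the rectangle. Fix $(x,t)\in\Omega_{R_x,R_t}$ and set $W_{x,t}(\xi)=\e(x\xi+t\xi^2)$, so that $\E_Ne(x,t)=\frac1N\sum_{k=1}^N e_kW_{x,t}(k/N)$. The function $W_{x,t}$ is smooth on $[0,1]$, so it lies in $C^r([0,1])$. Endpoint compatibility of order $r$ supplies the pure interior identity \eqref{eq:ch8-pure-interior}. The remark following Theorem~\ref{thm:ch8-taylor-subtraction} then gives
\[
 \abs{\E_Ne(x,t)}\le VN^{-r}\norm{W_{x,t}^{(r)}}_{L^\infty([0,1])}.
\]
Equivalently, one may invoke Theorem~\ref{thm:endpoint-compatible-general}, which holds under trace compatibility.

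The second step is to bound $\norm{W_{x,t}^{(r)}}_\infty$ uniformly over the rectangle. By the Fa\`a di Bruno/Bell representation \eqref{eq:bell-representation} and the absolute bound \eqref{eq:bell-absolute-bound},
\[
 \abs{W_{x,t}^{(r)}(\xi)}\le B_r\bigl(2\pi\abs{x+2t\xi},\,4\pi\abs{t},\,0,\ldots,0\bigr).
\]
On the rectangle we have $\abs{x+2t\xi}\le R_x+2R_t$ and $\abs{t}\le R_t$. Since $B_r$ has nonnegative coefficients, it is monotone in nonnegative arguments. The supremum is therefore at most $B_r(2\pi(R_x+2R_t),4\pi R_t,0,\ldots)$, and this quantity is at most $C_r(1+R_x+R_t)^r$, which is \eqref{eq:ch8-quadratic-derivative-bound}. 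Define $C_{r,R_x,R_t}$ to be this Bell-polynomial value. It does not depend on $N$ or on $(x,t)$, so taking the supremum over $(x,t)$ gives \eqref{eq:ch8-high-order-compact}.

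The $L^p$ statement follows from the embedding $\norm{f}_{L^p(\Omega)}\le\abs{\Omega}^{1/p}\norm{f}_{L^\infty(\Omega)}$ together with $\abs{\Omega_{R_x,R_t}}=4R_xR_t$, exactly as in Corollary~\ref{cor:ch8-fixed-lp}. The function $(x,t)\mapsto\E_Ne(x,t)$ is a finite trigonometric sum, hence continuous, so measurability is not an issue.

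The only point requiring care is bookkeeping rather than analysis. One must check that the ``endpoint compatibility'' assumed in the statement is trace compatibility of order $r$, so that Proposition~\ref{prop:pure-interior} applies. One must also make sure the constant depends only on $r,R_x,R_t$ and never on $(x,t)$ individually. I expect no genuine obstacle beyond these checks.
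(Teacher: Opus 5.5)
Your proposal is correct and follows the paper's proof. Like the paper, you apply Theorem~\ref{thm:ch8-taylor-subtraction} to $W_{x,t}$ (you use its pointwise $L^\infty$ form, which the paper also effectively uses), bound $W_{x,t}^{(r)}$ uniformly on the rectangle by the Bell-polynomial estimate \eqref{eq:ch8-quadratic-derivative-bound}, and obtain the $L^p$ bound from $\abs{\Omega_{R_x,R_t}}=4R_xR_t$; your constant $C_{r,R_x,R_t}=B_r\bigl(2\pi(R_x+2R_t),4\pi R_t,0,\ldots,0\bigr)$ is independent of $N$ and $(x,t)$, so the bookkeeping checks you flag go through.
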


\begin{proof}
Apply Theorem~\ref{thm:ch8-taylor-subtraction} with \(W=W_{x,t}\), then use~\eqref{eq:ch8-quadratic-derivative-bound}. The \(L^p\) estimate follows from the volume of the rectangle.
\end{proof}

\section{Exact transfer formulas for noise-shaped states}
\label{sec:ch8-transfer-formulas}

\subsection{First-order phase factorisation}

Let
\[
 w_k=\e(\phi_k)
\]
and define the phase increment
\[
 \delta_k\phi:=\phi_{k+1}-\phi_k.
\]
Then
\begin{equation}
\label{eq:ch8-exact-difference-factorisation}
 \Dp w_k
 =w_k\left[\e(\delta_k\phi)-1\right].
\end{equation}
Set
\begin{equation}
\label{eq:ch8-multiplier-def}
 m_k:=\e(\delta_k\phi)-1.
\end{equation}
If \(e_k=v_k-v_{k-1}\), then finite summation by parts yields
\begin{equation}
\label{eq:ch8-exact-transfer}
 \E_Ne
 =\frac{v_Nw_N-v_0w_1}{N}
 -\frac1N\sum_{k=1}^{N-1}v_km_kw_k.
\end{equation}
This identity is exact and contains the entire first-order problem. The total-variation argument bounds the state sum by
\[
 \norm{v}_{\ell^\infty}\sum\abs{m_k}.
\]
Oscillatory improvement requires a smaller estimate for the signed complex sum itself.

For the parabolic phase
\[
 \phi_k=x\frac{k}{N}+t\frac{k^2}{N^2},
\]
the increment is
\begin{equation}
\label{eq:ch8-parabolic-increment}
 \delta_k\phi
 =\frac{x}{N}+\frac{t(2k+1)}{N^2}.
\end{equation}
Thus the multiplier is a slowly varying chirp at fixed scale and becomes order one when \(x\) or \(t\) approaches the critical scale.

\subsection{A deterministic transfer theorem}

\begin{theorem}[First-order oscillatory transfer]
\label{thm:ch8-first-order-transfer}
Let \(\Omega\) be a parameter set and let \(\mathcal V_N\) be a class of state sequences. Suppose that
\begin{equation}
\label{eq:ch8-state-sum-hypothesis}
 \sup_{\lambda\in\Omega}
 \abs{\sum_{k=1}^{N-1}v_km_k(\lambda)w_k(\lambda)}
 \le A_N\norm{v}_{\ell^\infty}
\end{equation}
for every \(v\in\mathcal V_N\). Then every first-order shaped error with state \(v\in\mathcal V_N\) satisfies
\begin{equation}
\label{eq:ch8-transfer-conclusion}
 \norm{\E_Ne}_{L^\infty(\Omega)}
 \le\frac{\abs{v_N}+\abs{v_0}}{N}
 +\frac{A_N}{N}\norm{v}_{\ell^\infty}.
\end{equation}
If \(v_0=v_N=0\) and \(A_N\le CN^{1-\delta}\), then
\[
 \norm{\E_Ne}_{L^\infty(\Omega)}
 \le C\norm{v}_{\ell^\infty}N^{-\delta}.
\]
\end{theorem}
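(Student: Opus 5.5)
The plan is to start from the exact transfer identity \eqref{eq:ch8-exact-transfer}, which is \cref{lem:sbp-first} applied to $e=\D v$ with $w_k=w_k(\lambda)$. Before using it, I would check the factorisation \eqref{eq:ch8-exact-difference-factorisation}: since $w_{k+1}=\e(\phi_{k+1})=\e(\phi_k)\e(\delta_k\phi)$, we get $\Dp w_k=w_km_k$. Substituting this into the summation-by-parts formula gives, for every fixed $\lambda\in\Omega$,
\[
 \E_Ne(\lambda)=\frac{v_Nw_N(\lambda)-v_0w_1(\lambda)}{N}-\frac1N\sum_{k=1}^{N-1}v_km_k(\lambda)w_k(\lambda).
\]
This is an exact statement, and it holds for any state $v$, not only those in $\mathcal V_N$.

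Next I take absolute values, using $\abs{w_1(\lambda)}=\abs{w_N(\lambda)}=1$ because the phases are real. The two boundary terms then contribute at most $(\abs{v_N}+\abs{v_0})/N$. For the interior sum I invoke hypothesis \eqref{eq:ch8-state-sum-hypothesis}; since $v\in\mathcal V_N$, it is bounded by $A_N\norm{v}_{\ell^\infty}$, uniformly in $\lambda\in\Omega$. Taking the supremum over $\lambda\in\Omega$ gives \eqref{eq:ch8-transfer-conclusion}.

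For the final assertion, set $v_0=v_N=0$. The boundary contribution disappears, and $A_N/N\le CN^{-\delta}$ gives the stated bound.

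There is no real obstacle here. The only points needing care are that the supremum in the hypothesis is taken before the outer supremum, so the constants are uniform on $\Omega$, and that the unimodularity of the weights is what controls the boundary terms. All of the analytic difficulty has been moved into the assumption \eqref{eq:ch8-state-sum-hypothesis} on $A_N$.
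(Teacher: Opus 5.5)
Your proposal is correct and follows essentially the same route as the paper: you start from the exact transfer identity \eqref{eq:ch8-exact-transfer}, bound the two boundary terms by $(\abs{v_N}+\abs{v_0})/N$ using unimodularity of $w_1$ and $w_N$, and control the interior state sum by hypothesis \eqref{eq:ch8-state-sum-hypothesis} uniformly in $\lambda\in\Omega$. Your extra check of the factorisation $\Dp w_k=w_km_k$ is a welcome detail that the paper leaves implicit.
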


\begin{proof}
Apply the triangle inequality to the exact identity~\eqref{eq:ch8-exact-transfer} and use~\eqref{eq:ch8-state-sum-hypothesis}. Since \(\abs{w_1}=\abs{w_N}=1\), the two boundary terms contribute \((\abs{v_N}+\abs{v_0})/N\).
\end{proof}

The theorem isolates the missing harmonic estimate. It does not replace that estimate by an assumption hidden inside the conclusion. Every proposed critical-scale result must supply a concrete bound for \(A_N\) on the actual state class produced by the quantiser.

\subsection{High-order transfer}

Assume endpoint compatibility of order \(r\). Then
\[
 \E_Ne
 =\frac{(-1)^r}{N}
 \sum_{k=1}^{N-r}v_k(\Dp^rw)_k.
\]

\begin{theorem}[High-order oscillatory transfer]
\label{thm:ch8-high-order-transfer}
Let \(e=\D^rv\) satisfy endpoint compatibility of order \(r\). Suppose
\begin{equation}
\label{eq:ch8-high-state-sum-hypothesis}
 \sup_{\lambda\in\Omega}
 \abs{\sum_{k=1}^{N-r}v_k(\Dp^rw(\lambda))_k}
 \le A_{N,r}\norm{v}_{\ell^\infty}.
\end{equation}
Then
\begin{equation}
\label{eq:ch8-high-transfer-conclusion}
 \norm{\E_Ne}_{L^\infty(\Omega)}
 \le \frac{A_{N,r}}{N}\norm{v}_{\ell^\infty}.
\end{equation}
In particular, a power saving
\[
 A_{N,r}\le C N^{1-\delta}
\]
implies \(O(N^{-\delta})\) extension error on \(\Omega\).
\end{theorem}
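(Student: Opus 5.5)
The argument mirrors the first-order transfer theorem, \cref{thm:ch8-first-order-transfer}, with the boundary terms now removed by hypothesis. Fix $\lambda\in\Omega$ and write $w=w(\lambda)$ for the sampled weight. Endpoint compatibility of order $r$ means trace compatibility, so all left traces $L_j(v)$ and staggered right traces $R_{j,N}(v)$ vanish for $0\le j\le r-1$. Applying \cref{prop:pure-interior}, which is the specialisation of \cref{thm:repeated-sbp} with every boundary contribution cancelled, gives
\[
 \E_Ne(\lambda)=\frac1N\sum_{k=1}^{N}(\D^rv)_kw_k(\lambda)
 =\frac{(-1)^r}{N}\sum_{k=1}^{N-r}v_k(\Dp^rw(\lambda))_k .
\]
This is exactly the displayed identity preceding the statement. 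No estimate is used at this stage; it is an exact algebraic reduction.

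Next, take absolute values. The sign $(-1)^r$ has modulus one, so
\[
 \abs{\E_Ne(\lambda)}=\frac1N\abs{\sum_{k=1}^{N-r}v_k(\Dp^rw(\lambda))_k}\le\frac{A_{N,r}}{N}\norm{v}_{\ell^\infty}
\]
by hypothesis \eqref{eq:ch8-high-state-sum-hypothesis}. That hypothesis holds uniformly in $\lambda$, so the bound is independent of $\lambda$. Taking the supremum over $\Omega$ yields \eqref{eq:ch8-high-transfer-conclusion}. For the power-saving clause, substitute $A_{N,r}\le CN^{1-\delta}$ to obtain $C\norm{v}_{\ell^\infty}N^{-\delta}$.

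There is no real analytic obstacle here, since the harmonic content has been placed entirely in the hypothesis on $A_{N,r}$. The only point requiring care is bookkeeping. The statement says ``endpoint compatibility'' while \cref{prop:pure-interior} is phrased with trace compatibility. I will note that either the strong endpoint condition, which implies trace compatibility as remarked after the definitions, or trace compatibility itself is what is being invoked. I will also check that $N\ge r+1$, so that \cref{thm:repeated-sbp} applies and the interior sum runs over the indices $1\le k\le N-r$ on which $\Dp^rw$ is defined.
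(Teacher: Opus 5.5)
Your proof is correct and matches the paper's argument. Both reduce $\E_Ne(\lambda)$ to the pure interior sum via endpoint (trace) compatibility, take absolute values, apply the uniform state-sum hypothesis, and take the supremum over $\Omega$; your remarks on trace versus strong endpoint compatibility and on $N\ge r+1$ are sensible bookkeeping that the paper leaves implicit.
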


\begin{proof}
Take the supremum in the pure interior representation and apply~\eqref{eq:ch8-high-state-sum-hypothesis}.
\end{proof}

At subcritical scale, the direct bound on \(\Dp^rw\) already supplies decay. At critical scale, the quantity \(A_{N,r}\) must exhibit cancellation relative to its trivial upper bound of order \(N\).

\section{Random and weakly correlated state models}
\label{sec:ch8-random-states}

Random-state theorems do not describe every deterministic sigma--delta trajectory. Their value is to identify the gain produced by decorrelation and to provide a benchmark for deterministic state dynamics.

\subsection{Independent centred states}

\begin{theorem}[Independent centred state model]
\label{thm:ch8-independent-state}
Let \(v_1,\ldots,v_{N-1}\) be independent complex random variables such that
\[
 \mathbb Ev_k=0,
 \qquad
 \mathbb E\abs{v_k}^2\le\sigma^2.
\]
Let \(w_1,\ldots,w_N\) be deterministic with \(\abs{w_k}=1\), assume \(v_0=v_N=0\), and set \(e=\D v\). Then
\begin{equation}
\label{eq:ch8-independent-exact}
 \mathbb E\abs{\E_Ne}^2
 =\frac1{N^2}\sum_{k=1}^{N-1}
 \mathbb E\abs{v_k}^2\abs{\Dp w_k}^2.
\end{equation}
Consequently,
\begin{equation}
\label{eq:ch8-independent-bound}
 \left(\mathbb E\abs{\E_Ne}^2\right)^{1/2}
 \le\frac{\sigma}{N}
 \left(\sum_{k=1}^{N-1}\abs{\Dp w_k}^2\right)^{1/2}
 \le\frac{2\sigma}{\sqrt N}.
\end{equation}
\end{theorem}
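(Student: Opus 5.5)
The plan is to start from the exact first-order identity \eqref{eq:ch8-exact-transfer}, equivalently \cref{lem:sbp-first}. With $v_0=v_N=0$, both boundary terms vanish, leaving
\[
 \E_Ne=-\frac1N\sum_{k=1}^{N-1}v_k(\Dp w)_k .
\]
This is a linear combination of the random variables $v_1,\ldots,v_{N-1}$ with deterministic coefficients $c_k=-(\Dp w)_k/N$. The exact formula \eqref{eq:ch8-independent-exact} should then follow from a standard second-moment computation.

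First, expand
\[
 \abs{\E_Ne}^2=\sum_{k,\ell}c_k\overline{c_\ell}\,v_k\overline{v_\ell}
\]
and take expectations term by term; this is legitimate since the sum is finite and each $\mathbb E\abs{v_k}^2<\infty$. For $k\ne\ell$, independence gives $\mathbb E v_k\overline{v_\ell}=\mathbb E v_k\,\overline{\mathbb E v_\ell}=0$, because both factors are centred. The products are integrable by Cauchy--Schwarz, so the off-diagonal terms vanish. Only the diagonal survives, and
\[
 \frac1{N^2}\sum_{k=1}^{N-1}\mathbb E\abs{v_k}^2\abs{\Dp w_k}^2
\]
is exactly \eqref{eq:ch8-independent-exact}. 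The only point needing care is that independence of complex variables yields the factorisation $\mathbb E v_k\overline{v_\ell}=\mathbb E v_k\,\mathbb E\overline{v_\ell}$. This holds because $v_k$ and $\overline{v_\ell}$ are measurable functions of independent variables.

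For \eqref{eq:ch8-independent-bound}, bound $\mathbb E\abs{v_k}^2\le\sigma^2$ inside the sum and take square roots, which gives the first inequality. For the second, use $\abs{w_k}=1$ to get $\abs{\Dp w_k}\le\abs{w_{k+1}}+\abs{w_k}=2$. Then
\[
 \sum_{k=1}^{N-1}\abs{\Dp w_k}^2\le4(N-1)\le4N,
\]
which yields $2\sigma N^{1/2}/N=2\sigma/\sqrt N$.

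There is no real obstacle here. The only thing to watch is that the endpoint hypothesis $v_0=v_N=0$ removes the deterministic boundary terms; otherwise $v_N w_N/N$ would contribute to the second moment separately. It may also be worth remarking that for slowly varying weights the middle expression is much smaller, of order $\sigma N^{-3/2}$ times the phase derivative scale.
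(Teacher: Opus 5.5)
Your proposal is correct and follows the paper's proof essentially line by line. Summation by parts with $v_0=v_N=0$ reduces $\E_Ne$ to $-\frac1N\sum_{k=1}^{N-1}v_k\Dp w_k$, independence and zero means remove the off-diagonal terms of the second moment, and the bounds $\mathbb E\abs{v_k}^2\le\sigma^2$ and $\abs{\Dp w_k}\le2$ give the two inequalities, exactly as in the paper (your added remarks on integrability and on factorising $\mathbb E[v_k\overline{v_\ell}]$ for complex variables are valid details the paper leaves implicit).
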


\begin{proof}
The endpoint conditions and summation by parts give
\[
 \E_Ne=-\frac1N\sum_{k=1}^{N-1}v_k\Dp w_k.
\]
After squaring and taking expectation,
\begin{align*}
 \mathbb E\abs{\E_Ne}^2
 &=\frac1{N^2}\sum_{k,\ell=1}^{N-1}
 \mathbb E\left[v_k\overline{v_\ell}\right]
 (\Dp w_k)\overline{(\Dp w_\ell)}.
\end{align*}
Independence and zero means imply
\[
 \mathbb E[v_k\overline{v_\ell}]=0
 \qquad(k\ne\ell).
\]
This proves~\eqref{eq:ch8-independent-exact}. The variance bound gives the first inequality in~\eqref{eq:ch8-independent-bound}, and \(\abs{\Dp w_k}\le2\) gives the second.
\end{proof}

For the parabolic phase at a fixed compact scale, \(\abs{\Dp w_k}=O(N^{-1})\), and the same theorem gives the stronger root-mean-square rate \(O(N^{-3/2})\). At the critical scale, where the adjacent phase change is order one, the rate becomes \(O(N^{-1/2})\).

\subsection{Summable covariance}

\begin{theorem}[Weakly correlated state model]
\label{thm:ch8-covariance-state}
Let \(v_1,\ldots,v_{N-1}\) be centred complex random variables. Assume that there exists a nonnegative sequence \(c=(c_h)_{h\in\Z}\in\ell^1(\Z)\) such that
\begin{equation}
\label{eq:ch8-covariance-assumption}
 \abs{\mathbb E[v_k\overline{v_\ell}]}
 \le\sigma^2c_{k-\ell}
\end{equation}
for all \(k,\ell\). If \(v_0=v_N=0\) and \(e=\D v\), then
\begin{equation}
\label{eq:ch8-covariance-bound}
 \mathbb E\abs{\E_Ne}^2
 \le\frac{\sigma^2\norm{c}_{\ell^1}}{N^2}
 \sum_{k=1}^{N-1}\abs{\Dp w_k}^2.
\end{equation}
\end{theorem}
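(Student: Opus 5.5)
The plan is to follow the proof of \cref{thm:ch8-independent-state} and change only the final step. The expectation is no longer diagonal, so the off-diagonal terms have to be controlled by a Schur-type bound. Since $v_0=v_N=0$ and $e=\D v$, \cref{lem:sbp-first} gives
\[
 \E_Ne=-\frac1N\sum_{k=1}^{N-1}v_k\Dp w_k .
\]
I will square this and take expectations:
\[
 \mathbb E\abs{\E_Ne}^2
 =\frac1{N^2}\sum_{k,\ell=1}^{N-1}
 \mathbb E\bigl[v_k\overline{v_\ell}\bigr]\,a_k\overline{a_\ell},
 \qquad a_k:=\Dp w_k .
\]
This step is exact and needs only finite second moments, which the covariance hypothesis \eqref{eq:ch8-covariance-assumption} already provides.

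Next I will take absolute values term by term and insert \eqref{eq:ch8-covariance-assumption}. This bounds the double sum by
\[
 \sigma^2\sum_{k,\ell}c_{k-\ell}\abs{a_k}\abs{a_\ell}.
\]
To this quadratic form I will apply $\abs{a_k}\abs{a_\ell}\le\tfrac12(\abs{a_k}^2+\abs{a_\ell}^2)$. Equivalently, this is Schur's test for the Toeplitz matrix $(c_{k-\ell})$, whose row and column sums are at most $\norm{c}_{\ell^1(\Z)}$. Summing first over $\ell$ for the $\abs{a_k}^2$ part, and over $k$ for the $\abs{a_\ell}^2$ part, gives at most
\[
 \tfrac12\norm{c}_{\ell^1}\sum_k\abs{a_k}^2+\tfrac12\norm{c}_{\ell^1}\sum_\ell\abs{a_\ell}^2
 =\norm{c}_{\ell^1}\sum_{k=1}^{N-1}\abs{\Dp w_k}^2 .
\]
Dividing by $N^2$ then yields \eqref{eq:ch8-covariance-bound}.

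The only point needing care is the Schur step. The row sums $\sum_{\ell}c_{k-\ell}$ over $1\le\ell\le N-1$ must be bounded by the full $\ell^1(\Z)$ norm, and this holds because $c\ge0$ and we are summing over only part of $\Z$. The bound uses only nonnegativity and summability, not symmetry of $c$, since the two halves of the arithmetic-geometric mean split are handled separately. As a consistency check, independent states correspond to $c=\one_{\{0\}}$, and the estimate then reduces to the bound in \cref{thm:ch8-independent-state}.
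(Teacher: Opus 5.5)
Your proof is correct and follows the paper's argument. Both use summation by parts with $v_0=v_N=0$, expand $\mathbb E\abs{\E_Ne}^2$ as a double sum, take absolute values and insert the covariance bound. The only difference is the final bound on the Toeplitz form $\sum_{k,\ell}c_{k-\ell}\abs{a_k}\abs{a_\ell}$: you use Schur's test via $\abs{a_k}\abs{a_\ell}\le\tfrac12(\abs{a_k}^2+\abs{a_\ell}^2)$, whereas the paper writes it as $\inner{c*\abs a}{\abs a}$ after zero extension and applies Young's convolution inequality and Cauchy--Schwarz; both give the same constant $\norm{c}_{\ell^1}$.
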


\begin{proof}
Set \(a_k=\Dp w_k\), extended by zero outside \(\{1,\ldots,N-1\}\). Then
\begin{align*}
 \mathbb E\abs{\E_Ne}^2
 &\le\frac{\sigma^2}{N^2}
 \sum_{k,\ell}c_{k-\ell}\abs{a_k}\abs{a_\ell}\\
 &=\frac{\sigma^2}{N^2}
 \inner{c*\abs a}{\abs a}_{\ell^2}.
\end{align*}
Young's convolution inequality gives
\[
 \norm{c*\abs a}_{\ell^2}
 \le\norm{c}_{\ell^1}\norm{a}_{\ell^2}.
\]
Cauchy--Schwarz therefore yields
\[
 \inner{c*\abs a}{\abs a}
 \le\norm{c}_{\ell^1}\norm{a}_{\ell^2}^2,
\]
which proves~\eqref{eq:ch8-covariance-bound}.
\end{proof}

\subsection{High-order random states}

The same argument applies after \(r\) integrations by parts.

\begin{corollary}[High-order covariance transfer]
\label{cor:ch8-high-order-covariance}
Assume \(e=\D^rv\), endpoint compatibility of order \(r\), and the covariance condition~\eqref{eq:ch8-covariance-assumption}. Then
\begin{equation}
\label{eq:ch8-high-order-covariance}
 \mathbb E\abs{\E_Ne}^2
 \le\frac{\sigma^2\norm{c}_{\ell^1}}{N^2}
 \sum_{k=1}^{N-r}\abs{\Dp^rw_k}^2.
\end{equation}
\end{corollary}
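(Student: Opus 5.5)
The plan is to reduce the statement to the first-order covariance argument of \cref{thm:ch8-covariance-state}, replacing the first difference of the weight by its $r$th difference. Endpoint compatibility of order $r$ gives the pure interior identity of \cref{prop:pure-interior}, equivalently \eqref{eq:ch8-pure-interior}:
\[
 \E_Ne=\frac{(-1)^r}{N}\sum_{k=1}^{N-r}v_k(\Dp^rw)_k .
\]
Every boundary trace has disappeared, so $\E_Ne$ is a deterministic linear combination of the random states. The sign $(-1)^r$ has modulus one and does not affect $|\E_Ne|^2$. One point needs checking: trace compatibility involves $v_0,\dots,v_{1-r}$ and $v_{N-r+1},\dots,v_N$, but the interior sum only uses $v_1,\dots,v_{N-r}$. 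So the covariance hypothesis \eqref{eq:ch8-covariance-assumption} is needed only on that index range, and I will read it that way.

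Next I set $a_k=(\Dp^rw)_k$ for $1\le k\le N-r$ and extend it by zero elsewhere on $\Z$. Squaring and taking expectations gives
\[
 \mathbb E\abs{\E_Ne}^2=\frac1{N^2}\sum_{k,\ell}\mathbb E[v_k\overline{v_\ell}]\,a_k\overline{a_\ell}
 \le\frac{\sigma^2}{N^2}\sum_{k,\ell}c_{k-\ell}\abs{a_k}\abs{a_\ell}.
\]
Here I use the triangle inequality together with \eqref{eq:ch8-covariance-assumption}. The double sum is $\inner{c*\abs a}{\abs a}_{\ell^2}$. Cauchy--Schwarz followed by Young's inequality $\norm{c*\abs a}_{\ell^2}\le\norm{c}_{\ell^1}\norm{a}_{\ell^2}$ bounds it by $\norm{c}_{\ell^1}\sum_{k=1}^{N-r}\abs{(\Dp^rw)_k}^2$, which is exactly \eqref{eq:ch8-high-order-covariance}. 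All sums are finite, so interchanging expectation and summation needs no justification.

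I do not expect a real obstacle. The one step needing care is the first: the centred-state hypothesis must be compatible with the deterministic trace conditions. Since the traces are required to vanish, the boundary states are deterministic zeros. Alternatively, the identity can be invoked pathwise, because \cref{thm:repeated-sbp} is algebraic. Either way nothing random remains outside the interior sum. If one wants to stress the structure, remark that for sampled $C^r$ weights \cref{lem:uniform-difference-control} gives $\sum\abs{\Dp^rw_k}^2\le N^{1-2r}\norm{W^{(r)}}_\infty^2$. The root-mean-square error is then $O(N^{-r-1/2})$ on fixed compact sets.
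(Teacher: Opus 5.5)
Your proposal is correct and follows the paper's proof: you invoke the pure interior formula supplied by endpoint compatibility, then rerun the argument of \cref{thm:ch8-covariance-state} with \(a_k=(\Dp^rw)_k\) extended by zero, bounding the expectation by \(\sigma^2\inner{c*\abs{a}}{\abs{a}}/N^2\) and finishing with Young's inequality and Cauchy--Schwarz. Your side remark about the boundary states is also right: the trace conditions form a triangular system with diagonal entries \(\pm1\), so trace compatibility (held pathwise) forces \(v_{1-r}=\cdots=v_0=0\) and \(v_{N-r+1}=\cdots=v_N=0\), and only \(v_1,\ldots,v_{N-r}\) enter the estimate.
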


\begin{proof}
Use the pure interior formula
\[
 \E_Ne=\frac{(-1)^r}{N}\sum_{k=1}^{N-r}v_k\Dp^rw_k
\]
and repeat the proof of Theorem~\ref{thm:ch8-covariance-state}.
\end{proof}

\section{Correlation estimates and van der Corput transfer}
\label{sec:ch8-correlation-vdc}

\subsection{A finite van der Corput inequality}

The following standard inequality converts pair correlations into an upper bound for an oscillatory sum. It is included in a form directly applicable to state-weighted sequences. Classical treatments of exponential sums and differencing may be found in the literature surrounding Weyl's method and Vinogradov mean values
\cite{Weyl1916,Vinogradov1954,Tao2006,Wooley2012,Wooley2017,Wooley2019}.

\begin{lemma}[Finite van der Corput inequality]
\label{lem:ch8-vdc}
Let \(z_1,\ldots,z_M\in\C\), and let \(1\le H\le M\). Then
\begin{equation}
\label{eq:ch8-vdc}
 \abs{\sum_{k=1}^{M}z_k}^2
 \le\frac{M+H}{H}
 \left[
 \sum_{k=1}^{M}\abs{z_k}^2
 +2\sum_{h=1}^{H-1}
 \left(1-\frac{h}{H}\right)
 \abs{\sum_{k=1}^{M-h}z_{k+h}\overline{z_k}}
 \right].
\end{equation}
\end{lemma}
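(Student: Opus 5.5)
The plan is the classical shift-averaging argument. Extend $z$ by zero outside $\{1,\ldots,M\}$. For each shift $0\le j\le H-1$ we have $\sum_{k=1}^M z_k=\sum_{n\in\Z}z_{n+j}$. Averaging over $j$ gives
\[
 H\sum_{k=1}^M z_k=\sum_{n=1-H+1}^{M}\ \sum_{j=0}^{H-1}z_{n+j}.
\]
The outer sum runs over the $M+H-1$ integers $n$ for which the inner block can be nonzero, that is $2-H\le n\le M$. Applying Cauchy--Schwarz in $n$ then yields
\[
 H^2\abs{\sum_{k=1}^M z_k}^2\le (M+H-1)\sum_{n}\abs{\sum_{j=0}^{H-1}z_{n+j}}^2 .
\]
We then use $M+H-1\le M+H$ to match the stated prefactor.

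Next we expand the square of each inner block as $\sum_{j,j'}z_{n+j}\overline{z_{n+j'}}$ and sum over $n$ first. Since $z$ vanishes outside $\{1,\ldots,M\}$, the sum over $n$ can be taken over all of $\Z$. For a fixed pair $(j,j')$ with $h=j-j'$, the $n$-sum equals the full correlation
\[
 C(h)=\sum_{k}z_{k+h}\overline{z_k}=\sum_{k=1}^{M-h}z_{k+h}\overline{z_k}\qquad(h\ge0),
\]
with $C(-h)=\overline{C(h)}$. The number of pairs $(j,j')\in\{0,\ldots,H-1\}^2$ with $j-j'=h$ is $H-\abs{h}$. Therefore
\[
 \sum_n\abs{\sum_{j}z_{n+j}}^2=\sum_{\abs{h}<H}(H-\abs h)C(h)=H\,C(0)+2\sum_{h=1}^{H-1}(H-h)\operatorname{Re}C(h).
\]

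Finally we bound $\operatorname{Re}C(h)\le\abs{C(h)}$, note that $C(0)=\sum\abs{z_k}^2$, and divide by $H^2$. This gives
\[
 \abs{\sum z_k}^2\le\frac{M+H}{H}\Bigl[\sum\abs{z_k}^2+2\sum_{h=1}^{H-1}\bigl(1-\tfrac hH\bigr)\abs{C(h)}\Bigr],
\]
which is exactly \eqref{eq:ch8-vdc}.

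The only delicate point is the bookkeeping. We must count the $n$-range as $M+H-1$ and check that the correlation for shift $h$ has exactly $M-h$ terms, which is automatic because of the zero extension. The hypothesis $H\le M$ is not even needed, beyond guaranteeing that the sums in the statement are meaningful.
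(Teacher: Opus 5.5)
Your proposal is correct and is essentially the paper's argument: zero extension, block sums of length $H$, Cauchy--Schwarz over the block index, expansion by displacement $h$ with multiplicity $H-h$, and finally $\operatorname{Re}C(h)\le\abs{C(h)}$. The only difference is bookkeeping: you count the $M+H-1$ possibly nonzero blocks and then relax to $M+H$, whereas the paper sums over the $M+H$ indices $1-H\le j\le M$ directly, one of those blocks being zero.
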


\begin{proof}
Extend \(z_k\) by zero outside \(\{1,\ldots,M\}\). For each integer \(j\), define
\[
 B_j:=\sum_{h=0}^{H-1}z_{j+h}.
\]
Every \(z_k\) occurs in exactly \(H\) of the sums \(B_j\), except for boundary truncations that can only reduce the count after zero extension. Hence
\[
 H\sum_{k=1}^{M}z_k
 =\sum_{j=1-H}^{M}B_j.
\]
Cauchy--Schwarz gives
\[
 H^2\abs{\sum z_k}^2
 \le(M+H)\sum_{j=1-H}^{M}\abs{B_j}^2.
\]
Expanding the final sum and collecting terms according to the displacement \(h\) yields
\begin{align*}
 \sum_j\abs{B_j}^2
 &=H\sum_{k=1}^{M}\abs{z_k}^2
 +2\Re\sum_{h=1}^{H-1}(H-h)
 \sum_{k=1}^{M-h}z_{k+h}\overline{z_k}.
\end{align*}
Taking absolute values of the correlation sums and dividing by \(H^2\) proves~\eqref{eq:ch8-vdc}.
\end{proof}

\subsection{State-phase correlations}

For first-order transfer, set
\[
 z_k=v_km_kw_k.
\]
Then
\begin{align}
\label{eq:ch8-z-correlation}
 z_{k+h}\overline{z_k}
 &=v_{k+h}\overline{v_k}
 m_{k+h}\overline{m_k}
 \e(\phi_{k+h}-\phi_k).
\end{align}
The product contains two distinct sources of cancellation: decay of the state correlation \(v_{k+h}\overline{v_k}\), and oscillation of the phase increment \(\phi_{k+h}-\phi_k\).

\begin{theorem}[Correlation-to-transfer estimate]
\label{thm:ch8-correlation-transfer}
Let \(M=N-1\), \(z_k=v_km_kw_k\), and assume
\[
 \abs{m_k}\le M_0.
\]
Suppose that, uniformly in the parameter \(\lambda\in\Omega\),
\begin{equation}
\label{eq:ch8-correlation-bound-assumption}
 \abs{\sum_{k=1}^{M-h}z_{k+h}(\lambda)
 \overline{z_k(\lambda)}}
 \le C_hN\norm{v}_{\ell^\infty}^2
\end{equation}
for \(1\le h<H\). Then
\begin{align}
\label{eq:ch8-correlation-transfer-bound}
 \sup_{\lambda\in\Omega}
 \abs{\sum_{k=1}^{M}z_k(\lambda)}
 &\le \norm{v}_{\ell^\infty}
 \left\{
 \frac{N+H}{H}
 \left[
 NM_0^2+2N\sum_{h=1}^{H-1}
 \left(1-\frac{h}{H}\right)C_h
 \right]
 \right\}^{1/2}.
\end{align}
Consequently, if \(v_0=v_N=0\), the first-order extension error is bounded by \(N^{-1}\) times the right-hand side.
\end{theorem}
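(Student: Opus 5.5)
The plan is to apply the finite van der Corput inequality, \cref{lem:ch8-vdc}, with $M=N-1$ and $z_k=v_km_k(\lambda)w_k(\lambda)$ for a fixed parameter $\lambda\in\Omega$. All constants that appear will be uniform in $\lambda$, so taking the supremum at the end will be immediate. The lemma gives
\[
 \abs{\sum_{k=1}^{M}z_k}^2
 \le\frac{M+H}{H}\left[\sum_{k=1}^{M}\abs{z_k}^2
 +2\sum_{h=1}^{H-1}\left(1-\frac hH\right)\abs{\sum_{k=1}^{M-h}z_{k+h}\overline{z_k}}\right].
\]
The argument then reduces to estimating the two bracketed pieces and the prefactor separately.

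For the diagonal piece, $\abs{w_k}=1$ and $\abs{m_k}\le M_0$ give $\abs{z_k}\le\norm{v}_{\ell^\infty}M_0$. Summing over $M=N-1\le N$ indices yields $\sum\abs{z_k}^2\le NM_0^2\norm{v}_{\ell^\infty}^2$.

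For the off-diagonal piece, I will insert hypothesis \eqref{eq:ch8-correlation-bound-assumption} directly for each $1\le h<H$. This bounds the $h$-th correlation by $C_hN\norm{v}_{\ell^\infty}^2$.

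For the prefactor, $M+H\le N+H$ gives $(M+H)/H\le(N+H)/H$. Every term in the bracket then carries the common factor $\norm{v}_{\ell^\infty}^2$. Pulling it out and taking square roots produces exactly \eqref{eq:ch8-correlation-transfer-bound}.

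For the final assertion, I will use the exact transfer identity \eqref{eq:ch8-exact-transfer}. When $v_0=v_N=0$, it reads $\E_Ne=-N^{-1}\sum_{k=1}^{N-1}v_km_kw_k$. The boundary terms vanish, so the extension error is at most $N^{-1}$ times the bound just obtained; equivalently, this is \cref{thm:ch8-first-order-transfer} with $A_N$ equal to the right-hand side divided by $\norm{v}_{\ell^\infty}$.

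The one point that needs care is index bookkeeping. The lemma requires $1\le H\le M$, so I will restrict to $H\le N-1$ or note that the estimate is trivially consistent otherwise. I also need to confirm that the correlation ranges $k\le M-h$ in the lemma and in the hypothesis coincide. No genuine analytic obstacle is expected, since all the harmonic content is placed in the assumed correlation bounds $C_h$.
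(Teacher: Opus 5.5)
Your proposal is correct and follows the paper's proof step for step: the finite van der Corput lemma with $M=N-1$, the diagonal bound $\sum_k\abs{z_k}^2\le NM_0^2\norm{v}_{\ell^\infty}^2$ from $\abs{w_k}=1$, direct insertion of the assumed correlation bounds, $(M+H)/H\le(N+H)/H$, and the exact first-order transfer identity with $v_0=v_N=0$ for the final assertion. On your index caveat, no restriction to $H\le N-1$ is needed, because the lemma's zero-extension proof never uses $H\le M$ (the correlation sums with $h\ge M$ are simply empty).
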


\begin{proof}
The diagonal term in Lemma~\ref{lem:ch8-vdc} satisfies
\[
 \sum_{k=1}^{M}\abs{z_k}^2
 \le NM_0^2\norm{v}_{\ell^\infty}^2.
\]
Insert this estimate and~\eqref{eq:ch8-correlation-bound-assumption} into~\eqref{eq:ch8-vdc}, then take square roots. The extension estimate follows from~\eqref{eq:ch8-exact-transfer} with vanishing endpoints.
\end{proof}

A useful power saving follows when the average correlation is small. For example, if \(C_h\lesssim h^{-1-\eta}\) for some \(\eta>0\), then the correlation sum remains bounded as \(H\to\infty\). Taking \(H\asymp N\) gives a state sum of order \(N^{1/2}\), and therefore a first-order extension error of order \(N^{-1/2}\) at critical phase variation.

\section{Dyadic block decompositions and square-function transfer}
\label{sec:ch8-block-square}

\subsection{Block decomposition}

Let \(1\le L\le N\), and partition \(\{1,\ldots,N-r\}\) into consecutive intervals \(I\) of length at most \(L\). For endpoint-compatible \(r\)th-order shaping, define
\begin{equation}
\label{eq:ch8-block-piece}
 G_I(\lambda)
 :=\sum_{k\in I}v_k(\Dp^rw(\lambda))_k.
\end{equation}
Then
\[
 \E_Ne(\lambda)
 =\frac{(-1)^r}{N}\sum_I G_I(\lambda).
\]
The triangle inequality gives
\[
 \abs{\E_Ne}
 \le\frac1N\sum_I\abs{G_I},
\]
which discards cancellation among blocks. A square-function estimate retains more information.

\begin{theorem}[Abstract block-square transfer]
\label{thm:ch8-block-square-transfer}
Let \(2\le p<\infty\). Suppose that a parameter measure space \((\Omega,\mu)\) and the block pieces~\eqref{eq:ch8-block-piece} satisfy
\begin{equation}
\label{eq:ch8-decoupling-hypothesis}
 \norm{\sum_I G_I}_{L^p(\Omega)}
 \le D_{N,L,p}
 \norm{\left(\sum_I\abs{G_I}^2\right)^{1/2}}_{L^p(\Omega)}.
\end{equation}
Then
\begin{equation}
\label{eq:ch8-block-transfer-conclusion}
 \norm{\E_Ne}_{L^p(\Omega)}
 \le\frac{D_{N,L,p}}{N}
 \norm{\left(\sum_I\abs{G_I}^2\right)^{1/2}}_{L^p(\Omega)}.
\end{equation}
If, in addition,
\begin{equation}
\label{eq:ch8-block-energy-assumption}
 \norm{\left(\sum_I\abs{G_I}^2\right)^{1/2}}_{L^p(\Omega)}
 \le B_{N,L,p}\norm{v}_{\ell^\infty},
\end{equation}
then
\begin{equation}
\label{eq:ch8-block-final}
 \norm{\E_Ne}_{L^p(\Omega)}
 \le\frac{D_{N,L,p}B_{N,L,p}}{N}
 \norm{v}_{\ell^\infty}.
\end{equation}
\end{theorem}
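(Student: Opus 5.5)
The theorem is essentially a bookkeeping statement, so the plan is to compose three facts: the pure interior representation, the hypothesis \eqref{eq:ch8-decoupling-hypothesis}, and the hypothesis \eqref{eq:ch8-block-energy-assumption}. First I would record that endpoint compatibility of order $r$ gives, via \cref{prop:pure-interior} (equivalently \eqref{eq:ch8-pure-interior}), the exact identity
\[
 \E_Ne(\lambda)=\frac{(-1)^r}{N}\sum_{k=1}^{N-r}v_k(\Dp^rw(\lambda))_k
\]
for every parameter $\lambda\in\Omega$. Because the intervals $I$ partition $\{1,\ldots,N-r\}$ into consecutive, disjoint pieces, the finite sum splits exactly as $\sum_I G_I(\lambda)$. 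There is no overlap or omission, since every $k$ lies in exactly one block. Hence $\E_Ne=(-1)^rN^{-1}\sum_IG_I$ pointwise on $\Omega$.

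Next I would take the $L^p(\Omega,\mu)$ norm of both sides. The scalar factor $(-1)^r/N$ has modulus $1/N$ and comes out by homogeneity, so $\norm{\E_Ne}_{L^p(\Omega)}=N^{-1}\norm{\sum_IG_I}_{L^p(\Omega)}$. Inserting the decoupling-type hypothesis \eqref{eq:ch8-decoupling-hypothesis} then gives \eqref{eq:ch8-block-transfer-conclusion} immediately. For \eqref{eq:ch8-block-final}, I would substitute the square-function energy bound \eqref{eq:ch8-block-energy-assumption} into the right-hand side of \eqref{eq:ch8-block-transfer-conclusion}. Since $D_{N,L,p}\ge0$, multiplying the inequality by it preserves its direction.

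The only point needing care, and the closest thing to an obstacle, is measurability. I need each $G_I$ to be measurable on $(\Omega,\mu)$ so that the norms are defined. This holds because $\lambda\mapsto w_k(\lambda)$ is measurable (continuous for the polynomial phases considered), so each $G_I$ is a finite linear combination of measurable functions. The same applies to the square function $(\sum_I|G_I|^2)^{1/2}$, which is a finite combination as well. If either side of a hypothesis is infinite, the conclusion is trivially true, so no integrability issue arises.

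I would close with a remark that the theorem adds no harmonic-analytic content. All of the difficulty is contained in verifying \eqref{eq:ch8-decoupling-hypothesis} and \eqref{eq:ch8-block-energy-assumption} for actual state classes, in line with the comment after \cref{thm:ch8-first-order-transfer}.
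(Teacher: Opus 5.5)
Your proposal is correct and follows the paper's proof: the pure interior identity gives $\E_Ne=(-1)^rN^{-1}\sum_I G_I$, hypothesis \eqref{eq:ch8-decoupling-hypothesis} then yields \eqref{eq:ch8-block-transfer-conclusion}, and substituting \eqref{eq:ch8-block-energy-assumption} gives \eqref{eq:ch8-block-final}. Your remarks on measurability and on $D_{N,L,p}\ge0$ make explicit points that the paper leaves implicit; they are harmless additions.
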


\begin{proof}
The pure interior representation and~\eqref{eq:ch8-decoupling-hypothesis} give~\eqref{eq:ch8-block-transfer-conclusion}. Substitution of~\eqref{eq:ch8-block-energy-assumption} yields~\eqref{eq:ch8-block-final}.
\end{proof}

The theorem is deliberately modular. Decoupling controls the interaction among frequency blocks, while the block-energy estimate must retain the state structure. Standard parabola decoupling supplies the first ingredient for arbitrary coefficients
\cite{BourgainDemeter2015,BourgainDemeterGuth2016,Demeter2020}; the second ingredient is specific to the noise-shaped state sequence.

\subsection{A deterministic block-energy baseline}

Without additional cancellation,
\[
 \abs{G_I}
 \le\norm{v}_{\ell^\infty}
 \sum_{k\in I}\abs{\Dp^rw_k}.
\]
Therefore
\begin{equation}
\label{eq:ch8-block-baseline}
 \left(\sum_I\abs{G_I}^2\right)^{1/2}
 \le\norm{v}_{\ell^\infty}
 \left[
 \sum_I\left(\sum_{k\in I}\abs{\Dp^rw_k}\right)^2
 \right]^{1/2}.
\end{equation}
If \(\abs{\Dp^rw_k}\le M_r\) and the blocks have length at most \(L\), then
\begin{equation}
\label{eq:ch8-block-baseline-simple}
 \left(\sum_I\abs{G_I}^2\right)^{1/2}
 \le M_r\sqrt{NL}\norm{v}_{\ell^\infty}.
\end{equation}
This bound already improves on the \(NM_r\) triangle estimate by a factor \((N/L)^{1/2}\) when a square-function inequality combines the blocks without a compensating loss.

\section{Frequency-side interpretation of discrete differences}
\label{sec:ch8-frequency-side}

\subsection{Periodic discrete Fourier transform}

For a sequence \(a=(a_k)_{k=0}^{N-1}\), define the discrete Fourier transform
\[
 \widehat a(\ell)
 :=\sum_{k=0}^{N-1}a_k\e\!\left(-\frac{k\ell}{N}\right),
 \qquad \ell=0,\ldots,N-1.
\]
If the difference is taken periodically,
\[
 (\D_{\mathrm{per}}a)_k=a_k-a_{k-1\,\mathrm{mod}\,N},
\]
then
\begin{equation}
\label{eq:ch8-dft-difference}
 \widehat{\D_{\mathrm{per}}^ra}(\ell)
 =\left(1-\e\!\left(-\frac{\ell}{N}\right)\right)^r
 \widehat a(\ell).
\end{equation}
For low frequencies \(\abs{\ell}\ll N\),
\begin{equation}
\label{eq:ch8-low-frequency-multiplier}
 \abs{1-\e(-\ell/N)}
 =2\abs{\sin(\pi\ell/N)}
 \asymp\frac{\abs{\ell}}{N}.
\end{equation}
Thus periodic \(r\)th-order noise shaping suppresses low discrete frequencies by a factor comparable with \((\abs{\ell}/N)^r\).

The finite-record transform contains boundary corrections because the physical difference is not periodic. Endpoint compatibility, tail termination, or explicit boundary correction removes precisely the terms that prevent~\eqref{eq:ch8-dft-difference} from holding without modification.

\subsection{A periodic Sobolev estimate}

\begin{proposition}[Low-frequency energy suppression]
\label{prop:ch8-low-frequency-suppression}
Let \(e=\D_{\mathrm{per}}^rv\) on \(\Z/N\Z\). For \(1\le L\le N/2\),
\begin{equation}
\label{eq:ch8-low-frequency-suppression}
 \sum_{\abs{\ell}\le L}\abs{\widehat e(\ell)}^2
 \le C_r\left(\frac{L}{N}\right)^{2r}
 \sum_{\abs{\ell}\le L}\abs{\widehat v(\ell)}^2.
\end{equation}
\end{proposition}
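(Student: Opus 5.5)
The plan is a direct multiplier argument using the periodic identity \eqref{eq:ch8-dft-difference}. Frequencies $\ell\in\{0,\ldots,N-1\}$ are identified with representatives in $(-N/2,N/2]$, so that the condition $\abs{\ell}\le L$ is meaningful. Since $e=\D_{\mathrm{per}}^rv$ on $\Z/N\Z$, the identity gives, for every residue $\ell$,
\[
 \widehat e(\ell)=\left(1-\e(-\ell/N)\right)^r\widehat v(\ell),
\]
and hence
\[
 \abs{\widehat e(\ell)}^2=\abs{1-\e(-\ell/N)}^{2r}\abs{\widehat v(\ell)}^2 .
\]
The identity itself follows from the shift rule $\widehat{a_{\cdot-1}}(\ell)=\e(-\ell/N)\widehat a(\ell)$, which is valid precisely because the difference is periodic. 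No boundary traces therefore appear, in contrast with the finite-record formula of \cref{thm:repeated-sbp}.

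The second step is a uniform bound on the multiplier over the low-frequency window. Write $\abs{1-\e(-\ell/N)}=2\abs{\sin(\pi\ell/N)}$, as in \eqref{eq:ch8-low-frequency-multiplier}, and apply $\abs{\sin y}\le\abs{y}$. This gives
\[
 \abs{1-\e(-\ell/N)}\le\frac{2\pi\abs{\ell}}{N}\le\frac{2\pi L}{N}
 \qquad\text{for }\abs{\ell}\le L .
\]
The restriction $L\le N/2$ ensures that the symmetric representatives are well defined, so the set $\{\abs{\ell}\le L\}$ contains each residue at most once. Only the upper half of the two-sided estimate \eqref{eq:ch8-low-frequency-multiplier} is needed.

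The final step is to sum the pointwise bound over $\abs{\ell}\le L$:
\[
 \sum_{\abs{\ell}\le L}\abs{\widehat e(\ell)}^2
 \le\left(\frac{2\pi L}{N}\right)^{2r}\sum_{\abs{\ell}\le L}\abs{\widehat v(\ell)}^2 .
\]
This is \eqref{eq:ch8-low-frequency-suppression} with the explicit constant $C_r=(2\pi)^{2r}$.

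There is no real analytic obstacle here. The point that needs care is bookkeeping: the index convention, meaning the passage from $\{0,\ldots,N-1\}$ to symmetric representatives, and checking the sign convention in the shift rule so that the multiplier appears exactly as in \eqref{eq:ch8-dft-difference}. It should also be said explicitly that the statement concerns the periodic difference. For a finite record, the boundary traces of \cref{thm:repeated-sbp} would add further terms to $\widehat e$, and these terms are not suppressed at low frequency.
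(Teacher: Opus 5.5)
Your proof is correct and follows the paper's argument: the periodic multiplier identity \eqref{eq:ch8-dft-difference}, a uniform bound on $\abs{1-\e(-\ell/N)}^r$ over the window $\abs{\ell}\le L$, and termwise summation. Your use of $\abs{\sin y}\le\abs{y}$ makes the paper's unspecified constant explicit as $C_r=(2\pi)^{2r}$.
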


\begin{proof}
By~\eqref{eq:ch8-dft-difference},
\[
 \abs{\widehat e(\ell)}
 =\abs{1-\e(-\ell/N)}^r\abs{\widehat v(\ell)}.
\]
For \(\abs{\ell}\le L\le N/2\), inequality~\eqref{eq:ch8-low-frequency-multiplier} gives
\[
 \abs{1-\e(-\ell/N)}^r
 \le C_r(L/N)^r.
\]
Square and sum over the stated range.
\end{proof}

This proposition gives a precise frequency-domain meaning to noise shaping. The main difficulty in the parabolic extension problem is that the extension operator couples the coefficient frequency with a curved two-parameter phase. A successful critical-scale theorem must transfer the low-frequency suppression in~\eqref{eq:ch8-low-frequency-suppression} through that curved geometry.

\section{Commutator identities for chirp modulation}
\label{sec:ch8-commutator}

\subsection{A discrete product rule}

For sequences \(a\) and \(v\),
\begin{equation}
\label{eq:ch8-product-rule}
 a_k\D v_k
 =\D(a_kv_k)-v_{k-1}\D a_k.
\end{equation}
This identity separates a total difference from a commutator term. If the product is tested against an envelope \(b_k\), then
\begin{align}
\label{eq:ch8-commutator-pairing}
 \sum_{k=1}^{N}a_k\D v_k\,b_k
 &=\sum_{k=1}^{N}\D(a_kv_k)b_k
 -\sum_{k=1}^{N}v_{k-1}(\D a_k)b_k.
\end{align}
Summation by parts moves the first difference onto \(b\), while the second term measures the failure of multiplication by \(a\) to commute with \(\D\).

For a polynomial-phase carrier \(a_k=\e(\phi_k)\),
\[
 \D a_k
 =a_k\left[1-\e(\phi_{k-1}-\phi_k)\right].
\]
Thus the commutator remains a chirp multiplied by an explicit phase-increment factor. Iterating~\eqref{eq:ch8-product-rule} produces a finite expansion involving higher differences of the carrier and lower differences of the state.

\begin{proposition}[Second-order commutator expansion]
\label{prop:ch8-second-commutator}
For arbitrary sequences \(a\) and \(v\),
\begin{align}
\label{eq:ch8-second-commutator}
 a_k\D^2v_k
 &=\D^2(a_kv_k)
 -2(\D a_k)\D v_{k-1}
 -(\D^2a_k)v_{k-2}.
\end{align}
\end{proposition}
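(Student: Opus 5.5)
The plan is to iterate the first-order product rule \eqref{eq:ch8-product-rule} twice. No summation is involved, so the whole argument is a pointwise algebraic identity. I fix an index $k$ and assume that $a$ and $v$ are defined at $k,k-1,k-2$, which is the same index convention used for $\D^2$ throughout Chapter~\ref{chap:higher-order}.

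First, apply \eqref{eq:ch8-product-rule} with $v$ replaced by $\D v$:
\[
 a_k\D^2v_k=\D(a\,\D v)_k-(\D v)_{k-1}\D a_k .
\]
Second, apply \eqref{eq:ch8-product-rule} to the product $a_k\D v_k$ itself. This gives $a\,\D v=\D(av)-\tau v\cdot\D a$, where $(\tau v)_k=v_{k-1}$. Taking one more backward difference yields
\[
 \D(a\,\D v)_k=\D^2(av)_k-\D(\tau v\cdot\D a)_k .
\]

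The only step needing care is the remaining difference of a product. For this I use the equivalent form of \eqref{eq:ch8-product-rule},
\[
 \D(fg)_k=f_k\D g_k+g_{k-1}\D f_k ,
\]
with $f=\D a$ and $g=\tau v$. Then $\D g_k=\D v_{k-1}$ and $g_{k-1}=v_{k-2}$, so
\[
 \D(\tau v\cdot\D a)_k=(\D a_k)\D v_{k-1}+v_{k-2}\D^2a_k .
\]

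Substituting this back into the previous two displays gives
\[
 a_k\D^2v_k=\D^2(av)_k-(\D a_k)\D v_{k-1}-v_{k-2}\D^2a_k-(\D v)_{k-1}\D a_k .
\]
Combining the two equal cross terms produces the factor $-2(\D a_k)\D v_{k-1}$, which is exactly \eqref{eq:ch8-second-commutator}.

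I expect the main obstacle to be the index bookkeeping for the shifted state $v_{k-1}$. The backward shift must be tracked consistently so that the two cross terms coincide at the same index $k-1$ and combine. As a safeguard, I would verify the final identity by expanding both sides in the six values $a_k,a_{k-1},a_{k-2},v_k,v_{k-1},v_{k-2}$. For example, one can check that the coefficient of $a_{k-1}v_{k-1}$ is $0$ on both sides and that the coefficient of $a_kv_{k-2}$ is $1$ on both sides.
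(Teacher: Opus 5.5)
Your proof is correct and follows the paper's argument essentially step for step. Both apply the product rule to $\Delta v$, then to $a\,\Delta v$, take one further backward difference, and expand $\Delta(v_{k-1}\Delta a_k)=(\Delta a_k)\Delta v_{k-1}+v_{k-2}\Delta^2a_k$, after which the two equal cross terms combine into $-2(\Delta a_k)\Delta v_{k-1}$.
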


\begin{proof}
Apply~\eqref{eq:ch8-product-rule} twice. First,
\[
 a_k\D^2v_k
 =\D(a_k\D v_k)-(\D a_k)\D v_{k-1}.
\]
Apply~\eqref{eq:ch8-product-rule} to \(a_k\D v_k\):
\[
 a_k\D v_k
 =\D(a_kv_k)-v_{k-1}\D a_k.
\]
Taking one further difference gives
\[
 \D(a_k\D v_k)
 =\D^2(a_kv_k)-\D(v_{k-1}\D a_k).
\]
The product rule applied to the final term yields
\[
 \D(v_{k-1}\D a_k)
 =(\D v_{k-1})(\D a_k)+v_{k-2}\D^2a_k.
\]
Substitution proves~\eqref{eq:ch8-second-commutator}.
\end{proof}

The expansion provides a discrete pseudodifferential viewpoint: differences may be distributed between the state and the oscillatory carrier, but each redistribution produces explicit commutator terms. Boundary conditions and phase regularity determine whether the resulting expression is useful.

\section{Sharp obstructions and consistency tests}
\label{sec:ch8-obstructions}

Any proposed global theorem should be tested against exact identities before a proof is attempted.

\subsection{Zero-frequency boundary obstruction}

At \((x,t)=(0,0)\),
\begin{equation}
\label{eq:ch8-origin}
 F_N(0,0)=\frac1N\sum_{k=1}^{N}e_k.
\end{equation}
For first-order shaping,
\[
 \sum_{k=1}^{N}e_k=v_N-v_0.
\]
Thus a nonzero terminal trace creates an unavoidable contribution of order \(N^{-1}\). For higher-order shaping, testing against polynomial weights reveals the corresponding hierarchy of boundary traces.

\subsection{Full-period \(L^2\) obstruction}

The exact identity~\eqref{eq:ch8-x-orthogonality} gives
\[
 \int_a^{a+N}\abs{F_N(x,t)}^2\dd x
 =N^{-1}\norm{e}_{\ell^2}^2.
\]
If \(\abs{e_k}\asymp1\), the unnormalised \(L^2_x\) norm is of order one. Therefore no uniform positive decay exponent can hold in this norm for all such errors, irrespective of difference structure.

\subsection{Random bounded states}

A bounded random state may produce \(\norm{\D v}_{\ell^2}\asymp N^{1/2}\). The exact orthogonality identities then force the generic full-period scale. This test prevents an invalid theorem from replacing a state norm by \(\norm{v}_{\ell^\infty}\) without accounting for the number of independent oscillations.

\subsection{Interior-supported states}

If a state vanishes near both endpoints, all finite-record boundary terms disappear. Such examples isolate genuine interior oscillation. A theorem that still fails for interior-supported states cannot attribute the failure to endpoint traces.

\subsection{Periodic state orbits}

Deterministic sigma--delta recursions may possess periodic or nearly periodic trajectories. A correlation assumption that predicts square-root cancellation must be checked against these orbits. Periodicity can preserve large correlations and invalidate a mixing-based estimate even when the state is uniformly bounded.

\section{A consolidated hierarchy of proved estimates}
\label{sec:ch8-hierarchy}

The preceding results can be organised by the information retained about the coefficient sequence.

\subsection{Arbitrary coefficients}

For arbitrary \(e\), the following statements hold:
\begin{enumerate}[label=(\roman*)]
\item exact \(L^2_x\) orthogonality on every interval of length \(N\), Theorem~\ref{thm:ch8-x-orthogonality};
\item exact full-cell \(L^2\), Theorem~\ref{thm:ch8-full-cell-l2};
\item exact quadratic fourth moment, Proposition~\ref{prop:ch8-fourth-moment};
\item critical sixth-moment restriction bound, Corollary~\ref{cor:ch8-physical-l6};
\item local kernel identity and generic local \(L^2\) estimate, Proposition~\ref{prop:ch8-local-kernel} and Theorem~\ref{thm:ch8-generic-local-l2}.
\end{enumerate}

\subsection{First-order bounded states}

For \(e=\D v\) with bounded state:
\begin{enumerate}[label=(\roman*)]
\item fixed compact sets admit deterministic \(O(N^{-1})\) convergence;
\item the exact transfer formula~\eqref{eq:ch8-exact-transfer} separates boundary traces from the oscillatory state sum;
\item the two-index identity~\eqref{eq:ch8-double-sbp} identifies the mixed kernel difference required for a structured local \(L^2\) estimate;
\item state correlation or covariance bounds transfer directly to Fourier error bounds.
\end{enumerate}

\subsection{Endpoint-compatible high-order states}

For \(e=\D^rv\) with compatible boundary traces:
\begin{enumerate}[label=(\roman*)]
\item polynomial moments below order \(r\) vanish exactly;
\item smooth compact-scale weights yield \(O(N^{-r})\) error;
\item random or weakly correlated states satisfy high-order mean-square transfer estimates;
\item block-square estimates can be combined with decoupling through Theorem~\ref{thm:ch8-block-square-transfer}.
\end{enumerate}

\section{Critical-scale research target}
\label{sec:ch8-critical-target}

The natural unresolved target is a deterministic estimate that improves on both the variation bound and the arbitrary-coefficient restriction bound at a critical or near-critical scale. A suitable theorem must specify three ingredients:

\begin{enumerate}[label=(\alph*)]
\item a concrete state class generated by a stable one-bit sigma--delta recursion;
\item endpoint conditions or an explicit boundary-correction operator;
\item an oscillatory norm and parameter region compatible with exact orthogonality.
\end{enumerate}

A representative target has the form
\begin{equation}
\label{eq:ch8-critical-target}
 \norm{\E_N(\D^rv)}_{L^p(\Omega_N)}
 \le C_{p,r,\varepsilon}
 N^{-\delta+\varepsilon}\mathcal S_N(v),
\end{equation}
where \(\mathcal S_N(v)\) is a state quantity stronger than \(\norm{v}_{\ell^\infty}\) but verifiable from the quantiser dynamics. Possible choices suggested by the proved transfer principles include:
\[
 \mathcal S_N(v)
 =\left(\sum_h\abs{\operatorname{Corr}_v(h)}\right)^{1/2},
\]
a block-square norm, a covariance norm, or a frequency-localised Sobolev norm.

The exact identities in this chapter impose necessary consistency conditions on \(p\), \(\delta\), the normalisation of \(\Omega_N\), and the state class. In particular, the full-period \(L^2_x\) identity rules out uniform unnormalised decay for bounded errors of unit \(\ell^2\)-density, while endpoint-compatible high-order cancellation remains effective on fixed and subcritical regions.

\section{Chapter conclusion}
\label{sec:ch8-conclusion}

The local and averaged theory separates into three regimes. On fixed compact sets, finite summation by parts and bounded states give deterministic algebraic convergence, with order \(N^{-r}\) under compatible \(r\)th-order shaping. On full periodic cells, exact orthogonality and discrete restriction determine the baseline scale through \(\norm{e}_{\ell^2}\). Between these regimes, local kernels, state correlations, covariance bounds, block-square estimates, and commutator expansions provide precise transfer mechanisms.

The strongest conclusions established in this chapter are the exact quadratic moment identities, the local two-index summation-by-parts formula, the endpoint-compatible vanishing-moment theorem, deterministic high-order compact-set bounds, covariance-based mean-square estimates, and abstract oscillatory transfer principles. These results identify the additional state information required for a genuine critical-scale improvement and provide a rigorous foundation for the next stage of the analysis.

\chapter{Robustness, Boundary Engineering, and Verification}
\label{chap:robustness}

A finite-record theory must account for residual error, leakage, jitter, overload, imperfect terminal control, and block processing. The first two sections derive deterministic robustness bounds and boundary-engineering principles. The final section reports reproducible calculations that verify the recursions, the exact sharpness example, the closed form of the phase variation, and the second-order endpoint-compatible scaling.

\section{Robustness to residual error, leakage, and model mismatch}
\label{sec:14_robustness}

\subsection{Exact identities and physical implementations}

The ideal relation $e=\D^rv$ can be perturbed by comparator offsets, finite gain, coefficient mismatch, clock error, state leakage, or imperfect digital reset. Robust sigma-delta theory studies the resulting changes in stability and reconstruction \cite{GunturkLagariasVaishampayan2001,DaubechiesDeVoreGunturkVaishampayan2006}. Assuming a bounded perturbed state, this chapter derives deterministic extension bounds for several approximate identities and separates the contributions of state error, residual error, timing error, and boundary traces.

\subsection{Additive residuals}

Suppose
\begin{equation}
\label{eq:additive-residual-model}
 e_k=(\D v)_k+r_k,
\end{equation}
where $r$ is an unshaped residual.

\begin{theorem}[First-order residual bound]
\label{thm:first-order-residual}
Assume $v_0=0$ and $\norm{v}_{\ell^\infty}\le V$. Then, for arbitrary weights,
\begin{equation}
\label{eq:first-order-residual}
 \abs{\frac1N\sum_{k=1}^{N}e_kw_k}
 \le\frac{V}{N}\left(\abs{w_N}+\TV_N(w)\right)
 +\frac1N\sum_{k=1}^{N}\abs{r_k}\abs{w_k}.
\end{equation}
If $\abs{w_k}=1$, the residual contribution is $\norm{r}_{\ell^1}/N$.
\end{theorem}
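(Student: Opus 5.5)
The plan is to split the error linearly according to the model \eqref{eq:additive-residual-model}. Multiplying by $w_k$, summing over $1\le k\le N$ and dividing by $N$ gives
\[
 \frac1N\sum_{k=1}^{N}e_kw_k
 =\frac1N\sum_{k=1}^{N}(\D v)_kw_k
 +\frac1N\sum_{k=1}^{N}r_kw_k .
\]
I would then apply the triangle inequality to these two pieces and bound each one separately.

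For the shaped piece, I use the hypotheses $v_0=0$ and $\norm{v}_{\ell^\infty}\le V$. These are exactly the assumptions of \cref{thm:weighted-full} in its zero-initial form \eqref{eq:weighted-zero-initial}, which gives the bound $\frac{V}{N}(\abs{w_N}+\TV_N(w))$. If one prefers to argue from scratch, \cref{lem:sbp-first} produces the identity $v_Nw_N-\sum_{k=1}^{N-1}v_k\Dp w_k$, since the initial term $v_0w_1$ vanishes. Bounding each state value by $V$ then gives the same estimate.

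For the residual piece, the triangle inequality applied termwise gives $\frac1N\sum_k\abs{r_k}\abs{w_k}$ directly. Adding the two bounds yields \eqref{eq:first-order-residual}. When $\abs{w_k}=1$ for every $k$, the residual sum is $\sum_k\abs{r_k}=\norm{r}_{\ell^1}$, which gives the stated contribution $\norm{r}_{\ell^1}/N$.

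No step here is a genuine obstacle. The only point needing care is to confirm that the state recursion does not enter the residual term. The residual is not shaped, so summation by parts must be applied only to the $\D v$ part, and no trace of $r$ should appear as a boundary term.
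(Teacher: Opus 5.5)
Your proposal is correct and is essentially the paper's proof: split $e=\D v+r$, apply the zero-initial-state form \eqref{eq:weighted-zero-initial} of \cref{thm:weighted-full} to the shaped part, and use the termwise triangle inequality on the residual. The unimodular case then reduces the residual sum to $\norm{r}_{\ell^1}/N$, as you state.
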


\begin{proof}
Apply the weighted variation theorem to $\D v$ and the triangle inequality to $r$.
\end{proof}

If $\norm{r}_{\ell^1}=o(N)$, the residual vanishes after normalisation. If each $\abs{r_k}\le\rho$ with no further structure, the residual term is at most $\rho$ and creates a nondecaying floor. A small pointwise mismatch is therefore not automatically harmless on a long record.

\subsection{Residuals with their own noise-shaping order}

A perturbation may itself be partially shaped. Suppose
\[
 r=\D^s z
\]
with $s\ge1$ and bounded $z$. Then the residual can be estimated by another summation-by-parts argument. If $s=1$ and the initial state of $z$ is zero,
\[
 \abs{\frac1N\sum r_kw_k}
 \le\frac{\norm{z}_\infty}{N}
 \left(\abs{w_N}+\TV_N(w)\right).
\]
Thus two separately shaped components may be combined without producing an error floor.

This observation is useful for modelling element mismatch that has been dynamically scrambled or digitally noise-shaped. The relevant question is not only the size of the mismatch but also whether its accumulated sum is controlled.

\subsection{Leaky first-order shaping}

A simple leakage model is
\begin{equation}
\label{eq:leaky-model}
 e_k=v_k-\rho v_{k-1},
 \qquad 0<\rho\le1.
\end{equation}
Write
\[
 e_k=(v_k-v_{k-1})+(1-\rho)v_{k-1}.
\]
The first term is shaped and the second is an unshaped leakage component.

\begin{proposition}[Leaky weighted estimate]
\label{prop:leaky-weighted}
Assume $v_0=0$ and $\norm{v}_{\ell^\infty}\le V$. Then
\begin{equation}
\label{eq:leaky-weighted}
 \abs{\frac1N\sum_{k=1}^{N}e_kw_k}
 \le\frac{V}{N}\left(\abs{w_N}+\TV_N(w)\right)
 +(1-\rho)V\frac1N\sum_{k=1}^{N}\abs{w_k}.
\end{equation}
For unit-modulus weights,
\begin{equation}
\label{eq:leaky-unit}
 \abs{\frac1N\sum e_kw_k}
 \le\frac{V}{N}\left(1+\TV_N(w)\right)+(1-\rho)V.
\end{equation}
\end{proposition}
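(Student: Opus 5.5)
The plan is to reduce the leaky recursion to the additive-residual setting of \cref{thm:first-order-residual}. The algebraic splitting displayed just before the statement,
\[
 e_k=(v_k-v_{k-1})+(1-\rho)v_{k-1},
\]
has exactly the form \eqref{eq:additive-residual-model}, with shaped part $(\D v)_k$ and residual $r_k=(1-\rho)v_{k-1}$. This identity is purely algebraic and holds for every $k=1,\ldots,N$. It uses only the convention that $v_0$ is defined; here it equals zero.

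First, I would apply \cref{thm:weighted-full} in its zero-initial form \eqref{eq:weighted-zero-initial} to the shaped part. This gives the contribution $\frac{V}{N}(\abs{w_N}+\TV_N(w))$. Next, I would bound the residual term by the triangle inequality:
\[
 \frac1N\sum_{k=1}^N(1-\rho)\abs{v_{k-1}}\abs{w_k}\le(1-\rho)V\frac1N\sum_{k=1}^N\abs{w_k},
\]
using $0<\rho\le1$, so that $1-\rho\ge0$ and no absolute value is needed on the factor. The index shift matters here. Because $v_{k-1}$ ranges over $v_0,\ldots,v_{N-1}$, all of which are bounded by $V$ (and $v_0=0$), the shift causes no difficulty. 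Summing the two contributions yields \eqref{eq:leaky-weighted}.

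For unit-modulus weights, $\abs{w_N}=1$ and $\frac1N\sum\abs{w_k}=1$, and this immediately gives \eqref{eq:leaky-unit}. There is no genuine obstacle in the argument. The only step needing care is confirming that the residual is indexed by $v_{k-1}$ rather than $v_k$, so that the state bound covers every index that appears. One could in fact sharpen the bound slightly by dropping the $k=1$ term, since $v_0=0$, but the stated form does not require this.
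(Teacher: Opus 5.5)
Your proposal is correct and follows essentially the same route as the paper. The paper states the proposition immediately after the splitting $e_k=(\D v)_k+(1-\rho)v_{k-1}$ and leaves the proof implicit; the intended argument is exactly to treat $(1-\rho)v_{k-1}$ as the unshaped residual in \cref{thm:first-order-residual}, with the weighted variation theorem handling the shaped part and the triangle inequality together with $|v_{k-1}|\le V$ and $1-\rho\ge0$ handling the leakage term.
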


The term $(1-\rho)V$ is a leakage floor. To retain an $N^{-1}$ compact-set rate, one needs $1-\rho=O(N^{-1})$, unless additional cancellation is known. For a fixed physical leakage factor, increasing the record length eventually stops improving the deterministic bound.

\subsection{Perturbed weights and sampling jitter}

Suppose the intended weight is $w_k$ but the realised weight is
\[
 \widetilde w_k=w_k+\delta w_k.
\]
Then
\[
 \frac1N\sum e_k\widetilde w_k
 =\frac1N\sum e_kw_k+\frac1N\sum e_k\delta w_k.
\]
A pointwise error bound $\abs{e_k}\le E$ gives
\[
 \abs{\frac1N\sum e_k\delta w_k}
 \le E\norm{\delta w}_{\ell^\infty}.
\]
This estimate ignores shaping. A stronger result follows if the perturbation sequence is smooth in $k$:
\[
 \abs{\frac1N\sum (\D v)_k\delta w_k}
 \le\frac{V}{N}
 \left(\abs{\delta w_N}+\TV_N(\delta w)\right).
\]
Thus slowly varying timing or phase errors are filtered by the same discrete BV mechanism as the nominal weight.

For sampling jitter, let the actual node be
\[
 \widetilde\xi_k=\frac{k}{N}+\varepsilon_k
\]
and set
\[
 \widetilde w_k=e^{2\pi\ii\phi(\widetilde\xi_k)}.
\]
If $\phi$ is Lipschitz and $\abs{\varepsilon_k}\le\eta$, then
\[
 \abs{\widetilde w_k-w_k}\le2\pi\operatorname{Lip}(\phi)\eta.
\]
A useful bound must also control the variation of $\varepsilon_k$. If the jitter alternates rapidly, $\TV_N(\widetilde w)$ may be large even when the pointwise displacement is small.

\begin{proposition}[Jitter with bounded variation]
\label{prop:jitter-bv}
Assume $\phi\in C^1$ with $\norm{\phi'}_\infty\le L$, and assume the perturbed nodes remain in a fixed interval on which this bound holds. Then
\[
 \TV_N(\widetilde w)
 \le2\pi L\left(1+\TV_N(\varepsilon)\right),
\]
where
\[
 \TV_N(\varepsilon)=\sum_{k=1}^{N-1}\abs{\varepsilon_{k+1}-\varepsilon_k}.
\]
\end{proposition}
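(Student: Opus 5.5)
The bound will follow from the elementary chord estimate $\abs{e^{\ii\alpha}-e^{\ii\beta}}\le\abs{\alpha-\beta}$ combined with the mean value theorem for $\phi$. For each adjacent pair $k,k+1$ we write
\[
 \abs{\widetilde w_{k+1}-\widetilde w_k}
 \le 2\pi\abs{\phi(\widetilde\xi_{k+1})-\phi(\widetilde\xi_k)}
 \le 2\pi L\abs{\widetilde\xi_{k+1}-\widetilde\xi_k}.
\]
The second step is legitimate because both perturbed nodes lie in the fixed interval on which $\norm{\phi'}_\infty\le L$ holds. That interval is convex, so the segment joining the two nodes also lies in it, and the derivative bound applies along the whole segment.

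Next we split the node increment into its grid part and its jitter part:
\[
 \widetilde\xi_{k+1}-\widetilde\xi_k=\frac1N+(\varepsilon_{k+1}-\varepsilon_k),
 \qquad
 \abs{\widetilde\xi_{k+1}-\widetilde\xi_k}\le\frac1N+\abs{\varepsilon_{k+1}-\varepsilon_k}.
\]
We sum this over $k=1,\ldots,N-1$. The grid parts contribute $(N-1)/N\le1$, and the jitter parts contribute exactly $\TV_N(\varepsilon)$. Multiplying by $2\pi L$ gives $\TV_N(\widetilde w)\le 2\pi L(1+\TV_N(\varepsilon))$, which is the claim.

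The only real point of care is the use of the mean value theorem, since the perturbed nodes may leave $[0,1]$. This is precisely why the statement assumes a fixed interval containing all nodes on which the bound on $\phi'$ holds. No monotonicity of the nodes is needed, because we only bound the absolute value of each increment. Once $\TV_N(\widetilde w)$ is controlled, it can be inserted into \cref{thm:weighted-full} to bound the effect of jitter on the extension error.
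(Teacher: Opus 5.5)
Your proposal is correct and is essentially the paper's own argument: bound each adjacent node gap by $N^{-1}+\abs{\varepsilon_{k+1}-\varepsilon_k}$, apply the chord inequality and the mean-value estimate for $\phi$, and sum over $k$. Your added remarks, that the grid parts sum to $(N-1)/N\le 1$ and that the convexity of the interval justifies the mean value theorem, make explicit details the paper leaves to the reader.
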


\begin{proof}
The distance between adjacent perturbed nodes satisfies
\[
 \abs{\widetilde\xi_{k+1}-\widetilde\xi_k}
 \le N^{-1}+\abs{\varepsilon_{k+1}-\varepsilon_k}.
\]
Apply the mean-value estimate to the phase and sum.
\end{proof}

Random independent jitter typically has variation proportional to $N$ and is therefore not small in this deterministic metric. A probabilistic analysis may yield better average behaviour, but it requires a specified distribution.

\subsection{Coefficient saturation and overload events}

Let $G\subset\{1,\ldots,N\}$ be the set of indices at which the ideal shaped relation holds, and let $B$ be an overload set. Write
\[
 e=e^{\mathrm{sh}}+e^{\mathrm{bad}},
\]
where $e^{\mathrm{bad}}$ is supported on $B$. If $\abs{e_k^{\mathrm{bad}}}\le M$, then for unit weights
\[
 \abs{\frac1N\sum e_k^{\mathrm{bad}}w_k}
 \le M\frac{\abs{B}}{N}.
\]
A vanishing fraction of overload events is therefore sufficient for consistency, even if the individual events are not small. To preserve an $N^{-1}$ rate, however, the number of bad samples must remain $O(1)$ or their errors must cancel.

This separation is useful in circuit modelling. A rare large transition and a persistent small mismatch have different asymptotic effects. The former is controlled by event count; the latter may create a fixed floor.

\subsection{Approximate high-order shaping}

Suppose
\[
 e=\D^rv+r
\]
with trace-compatible $v$. For a sampled $C^r$ weight $W$,
\begin{equation}
\label{eq:approx-high-order}
 \abs{\frac1N\sum e_kW(k/N)}
 \le VN^{-r}\norm{W^{(r)}}_\infty
 +\frac1N\sum\abs{r_k}\abs{W(k/N)}.
\end{equation}
To observe the $N^{-r}$ rate, the residual average must be $O(N^{-r})$. This is a stringent requirement. A residual of size $N^{-s}$ per sample contributes $N^{-s}$ after normalisation, so it dominates whenever $s<r$.

The conclusion is practical: a nominally high-order quantizer can display only first-order or zero-order behaviour if leakage, mismatch, or imperfect termination is not reduced to the same asymptotic scale.

\subsection{Dither and probabilistic modelling}

Dither is often introduced to decorrelate quantization error from the input. Classical dithered quantization can convert deterministic distortion into a random error with tractable moments. In sigma-delta frame quantization, random dither has been used to obtain mean-square estimates without invoking a white-noise hypothesis. The present manuscript does not assume a stochastic model, but the deterministic formulas identify what a probabilistic analysis should estimate:

\begin{enumerate}[label=(\roman*)]
\item moments of the boundary traces;
\item moments or concentration of the interior pairing $\sum v_k\Dp^rw_k$;
\item the probability and magnitude of overload events;
\item correlations between the state and the oscillatory phase.
\end{enumerate}

A probabilistic theorem can improve an average norm even when the worst-case deterministic variation remains large. Its conclusion is an average-norm estimate under the stated stochastic model.

\subsection{A robustness checklist}

For a claimed experimental or simulated convergence law, the following quantities should be reported.

\begin{enumerate}[label=(\alph*)]
\item The maximum state magnitude and the state order.
\item The initial and final boundary traces.
\item The residual sequence $r=e-\D^rv$.
\item The fraction of overload or saturation samples.
\item The variation of timing and phase perturbations.
\item The range of $N$ over which a fitted slope is measured.
\end{enumerate}

Without these diagnostics, a measured $N^{-r}$ line may reflect a short pre-asymptotic range, and a failure to reach $N^{-r}$ may be incorrectly attributed to the Fourier analysis rather than to boundary or residual errors.

\section{Blockwise reset, smooth termination, and boundary engineering}
\label{sec:15_blockwise_reset}

\subsection{The purpose of block processing}

A long coefficient record may be divided into shorter blocks so that the internal state can be reset, transmitted, or terminated. This is attractive when a global terminal condition is difficult to enforce. It also permits parallel processing. The cost is the appearance of one boundary contribution per block.

Let
\[
 0=n_0<n_1<\cdots<n_B=N
\]
be a partition, and let
\[
 I_b=\{n_{b-1}+1,\ldots,n_b\},
 \qquad L_b=n_b-n_{b-1}.
\]
For clarity, begin with equal block length $L$, so $B=N/L$.

\subsection{First-order block identity}

Assume that on block $I_b$,
\[
 e_k=v_k^{(b)}-v_{k-1}^{(b)},
\]
with a local state initialised at
\[
 v_{n_{b-1}}^{(b)}=0.
\]
The state is allowed to end at a nonzero value.

\begin{proposition}[Blockwise first-order estimate]
\label{prop:blockwise-first}
If $\norm{v^{(b)}}_{\ell^\infty(I_b)}\le V$ for every block, then
\begin{align}
\label{eq:blockwise-first}
 \abs{\frac1N\sum_{k=1}^{N}e_kw_k}
 &\le\frac{V}{N}\sum_{b=1}^{B}\abs{w_{n_b}}
 +\frac{V}{N}\sum_{b=1}^{B}
 \sum_{k=n_{b-1}+1}^{n_b-1}\abs{w_{k+1}-w_k}.
\end{align}
For unit-modulus weights,
\begin{equation}
\label{eq:blockwise-first-unit}
 \abs{\frac1N\sum e_kw_k}
 \le\frac{VB}{N}+\frac{V}{N}\TV_N(w)
 =\frac{V}{L}+\frac{V}{N}\TV_N(w).
\end{equation}
\end{proposition}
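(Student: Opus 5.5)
The plan is to split the total sum along the partition and handle each block with the first-order summation-by-parts identity of \cref{lem:sbp-first}, applied on the shifted index interval $I_b$. Since the blocks are disjoint and cover $\{1,\ldots,N\}$,
\[
 \sum_{k=1}^{N}e_kw_k=\sum_{b=1}^{B}\sum_{k\in I_b}e_kw_k .
\]
On $I_b$ we have $e_k=v^{(b)}_k-v^{(b)}_{k-1}$. Relabelling $k=n_{b-1}+j$ with $1\le j\le L_b$, \cref{lem:sbp-first} gives
\[
 \sum_{k\in I_b}e_kw_k
 =v^{(b)}_{n_b}w_{n_b}-v^{(b)}_{n_{b-1}}w_{n_{b-1}+1}
 -\sum_{k=n_{b-1}+1}^{n_b-1}v^{(b)}_k(w_{k+1}-w_k).
\]
The local reset $v^{(b)}_{n_{b-1}}=0$ removes the initial term. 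We then take absolute values and use $\norm{v^{(b)}}_{\ell^\infty(I_b)}\le V$. This bounds the block contribution by $V\abs{w_{n_b}}+V\sum_{k=n_{b-1}+1}^{n_b-1}\abs{w_{k+1}-w_k}$. Summing over $b$ and dividing by $N$ yields \eqref{eq:blockwise-first}. This is just \cref{thm:weighted-full} with $v_0=0$, applied once per block.

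For the unit-modulus case, each terminal factor equals $1$, so the first sum is $VB/N=V/L$ when the blocks have equal length $L=N/B$. The interior sums over all blocks omit exactly the cross-boundary increments $\abs{w_{n_b+1}-w_{n_b}}$. Each omitted term is nonnegative, so the union of the interior sums is at most $\TV_N(w)$. This gives \eqref{eq:blockwise-first-unit}.

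There is no substantive obstacle here. The only point needing care is index bookkeeping. The interior range in each block must end at $n_b-1$, and the cross-block differences must be dropped, not double-counted, before comparing with $\TV_N(w)$. One should also check that a block of length $L_b=1$ is handled correctly: its interior sum is empty and only the terminal term remains, which is consistent with the displayed formula.
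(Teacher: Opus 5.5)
Your proof is correct and follows the paper's argument: apply the first-order weighted summation-by-parts estimate on each block, let the local reset $v^{(b)}_{n_{b-1}}=0$ remove the initial term, sum over blocks, and bound the disjoint interior variations by $\TV_N(w)$. Your remarks on dropping the cross-block increments $\abs{w_{n_b+1}-w_{n_b}}$, and on needing equal block lengths for $VB/N=V/L$, make explicit the bookkeeping the paper leaves implicit.
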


\begin{proof}
Apply the first-order weighted theorem on each block. The local initial term vanishes. Sum the resulting estimates. The interior variations of the blocks are disjoint and are bounded by the global variation.
\end{proof}

The reset cost is $V/L$. A fixed block length therefore creates a fixed error floor. To retain convergence, the block length must grow with $N$ or the terminal state must be cancelled.

\subsection{Choosing the block length}

For a fixed compact phase region, $\TV_N(w)=O(1)$. The bound becomes
\[
 \abs{\frac1N\sum e_kw_k}
 \lesssim\frac{V}{L}+\frac{V}{N}.
\]
If $L=N^\gamma$ with $0<\gamma\le1$, then the rate is $O(N^{-\gamma})$. Full first-order decay requires $L\asymp N$, which reduces to a single block. Smaller blocks trade asymptotic accuracy for local state management.

On a growing region with variation $O(N^\alpha)$, the estimate is
\[
 O(N^{-\gamma})+O(N^{\alpha-1}).
\]
Balancing the two terms suggests
\[
 \gamma=1-\alpha
\]
when $0<\alpha<1$. This gives $O(N^{\alpha-1})$. The block length needed to avoid dominating the variation error is therefore $L\gtrsim N^{1-\alpha}$.

\subsection{Exact terminal reset}

Suppose every block satisfies
\[
 v_{n_{b-1}}^{(b)}=v_{n_b}^{(b)}=0.
\]
Then the block endpoint term vanishes and
\[
 \sum_{k\in I_b}e_kw_k
 =-\sum_{k=n_{b-1}+1}^{n_b-1}v_k^{(b)}(w_{k+1}-w_k).
\]
Summing over blocks yields
\begin{equation}
\label{eq:blockwise-both-zero}
 \abs{\frac1N\sum e_kw_k}
 \le\frac{V}{N}\TV_N(w).
\end{equation}
The rate is the same as for a single globally initialised state, with the terminal amplitude term removed. First-order exact reset therefore permits parallel blocks without an asymptotic reset penalty.

Achieving this condition may require appended symbols or a short transition alphabet. The existence and length of a terminal control sequence are determined by the state recursion, the admissible input range, and the available output alphabet.

\subsection{High-order block compatibility}

For order $r$, each block has its own family of traces. If every block is trace-compatible, then
\[
 \sum_{k\in I_b}(\D^rv^{(b)})_kw_k
 =(-1)^r\sum_{k\in I_b^\circ}v_k^{(b)}(\Dp^rw)_k.
\]
The interior index sets are disjoint, so
\begin{equation}
\label{eq:blockwise-high-order-compatible}
 \abs{\frac1N\sum e_kw_k}
 \le\frac{V}{N}\sum_{k=1}^{N-r}\abs{\Dp^rw_k}.
\end{equation}
For sampled $C^r$ weights, this is $O(N^{-r})$ and does not depend on the number of blocks.

This conclusion is strong but the hypothesis is correspondingly strong. Each block must begin and end with the complete trace state prepared. For a second-order scheme, setting only the scalar state $v$ to zero is not enough; the first difference trace must also vanish.

\subsection{Imperfect block termination}

Let the $j$th right trace on block $b$ be bounded by $\eta_{j,b}$. With zero left traces, the total boundary error is bounded by
\begin{equation}
\label{eq:blockwise-imperfect-boundary}
 \frac1N\sum_{b=1}^{B}\sum_{j=0}^{r-1}
 \eta_{j,b}\abs{(\Dp^jw)_{n_b-j}}.
\end{equation}
For a sampled smooth weight,
\[
 \abs{\Dp^jw}\lesssim N^{-j}.
\]
If $\eta_{j,b}\le\eta_j$ uniformly and $B=N/L$, the $j$th trace contributes
\[
 O\!\left(\frac{\eta_j}{L}N^{-j}\right).
\]
To preserve the global $N^{-r}$ rate, one needs
\[
 \eta_j=O(LN^{-(r-j)}).
\]
The required accuracy depends on both the total record length and the block length.

\subsection{Windowed reconstruction as an alternative}

Instead of forcing the state to vanish, one can modify the weight near block boundaries. Let $\chi_b(k)$ be a window supported on block $b$ and consider
\[
 \sum_{k\in I_b}e_k\chi_b(k)w_k.
\]
If $\chi_b$ and its first $r-1$ discrete differences vanish at both ends, then the boundary terms generated by repeated summation by parts are suppressed even when the state traces are nonzero.

The price is that the windows alter the target extension. A partition of unity can reduce the bias:
\[
 \sum_{b=1}^{B}\chi_b(k)=1.
\]
Constructing windows with overlap, endpoint flatness, and a controlled sum of $r$th differences is a discrete analogue of smooth localisation in harmonic analysis. The technique resembles smooth frame-path termination, where the analysis vectors are arranged to approach zero with several derivatives \cite{BodmannPaulsenAbdulbaki2007}.

\subsection{A discrete polynomial window}

For a block of length $L$, define a rescaled coordinate
\[
 s=\frac{k-n_{b-1}}{L}.
\]
The polynomial
\[
 \chi(s)=s^r(1-s)^r
\]
vanishes to order $r$ at both endpoints. After normalisation and overlap, such functions can be used to build smooth windows. Their $r$th derivatives scale like $L^{-r}$ in the block coordinate. Therefore the cost of localisation is governed by $L^{-r}$ rather than $N^{-r}$.

This reveals a second block tradeoff. A local window of width $L$ naturally produces an $L^{-r}$ scale. To achieve a global $N^{-r}$ rate without state termination, the window width must be comparable with $N$. Multiple short windows generally lose powers.

\subsection{State flushing by an appended tail}

Another option is to append $T$ additional quantizer inputs after the original record. Let the original data occupy $1\le k\le N$, and choose control inputs for $N<k\le N+T$ so that the final traces vanish. The extension can then be formed with a weight that is zero on the appended tail, or the tail can be removed after its effect on the boundary formula is accounted for.

A rigorous flushing theorem requires three ingredients:

\begin{enumerate}[label=(\roman*)]
\item reachability of the zero trace state under the allowed alphabet;
\item a bound on the required tail length $T$;
\item stability of all intermediate states.
\end{enumerate}

For the first-order greedy one-bit scheme with arbitrary fixed input, exact reachability may fail under a rigid sign rule. Allowing a small set of terminal control values or modifying the final threshold can restore reachability. These are control-design questions rather than consequences of the summation identity.

\subsection{Parallel implementation and numerical experiments}

Block processing is attractive computationally because each block can be quantized independently. Numerical studies should compare at least four variants:

\begin{enumerate}[label=(\alph*)]
\item one uninterrupted state;
\item reset to zero at the beginning of each block only;
\item exact or approximate terminal reset;
\item smooth windowing without state reset.
\end{enumerate}

For each variant, plot the extension error against $N$, the block length $L$, and the number of blocks. The slope should be interpreted together with measured boundary traces. A blockwise scheme that appears to have high-order decay for moderate $N$ may eventually reach the predicted $1/L$ floor.

\subsection{Design principle}

The analysis leads to a simple principle:

\begin{quote}
Boundary management is part of the noise-shaping order. A scheme is not effectively order $r$ on a finite record unless its initialisation, termination, or reconstruction window removes all traces below order $r$.
\end{quote}

This principle links the algebraic theory in Chapter~\ref{chap:higher-order} to implementation choices. It also provides a clear way to separate a quantizer-design contribution from a Fourier-analysis contribution in a future journal paper.

\section{Verification of the proved finite-record results}
\label{sec:20_numerical_validation}

\subsection{Computational verification protocol}

Numerical experiments verify recursions, finite-difference identities, indexing conventions, constants, convergence slopes, and resonant trajectories. A reproducible experiment records the quantizer rule, initial state, input range, phase parameters, record length, and normalisation.

\subsection{Core experiment}

For a chosen input $u_k$, compute
\[
 q_k=Q(u_k+v_{k-1}),\qquad
 v_k=v_{k-1}+u_k-q_k.
\]
Verify numerically that
\[
 \max_k\abs{v_k}\le1
\]
and
\[
 \max_k\abs{u_k-q_k-(v_k-v_{k-1})}
\]
is at machine precision.

Next evaluate
\[
 E_N(x,t)=\abs{\E_Nu(x,t)-\E_Nq(x,t)}
\]
and compare it with
\[
 B_N(x,t)=\frac1N\left[1+2\pi J(x,t)\right].
\]
The computed ratio satisfies $E_N/B_N\le1$ up to floating-point error.

A reproducible example uses
\[
 u_k=0.55\sin\!\left(2\pi\left(3\frac{k}{N}+0.17\right)\right),
 \qquad (x,t)=(0.73,-0.41).
\]
For $N=2^6,\ldots,2^{13}$, the computed state remained below $0.9999$ in magnitude and the largest observed ratio between the error and the theoretical $J$-bound was below $0.039$. The fitted slope was approximately $-1.41$, reflecting additional cancellation for the selected input and parameter pair. The zero-input experiment below recovers the uniform sharpness exponent $-1$.

\begin{figure}[ht]
\centering
\includegraphics[width=0.78\textwidth]{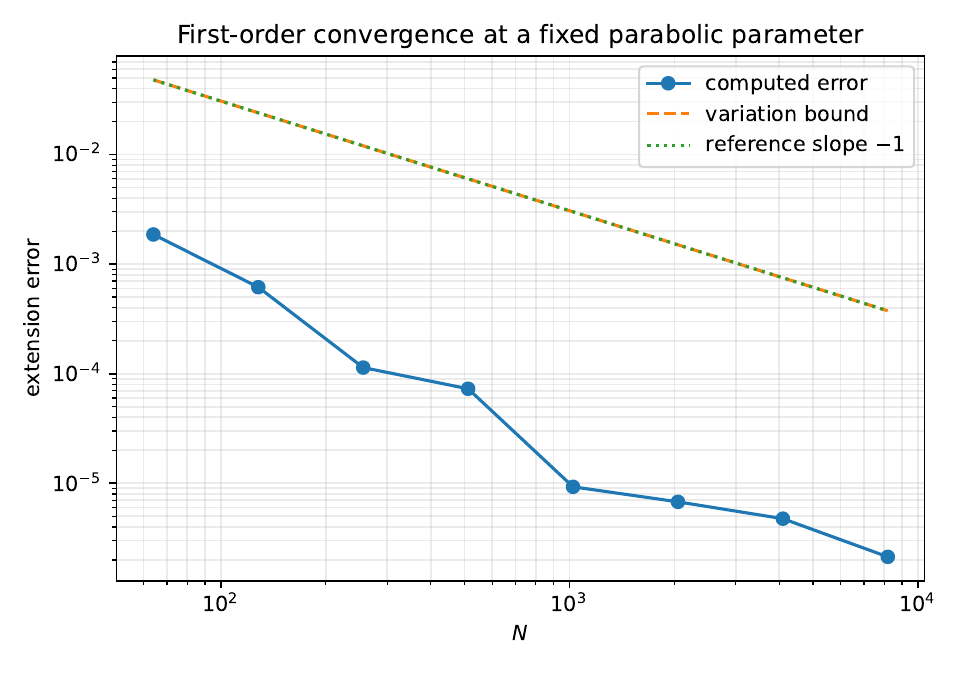}
\caption{Computed first-order error and the deterministic variation bound at a fixed parabolic parameter. The bound has the guaranteed $N^{-1}$ scale, while the selected input exhibits additional cancellation.}
\label{fig:first-order-numerical}
\end{figure}

\subsection{Convergence-order estimation}

Use a geometric sequence of lengths, for example
\[
 N=2^8,2^9,\ldots,2^{16}.
\]
At fixed $(x,t)$, fit a line to
\[
 \log E_N\quad\text{against}\quad\log N.
\]
A generic first-order trajectory has reference slope $-1$. Faster slopes identify additional terminal or oscillatory cancellation for the selected trajectory.

For the zero input and odd $N$, evaluate at $(0,0)$. The exact value is $1/N$, so the fitted slope should be exactly $-1$ up to numerical precision.

\begin{figure}[ht]
\centering
\includegraphics[width=0.75\textwidth]{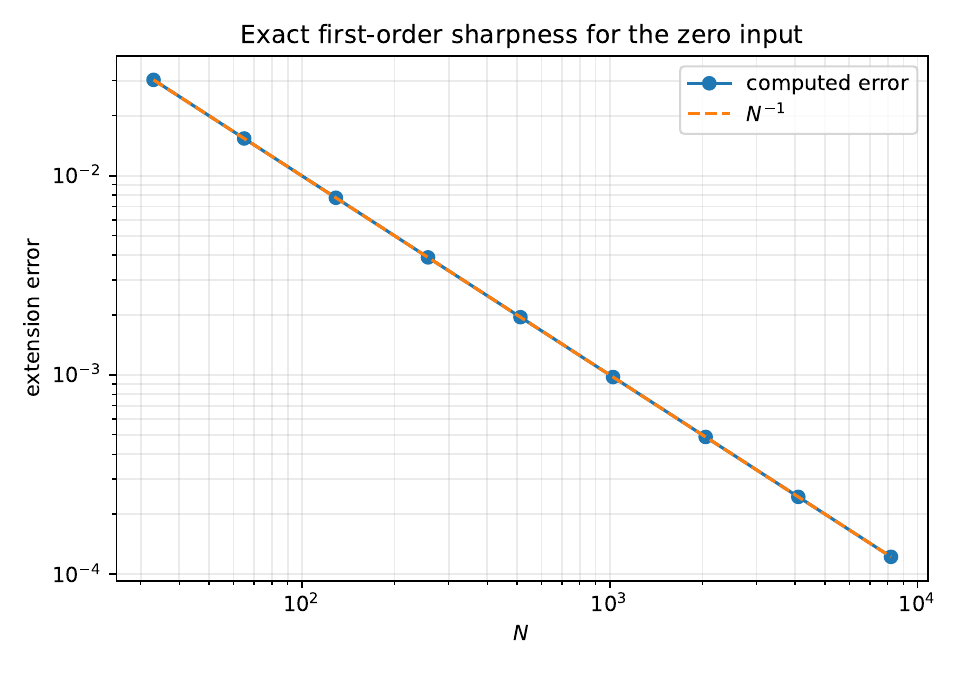}
\caption{Exact sharpness experiment for the zero input and odd record lengths. The computed values coincide with $N^{-1}$.}
\label{fig:zero-input-sharpness}
\end{figure}

\subsection{Testing the closed form of $J$}

Compute $J$ in two ways:

\begin{enumerate}[label=(\roman*)]
\item numerical quadrature of $\abs{x+2t\xi}$;
\item the piecewise formula in \cref{prop:J-closed}.
\end{enumerate}

Test points should include $t=0$, same-sign endpoints, and sign-changing derivatives. Near the transition $x(x+2t)=0$, both formulas should agree continuously.

\begin{figure}[ht]
\centering
\includegraphics[width=0.72\textwidth]{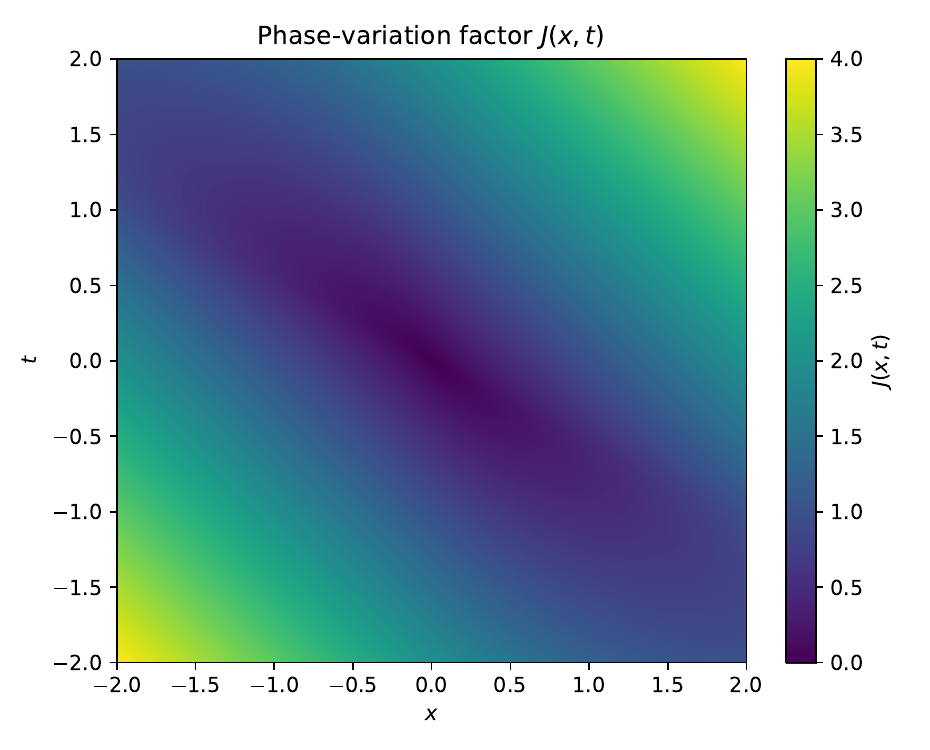}
\caption{The phase-variation factor $J(x,t)=\int_0^1\abs{x+2t\xi}\,\dd\xi$ on a bounded parameter square. The change in formula occurs where the derivative crosses zero inside the sampling interval.}
\label{fig:J-heatmap}
\end{figure}

\subsection{Exact sine variation}

Compare three bounds:
\[
 B_{\mathrm{simple}}=\frac1N[1+2\pi(\abs{x}+\abs{t})],
\]
\[
 B_J=\frac1N[1+2\pi J(x,t)],
\]
and
\[
 B_{\sin}=\frac1N\left[1+2\sum_{k=1}^{N-1}\abs{\sin(\pi\delta_k)}\right].
\]
The sine bound is the most faithful to the sampled weights. The $J$ bound is smooth and asymptotically natural. The simple bound is easiest to state.

\subsection{State-phase alignment}

To study the sharpness of the variation step, compute
\[
 A_k=v_k\Dp w_k.
\]
Plot or tabulate the arguments of $A_k$. If they are nearly aligned, the triangle inequality is close to equality. If their arguments spread around the circle, actual cancellation is stronger.

For realisable state trajectories, this experiment may reveal whether adversarial alignment occurs at specific rational parameters or inputs.

\subsection{Growing regions}

For each $N$, choose a grid in
\[
 \Omega_{N^\alpha}
\]
for several values of $\alpha$. Record the maximum error and compare the fitted exponent with $\alpha-1$. At $\alpha=1$, investigate whether the maximum remains bounded away from zero.

A uniform grid may miss narrow resonances. Adaptive refinement around large values is preferable.

\subsection{High-order synthetic validation}

Before implementing a physical high-order quantizer, one can validate the finite-interval identity with a synthetic bounded state. Choose $v_k$ directly, compute $e=\D^rv$, and evaluate both sides of \eqref{eq:repeated-sbp}. This test is effective for detecting off-by-one errors.

To test the endpoint-compatible theorem, construct a state that vanishes together with the required discrete traces. One method is to sample a smooth compactly supported profile and pad it with zeros. The resulting error need not be a one-bit quantization error, but it validates the analytic identity.

For the experiment in \cref{fig:second-order-numerical}, the state is zero at the first two and last two relevant grid locations, and its interior values sample a fourth power of a sine profile. The resulting second difference satisfies the exact finite-record compatibility conditions. A least-squares fit over $N=2^6,\ldots,2^{13}$ gives a slope of approximately $-1.991$, in agreement with the predicted second-order scale.

\begin{figure}[ht]
\centering
\includegraphics[width=0.78\textwidth]{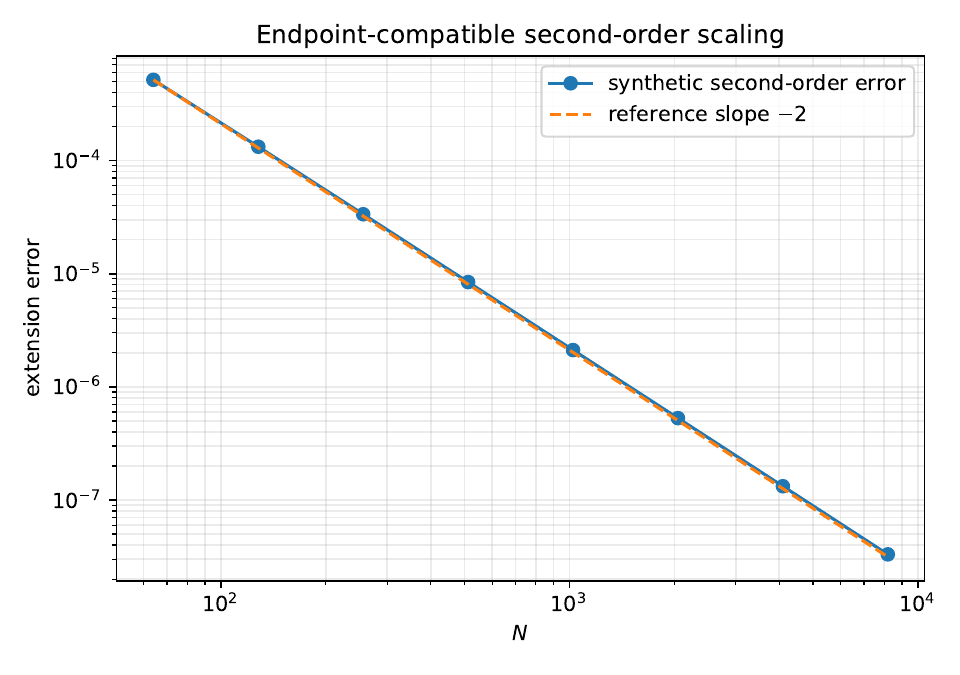}
\caption{Synthetic second-order validation with exact endpoint compatibility. The fitted exponent is approximately $-1.991$. This experiment verifies the analytic scaling but does not by itself construct a one-bit second-order quantizer with terminal control.}
\label{fig:second-order-numerical}
\end{figure}

\subsection{Monte Carlo decorrelation experiments}

For random inputs, estimate state correlations
\[
 C(h)=\frac1{N-h}\sum_{k=1}^{N-h}v_{k+h}v_k.
\]
Compare different input distributions, amplitudes, and deterministic sinusoids. Slow correlation decay or periodic peaks indicate possible resonances. Fast decay provides numerical evidence for the probabilistic hypotheses stated in Chapter~\ref{chap:lp-oscillation}.

\subsection{Reporting standards}

Every figure or table should include:

\begin{enumerate}[label=(\alph*)]
\item exact recursion and zero convention;
\item input definition and amplitude;
\item $N$ and phase grid;
\item whether $N$ is odd or even;
\item normalised or unnormalised extension;
\item measured state bound;
\item theoretical bound used for comparison;
\item numerical precision and software version.
\end{enumerate}

Appendix E provides Python and MATLAB-style code templates.

\chapter{Consolidated Derivations, Main Results, and Conclusions}
\label{chap:consolidated-results}

\section{Purpose of the consolidated chapter}

The preceding chapters develop the theory in its natural logical order. This chapter collects the principal assumptions, derivations, theorem dependencies, and scale relations in a compact report-style form. It also identifies the results that form a coherent focused journal manuscript.

\section{Master notation and standing assumptions}

Let $N\ge1$ and let $u=(u_k)_{k=1}^N$ be a real input record. A one-bit output record satisfies $q_k\in\{-1,+1\}$. The normalised polynomial Fourier extension associated with a phase $\phi_\theta$ is
\[
 \mathcal E_Na(\theta)
 =\frac1N\sum_{k=1}^Na_k\exp\!\left(2\pi\mathrm i\phi_\theta(k/N)\right).
\]
The first-order theory assumes
\begin{equation}
 u_k-q_k=v_k-v_{k-1},\qquad v_0=0,\qquad
 \max_{0\le k\le N}|v_k|\le V.
 \label{eq:master-first-order}
\end{equation}
The high-order theory assumes
\begin{equation}
 u_k-q_k=(\Delta^rv)_k,
 \label{eq:master-high-order}
\end{equation}
with all negative-index values required by the difference convention explicitly specified. Endpoint compatibility means that the initial and terminal traces occurring in the repeated finite summation formula vanish. Approximate compatibility means that these traces are small enough to be retained as controlled remainder terms.

\section{Complete first-order derivation}

\subsection{Stability of the greedy recursion}

The greedy first-order recursion is
\[
 q_k=Q(u_k+v_{k-1}),\qquad
 v_k=v_{k-1}+u_k-q_k,
\]
where $Q(y)=1$ for $y\ge0$ and $Q(y)=-1$ for $y<0$. Suppose that $|u_k|\le1$ and $|v_{k-1}|\le1$. Then $y_k=u_k+v_{k-1}$ belongs to $[-2,2]$. If $y_k\ge0$, then $v_k=y_k-1\in[-1,1]$. If $y_k<0$, then $v_k=y_k+1\in[-1,1)$. Since $v_0=0$, induction proves
\[
 |v_k|\le1\qquad (0\le k\le N).
\]
The proof uses no probabilistic model and no Fourier argument. It is an invariant-interval proof of deterministic stability, consistent with the classical mathematical theory of stable sigma-delta modulation \cite{DaubechiesDeVore2003,Gunturk2003CPAM,Gunturk2012}.

\subsection{Discrepancy identity}

For every interval $[m,n]\subseteq[1,N]$,
\begin{align}
 \sum_{k=m}^n(u_k-q_k)
 &=\sum_{k=m}^n(v_k-v_{k-1}) \\
 &=v_n-v_{m-1}.
\end{align}
Consequently,
\[
 \left|\sum_{k=m}^n(u_k-q_k)\right|\le2V.
\]
For a prefix interval with $m=1$ and $v_0=0$, the sharper bound is $V$. The estimate shows that first-order shaping controls accumulated error rather than pointwise error.

\subsection{Weighted variation theorem}

For arbitrary complex weights $w_1,\ldots,w_N$, exact summation by parts gives
\begin{align}
 \sum_{k=1}^N(u_k-q_k)w_k
 &=v_Nw_N-v_0w_1
 +\sum_{k=1}^{N-1}v_k(w_k-w_{k+1}).
 \label{eq:consolidated-weighted-identity}
\end{align}
Under \eqref{eq:master-first-order},
\begin{equation}
 \left|\frac1N\sum_{k=1}^N(u_k-q_k)w_k\right|
 \le\frac{V}{N}
 \left(|w_N|+\sum_{k=1}^{N-1}|w_{k+1}-w_k|\right).
 \label{eq:consolidated-weighted-bound}
\end{equation}
The proof is exact until the final application of the triangle inequality. The quantity in parentheses is an endpoint amplitude plus the discrete total variation of the weight.

\subsection{Absolutely continuous and BV phases}

Let $w_k=\exp(2\pi\mathrm i\phi(k/N))$. If $\phi$ is absolutely continuous, then
\[
 |w_{k+1}-w_k|
 \le2\pi\int_{k/N}^{(k+1)/N}|\phi'(\xi)|\,\mathrm d\xi.
\]
Summation and \eqref{eq:consolidated-weighted-bound} yield
\begin{equation}
 |\mathcal E_N^\phi u-\mathcal E_N^\phi q|
 \le\frac{V}{N}
 \left[1+2\pi\int_0^1|\phi'(\xi)|\,\mathrm d\xi\right].
 \label{eq:consolidated-ac-phase}
\end{equation}
If $\phi$ has bounded variation, the integral is replaced by the total variation of a suitable representative. This extension follows from the composition estimate for the Lipschitz map $s\mapsto e^{2\pi\mathrm i s}$ and standard BV theory \cite{AmbrosioFuscoPallara2000,EvansGariepy2015}.

\section{Parabolic result catalogue}

For $\phi_{x,t}(\xi)=x\xi+t\xi^2$, define
\[
 J(x,t)=\int_0^1|x+2t\xi|\,\mathrm d\xi.
\]
The principal first-order theorem is
\begin{equation}
 |\mathcal E_Nu(x,t)-\mathcal E_Nq(x,t)|
 \le\frac{V}{N}[1+2\pi J(x,t)].
 \label{eq:consolidated-parabolic}
\end{equation}
The exact continuous variation factor is
\[
 J(x,0)=|x|,
\]
\[
 J(x,t)=|x+t|\quad\text{when }t\ne0\text{ and }x(x+2t)\ge0,
\]
and
\[
 J(x,t)=\frac{x^2+(x+2t)^2}{4|t|}
 \quad\text{when }t\ne0\text{ and }x(x+2t)<0.
\]
The formula is obtained by locating the unique zero $\xi_*=-x/(2t)$ when it lies inside $(0,1)$ and integrating the absolute value separately on $[0,\xi_*]$ and $[\xi_*,1]$.

On a compact set $K\subset\mathbb R^2$,
\[
 \|\mathcal E_Nu-\mathcal E_Nq\|_{L^\infty(K)}
 \le\frac{V}{N}
 \left(1+2\pi\sup_{(x,t)\in K}J(x,t)\right).
\]
Thus the quantized extension converges uniformly to the unquantized extension at rate $N^{-1}$ on every fixed compact parameter set.

For $\Omega_R=[-R,R]^2$ and $1\le p<\infty$,
\[
 \|\mathcal E_Nu-\mathcal E_Nq\|_{L^p(\Omega_R)}
 \le(4R^2)^{1/p}\frac{V}{N}(1+4\pi R).
\]
The estimate is a direct consequence of the local $L^\infty$ bound and the volume of the parameter region. It does not use restriction theory.

\section{Sharpness of the first-order rate}

For the zero input, the greedy rule with $Q(0)=1$ generates
\[
 q_k=(-1)^{k+1},\qquad
 v_k=\begin{cases}-1,&k\text{ odd},\\0,&k\text{ even}.
 \end{cases}
\]
At $(x,t)=(0,0)$, every phase weight equals one. Therefore,
\[
 \mathcal E_Nu(0,0)-\mathcal E_Nq(0,0)
 =\frac1N\sum_{k=1}^N(u_k-q_k)
 =\frac{v_N}{N}.
\]
For odd $N$, the magnitude is exactly $1/N$. Hence no bound of order $o(N^{-1})$ can hold uniformly over all admissible inputs under only the first-order stability assumptions. The statement concerns uniform worst-case behaviour. It does not exclude faster convergence for a fixed input, for even record lengths, under terminal reset, or under additional cancellation conditions.

\section{Higher-order finite-record derivation}

\subsection{Why boundary traces appear}

Suppose $e=\Delta^rv$. On an infinite sequence, a formal Fourier calculation multiplies the transform of $v$ by $(1-e^{-\mathrm i\omega})^r$. A finite record is different. Repeated summation by parts transfers differences to the weight but also produces terms supported at the initial and terminal indices. These terms are not negligible merely because the interior weight is smooth.

For $r=2$, direct calculation gives a representative identity of the form
\begin{align}
 \sum_{k=1}^N(\Delta^2v)_kw_k
 &=\sum_{k=1}^{N-2}v_k(\Delta_+^2w)_k
 +\mathcal B_{2,N}(v,w),
\end{align}
where $\mathcal B_{2,N}$ contains the required initial and terminal traces. The exact indexing convention is stated and proved in \cref{chap:higher-order} and Appendix A.

\subsection{Endpoint-compatible rate}

Assume that all boundary traces in the order-$r$ finite identity vanish. Then
\[
 \sum_{k=1}^N(\Delta^rv)_kw_k
 =(-1)^r\sum_kv_k(\Delta_+^rw)_k
\]

after restriction to the valid interior indices. If $w_k=g(k/N)$ with $g\in C^r([0,1])$, the integral representation of finite differences gives
\[
 |(\Delta_+^rg)(k/N)|
 \le N^{-r}\int_{[0,1]^r}
 \left|g^{(r)}\!\left(\frac{k+s_1+\cdots+s_r}{N}\right)\right|
 \,\mathrm ds_1\cdots\mathrm ds_r.
\]
Summing over $k$ gives a bound of order $N^{1-r}$ for the unnormalised interior sum. The extension has an additional factor $N^{-1}$. Consequently,
\[
 \left|\frac1N\sum_{k=1}^N(\Delta^rv)_kg(k/N)\right|
 \le C_r\|v\|_\infty N^{-r}\|g^{(r)}\|_{L^1}
\]

after harmless endpoint enlargements. This is the mechanism behind the endpoint-compatible $O(N^{-r})$ theorem.

For oscillatory weights $g(\xi)=e^{2\pi\mathrm i\phi(\xi)}$, the derivative $g^{(r)}$ is a finite Bell-polynomial combination of $\phi',\ldots,\phi^{(r)}$. On compact parameter sets for polynomial phases, these derivatives are uniformly bounded. The high-order compact-set estimate follows.

\subsection{Approximate compatibility}

If a boundary trace does not vanish, its contribution must remain in the estimate. A typical order-$r$ bound has the form
\[
 |\mathcal E_Ne|
 \le C_r\|v\|_\infty N^{-r}
 +\frac1N\sum_{j=0}^{r-1}
 |\tau_{j,N}(v)|\,|\gamma_{j,N}(w)|,
\]
where $\tau_{j,N}$ denotes a state trace and $\gamma_{j,N}$ denotes the associated weight trace. High-order decay is preserved only if the second term is of order $N^{-r}$ or smaller. This formula explains the importance of smooth termination, reset sequences, and blockwise processing in frame quantization \cite{BodmannPaulsenAbdulbaki2007,BlumEtAl2010}.

\section{Boundary Correction and Fractional Rates}

The finite-record boundary terms can be handled either dynamically or analytically. Dynamic handling imposes endpoint compatibility on the state trajectory. Analytical handling records the boundary traces and adds their exact contribution to the reconstruction. For $e=\Delta^rv$, define
\[
 \mathcal B_{N,r}(v,w)
 =\sum_{j=0}^{r-1}(-1)^j
 \left[
 (\Delta^{r-1-j}v)_{N-j}(\Delta_+^jw)_{N-j}
 -(\Delta^{r-1-j}v)_0(\Delta_+^jw)_1
 \right].
\]
The boundary-corrected extension is
\[
 \mathcal R_{N,r}(q;v,w)
 =\frac1N\sum_{k=1}^Nq_kw_k+\frac1N\mathcal B_{N,r}(v,w).
\]
The exact error identity is
\[
 \frac1N\sum_{k=1}^Nu_kw_k-\mathcal R_{N,r}(q;v,w)
 =\frac{(-1)^r}{N}\sum_{k=1}^{N-r}v_k(\Delta_+^rw)_k.
\]
This result is proved in \cref{thm:boundary-corrected-reconstruction}. It has three immediate consequences.

\begin{enumerate}[label=(\roman*)]
\item A bounded state and a sampled $C^r$ weight give a corrected error of order $N^{-r}$ without terminal reset.
\item Zero initialisation eliminates all left-trace metadata. Only $r$ terminal trace values remain, independent of the record length.
\item Inexact metadata enters through an explicit weighted trace-error sum, so the precision required to preserve a target rate can be calculated before implementation.
\end{enumerate}

The fractional regularity theorem gives a second refinement. If $W\in C^{r-1,\alpha}$ with $0<\alpha\le1$, then
\[
 |\Delta_{1/N}^rW|\le
 N^{-(r-1+\alpha)}[W^{(r-1)}]_{C^{0,\alpha}}.
\]
Under endpoint compatibility, or after exact boundary correction,
\[
 \left|\frac1N\sum_{k=1}^N(u_k-q_k)W(k/N)\right|
 \le V[W^{(r-1)}]_{C^{0,\alpha}}N^{-(r-1+\alpha)}.
\]
This estimate places the finite-record theory on the standard modulus-of-smoothness scale used in constructive approximation \cite{DeVoreLorentz1993,DitzianTotik1987,Triebel1983}. It also shows that high-order noise shaping and test-function regularity contribute jointly to the final rate.

\section{Polynomial and multidimensional consequences}

For a degree-$d$ polynomial phase
\[
 \phi_{\boldsymbol\alpha}(\xi)
 =\alpha_1\xi+\alpha_2\xi^2+\cdots+\alpha_d\xi^d,
\]
the first-order estimate becomes
\[
 |\mathcal E_Nu(\boldsymbol\alpha)-\mathcal E_Nq(\boldsymbol\alpha)|
 \le\frac{V}{N}
 \left[1+2\pi\int_0^1
 \left|\sum_{j=1}^dj\alpha_j\xi^{j-1}\right|\,\mathrm d\xi\right].
\]
On a fixed coefficient box, the integral is uniformly bounded. Under order-$r$ endpoint compatibility, the rate improves to $N^{-r}$, with constants controlled by derivatives of the exponential weight through order $r$.

For a lattice array whose error has a discrete divergence representation,
\[
 e_{\mathbf k}=\sum_{j=1}^d\Delta_jv^{(j)}_{\mathbf k},
\]
discrete summation by parts in each coordinate yields an interior term involving directional differences of the weight and a boundary flux. If the vector state is bounded and the boundary flux is controlled, the normalised error is bounded by a sum of directional variations. The convergence rate remains first order in the grid spacing, while the constant depends on dimension and phase derivatives.

\section{Growing regions and scale transition}

For the parabolic box $|x|,|t|\le N^\alpha$, the simple first-order bound gives
\[
 \sup_{|x|,|t|\le N^\alpha}
 |\mathcal E_Nu-\mathcal E_Nq|
 \lesssim N^{\alpha-1}.
\]
Three regimes follow.
\begin{enumerate}[label=(\roman*)]
\item If $0\le\alpha<1$, the total-variation method gives decay.
\item If $\alpha=1$, it gives only a bounded estimate.
\item If $\alpha>1$, it does not provide a useful uniform approximation theorem.
\end{enumerate}
This transition is methodological. It does not prove that the error grows at critical or supercritical scales. It shows that cancellation must be used instead of taking absolute values of every adjacent phase increment.

Under order-$r$ endpoint compatibility, a derivative count suggests the subcritical scale
\[
 N^{-r}(1+N^\alpha)^r\sim N^{-r(1-\alpha)}.
\]
The rate still decays for $\alpha<1$. At the critical scale, further oscillatory information is again required.

\section{Local $L^p$, orthogonality, and external harmonic analysis}

Orthogonality in a linear phase parameter gives exact $L^2$ identities for complete periods. Such identities depend on the chosen normalisation and parameter domain. They do not by themselves produce a gain over the pointwise first-order bound for arbitrary shaped coefficients. Higher moments lead to Diophantine systems related to discrete restriction and Vinogradov mean values \cite{Bourgain1993I,HuLi2011,Wooley2012,BourgainDemeterGuth2016}.

Decoupling separates frequency intervals and controls an $L^p$ norm by a square function with $\ell^2$ coefficient structure \cite{BourgainDemeter2015,Demeter2020}. A sigma-delta state bound is naturally an $\ell^\infty$ or negative discrete Sobolev constraint. A direct decoupling gain therefore requires an additional transfer argument. The conditional results in \cref{chap:lp-oscillation} state explicitly which state-sum, covariance, or block-square hypotheses would be sufficient. None of those hypotheses is silently assumed in the first-order or endpoint-compatible theorems.

\section{Robustness decomposition}

Suppose the implemented error satisfies
\[
 u-q=\Delta v+\eta,
\]
where $\eta$ is an unshaped residual. The extension error separates as
\[
 \mathcal E_N(u-q)=\mathcal E_N(\Delta v)+\mathcal E_N\eta.
\]
The shaped term receives the variation gain. The residual satisfies
\[
 |\mathcal E_N\eta|
 \le\frac1N\sum_{k=1}^N|\eta_k|.
\]
A uniform residual of size $\varepsilon$ therefore creates an $O(\varepsilon)$ floor. If the residual has its own difference representation, another summation-by-parts gain is possible.

Leaky shaping changes the recursion and produces a geometric memory term. Sampling jitter perturbs the weights and can be controlled by a Lipschitz estimate in the phase. Saturation events are isolated by separating the regular and overloaded index sets. Each model introduces a specific remainder rather than invalidating the exact shaped component.

\section{Boundary engineering and block processing}

Blockwise reset localises the terminal contribution. If a record is divided into blocks of length $L$, first-order summation by parts is applied on each block. The resulting error consists of interior block variations and one boundary term per block. Exact reset eliminates the boundary terms. Approximate reset bounds them by the terminal mismatch.

At high order, reset must control a hierarchy of traces rather than a single state value. Smooth termination and appended correction tails are designed for this purpose. The analytic theorem and the quantizer construction should remain separate: the theorem states which traces are required, while the construction determines whether a one-bit or multibit rule can realise them.

\section{Numerical verification of exact statements}

The numerical experiments in \cref{chap:robustness} verify four features.
\begin{enumerate}[label=(\alph*)]
\item The first-order recursion satisfies the difference identity to machine precision and remains inside the invariant interval.
\item The computed parabolic extension error remains below the deterministic variation bound.
\item For the zero input and odd $N$, the error at the origin equals $1/N$.
\item A synthetic second-order state with exact endpoint compatibility exhibits an empirical slope close to $-2$.
\end{enumerate}
The fourth calculation verifies the finite-record analysis but does not construct a physical one-bit high-order quantizer with terminal control. This distinction is essential.

\section{Detailed treatment of nonzero initial states}

The zero-initial-state convention is natural for a newly started quantizer, but it should not be built into the general algebra. Let
\[
 e_k=v_k-v_{k-1},\qquad |v_k|\le V_+\quad(1\le k\le N),
\]
and assume only that $|v_0|\le V_0$. The exact identity becomes
\[
 \sum_{k=1}^Ne_kw_k
 =v_Nw_N-v_0w_1+\sum_{k=1}^{N-1}v_k(w_k-w_{k+1}).
\]
Hence
\begin{equation}
 \left|\frac1N\sum_{k=1}^Ne_kw_k\right|
 \le\frac1N\left(
 V_+|w_N|+V_0|w_1|+V_+\sum_{k=1}^{N-1}|w_{k+1}-w_k|
 \right).
 \label{eq:nonzero-initial-general}
\end{equation}
For unimodular phase weights, $|w_1|=|w_N|=1$, so
\[
 |\mathcal E_Nu-\mathcal E_Nq|
 \le\frac1N\left(V_0+V_++V_+\operatorname{TV}_N(w)\right).
\]
If the same bound $V$ applies to every state value, the endpoint constant is $2V$ rather than $V$. The asymptotic rate on fixed compact sets is unchanged, but the distinction matters in finite-length estimates.

A warm-started or block-processed quantizer may have a nonzero $v_0$. In that setting, the initial term represents information inherited from the preceding block. Three strategies are available. The first is exact reset, which sets $v_0=0$. The second is bounded carry-over, which keeps the term in \eqref{eq:nonzero-initial-general}. The third is state-aware reconstruction, in which the known value $v_0w_1/N$ is corrected explicitly. The third strategy removes the initial boundary contribution without changing the one-bit record.

For interval sums, the same distinction gives
\[
 \sum_{k=m}^ne_k=v_n-v_{m-1}.
\]
A prefix sum is bounded by $V_++V_0$ unless $v_0=0$. An interior interval is bounded by $2V_+$ when both endpoints belong to the same invariant state set. Thus the often-quoted prefix discrepancy bound $V$ is not a property of the difference representation alone. It also uses the initial condition.

\section{Worked second-order finite-record derivation}

The second-order case is sufficiently important to justify a line-by-line derivation. Let
\[
 e_k=(\Delta^2v)_k=v_k-2v_{k-1}+v_{k-2}.
\]
The first summation-by-parts step is applied to $\Delta v$:
\begin{align}
 \sum_{k=1}^N(\Delta^2v)_kw_k
 &=(\Delta v)_Nw_N-(\Delta v)_0w_1
 +\sum_{k=1}^{N-1}(\Delta v)_k(w_k-w_{k+1}).
 \label{eq:second-order-step-one-report}
\end{align}
Since $w_k-w_{k+1}=-(\Delta_+w)_k$, the remaining sum is
\[
 -\sum_{k=1}^{N-1}(\Delta v)_k(\Delta_+w)_k.
\]
A second summation-by-parts step gives
\begin{align}
 \sum_{k=1}^{N-1}(\Delta v)_k(\Delta_+w)_k
 &=v_{N-1}(\Delta_+w)_{N-1}
 -v_0(\Delta_+w)_1 \\
 &\quad+\sum_{k=1}^{N-2}v_k
 \left[(\Delta_+w)_k-(\Delta_+w)_{k+1}\right].
\end{align}
The bracket equals $-(\Delta_+^2w)_k$. Substitution into \eqref{eq:second-order-step-one-report} produces
\begin{align}
 \sum_{k=1}^N(\Delta^2v)_kw_k
 &= (\Delta v)_Nw_N-(\Delta v)_0w_1 \\
 &\quad-v_{N-1}(\Delta_+w)_{N-1}
 +v_0(\Delta_+w)_1 \\
 &\quad+\sum_{k=1}^{N-2}v_k(\Delta_+^2w)_k.
 \label{eq:second-order-full-audit}
\end{align}
This identity displays four boundary terms. Two involve the first difference of the state and an undifferenced weight. Two involve the state itself and a first difference of the weight. The interior term contains the desired second difference of the weight.

Under the common zero-initialisation convention $v_{-1}=v_0=0$, the left traces vanish because $(\Delta v)_0=v_0-v_{-1}=0$. The right traces remain:
\[
 (\Delta v)_Nw_N-v_{N-1}(\Delta_+w)_{N-1}.
\]
The condition $v_N=0$ alone is not enough, since $(\Delta v)_N=-v_{N-1}$ in that case. Exact second-order compatibility may be expressed as
\[
 (\Delta v)_N=0,
 \qquad
 v_{N-1}=0,
\]
together with the corresponding initial conditions. These two terminal requirements imply $v_N=v_{N-1}=0$, but the trace formulation is preferable because it generalises directly to higher order.

Taking absolute values in \eqref{eq:second-order-full-audit} gives
\begin{align}
 \left|\frac1N\sum_{k=1}^N(\Delta^2v)_kw_k\right|
 &\le\frac1N\Bigl(
 |(\Delta v)_N||w_N|+|(\Delta v)_0||w_1| \\
 &\qquad+|v_{N-1}||\Delta_+w_{N-1}|
 +|v_0||\Delta_+w_1|\Bigr) \\
 &\quad+\frac{\|v\|_\infty}{N}
 \sum_{k=1}^{N-2}|(\Delta_+^2w)_k|.
 \label{eq:second-order-full-bound-audit}
\end{align}
For a sampled $C^2$ weight, $|\Delta_+w|=O(N^{-1})$ and
\[
 \sum_{k=1}^{N-2}|\Delta_+^2w_k|=O(N^{-1}).
\]
The interior contribution in \eqref{eq:second-order-full-bound-audit} is therefore $O(N^{-2})$. An uncontrolled value of $(\Delta v)_N$ contributes $O(N^{-1})$ and dominates the interior term. An uncontrolled $v_{N-1}$ contributes $O(N^{-2})$ because it is multiplied by a first weight difference. This calculation explains the trace hierarchy: higher state differences require stronger terminal suppression.

\subsection{Second-order parabolic constant}

Let
\[
 W_{x,t}(\xi)=\exp(2\pi\mathrm i(x\xi+t\xi^2)).
\]
Differentiation gives
\[
 W'_{x,t}(\xi)=2\pi\mathrm i(x+2t\xi)W_{x,t}(\xi)
\]
and
\[
 W''_{x,t}(\xi)
 =\left[4\pi\mathrm i t-4\pi^2(x+2t\xi)^2\right]W_{x,t}(\xi).
\]
Since $|W_{x,t}|=1$,
\[
 |W''_{x,t}(\xi)|
 \le4\pi|t|+4\pi^2|x+2t\xi|^2.
\]
On $0\le\xi\le1$,
\[
 |x+2t\xi|\le|x|+2|t|.
\]
Consequently,
\[
 \|W''_{x,t}\|_{L^\infty(0,1)}
 \le4\pi|t|+4\pi^2(|x|+2|t|)^2.
\]
Under exact second-order endpoint compatibility, the integral representation of the second forward difference gives
\[
 \sum_{k=1}^{N-2}|(\Delta_+^2w)_k|
 \le\frac1N\|W''_{x,t}\|_{L^\infty(0,1)}.
\]
Substitution into the interior term of \eqref{eq:second-order-full-bound-audit} yields
\[
 |\mathcal E_Nu(x,t)-\mathcal E_Nq(x,t)|
 \le\frac{V}{N^2}
 \left[4\pi|t|+4\pi^2(|x|+2|t|)^2\right].
\]
This derivation identifies the origin of every constant and confirms that the second-order rate is a boundary-compatible result rather than an automatic consequence of $e=\Delta^2v$.

\section{Detailed proof of the piecewise formula for the parabolic variation}

The function
\[
 J(x,t)=\int_0^1|x+2t\xi|\,\mathrm d\xi
\]
is the total variation of the parabolic phase on the unit sampling interval. Its formula depends on whether the affine derivative changes sign.

If $t=0$, the integrand is constant and
\[
 J(x,0)=|x|.
\]
Assume next that $t\ne0$. The endpoint values of the affine function are $x$ and $x+2t$. If
\[
 x(x+2t)\ge0,
\]
the endpoint values have the same sign or one is zero. The affine function has no interior sign change. Therefore,
\begin{align}
 J(x,t)
 &=\left|\int_0^1(x+2t\xi)\,\mathrm d\xi\right| \\
 &=|x+t|.
\end{align}
The absolute value may be moved outside the integral because the integrand has a fixed sign.

If
\[
 x(x+2t)<0,
\]
the unique zero
\[
 \xi_*=-\frac{x}{2t}
\]
belongs to $(0,1)$. The integral splits at $\xi_*$. A sign-independent calculation can be obtained by using the areas of two triangles. The magnitude of the slope is $2|t|$. The first triangle has base $\xi_*$ and height $|x|$, while the second has base $1-\xi_*$ and height $|x+2t|$. Thus
\begin{align}
 J(x,t)
 &=\frac12|x|\xi_*
 +\frac12|x+2t|(1-\xi_*) \\
 &=\frac{x^2+(x+2t)^2}{4|t|}.
\end{align}
The final equality follows by substituting $\xi_*=-x/(2t)$ and using the opposite signs of the endpoint values.

The formulas agree continuously at the transition. For example, if $x=0$, the sign-changing formula has limiting value
\[
 \frac{0+(2t)^2}{4|t|}=|t|,
\]
which agrees with $|x+t|=|t|$. The same conclusion holds when $x+2t=0$. Hence $J$ is continuous on $\mathbb R^2$, positively homogeneous of degree one, and convex as an integral of absolute values of affine functions.

Positive homogeneity gives
\[
 J(\lambda x,\lambda t)=|\lambda|J(x,t).
\]
Convexity gives
\[
 J(\theta(x_1,t_1)+(1-\theta)(x_2,t_2))
 \le\theta J(x_1,t_1)+(1-\theta)J(x_2,t_2)
\]
for $0\le\theta\le1$. These properties are useful when estimating $J$ on convex parameter regions. In particular, the maximum of a convex function on a compact polytope occurs on the boundary, and simple upper bounds can be obtained from the vertices.

\section{Constant dependence and uniformity statements}

A convergence rate is incomplete unless the dependence of its constant is stated. The first-order parabolic estimate has the form
\[
 |\mathcal E_Nu(x,t)-\mathcal E_Nq(x,t)|
 \le C(x,t,V)N^{-1},
\]
where
\[
 C(x,t,V)=V[1+2\pi J(x,t)].
\]
For a fixed point $(x,t)$, the constant is independent of $N$ and of the particular input sequence, provided the same state bound $V$ applies. For a compact set $K$, the uniform constant is
\[
 C_K=V\left[1+2\pi\sup_{(x,t)\in K}J(x,t)\right].
\]
The estimate is therefore uniform over all admissible input records and all parameters in $K$.

For a family of growing sets $K_N$, the constant becomes $N$-dependent through
\[
 M_N=\sup_{(x,t)\in K_N}J(x,t).
\]
The bound is
\[
 \|\mathcal E_Nu-\mathcal E_Nq\|_{L^\infty(K_N)}
 \le\frac{V}{N}(1+2\pi M_N).
\]
Decay occurs whenever $M_N=o(N)$. This formulation is more general than a power-law box. It applies to anisotropic regions, curved parameter sets, and unions of local patches.

For the degree-$d$ polynomial phase, define
\[
 M_{1,K}=\sup_{\boldsymbol\alpha\in K}
 \int_0^1\left|\sum_{j=1}^dj\alpha_j\xi^{j-1}\right|\,\mathrm d\xi.
\]
Then
\[
 \sup_{\boldsymbol\alpha\in K}
 |\mathcal E_Nu(\boldsymbol\alpha)-\mathcal E_Nq(\boldsymbol\alpha)|
 \le\frac{V}{N}(1+2\pi M_{1,K}).
\]
For endpoint-compatible order-$r$ shaping, the corresponding constant depends on derivatives of the exponential weight through order $r$. A convenient compact-set quantity is
\[
 M_{r,K}=\sup_{\boldsymbol\alpha\in K}
 \sup_{0\le\xi\le1}
 B_r\!\left(2\pi|\phi'_{\boldsymbol\alpha}(\xi)|,
 \ldots,2\pi|\phi^{(r)}_{\boldsymbol\alpha}(\xi)|\right),
\]
where $B_r$ is the complete Bell polynomial. The high-order estimate then has the schematic form
\[
 \sup_{\boldsymbol\alpha\in K}|\mathcal E_Ne(\boldsymbol\alpha)|
 \le C_rVM_{r,K}N^{-r}
\]
when every boundary trace is compatible. The dependence on $K$, $r$, and $V$ is explicit. The constant is not dimension-free when the phase degree or the number of parameters increases.

\section{Assumption-to-conclusion map}

The following table records the exact role of each assumption.

\begin{longtable}{p{0.24\textwidth}p{0.31\textwidth}p{0.35\textwidth}}
\caption{Role of the principal assumptions in the proved estimates}\label{tab:assumption-audit}\\
\toprule
Assumption & Point of use & Consequence if removed\\
\midrule
\endfirsthead
\toprule
Assumption & Point of use & Consequence if removed\\
\midrule
\endhead
$e=\Delta v$ & Exact summation by parts & No variation gain follows from a pointwise error bound alone.\\
$\|v\|_\infty\le V$ & Triangle inequality after summation by parts & The weighted state sum may grow with $N$.\\
$v_0=0$ & Removal of the initial endpoint term & An additional contribution $|v_0w_1|/N$ remains.\\
Unimodular phase weight & Replacement of $|w_N|$ by one & General weights require their endpoint amplitude explicitly.\\
Absolute continuity of $\phi$ & Fundamental theorem of calculus on each cell & BV phases require a measure-theoretic variation argument.\\
Fixed compact parameter set & Uniform control of phase derivatives & Growing sets introduce an $N$-dependent constant.\\
$e=\Delta^rv$ & Repeated finite summation by parts & First-order structure alone cannot yield the order-$r$ interior difference.\\
Endpoint compatibility & Removal of high-order boundary traces & The leading rate may revert to $N^{-1}$ or an intermediate order.\\
$C^r$ sampled weight & $r$th finite-difference estimate & Lower regularity supports only the order allowed by the available modulus of smoothness.\\
Divergence-form grid error & Coordinatewise summation by parts & A general multidimensional array has no directional variation gain.\\
Residual decomposition $e=\Delta v+\eta$ & Separation of shaped and unshaped components & An unmodelled residual may create an error floor.\\
Additional state decorrelation hypothesis & Conditional critical-scale estimates & Stability alone does not justify oscillatory cancellation of the state sum.\\
\bottomrule
\end{longtable}

The map shows that the principal theorems are deterministic implications. Randomness, input smoothness, and bandlimiting enter only when constructing a stable high-order quantizer, estimating state correlations, or comparing with a continuous signal model; the finite-record summation identities depend solely on the assumptions displayed in the table.

\section{Separation between proved approximation and restriction estimates}

The approximation problem and the restriction problem use related exponential sums but ask different questions. The approximation problem controls
\[
 \mathcal E_Nu-\mathcal E_Nq=\mathcal E_N(u-q)
\]
by exploiting the difference representation of the coefficient error. The unquantized coefficients $u_k$ remain part of the comparison. A restriction estimate instead seeks a norm inequality for
\[
 \sum_{k=1}^Na_ke(\phi_\theta(k))
\]
for a broad class of coefficients $a_k$, often with an $\ell^2$ norm on the right-hand side and a scale-sensitive $L^p$ norm on the left.

The first-order theorem does not improve a restriction exponent for arbitrary binary sequences. It states that a binary sequence produced by a stable noise-shaping rule approximates a specified bounded input sequence after application of a low-variation extension functional. The distinction remains even when the phase is parabolic and the same geometry appears in Schrödinger estimates.

A transfer from approximation to restriction would require an independent bound on $\mathcal E_Nu$, a structural description of the admissible inputs, or a direct oscillatory estimate for the state term. The triangle inequality
\[
 \|\mathcal E_Nq\|_{L^p}
 \le\|\mathcal E_Nu\|_{L^p}
 +\|\mathcal E_Nu-\mathcal E_Nq\|_{L^p}
\]
is valid, but its usefulness depends on a suitable estimate for the first term. The local approximation theorem supplies only the second term.

The report therefore establishes a rigorous bridge between noise shaping and polynomial Fourier extension, while the transition to global restriction estimates is expressed through the additional state hypotheses developed in Chapter~\ref{chap:lp-oscillation}.

\section{Summary of principal results}

\begin{longtable}{p{0.25\textwidth}p{0.29\textwidth}p{0.38\textwidth}}
\caption{Principal mathematical results and analytical settings}\label{tab:status-summary}\\
\toprule
Result & Analytical setting & Main conclusion\\
\midrule
\endfirsthead
\toprule
Result & Analytical setting & Main conclusion\\
\midrule
\endhead
Greedy first-order stability & $|u_k|\le1$ & The state satisfies $|v_k|\le1$.\\
Interval discrepancy & $e=\Delta v$ with bounded state & Interval sums are controlled independently of interval length.\\
Weighted variation theorem & Arbitrary complex weights & The error is bounded by endpoint amplitude and discrete total variation.\\
AC and BV phase estimates & Absolutely continuous or BV phase & Sampled variation is controlled by continuous phase variation.\\
Parabolic compact-set rate & Stable first-order shaping & The extension error is uniformly $O(N^{-1})$ on fixed compact sets.\\
Uniform first-order sharpness & Zero input and odd record length & The error equals $1/N$ at the origin.\\
Finite order-$r$ identity & Finite records with explicit traces & Repeated summation by parts retains all initial and terminal terms.\\
Endpoint-compatible high-order rate & Vanishing traces and smooth weights & The compact-set error is $O(N^{-r})$.\\
Boundary-corrected reconstruction & Exact state-trace metadata & The interior $O(N^{-r})$ rate is recovered without terminal compatibility.\\
Fractional regularity rate & $C^{r-1,\alpha}$ sampled weights & The rate is $O(N^{-(r-1+\alpha)})$.\\
Polynomial and grid extensions & Polynomial phases and bounded divergence states & Constants depend on phase derivatives, dimension, and boundary flux.\\
Critical-scale oscillatory transfer & State-sum, covariance, or square-function estimates & The assumed cancellation transfers directly to the extension error.\\
Terminal state control & Quantizer-dependent terminal construction & Endpoint trace conditions are linked to appended symbols and admissible alphabets.\\
\bottomrule
\end{longtable}

\section{Scope of the established results}

The approximation theorem compares $\mathcal E_Nu$ with $\mathcal E_Nq$ for coefficients connected by a stable difference representation. The high-order compact-set theorem uses endpoint compatibility or boundary-trace correction. The multidimensional theorem uses a bounded divergence-form state representation. At critical parameter scales, state-sum, covariance, or square-function information supplements total variation. These assumptions define the analytical setting in which the report's estimates apply and identify the additional structure relevant to global restriction bounds.

\section{Conclusions}

A stable sigma-delta state converts a large pointwise one-bit error into a structured discrete derivative. Finite summation by parts is the central analytical mechanism. At first order, the mechanism gives an $N^{-1}$ approximation rate against weights of bounded variation. For the parabolic phase, the constant is controlled by an explicitly computable variation factor. The zero-input orbit proves that the uniform rate is sharp under the stated assumptions.

At higher order, the finite record introduces a hierarchy of boundary traces. Once those traces are controlled, repeated summation by parts produces the expected $N^{-r}$ compact-set rate. The same rate follows without terminal compatibility when the exact boundary functional is added to the reconstruction. Fractional weight regularity produces the intermediate scale $N^{-(r-1+\alpha)}$. Polynomial, multidimensional, and robustness extensions follow from the same principle after the correct directional derivatives and boundary fluxes are identified.

The resulting theory supports a focused mathematical paper centred on the finite-record connection between stable noise shaping, boundary-compatible higher-order differences, boundary-corrected reconstruction, and discrete polynomial Fourier extension. Critical-scale extensions naturally lead to state dynamics, correlation estimates, and oscillatory cancellation.

\appendix
\chapter{Expanded Discrete Identities}

\section{Binomial formulas}

For any sequence $a$ and integer $r\ge0$,
\[
 \D^ra_k=\sum_{j=0}^r(-1)^j\binom{r}{j}a_{k-j},
\]
\[
 \Dp^ra_k=\sum_{j=0}^r(-1)^{r-j}\binom{r}{j}a_{k+j}.
\]
These formulas follow from the binomial theorem applied to the shift operators
\[
 \D=I-S,
 \qquad
 \Dp=S^+-I.
\]
Here $Sa_k=a_{k-1}$ and $S^+a_k=a_{k+1}$.

\section{Low-order expansions}

The first four backward differences are
\begin{align*}
 \D a_k&=a_k-a_{k-1},\\
 \D^2a_k&=a_k-2a_{k-1}+a_{k-2},\\
 \D^3a_k&=a_k-3a_{k-1}+3a_{k-2}-a_{k-3},\\
 \D^4a_k&=a_k-4a_{k-1}+6a_{k-2}-4a_{k-3}+a_{k-4}.
\end{align*}
The corresponding forward differences are obtained by reversing the index direction.

\section{Verification of the second-order finite formula}

Starting from
\[
 \sum_{k=1}^N\D^2v_kw_k,
\]
apply first-order summation by parts to $a_k=\D v_k$:
\[
 \sum_{k=1}^N\D^2v_kw_k
 =(\D v)_Nw_N-(\D v)_0w_1
 -\sum_{k=1}^{N-1}\D v_k\Dp w_k.
\]
Apply first-order summation by parts again on the shorter interval:
\[
 \sum_{k=1}^{N-1}\D v_k\Dp w_k
 =v_{N-1}\Dp w_{N-1}-v_0\Dp w_1
 -\sum_{k=1}^{N-2}v_k\Dp^2w_k.
\]
Substitution yields \eqref{eq:second-order-sbp}.

\section{Fourth-order boundary table}

For $r=4$, the terminal terms are
\[
 (\D^3v)_Nw_N,
 \quad
 -(\D^2v)_{N-1}\Dp w_{N-1},
 \quad
 +(\D v)_{N-2}\Dp^2w_{N-2},
 \quad
 -v_{N-3}\Dp^3w_{N-3}.
\]
The initial terms have the opposite endpoint and alternating signs:
\[
 -(\D^3v)_0w_1,
 \quad
 +(\D^2v)_0\Dp w_1,
 \quad
 -(\D v)_0\Dp^2w_1,
 \quad
 +v_0\Dp^3w_1.
\]
The interior term is
\[
 +\sum_{k=1}^{N-4}v_k\Dp^4w_k.
\]

\begin{table}[ht]
\centering
\caption{Boundary hierarchy for order four}
\begin{tabular}{cccc}
\toprule
$j$ & state trace & weight difference & normalised smooth scale\\
\midrule
0 & $\D^3v$ & $w$ & $N^{-1}$\\
1 & $\D^2v$ & $\Dp w$ & $N^{-2}$\\
2 & $\D v$ & $\Dp^2w$ & $N^{-3}$\\
3 & $v$ & $\Dp^3w$ & $N^{-4}$\\
interior & $v$ & $\Dp^4w$ summed over $N$ points & $N^{-4}$\\
\bottomrule
\end{tabular}
\end{table}

The table shows that removing the first three boundary layers is necessary to expose fourth-order decay.

\section{A two-dimensional second-order example}

Let
\[
 e_{k,\ell}=\D_1\D_2v_{k,\ell}.
\]
Summation by parts in the first coordinate gives terminal and initial vertical faces plus an interior $\Delta_{+,1}w$. Applying summation by parts in the second coordinate to each term yields four corner traces, four edge sums, and one interior mixed difference
\[
 \sum v_{k,\ell}\Delta_{+,1}\Delta_{+,2}w_{k,\ell}.
\]
The exact signs depend on the zero-padding convention. The structure mirrors the continuous identity
\[
 \int\partial_1\partial_2v\,w
 =\int v\,\partial_1\partial_2w
 +\text{edge and corner traces}.
\]

\section{Discrete product rules}

The first-order product rules are
\[
 \D(a_kb_k)=a_k\D b_k+b_{k-1}\D a_k,
\]
\[
 \Dp(a_kb_k)=a_{k+1}\Dp b_k+b_k\Dp a_k.
\]
Equivalently,
\[
 a_k\D b_k=\D(a_kb_k)-b_{k-1}\D a_k.
\]
These identities are useful when separating a rapidly oscillating carrier from a slowly varying envelope.

For second order,
\begin{align*}
 \D^2(ab)_k
 &=a_k\D^2b_k+2(\D a_k)(\D b_{k-1})+b_{k-2}\D^2a_k,
\end{align*}
with appropriate index conventions. Higher-order formulas involve discrete binomial sums.

\section{Periodic differences}

On the discrete torus $\Z/N\Z$, define
\[
 \D_{\mathrm{per}}v_k=v_k-v_{k-1\,\mathrm{mod}\,N}.
\]
Then
\[
 \sum_{k=1}^N\D_{\mathrm{per}}v_kw_k
 =-\sum_{k=1}^Nv_k\Delta_{+,\mathrm{per}}w_k
\]
with no boundary terms. However, a periodic state must satisfy
\[
 \sum_{k=1}^Ne_k=0.
\]
A finite sigma-delta record with arbitrary terminal state is not periodic. Periodising it changes the error sequence by a boundary impulse.

\section{Zero extension and boundary impulses}

Extend a finite state by zero outside its record. The distributional finite difference of the zero extension contains impulses at both boundaries. The finite summation formula is another way of accounting for those impulses. This viewpoint is useful when embedding the problem into a bi-infinite Fourier transform.

\section{Norm bounds for state differences}

If $\norm{v}_\infty\le V$, then
\[
 \norm{\D^mv}_\infty\le2^mV.
\]
The constant follows from the binomial formula:
\[
 \abs{\D^mv_k}
 \le V\sum_{j=0}^m\binom{m}{j}=2^mV.
\]
This estimate is sharp over arbitrary bounded sequences, as shown by alternating signs.

\chapter{Derivative Bounds for Oscillatory Weights}

\section{Fa\`a di Bruno and Bell polynomials}

Let $f=e^g$. The first derivatives are
\begin{align*}
 f'&=e^gg',\\
 f''&=e^g\left((g')^2+g''\right),\\
 f'''&=e^g\left((g')^3+3g'g''+g'''\right),\\
 f^{(4)}&=e^g\left((g')^4+6(g')^2g''+3(g'')^2+4g'g'''+g^{(4)}\right).
\end{align*}
In general,
\[
 f^{(r)}=e^gB_r(g',\ldots,g^{(r)}),
\]
where $B_r$ is the complete exponential Bell polynomial.

For $g=2\pi\ii\phi$, the unimodular factor $e^g$ disappears after taking absolute values. Therefore
\[
 \norm{f^{(r)}}_\infty
 \le B_r\left(2\pi\norm{\phi'}_\infty,
 \ldots,2\pi\norm{\phi^{(r)}}_\infty\right).
\]

\section{Low-order phase bounds}

For $f=e^{2\pi\ii\phi}$,
\[
 \norm{f'}_\infty\le2\pi\norm{\phi'}_\infty.
\]
Also,
\[
 \norm{f''}_\infty
 \le(2\pi)^2\norm{\phi'}_\infty^2
 +2\pi\norm{\phi''}_\infty.
\]
At third order,
\[
 \norm{f'''}_\infty
 \le(2\pi)^3\norm{\phi'}_\infty^3
 +3(2\pi)^2\norm{\phi'}_\infty\norm{\phi''}_\infty
 +2\pi\norm{\phi'''}_\infty.
\]
These explicit forms are preferable to a generic constant when $r$ is small.

\section{Quadratic phase formulas}

Let
\[
 \phi(\xi)=x\xi+t\xi^2.
\]
Then
\[
 g'=2\pi\ii(x+2t\xi),
 \qquad g''=4\pi\ii t,
 \qquad g^{(m)}=0\quad(m\ge3).
\]
The first three derivatives satisfy
\begin{align*}
 f'&=f\,2\pi\ii(x+2t\xi),\\
 f''&=f\left[-(2\pi)^2(x+2t\xi)^2+4\pi\ii t\right],\\
 f'''&=f\left[(2\pi\ii(x+2t\xi))^3+3(2\pi\ii(x+2t\xi))(4\pi\ii t)\right].
\end{align*}
For bounding purposes, it is safer to use the Bell expression rather than simplifying complex signs.

Set
\[
 A=2\pi(\abs{x}+2\abs{t}),
 \qquad B=4\pi\abs{t}.
\]
Then
\[
 \norm{f'}_\infty\le A,
\]
\[
 \norm{f''}_\infty\le A^2+B,
\]
\[
 \norm{f'''}_\infty\le A^3+3AB,
\]
\[
 \norm{f^{(4)}}_\infty\le A^4+6A^2B+3B^2.
\]
These are the complete Bell polynomials with higher derivatives set to zero.

\section{Polynomial phases}

Let
\[
 \phi_x(\xi)=\sum_{j=1}^dx_j\xi^j.
\]
For $1\le m\le d$,
\[
 \phi_x^{(m)}(\xi)
 =\sum_{j=m}^d\frac{j!}{(j-m)!}x_j\xi^{j-m}.
\]
Hence
\[
 \norm{\phi_x^{(m)}}_\infty
 \le\sum_{j=m}^d\frac{j!}{(j-m)!}\abs{x_j}.
\]
If
\[
 X=1+\sum_{j=1}^d\abs{x_j},
\]
then for fixed $d$ and $r$,
\[
 \norm{f_x^{(r)}}_\infty\le C_{r,d}X^r.
\]
The exponent $r$ is sufficient for scale analysis, although the exact polynomial in the $x_j$ may be much smaller.

\section{Finite differences versus derivatives}

The integral representation gives
\[
 \Dp^rf(k/N)
 =\int_{[0,1/N]^r}
 f^{(r)}(k/N+s_1+\cdots+s_r)\dd s.
\]
Thus
\[
 \abs{\Dp^rf(k/N)}\le N^{-r}\norm{f^{(r)}}_\infty.
\]
For analytic phases, one may also expand the finite difference exactly in powers of $1/N$. The derivative bound is uniform and sufficient for the high-order theorem.

\section{Verification of the displayed third derivative}

For symbolic formulas, complex powers are simplified before the derivative is inserted into an asymptotic expansion. The identity $\ii^3=-\ii$ changes the signed third-order term while leaving the corresponding absolute-value Bell-polynomial bound unchanged. A direct differentiation provides the human-readable verification, and computer algebra supplies an independent check.

\chapter{Background on Bounded Variation}

\section{Definition}

A real function $f$ on $[a,b]$ belongs to $BV([a,b])$ if
\[
 \Var_{[a,b]}(f)
 =\sup_P\sum_{j=0}^{m-1}\abs{f(t_{j+1})-f(t_j)}<\infty,
\]
where the supremum is over all finite partitions
\[
 a=t_0<t_1<\cdots<t_m=b.
\]
Standard references include \cite{AmbrosioFuscoPallara2000,EvansGariepy2015}.

\section{Absolutely continuous functions}

If $f$ is absolutely continuous, then $f'$ exists almost everywhere, $f'\in L^1$, and
\[
 f(y)-f(x)=\int_x^yf'(s)\dd s.
\]
Moreover,
\[
 \Var_{[a,b]}(f)=\int_a^b\abs{f'(s)}\dd s.
\]
The parabolic phase belongs to this class.

\section{Composition with a Lipschitz map}

If $F:\R\to\C$ is Lipschitz with constant $L$ and $f\in BV$, then
\[
 \Var(F\circ f)\le L\Var(f).
\]
For
\[
 F(y)=e^{2\pi\ii y},
\]
one may take $L=2\pi$ because
\[
 \abs{F(y)-F(z)}\le2\pi\abs{y-z}.
\]
This is the functional principle behind the BV phase theorem.

\section{Choice of representative}

A BV function is often treated as an equivalence class modulo sets of measure zero. A sampled extension operator depends on point values, so a representative must be fixed. The right-continuous representative, left-continuous representative, or precise representative may be used. For smooth phases, all agree.

If a jump occurs exactly at a grid point, changing the representative can alter one weight. The weighted variation theorem remains valid after the chosen representative is used consistently, but the numerical value of the sum may change.

\section{Jordan decomposition}

Every real BV function can be written as the difference of two increasing functions. This is the Jordan decomposition. It implies that BV phases have finite one-dimensional oscillation even when they are not differentiable.

The first-order sigma-delta estimate therefore tolerates jump discontinuities. It does not require curvature or smoothness. Higher-order finite-difference estimates, by contrast, require higher regularity unless one replaces classical derivatives with measures and develops a higher-order BV theory.

\section{Complex BV functions}

For complex $w$, one may define
\[
 \Var(w)=\sup_P\sum\abs{w(t_{j+1})-w(t_j)}.
\]
This geometric definition is stronger than treating the real and imaginary parts independently in some estimates, and it is the natural quantity for unit-circle weights.

\section{Discrete sampling inequality}

For any ordered sample points
\[
 a\le t_1<\cdots<t_N\le b,
\]
one has
\[
 \sum_{k=1}^{N-1}\abs{f(t_{k+1})-f(t_k)}
 \le\Var_{[a,b]}(f).
\]
The inequality follows directly from the definition because the sample points form a partition after adding the endpoints if necessary.

\section{Higher-dimensional BV}

In several dimensions, BV functions have distributional gradients that are finite vector-valued measures. The grid theorem in Chapter~\ref{chap:extensions} uses $C^1$ phases for simplicity. A multidimensional BV extension may be possible by controlling directional discrete variations with the total variation measures of the partial derivatives. The boundary and representative issues are more involved.

\chapter{Reference Map and Theoretical Sources}

This appendix records why the major references are included. It is intended to prevent unsupported theoretical statements and to make later paper extraction easier.

\begin{longtable}{p{0.30\textwidth}p{0.64\textwidth}}
\toprule
Reference & Role in the manuscript\\
\midrule
\endfirsthead
\toprule
Reference & Role in the manuscript\\
\midrule
\endhead
\cite{Bennett1948} & Classical spectral model for quantization error and an early foundation for quantization-noise analysis.\\
\cite{Inose1962} & Early delta-sigma modulation architecture and historical origin of feedback noise shaping.\\
\cite{Candy1985} & Second-order integration in sigma-delta modulation and the engineering development of higher-order shaping.\\
\cite{GrayChouWong1989} & Spectral behaviour of single-loop sigma-delta modulation with sinusoidal inputs.\\
\cite{Gray1990} & Rigorous and critical treatment of quantization-noise spectra.\\
\cite{GunturkLagariasVaishampayan2001} & Robustness and dynamical stability of single-loop sigma-delta modulation.\\
\cite{DaubechiesDeVore2003} & Stable sigma-delta modulators of arbitrary order for bandlimited functions.\\
\cite{Gunturk2003CPAM} & Exponential accuracy for one-bit sigma-delta quantization.\\
\cite{Gunturk2004JAMS} & Improved error estimates for coarsely quantized bandlimited data.\\
\cite{DaubechiesDeVoreGunturkVaishampayan2006} & Imperfect quantizers and robustness of analogue-to-digital conversion.\\
\cite{GunturkThao2005} & Ergodic dynamics, invariant sets, and spectral analysis of sigma-delta error.\\
\cite{DeiftGunturkKrahmer2011} & Optimal families of exponentially accurate one-bit schemes.\\
\cite{KrahmerWard2010} & Lower bounds for coarse quantization accuracy.\\
\cite{Gunturk2012} & Authoritative overview of the mathematics of analogue-to-digital conversion.\\
\cite{DaubechiesSaab2015} & Deterministic decimation analysis and bit-rate reduction for sigma-delta streams.\\
\cite{ChouEtAl2015} & Survey of noise-shaping methods for frames and compressive sampling.\\
\cite{GoyalVetterliThao1998} & Quantized overcomplete expansions and the role of redundancy.\\
\cite{BoufounosOppenheim2006} & Noise shaping on arbitrary frame expansions.\\
\cite{BenedettoPowellYilmaz2006} & First-order sigma-delta quantization for finite frames and frame variation.\\
\cite{BenedettoPowellYilmazSecond2006} & Second-order finite-frame quantization and limits caused by frame geometry.\\
\cite{BodmannPaulsen2007} & Frame paths and geometric error bounds.\\
\cite{BodmannPaulsenAbdulbaki2007} & Smooth endpoint termination for higher-order frame quantization.\\
\cite{BlumEtAl2010} & Sobolev dual frames and deterministic $O(N^{-r})$ reconstruction.\\
\cite{GunturkPowellSaabYilmaz2013} & Sobolev duals for random frames and compressed sensing measurements.\\
\cite{KrahmerSaabYilmaz2014} & Sub-Gaussian frames, root-exponential accuracy, and compressed sensing.\\
\cite{SaabWangYilmaz2018} & Stable and robust recovery from quantized compressive samples.\\
\cite{FengKrahmerSaab2017} & Structured random circulant measurements with sigma-delta quantization.\\
\cite{GaoKrahmerPowell2021} & High-order low-bit quantization for fusion frames.\\
\cite{GrafKrahmerKrauseSolberg2019} & One-bit sigma-delta modulation on the circle and periodic-domain issues.\\
\cite{KrahmerVeselovska2023} & Weighted sigma-delta modulation for digital halftoning and multidimensional error diffusion.\\
\cite{DuffinSchaeffer1952} & Foundational frame theory through nonharmonic Fourier series.\\
\cite{BenedettoFickus2003} & Finite normalized tight frames.\\
\cite{CasazzaKutyniok2012} & General reference for finite-frame theory and applications.\\
\cite{CandesRombergTao2006a} & Exact recovery from incomplete Fourier information and compressed-sensing foundations.\\
\cite{CandesRombergTao2006b} & Stable recovery from inaccurate measurements.\\
\cite{Donoho2006} & Foundational compressed-sensing theory.\\
\cite{FoucartRauhut2013} & Comprehensive mathematical reference for compressed sensing.\\
\cite{Fefferman1970} & Early restriction-related inequalities for singular convolution operators.\\
\cite{Tomas1975} & Classical Fourier restriction theorem.\\
\cite{Strichartz1977} & Restriction to quadratic surfaces and dispersive space-time estimates.\\
\cite{KeelTao1998} & Endpoint Strichartz estimates.\\
\cite{Bourgain1993I} & Lattice restriction and periodic Schr\"odinger equations.\\
\cite{Bourgain1993II} & Lattice restriction and periodic KdV equations.\\
\cite{HuLi2011} & Discrete Fourier restriction associated with Schr\"odinger equations.\\
\cite{LaiDing2017} & General discrete Fourier restriction results and related applications.\\
\cite{TaoVargasVega1998} & Bilinear restriction and Kakeya methods.\\
\cite{Wolff2001} & Sharp bilinear cone restriction.\\
\cite{Tao2003} & Sharp bilinear restriction for paraboloids.\\
\cite{BennettCarberyTao2006} & Multilinear restriction and Kakeya inequalities.\\
\cite{BourgainGuth2011} & Oscillatory integral bounds based on multilinear estimates.\\
\cite{Guth2016} & Polynomial partitioning for restriction.\\
\cite{Guth2018} & Further restriction estimates through polynomial partitioning.\\
\cite{BourgainDemeter2015} & Proof of the $\ell^2$ decoupling conjecture and discrete restriction consequences.\\
\cite{BourgainDemeterGuth2016} & Decoupling proof of the main Vinogradov mean value conjecture in higher degree.\\
\cite{KillipVisan2014} & Scale-invariant Strichartz estimates on tori.\\
\cite{Demeter2020} & Systematic reference for restriction, decoupling, and applications.\\
\cite{Weyl1916} & Foundational equidistribution and polynomial exponential-sum method.\\
\cite{Vinogradov1954} & Classical trigonometric-sum method.\\
\cite{Hua1965} & Additive number theory and mean-value methods.\\
\cite{Vaughan1997} & Hardy--Littlewood circle method and exponential sums.\\
\cite{GrahamKolesnik1991} & Detailed van der Corput method.\\
\cite{Montgomery1994} & Interface between analytic number theory and harmonic analysis.\\
\cite{IwaniecKowalski2004} & Modern analytic number theory reference.\\
\cite{Wooley2012} & Efficient congruencing for Vinogradov mean values.\\
\cite{FordWooley2014} & Strongly diagonal behaviour via efficient congruencing.\\
\cite{Wooley2016} & Cubic case of the main Vinogradov conjecture.\\
\cite{Wooley2017} & Approximation to the main Vinogradov mean value conjecture.\\
\cite{Wooley2019} & Nested efficient congruencing.\\
\cite{ParsellPrendivilleWooley2013} & Multidimensional Weyl sums and near-optimal mean values.\\
\cite{GuoZhang2019} & Integer solutions of Parsell--Vinogradov systems.\\
\cite{GuoLiYungZorinKranich2020} & Short proof of moment-curve decoupling.\\
\cite{Bourgain2017} & Decoupling, exponential sums, and the Riemann zeta function.\\
\cite{AmbrosioFuscoPallara2000} & Standard monograph on BV functions and free-discontinuity theory.\\
\cite{EvansGariepy2015} & Measure-theoretic properties of Sobolev and BV functions.\\
\cite{Zygmund2002} & Trigonometric series, Abel summation, and classical Fourier analysis.\\
\cite{Grafakos2014} & Classical Fourier analysis and operator estimates.\\
\cite{Stein1993} & Oscillatory integrals and harmonic-analysis methods.\\
\cite{SteinShakarchi2003} & Introductory Fourier analysis and summation methods.\\
\cite{Tao2006} & Nonlinear dispersive equations and Strichartz theory.\\
\cite{Cazenave2003} & Schr\"odinger equations and dispersive PDE background.\\
\cite{Apostol1976} & Abel summation and analytic number-theory preliminaries.\\
\cite{HardyLittlewoodPolya1952} & Classical inequalities used in norm and interpolation estimates.\\
\bottomrule
\end{longtable}

The table contains more references than a single journal paper would normally require. When a chapter is extracted, retain only the sources that support its actual theory and context.

\chapter{Reproducible Code}

\section{Python implementation of the greedy quantizer}

\begin{lstlisting}[language=Python]
import numpy as np


def greedy_sigma_delta(u: np.ndarray) -> tuple[np.ndarray, np.ndarray]:
    """First-order one-bit greedy Sigma-Delta quantizer.

    Parameters
    ----------
    u : one-dimensional real array with |u[k]| <= 1

    Returns
    -------
    q : one-bit output in {-1, +1}
    v : state array including v[0] = 0
    """
    u = np.asarray(u, dtype=float)
    if u.ndim != 1:
        raise ValueError("u must be one-dimensional")
    if np.max(np.abs(u)) > 1 + 1e-12:
        raise ValueError("input exceeds the proved stability range")

    n = u.size
    q = np.empty(n, dtype=float)
    v = np.zeros(n + 1, dtype=float)

    for k in range(n):
        y = u[k] + v[k]
        q[k] = 1.0 if y >= 0.0 else -1.0
        v[k + 1] = v[k] + u[k] - q[k]

    return q, v
\end{lstlisting}

\section{Parabolic extension and theoretical bound}

\begin{lstlisting}[language=Python]
def parabolic_extension(a, x, t):
    a = np.asarray(a, dtype=complex)
    n = a.size
    k = np.arange(1, n + 1, dtype=float)
    phase = x * k / n + t * (k / n) ** 2
    return np.mean(a * np.exp(2j * np.pi * phase))


def J_closed(x, t):
    if abs(t) < 1e-15:
        return abs(x)
    if x * (x + 2.0 * t) >= 0.0:
        return abs(x + t)
    return (x * x + (x + 2.0 * t) ** 2) / (4.0 * abs(t))


def first_order_bound(n, x, t, state_bound=1.0):
    return state_bound / n * (1.0 + 2.0 * np.pi * J_closed(x, t))
\end{lstlisting}

\section{Identity check}

\begin{lstlisting}[language=Python]
def check_first_order_identity(u, q, v):
    error = np.asarray(u) - np.asarray(q)
    state_difference = np.diff(np.asarray(v))
    return np.max(np.abs(error - state_difference))


def experiment_zero_input(n, x=0.0, t=0.0):
    u = np.zeros(n)
    q, v = greedy_sigma_delta(u)
    err = abs(parabolic_extension(u, x, t)
              - parabolic_extension(q, x, t))
    return {
        "N": n,
        "max_state": float(np.max(np.abs(v))),
        "terminal_state": float(v[-1]),
        "extension_error": float(err),
        "theoretical_bound": float(first_order_bound(n, x, t)),
    }
\end{lstlisting}

\section{Synthetic high-order identity test}

\begin{lstlisting}[language=Python]
def backward_difference(a, order=1):
    out = np.asarray(a, dtype=complex)
    for _ in range(order):
        out = np.diff(out)
    return out


def forward_difference(a, order=1):
    out = np.asarray(a, dtype=complex)
    for _ in range(order):
        out = np.diff(out)
    return out

# For an order-r test, create v on indices 1-r,...,N,
# compute e = Delta^r v on k=1,...,N, and compare the
# direct weighted sum with the boundary formula in Theorem 8.1.
\end{lstlisting}

\section{MATLAB-style implementation}

\begin{lstlisting}[language=Matlab]
function [q,v] = greedy_sigma_delta(u)
    u = u(:);
    N = length(u);
    if max(abs(u)) > 1 + 1e-12
        error('Input exceeds the proved stability range');
    end

    q = zeros(N,1);
    v = zeros(N+1,1);

    for k = 1:N
        y = u(k) + v(k);
        if y >= 0
            q(k) = 1;
        else
            q(k) = -1;
        end
        v(k+1) = v(k) + u(k) - q(k);
    end
end
\end{lstlisting}

\section{Convergence-order script outline}

\begin{lstlisting}[language=Python]
lengths = 2 ** np.arange(8, 17)
errors = []

for n in lengths:
    k = np.arange(1, n + 1)
    u = 0.7 * np.sin(2 * np.pi * 7 * k / n)
    q, v = greedy_sigma_delta(u)
    errors.append(abs(parabolic_extension(u, 0.4, -0.3)
                      - parabolic_extension(q, 0.4, -0.3)))

slope, intercept = np.polyfit(np.log(lengths), np.log(errors), 1)
print("estimated order:", slope)
\end{lstlisting}

\section{Numerical verification notes}

Evaluate the phase in double precision with moderate parameter sizes. For very large $x$ or $t$, reduction modulo one before the exponential improves numerical conditioning. The zero convention in the quantizer is matched to the theorem. Odd and even record lengths are fitted separately when the terminal state produces a parity effect.

\chapter{Glossary of Terms}

\begin{description}[style=nextline,leftmargin=4.2cm,labelwidth=3.8cm]
\item[Admissible input] An input sequence satisfying the amplitude or regularity assumptions required by a stated quantizer theorem.

\item[Boundary trace] A value of the state or one of its finite differences appearing at the initial or terminal index after summation by parts.

\item[Bounded variation] Finite total accumulated absolute change of a function or sequence.

\item[Critical scale] A parameter range where adjacent phase increments are order one and the elementary total-variation gain no longer yields decay.

\item[Decoupling] A family of harmonic-analysis inequalities that control an extension operator by an $\ell^2$ combination of contributions from smaller frequency pieces.

\item[Discrete extension] A finite oscillatory sum that is dual to a restriction problem or represents a periodic dispersive evolution.

\item[Endpoint compatibility] Vanishing of the initial and terminal discrete traces needed to remove boundary terms in high-order summation by parts.

\item[Error shaping] Organisation of quantization error so that it lies in the range of a difference or filter operator.

\item[Greedy quantizer] A rule that selects the current alphabet symbol according to the sign or nearest-level decision based on the current input and state.

\item[Invariant region] A set of states mapped into itself by the nonlinear quantizer recursion for every admissible input.

\item[Major arc] A parameter region close to a rational point with small denominator in the circle method.

\item[Minor arc] The complement of the major arcs, where stronger exponential-sum cancellation is often available.

\item[Moment curve] The curve $\xi\mapsto(\xi,\xi^2,\ldots,\xi^d)$.

\item[Noise transfer function] The linearised transfer function from quantizer noise to output in an engineering sigma-delta model.

\item[One-bit alphabet] The two-level set $\{-1,+1\}$, possibly after scaling.

\item[Oversampling] Sampling above the minimum rate needed for representation, thereby creating redundancy that may be used for quantization accuracy.

\item[Parabolic phase] A phase of the form $x\xi+t\xi^2$.

\item[Restriction estimate] An inequality controlling the Fourier transform on a curved set, or equivalently the associated extension operator.

\item[Shaped residual] The component of the error represented as a finite difference of a bounded state.

\item[State sequence] The internal memory variable of a feedback quantizer.

\item[Strichartz estimate] A space-time integrability estimate for a dispersive PDE solution.

\item[Subcritical region] A growing parameter region in which adjacent phase increments still tend to zero.

\item[Total variation bound] An estimate obtained by summation by parts followed by absolute values of all weight increments.

\item[Unshaped residual] An error component not represented by the chosen difference operator.

\item[Vinogradov mean value theorem] A family of sharp mean-value estimates for polynomial exponential sums, equivalent to counting solutions of systems of equal sums of powers.

\item[Weyl sum] A finite exponential sum with polynomial phase.
\end{description}

\section{Symbols most likely to be confused}

The symbol $\D$ is the backward finite difference on sequences. The symbol $\Dp$ is the forward finite difference. The operator $\E_N$ is normalised by $1/N$. The unnormalised exponential sum is $N\E_N$. The state bound $V$ is independent of $N$ unless explicitly stated otherwise.

\bibliographystyle{unsrtnat}
\bibliography{references}

@article{Bennett1948,
  author={Bennett, W. R.}, title={Spectra of quantized signals},
  journal={Bell System Technical Journal}, volume={27}, year={1948}, pages={446--472}}

@article{Inose1962,
  author={Inose, Hiroshi and Yasuda, Yasuhiko and Murakami, Jiro},
  title={A telemetering system by code modulation: Delta-Sigma modulation},
  journal={IRE Transactions on Space Electronics and Telemetry}, volume={SET-8}, number={3}, year={1962}, pages={204--209}}

@article{Candy1985,
  author={Candy, James C.}, title={A use of double integration in sigma delta modulation},
  journal={IEEE Transactions on Communications}, volume={33}, number={3}, year={1985}, pages={249--258}}

@article{Gray1990,
  author={Gray, Robert M.}, title={Quantization noise spectra},
  journal={IEEE Transactions on Information Theory}, volume={36}, number={6}, year={1990}, pages={1220--1244}}

@article{GrayChouWong1989,
  author={Gray, Robert M. and Chou, Wen-Hsiung and Wong, Ping W.},
  title={Quantization noise in single-loop sigma-delta modulation with sinusoidal inputs},
  journal={IEEE Transactions on Communications}, volume={37}, number={9}, year={1989}, pages={956--968}}

@article{GunturkLagariasVaishampayan2001,
  author={G{\"u}nt{\"u}rk, C. Sinan and Lagarias, Jeffrey C. and Vaishampayan, Vinay A.},
  title={On the robustness of single-loop sigma-delta modulation},
  journal={IEEE Transactions on Information Theory}, volume={47}, number={5}, year={2001}, pages={1735--1744}}

@article{DaubechiesDeVore2003,
  author={Daubechies, Ingrid and DeVore, Ronald},
  title={Approximating a bandlimited function using very coarsely quantized data: A family of stable sigma-delta modulators of arbitrary order},
  journal={Annals of Mathematics}, volume={158}, number={2}, year={2003}, pages={679--710}}

@article{Gunturk2003CPAM,
  author={G{\"u}nt{\"u}rk, C. Sinan}, title={One-bit sigma-delta quantization with exponential accuracy},
  journal={Communications on Pure and Applied Mathematics}, volume={56}, year={2003}, pages={1608--1630}}

@article{Gunturk2004JAMS,
  author={G{\"u}nt{\"u}rk, C. Sinan},
  title={Approximating a bandlimited function using very coarsely quantized data: Improved error estimates in sigma-delta modulation},
  journal={Journal of the American Mathematical Society}, volume={17}, number={1}, year={2004}, pages={229--242}}

@article{DaubechiesDeVoreGunturkVaishampayan2006,
  author={Daubechies, Ingrid and DeVore, Ronald and G{\"u}nt{\"u}rk, C. Sinan and Vaishampayan, Vinay A.},
  title={A/D conversion with imperfect quantizers}, journal={IEEE Transactions on Information Theory},
  volume={52}, number={3}, year={2006}, pages={874--885}}

@article{GunturkThao2005,
  author={G{\"u}nt{\"u}rk, C. Sinan and Thao, Nguyen T.},
  title={Ergodic dynamics in sigma-delta quantization: Tiling invariant sets and spectral analysis of error},
  journal={Advances in Applied Mathematics}, volume={34}, number={3}, year={2005}, pages={523--560}}

@article{DeiftGunturkKrahmer2011,
  author={Deift, Percy and Krahmer, Felix and G{\"u}nt{\"u}rk, C. Sinan},
  title={An optimal family of exponentially accurate one-bit sigma-delta quantization schemes},
  journal={Communications on Pure and Applied Mathematics},
  volume={64}, number={7}, year={2011}, pages={883--919},
  doi={10.1002/cpa.20367}}

@article{KrahmerWard2010,
  author={Krahmer, Felix and Ward, Rachel},
  title={Lower bounds for the error decay incurred by coarse quantization schemes},
  journal={Applied and Computational Harmonic Analysis},
  volume={32}, number={1}, year={2012}, pages={131--138},
  doi={10.1016/j.acha.2011.06.003},
  eprint={1004.3517}, archivePrefix={arXiv}}

@article{Gunturk2012,
  author={G{\"u}nt{\"u}rk, C. Sinan}, title={Mathematics of analog-to-digital conversion},
  journal={Communications on Pure and Applied Mathematics}, volume={65}, number={12}, year={2012}, pages={1671--1696}}

@article{DaubechiesSaab2015,
  author={Daubechies, Ingrid and Saab, Rayan}, title={A deterministic analysis of decimation for sigma-delta quantization of bandlimited functions},
  journal={IEEE Signal Processing Letters}, volume={22}, number={11}, year={2015}, pages={2093--2096}, doi={10.1109/LSP.2015.2459758}}

@misc{ChouEtAl2015,
  author={Chou, Evan and G{\"u}nt{\"u}rk, C. Sinan and Krahmer, Felix and Saab, Rayan and Yilmaz, {\"O}zg{\"u}r},
  title={Noise-shaping quantization methods for frame-based and compressive sampling systems},
  year={2015}, eprint={1502.05807}, archivePrefix={arXiv}, primaryClass={math.NA}}

@article{BenedettoPowellYilmaz2006,
  author={Benedetto, John J. and Powell, Alexander M. and Yilmaz, {\"O}zg{\"u}r},
  title={Sigma-delta quantization and finite frames}, journal={IEEE Transactions on Information Theory},
  volume={52}, number={5}, year={2006}, pages={1990--2005}}

@article{BenedettoPowellYilmazSecond2006,
  author={Benedetto, John J. and Powell, Alexander M. and Yilmaz, {\"O}zg{\"u}r},
  title={Second-order sigma-delta quantization of finite frame expansions},
  journal={Applied and Computational Harmonic Analysis}, volume={20}, number={1}, year={2006}, pages={126--148}}

@article{BodmannPaulsen2007,
  author={Bodmann, Bernhard G. and Paulsen, Vern I.}, title={Frame paths and error bounds for sigma-delta quantization},
  journal={Applied and Computational Harmonic Analysis}, volume={22}, number={2}, year={2007}, pages={176--197}}

@article{BodmannPaulsenAbdulbaki2007,
  author={Bodmann, Bernhard G. and Paulsen, Vern I. and Abdulbaki, Soha A.},
  title={Smooth frame-path termination for higher order sigma-delta quantization},
  journal={Journal of Fourier Analysis and Applications}, volume={13}, number={3}, year={2007}, pages={285--307}}

@article{BlumEtAl2010,
  author={Blum, James E. and Lammers, Mark C. and Powell, Alexander M. and Yilmaz, {\"O}zg{\"u}r},
  title={Sobolev duals in frame theory and sigma-delta quantization},
  journal={Journal of Fourier Analysis and Applications}, volume={16}, number={3}, year={2010}, pages={365--381}}

@article{GunturkPowellSaabYilmaz2013,
  author={G{\"u}nt{\"u}rk, C. Sinan and Lammers, Mark C. and Powell, Alexander M. and Saab, Rayan and Yilmaz, {\"O}zg{\"u}r},
  title={Sobolev duals for random frames and sigma-delta quantization of compressed sensing measurements},
  journal={Foundations of Computational Mathematics},
  volume={13}, number={1}, year={2013}, pages={1--36},
  doi={10.1007/s10208-012-9140-x}}

@article{KrahmerSaabYilmaz2014,
  author={Krahmer, Felix and Saab, Rayan and Yilmaz, {\"O}zg{\"u}r},
  title={Sigma-delta quantization of sub-Gaussian frame expansions and its application to compressed sensing},
  journal={Information and Inference}, volume={3}, number={1}, year={2014}, pages={40--58}}

@article{SaabWangYilmaz2018,
  author={Saab, Rayan and Wang, Rongrong and Yilmaz, {\"O}zg{\"u}r},
  title={Quantization of compressive samples with stable and robust recovery},
  journal={Applied and Computational Harmonic Analysis}, volume={44}, number={1}, year={2018}, pages={123--143}}

@article{FengKrahmerSaab2017,
  author={Feng, Joe-Mei and Krahmer, Felix and Saab, Rayan},
  title={Quantized compressed sensing for random circulant matrices},
  journal={Applied and Computational Harmonic Analysis},
  volume={47}, number={3}, year={2019}, pages={1014--1032},
  doi={10.1016/j.acha.2019.03.004},
  eprint={1702.04711}, archivePrefix={arXiv}}

@article{GaoKrahmerPowell2021,
  author={Gao, Zhen and Krahmer, Felix and Powell, Alexander M.},
  title={High-order low-bit sigma-delta quantization for fusion frames},
  journal={Analysis and Applications}, volume={19}, number={1}, year={2021}, pages={1--20},
  doi={10.1142/S0219530520400096},
  eprint={2006.09732}, archivePrefix={arXiv}}

@misc{GrafKrahmerKrauseSolberg2019,
  author={Graf, Olga and Krahmer, Felix and Krause-Solberg, Sara},
  title={One-bit sigma-delta modulation on the circle}, year={2019}, eprint={1911.07647}, archivePrefix={arXiv}, primaryClass={math.NA}}

@article{GoyalVetterliThao1998,
  author={Goyal, Vivek K. and Vetterli, Martin and Thao, Nguyen T.},
  title={Quantized overcomplete expansions in $\mathbb{R}^N$: Analysis, synthesis, and algorithms},
  journal={IEEE Transactions on Information Theory}, volume={44}, number={1}, year={1998}, pages={16--31}}

@article{BoufounosOppenheim2006,
  author={Boufounos, Petros T. and Oppenheim, Alan V.}, title={Quantization noise shaping on arbitrary frame expansions},
  journal={EURASIP Journal on Applied Signal Processing}, year={2006}, pages={1--12}}

@article{CandesRombergTao2006a,
  author={Cand{\`e}s, Emmanuel J. and Romberg, Justin and Tao, Terence},
  title={Robust uncertainty principles: Exact signal reconstruction from highly incomplete frequency information},
  journal={IEEE Transactions on Information Theory}, volume={52}, number={2}, year={2006}, pages={489--509}}

@article{CandesRombergTao2006b,
  author={Cand{\`e}s, Emmanuel J. and Romberg, Justin K. and Tao, Terence},
  title={Stable signal recovery from incomplete and inaccurate measurements},
  journal={Communications on Pure and Applied Mathematics}, volume={59}, number={8}, year={2006}, pages={1207--1223}}

@article{Donoho2006,
  author={Donoho, David L.}, title={Compressed sensing}, journal={IEEE Transactions on Information Theory},
  volume={52}, number={4}, year={2006}, pages={1289--1306}}

@book{FoucartRauhut2013,
  author={Foucart, Simon and Rauhut, Holger}, title={A Mathematical Introduction to Compressive Sensing},
  publisher={Birkh{\"a}user}, year={2013}}

@article{DuffinSchaeffer1952,
  author={Duffin, Richard J. and Schaeffer, Albert C.}, title={A class of nonharmonic Fourier series},
  journal={Transactions of the American Mathematical Society}, volume={72}, year={1952}, pages={341--366}}

@article{BenedettoFickus2003,
  author={Benedetto, John J. and Fickus, Matthew}, title={Finite normalized tight frames},
  journal={Advances in Computational Mathematics}, volume={18}, year={2003}, pages={357--385}}

@book{CasazzaKutyniok2012,
  editor={Casazza, Peter G. and Kutyniok, Gitta}, title={Finite Frames: Theory and Applications},
  publisher={Birkh{\"a}user}, year={2012}}

@article{Fefferman1970,
  author={Fefferman, Charles}, title={Inequalities for strongly singular convolution operators},
  journal={Acta Mathematica}, volume={124}, year={1970}, pages={9--36}}

@article{Tomas1975,
  author={Tomas, Peter A.}, title={A restriction theorem for the Fourier transform},
  journal={Bulletin of the American Mathematical Society}, volume={81}, year={1975}, pages={477--478}}

@article{Strichartz1977,
  author={Strichartz, Robert S.}, title={Restrictions of Fourier transforms to quadratic surfaces and decay of solutions of wave equations},
  journal={Duke Mathematical Journal}, volume={44}, number={3}, year={1977}, pages={705--714}}

@article{KeelTao1998,
  author={Keel, Markus and Tao, Terence}, title={Endpoint Strichartz estimates},
  journal={American Journal of Mathematics}, volume={120}, number={5}, year={1998}, pages={955--980}}

@article{Bourgain1993I,
  author={Bourgain, Jean}, title={Fourier transform restriction phenomena for certain lattice subsets and applications to nonlinear evolution equations. Part I: Schr{\"o}dinger equations},
  journal={Geometric and Functional Analysis}, volume={3}, year={1993}, pages={107--156}}

@article{Bourgain1993II,
  author={Bourgain, Jean}, title={Fourier transform restriction phenomena for certain lattice subsets and applications to nonlinear evolution equations. Part II: The KdV equation},
  journal={Geometric and Functional Analysis}, volume={3}, year={1993}, pages={209--262}}

@article{HuLi2011,
  author={Hu, Yi and Li, Xiaochun},
  title={Discrete Fourier restriction associated with Schr{\"o}dinger equations},
  journal={Revista Matem\'atica Iberoamericana}, volume={30}, number={4}, year={2014}, pages={1281--1300},
  doi={10.4171/RMI/815},
  eprint={1108.5164}, archivePrefix={arXiv}}

@misc{LaiDing2017,
  author={Lai, Xudong and Ding, Yong}, title={A note on the discrete Fourier restriction problem},
  year={2017}, eprint={1710.01481}, archivePrefix={arXiv}, primaryClass={math.CA}}

@article{BourgainDemeter2015,
  author={Bourgain, Jean and Demeter, Ciprian}, title={The proof of the $\ell^2$ decoupling conjecture},
  journal={Annals of Mathematics}, volume={182}, number={1}, year={2015}, pages={351--389}}

@article{BourgainDemeterGuth2016,
  author={Bourgain, Jean and Demeter, Ciprian and Guth, Larry}, title={Proof of the main conjecture in Vinogradov's mean value theorem for degrees higher than three},
  journal={Annals of Mathematics}, volume={184}, number={2}, year={2016}, pages={633--682}}

@article{BourgainGuth2011,
  author={Bourgain, Jean and Guth, Larry}, title={Bounds on oscillatory integral operators based on multilinear estimates},
  journal={Geometric and Functional Analysis}, volume={21}, number={6}, year={2011}, pages={1239--1295}}

@article{BennettCarberyTao2006,
  author={Bennett, Jonathan and Carbery, Anthony and Tao, Terence}, title={On the multilinear restriction and Kakeya conjectures},
  journal={Acta Mathematica}, volume={196}, number={2}, year={2006}, pages={261--302}}

@article{TaoVargasVega1998,
  author={Tao, Terence and Vargas, Ana and Vega, Luis}, title={A bilinear approach to the restriction and Kakeya conjectures},
  journal={Journal of the American Mathematical Society}, volume={11}, number={4}, year={1998}, pages={967--1000}}

@article{Tao2003,
  author={Tao, Terence}, title={A sharp bilinear restriction estimate for paraboloids},
  journal={Geometric and Functional Analysis}, volume={13}, year={2003}, pages={1359--1384}}

@article{Wolff2001,
  author={Wolff, Thomas}, title={A sharp bilinear cone restriction estimate},
  journal={Annals of Mathematics}, volume={153}, number={3}, year={2001}, pages={661--698}}

@article{Guth2016,
  author={Guth, Larry}, title={A restriction estimate using polynomial partitioning},
  journal={Journal of the American Mathematical Society}, volume={29}, number={2}, year={2016}, pages={371--413}}

@article{Guth2018,
  author={Guth, Larry}, title={Restriction estimates using polynomial partitioning II},
  journal={Acta Mathematica}, volume={221}, number={1}, year={2018}, pages={81--142}}

@misc{KillipVisan2014,
  author={Killip, Rowan and Visan, Monica}, title={Scale invariant Strichartz estimates on tori and applications},
  year={2014}, eprint={1409.3603}, archivePrefix={arXiv}, primaryClass={math.AP}}

@book{Demeter2020,
  author={Demeter, Ciprian}, title={Fourier Restriction, Decoupling, and Applications},
  publisher={Cambridge University Press}, year={2020}}

@article{Weyl1916,
  author={Weyl, Hermann}, title={Ueber die Gleichverteilung von Zahlen mod. Eins},
  journal={Mathematische Annalen}, volume={77}, year={1916}, pages={313--352}}

@book{Vinogradov1954,
  author={Vinogradov, Ivan M.}, title={The Method of Trigonometrical Sums in the Theory of Numbers},
  publisher={Interscience}, year={1954}}

@book{Hua1965,
  author={Hua, Loo-Keng}, title={Additive Theory of Prime Numbers}, publisher={American Mathematical Society}, year={1965}}

@book{Vaughan1997,
  author={Vaughan, Robert C.}, title={The Hardy--Littlewood Method}, edition={2}, publisher={Cambridge University Press}, year={1997}}

@book{GrahamKolesnik1991,
  author={Graham, S. W. and Kolesnik, Grigori}, title={Van der Corput's Method of Exponential Sums},
  publisher={Cambridge University Press}, year={1991}}

@book{Montgomery1994,
  author={Montgomery, Hugh L.}, title={Ten Lectures on the Interface Between Analytic Number Theory and Harmonic Analysis},
  publisher={American Mathematical Society}, year={1994}}

@book{IwaniecKowalski2004,
  author={Iwaniec, Henryk and Kowalski, Emmanuel}, title={Analytic Number Theory}, publisher={American Mathematical Society}, year={2004}}

@article{Wooley2012,
  author={Wooley, Trevor D.}, title={Vinogradov's mean value theorem via efficient congruencing},
  journal={Annals of Mathematics}, volume={175}, number={3}, year={2012}, pages={1575--1627}}

@article{FordWooley2014,
  author={Ford, Kevin B. and Wooley, Trevor D.}, title={On Vinogradov's mean value theorem: Strongly diagonal behaviour via efficient congruencing},
  journal={Acta Mathematica}, volume={213}, number={2}, year={2014}, pages={199--236}}

@article{Wooley2016,
  author={Wooley, Trevor D.}, title={The cubic case of the main conjecture in Vinogradov's mean value theorem},
  journal={Advances in Mathematics}, volume={294}, year={2016}, pages={532--561}}

@article{Wooley2017,
  author={Wooley, Trevor D.}, title={Approximating the main conjecture in Vinogradov's mean value theorem},
  journal={Mathematika}, volume={63}, number={1}, year={2017}, pages={292--350}}

@article{Wooley2019,
  author={Wooley, Trevor D.}, title={Nested efficient congruencing and relatives of Vinogradov's mean value theorem},
  journal={Proceedings of the London Mathematical Society}, volume={118}, number={4}, year={2019}, pages={942--1016}}

@article{ParsellPrendivilleWooley2013,
  author={Parsell, Scott T. and Prendiville, Sean M. and Wooley, Trevor D.},
  title={Near-optimal mean value estimates for multidimensional Weyl sums},
  journal={Geometric and Functional Analysis}, volume={23}, year={2013}, pages={1962--2024}}

@article{GuoZhang2019,
  author={Guo, Shaoming and Zhang, Ruixiang}, title={On integer solutions of Parsell--Vinogradov systems},
  journal={Inventiones Mathematicae}, volume={218}, number={1}, year={2019}, pages={1--81}}

@article{GuoLiYungZorinKranich2020,
  author={Guo, Shaoming and Li, Zane Kun and Yung, Po-Lam and Zorin-Kranich, Pavel},
  title={A short proof of $\ell^2$ decoupling for the moment curve},
  journal={American Journal of Mathematics}, volume={143}, number={6}, year={2021}, pages={1983--1998},
  doi={10.1353/ajm.2021.0048},
  eprint={1912.09798}, archivePrefix={arXiv}}

@article{Bourgain2017,
  author={Bourgain, Jean}, title={Decoupling, exponential sums and the Riemann zeta function},
  journal={Journal of the American Mathematical Society}, volume={30}, number={1}, year={2017}, pages={205--224}}

@book{AmbrosioFuscoPallara2000,
  author={Ambrosio, Luigi and Fusco, Nicola and Pallara, Diego}, title={Functions of Bounded Variation and Free Discontinuity Problems},
  publisher={Oxford University Press}, year={2000}}

@book{EvansGariepy2015,
  author={Evans, Lawrence C. and Gariepy, Ronald F.}, title={Measure Theory and Fine Properties of Functions},
  edition={Revised}, publisher={CRC Press}, year={2015}}

@book{Zygmund2002,
  author={Zygmund, Antoni}, title={Trigonometric Series}, edition={3}, publisher={Cambridge University Press}, year={2002}}

@book{Grafakos2014,
  author={Grafakos, Loukas}, title={Classical Fourier Analysis}, edition={3}, publisher={Springer}, year={2014}}

@book{Stein1993,
  author={Stein, Elias M.}, title={Harmonic Analysis: Real-Variable Methods, Orthogonality, and Oscillatory Integrals},
  publisher={Princeton University Press}, year={1993}}

@book{SteinShakarchi2003,
  author={Stein, Elias M. and Shakarchi, Rami}, title={Fourier Analysis: An Introduction},
  publisher={Princeton University Press}, year={2003}}

@book{Tao2006,
  author={Tao, Terence}, title={Nonlinear Dispersive Equations: Local and Global Analysis},
  publisher={American Mathematical Society}, year={2006}}

@book{Cazenave2003,
  author={Cazenave, Thierry}, title={Semilinear Schr{\"o}dinger Equations}, publisher={American Mathematical Society}, year={2003}}

@book{Apostol1976,
  author={Apostol, Tom M.}, title={Introduction to Analytic Number Theory}, publisher={Springer}, year={1976}}

@book{HardyLittlewoodPolya1952,
  author={Hardy, G. H. and Littlewood, J. E. and P{\'o}lya, G.}, title={Inequalities}, edition={2}, publisher={Cambridge University Press}, year={1952}}

@article{KrahmerVeselovska2023,
  author={Krahmer, Felix and Veselovska, Anna},
  title={Enhanced digital halftoning via weighted sigma-delta modulation},
  journal={SIAM Journal on Imaging Sciences}, volume={16}, number={3}, pages={1727--1761}, year={2023},
  doi={10.1137/22M151786X}}

@book{DeVoreLorentz1993,
  author={DeVore, Ronald A. and Lorentz, George G.},
  title={Constructive Approximation},
  series={Grundlehren der mathematischen Wissenschaften},
  volume={303},
  publisher={Springer},
  address={Berlin and Heidelberg},
  year={1993},
  isbn={978-3-540-50627-0}}

@book{DitzianTotik1987,
  author={Ditzian, Zeev and Totik, Vilmos},
  title={Moduli of Smoothness},
  series={Springer Series in Computational Mathematics},
  volume={9},
  publisher={Springer},
  address={New York},
  year={1987},
  doi={10.1007/978-1-4612-4778-4}}

@book{Triebel1983,
  author={Triebel, Hans},
  title={Theory of Function Spaces},
  publisher={Birkh{\"a}user},
  address={Basel},
  year={1983},
  doi={10.1007/978-3-0346-0416-1}}
\printindex
\end{document}